\definecolor{NavyBlue}{RGB}{35,35,142}
\definecolor{RawSienna}{RGB}{199,97,20}
\newcommand{\thmref}[1]{Theorem~\ref{#1}}
\newcommand{\lemref}[1]{Lemma~\ref{#1}}
\newcommand{\secref}[1]{Section~\ref{#1}}
\newcommand{\appref}[1]{Appendix~\ref{#1}}
\newcommand{\figref}[1]{Figure~\ref{#1}}
\newcommand{\OMIT}[1]{}
\newcommand{\1}{{\bf 1}}
\newcommand {\abs}[1]{\left\vert\, #1 \,\right\vert}
\newcommand{\A}{\mathcal{A}}
\newcommand {\br}[1]{\left(#1\right)}
\newcommand {\sqb}[1]{\left[#1\right]}
\newcommand {\cbr}[1]{\left\{#1\right\}}
\newcommand{\E}{\varepsilon}
\newcommand{\eqdef}{\overset{\, \Delta}{=}}
\newcommand{\G}{\mathcal{G}}
\newcommand{\lonetwo}{\ell_1/\ell_2}
\newcommand{\lone}{\ell_1}
\newcommand{\ltwo}{\ell_2}
\newcommand {\nm}[1]{\left\|#1\right\|}
\newcommand{\supp}[1]{\text{supp}\left(#1\right)}
\newcommand{\gsupp}[1]{\text{gsupp} \left(#1\right)}
\newcommand{\ie}{\emph{i.e.}}
\newcommand{\Om}[1]{\Omega_{\cup}^{\G}\br{#1}}
\newcommand{\Omn}{\Omega_{\cup}^{\G}} 
\newcommand{\Omone}[1]{\Omega_{\cup}^{\Gw(\ws)}\br{#1}}
\newcommand{\Ogroup}[1]{\|#1\|_{\ell_1/\ell_2}}
\newcommand{\Oo}{\Omega}
\newcommand{\Op}{\mathcal{O}_p}
\newcommand{\PP}{\mathcal{P}}
\newcommand{\PPP}{\mathbb{P}}
\newcommand{\EE}{\mathbb{E}}
\newcommand{\RR}{\mathbb{R}}    
\newcommand{\vb}{\mathbf{\bar{v}}}
\newcommand{\Vb}{\mathbf{V}}
\newcommand{\Zb}{\mathbf{Z}}
\newcommand{\VV}{\mathcal{V}}
\newcommand{\XX}{\boldsymbol{\Sigma}}
\newcommand{\K}{\mathbf{K}}
\newcommand{\VG}{\mathcal{V}_\mathcal{G}}
\def\w{\mathbf{w}}
\def\v{\mathbf{v}}
\def\h{\mathbf{h}}
\def\s{\mathbf{s}}
\def\b{\mathbf{b}}
\def\sign{\text{sign}}
\def\st{\text{s.t.}}
\def\Gw{\G_1}  
\def\Gwc{\G_2} 
\def\Gs{\breve{\G}_1}  
\def\Jw{J_1}  
\def\Jwc{J_2}  
\def\Js{\breve{J}_1}  
\def\m{m} 
\def\wh{\hat{\w}}
\def\prox{\text{ST}}
\def\H{\mathcal{H}}
\def\Hb{\mathbf{H}}
\def\x{\mathbf{x}}
\def\y{\mathbf{y}}
\def\u{\mathbf{u}}
\def\ws{\w^\star}  
\def\vbs{\bar{\v}^\star} 
\def\vs{\v^\star} 
\def\lambdab{\boldsymbol{\lambda}}
\def\Lambdab{\boldsymbol{\Lambda}}
\def\alphab{\boldsymbol{\alpha}}
\def\zetab{\boldsymbol{\zeta}}
\def\lambdag{\lambdab_g}
\def\lambdapg{\lambdab'_g}
\def\betab{\boldsymbol{\beta}}
\def\etab{\boldsymbol{\eta}}
\def\Db{\mathbf{D}}
\def\Qb{\mathbf{Q}
}
\def\B{\mathbf{B}}
\def\q{\mathbf{q}}
\def\zv{\mathbf{0}} 
\def\X{\mathbf{X}}
\def\qn{\q} 
\def\epsb{\boldsymbol{\varepsilon}}
\def\al{\alpha}
\def\l{\lambda}
\def\Sign{{\rm Sign}}
\def\infconv{\star_{\inf}}
\def\Omax{\Oo^*}
\newcommand{\out}[1]{}
\newcolumntype{i}{>{\centering\arraybackslash} m{.2\linewidth} }
\definecolor{white}{rgb}{1., 1., 1.}
\definecolor{lightblue}{rgb}{0.,0.3,0.8}
\definecolor{lightgray}{rgb}{0.8,0.8,0.8}
\newcommand{\colorcellthree}[4]
{\fcolorbox{#1}{#2}{\makebox(7,21)[c]{\textcolor{#3}{$#4$}}}}
\newcommand{\colorcellfive}[4]
{\fcolorbox{#1}{#2}{\makebox(7,35)[c]{\textcolor{#3}{$#4$}}}}
\newcommand{\colorcellsix}[4]
{\fcolorbox{#1}{#2}{\makebox(7,42)[c]{\textcolor{#3}{$#4$}}}}
\begin{document}

\title{Group Lasso with Overlaps: \\ the Latent Group Lasso approach}

\author{}

 \author{{\name Guillaume Obozinski}\thanks{Equal contribution} \email guillaume.obozinski@ens.fr \\
        \addr Sierra team - INRIA\\
        Ecole Normale Sup\'{e}rieure\\
        (INRIA/ENS/CNRS UMR 8548)\\
        Paris, France
        \AND
        \name Laurent Jacob$^*$ \email laurent@stat.berkeley.edu \\
        \addr Department of Statistics \\
        University of California  \\
        Berkeley CA 94720, USA
        \AND
        \name Jean-Philippe Vert \email Jean-Philippe.Vert@mines.org \\
        \addr Centre for Computational Biology \\
        Mines ParisTech\\
        Fontainebleau, F-77300, France\\
        INSERM U900 \\
        Institut Curie \\
        Paris, F-75005, France
        }

\editor{}

\maketitle

\begin{abstract}
We study a norm for structured sparsity which leads to sparse linear predictors whose supports are unions of predefined overlapping groups of variables. We call the obtained formulation \emph{latent group Lasso}, since it is based on applying the usual group Lasso penalty on a set of latent variables. A detailed analysis of the norm and its properties is presented and we characterize conditions under which the set of groups associated with latent variables are correctly identified. We motivate and discuss the delicate choice of weights associated to each group, and illustrate this approach on simulated data and on the problem of breast cancer prognosis from gene expression data.
\end{abstract}

\begin{keywords}
group Lasso, sparsity, graph, support recovery, block regularization, feature selection
\end{keywords}

\section{Introduction}

Sparsity has triggered much research in statistics, machine learning and signal processing recently. Sparse models are attractive in many application domains because they lend themselves particularly well to interpretation and data compression. Moreover, from a statistical viewpoint, betting on sparsity is a way to reduce the complexity of inference tasks in large dimensions with limited amounts of observations.
While sparse models have traditionally been estimated with
greedy feature selection approaches, more recent formulations as optimization problems involving a non-differentiable convex penalty have proven very successful both theoretically and practically. The canonical example is
the penalization of a least-square criterion by the $\lone$ norm of the
estimator, known as \emph{Lasso} in statistics \citep{Tibshirani1996Regression} or
\emph{basis pursuit} in signal processing \citep{Chen1998Atomic}. Under appropriate assumptions, the Lasso can be shown to recover the exact support of a sparse model from data
generated by this model if the covariates are not too correlated
\citep{Zhao2006model,Wainwright2009Sharp}. It is consistent even in high dimensions, with fast rates of convergence \citep{Lounici2008Sup-norm,Bickel2009Simultaneous}. We refer the reader to \citet{Geer2010L1} for a detailed review.

While the $\lone$ norm penalty leads to sparse models, it does not
encode any prior information about the structure of the sets of
covariates that one may wish to see selected jointly, such as predefined groups of covariates. An extension of the Lasso for the
selection of variables in groups was proposed under the name group
Lasso by \cite{Yuan2006Model}, who considered the case where the
groups form a partition of the sets of variables. The group Lasso
penalty, also called $\lonetwo$ penalty, is defined as the sum (i.e.\
, $\lone$ norm) of the $\ltwo$ norms of the restrictions of the
parameter vector of the model to the different groups of covariates.
The work of several authors shows that when the support can be encoded well by the groups
defining the norm, support recovery and estimation are improved \citep{Lounici2010Oracle,Huang2010benefit,Obozinski2010Joint,Negahban2011Simultaneous,Lounici2009Taking,Kolar2011Union}.

Subsequently, the notion of \emph{structured sparsity} emerged  as a natural generalization of the selection in groups, where the support of the model one wishes to recover is not anymore required to be just sparse but also to display certain structure. One of the first natural approaches to \emph{structured sparsity} has been to consider extensions of the $\lonetwo$ penalty to situations in which the set of groups considered overlap, so that the possible support pattern exhibit some structure
\citep{Zhao2008Grouped,Bach2009Exploring}. \citet{Jenatton2011Proximal} formalized this approach and proposed an $\lonetwo$ norm construction for families of allowed supports stable by \emph{intersection}. Other approaches to structured sparsity are quite diverse: Bayesian or non-convex approaches that directly exploit the recursive structure of some sparsity patterns such as trees~\citep{He2009Exploiting, Baraniuk2010Model}, greedy approaches based on \emph{block-coding} \citep{Huang2009Learning}, relaxation of submodular penalties \citep{Bach2010Structured}, generic variational formulations \citep{Micchelli2011Regularizers}.

While \citet{Jenatton2011Proximal} proposed a norm inducing supports
that arise as \emph{intersections} of a sub-collection of groups
defining the norm, we consider in this work norms which, albeit
defined as well by a collection of overlapping groups, induce supports
that are rather \emph{unions} of a sub-collection of the groups
encoding prior information. The main idea is that instead of directly
applying the $\lonetwo$ norm to a vector, we apply it to a set of
latent variables each supported by one of the groups, which are
combined linearly to form the estimated parameter vector. In the
regression case, we therefore call our approach \emph{latent group
  Lasso}.

The corresponding decomposition of a parameter vector into latent
variables calls for the notion of \emph{group-support}, which we
introduce and which corresponds to the set of non-zero latent
variables. In the context of a learning problem regularized by the
norm we propose, we study the problem of \emph{group-support
  recovery}, a notion stronger than the classical support recovery. Group-support recovery typically
implies support recovery (although not always) if the support of a parameter
vector is exactly a union of groups. We provide sufficient conditions for consistent group-support recovery.

In the definition of our norm, a
weight is associated with each group. These weights play a much more
important role in the case of overlapping groups than in the case of
disjoint groups, since in the former case they determine the set of
recoverable supports and the complexity of the class of possible
models. We discuss the delicate question of the choice of these
weights.

While the norm we consider is quite general and has potentially many applications, we illustrate its potential on the particular problem of learning sparse predictive models for cancer prognosis from high-dimensional gene expression data. The problem of identifying a predictive molecular signature made of a small set of genes is often
ill-posed and so noisy that exact variable selection may be elusive. We propose that, instead,
selecting genes in groups that are involved in the same
biological process or connected in a functional or interaction network could be
performed more reliably, and potentially lead to better predictive models. We
empirically explore this application, after extensive experiments on
simulated data illustrating some of the properties of our norm.

To summarize, the main contributions of this paper, which rephrases and extends a preliminary version published in  \citet{Jacob2009Group}, are the following:
\begin{itemize}
\item We define the latent group Lasso penalty to infer sparse models with unions of predefined groups as supports, and analyze in details some of its mathematical properties.
\item We introduce the notion of \emph{group-support} and group-support recovery results. Using correspondence theory, we show under appropriate conditions, that, in a classical asymptotic setting, estimators for the linear regression regularized with $\Omn$ are consistent for the estimation of a sufficiently sparse \emph{group-support}. 
\item We discuss in length the choice of weights associated to each group, which play a crucial role in the presence of overlapping groups of different sizes.
\item We provide extended experimental results both on simulated data --- addressing support-recovery, estimation error and role of weights --- and on breast cancer data, using biological pathways and genes networks as prior information to construct latent group Lasso formulations.
\end{itemize}
 
The rest of the paper is structured as follows.
We first introduce the latent group Lasso penalty and position it in the context of related work in Section~\ref{sec:def}. In Section~\ref{sec:prop} we show that it is a norm and provide several characterizations and variational formulations; we also show that regularizing with this norm is equivalent to covariate duplication (Section~\ref{sec:duplication}) and derive a corresponding multiple kernel learning formulation (Section~\ref{sec:mkl}).
We briefly discuss algorithms in Section~\ref{sec:implementation}. 
In Section~\ref{sec:group-support}, we introduce the notion of \emph{group-support} and consider in Section~\ref{sec:examples} a few toy examples to illustrate the concepts and properties discussed so far. We study group support-consistency in Section~\ref{sec:supp_rec}. The difficult question of the choice of the weighting scheme is discussed in Section~\ref{sec:weights}. Section~\ref{sec:graphlasso} presents the latent graph Lasso, a variant of the latent group Lasso when covariates are organized into a graph. Finally, in Section~\ref{sec:experiments}, we present several experiments: first, on artificial data to illustrate the gain in support recovery and estimation over the classical Lasso, as well as the influence of the choice of the weights; second, on the real problem of breast cancer prognosis from gene expression data.

\section{Notations}
\label{sec:notations}
In this section we introduce notations that will be used throughout
the article. For any vector $\w \in \RR^p$ and any $q\geq 1$,
$\nm{\w}_q = \left(\sum_{i=1}^p \abs{\w_i}^q\right)^{1/q}$ denotes the
$\ell_q$ norm of $\w$. We simply use the notation $\nm{\w} =
\nm{\w}_2$ for the Euclidean norm. $\supp{\w}\subset[1,p]$ denotes the
support of $\w$, \ie, the set of covariates $i \in [1,p]$ such that
$\w_i \neq 0$.  A \emph{group} of covariates is a subset
$g\subset[1,p]$. The set of all possible groups is therefore
$\PP([1,p])$, the power set of $[1,p]$. For any group $g$, $g^c =
[1,p]\backslash g$ denotes the complement of $g$ in $[1,p]$, \ie, the
covariates which are not in $g$. $\Pi_g : \RR^p \rightarrow \RR^p$
denotes the projection onto $\cbr{\w\,:\,\w_i = 0\,\text{ for }\,i \in
  g^c}$, \ie, $\Pi_g \w$ is the vector whose entries are the same as
$\w$ for the covariates in $g$, and are $0$ for other other
covariates. We will usually use the notation $\w_g \eqdef \Pi_g
\w$. We say that two groups \emph{overlap} if they have at least one
covariate in common.

Throughout the article, $\G \subset \PP([1,p])$ denotes a set
of groups, usually fixed in advance for each application, and we
denote $\m\eqdef |\G|$ the number of groups in $\G$.  We require that all covariates belong to at
least one group, \ie,
\[
\bigcup_{g\in\G} g = [1,p]\,.
\]
We note $\VV_\G\subset \RR^{p\times\G}$ the set of
$\m$-tuples of vectors $\vb = \br{\v^g}_{g\in\G}$, where each $\v^g$ is a vector in $\RR^p$, that satisfy
$\supp{\v^g}\subset g$ for each $g\in\G$.

For any differentiable function
$f:\RR^p\rightarrow\RR$, we denote by $\nabla f(\w) \in\RR^p$ the
gradient of $f$ at $\w\in\RR^p$ and by $\nabla_{\!g} f(\w) \in\RR^g$ the
partial gradient of $f$ with respect to the covariates in $g$.

In optimization problems throughout the paper we will use the convention that $\frac{0}{0}=0$ so that the $\bar{\RR}$-valued function $(x,y) \mapsto \frac{x^2}{y}$ is well defined and jointly convex on $\RR \times\RR_+$.

\section{Group Lasso with overlapping groups}
\label{sec:def}
Given a set of groups $\G$ which form a partition of $[1,p]$, the group Lasso penalty
\citep{Yuan2006Model} is a norm over $\RR^p$ defined as~:
\begin{equation}\label{eq:grouplasso}
\forall \w\in\RR^p\,,\quad\Ogroup{\w} = \sum_{g\in\G}\, d_g \, \nm{\w_g}\,,
\end{equation}
where $\br{d_g}_{g\in\G}$ are positive weights. This is a norm whose balls have singularities when some
$\w_g$ are equal to zero. Minimizing a smooth convex loss functional
$L:\RR^p\rightarrow\RR$ over such a ball, or equivalently solving the following optimization problem for some $\lambda>0$~:
\begin{equation}
  \min_{\w\in\RR^p} L(\w) + \lambda \sum_{g\in\G} \, d_g \, \nm{\w_g}\,,
\label{eq:glm}
\end{equation}
often leads to a solution that lies on a singularity, \ie, to
a vector $\w$ such that $\w_g=\zv$ for some of the groups $g$ in $\G$. Equivalently, the solution is sparse at the group level, in the sense that coefficients within a group are usually zero or nonzero together. The
hyperparameter $\lambda \geq 0$ in~\eqref{eq:glm} is used to adjust
the tradeoff between minimizing the loss and finding a solution which
is sparse at the group level.

\begin{figure}
  \centering
  \begin{tabular}{m{1cm}m{2.2cm}m{2cm}m{8cm}}
  \includegraphics[height=.22\textheight]{./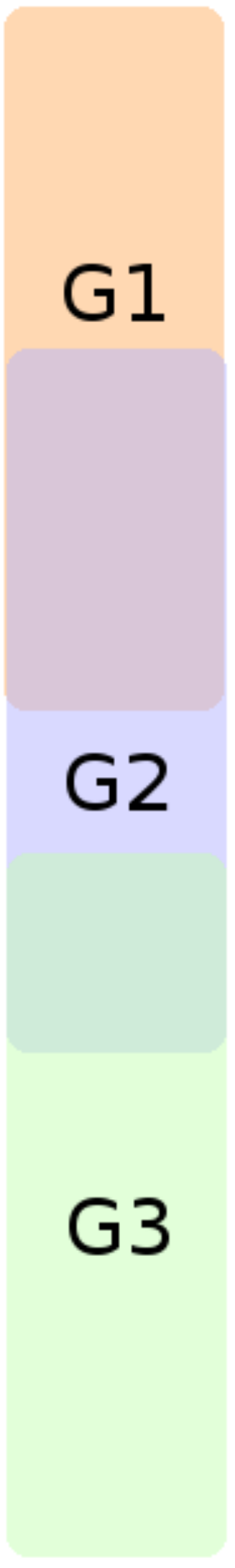}&
  {\centering 
$\displaystyle \begin{array}{c}
      \Rightarrow\\
      \nm{\w_{g_1}} =0\\
       \nm{\w_{g_3}} = 0
    \end{array}$}
  &  \includegraphics[height=.22\textheight]{./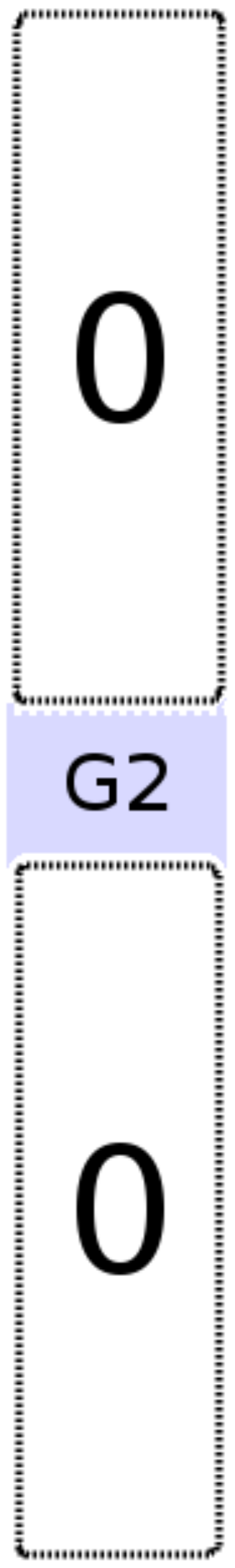}
  &   \hspace*{15mm}\includegraphics[height=.22\textheight]{./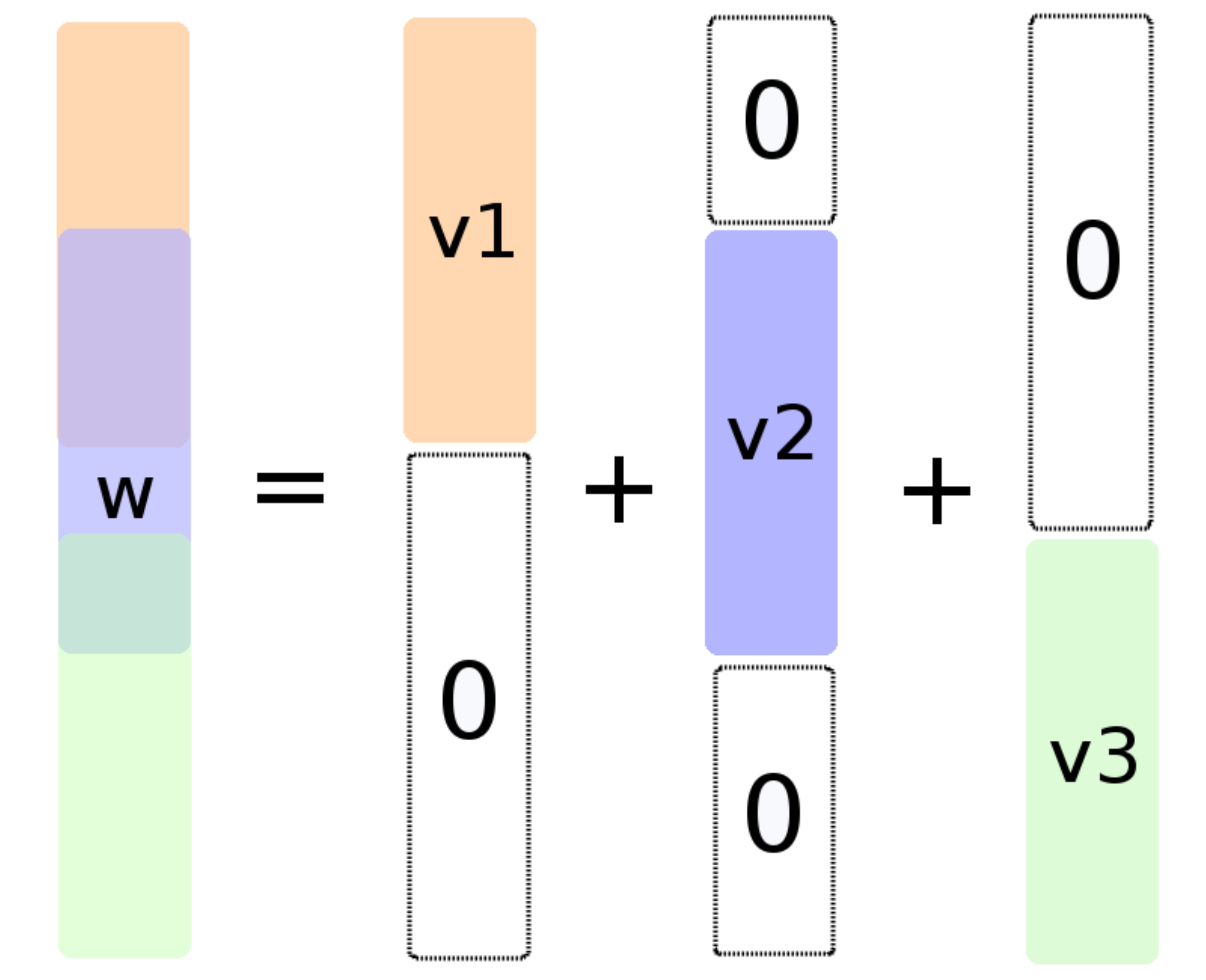}
  \end{tabular}
  \caption{(a) \textit{Left}: Effect of penalty~\eqref{eq:grouplasso} on the support:
    removing \emph{any} group containing a variable removes the
    variable from the support. When variables in groups 1 and 3 are shrunked to zero, the support of the solution consists of the variables of the second group which are neither in the first, nor in the third. (b) \textit{Right}: Latent
    decomposition of $\w$ over $(\v^g)_{v\in\G}$: applying the
    $\lonetwo$ penalty to the decomposition instead of applying
    it to the $\w_g$ removes only the variables which do not belong to
    \emph{any} selected group. The support of the solution if latent vectors $\v_1$ and $\v_3$ are shrunked to zero will be all variables in the second group.}
    \label{fig:removal_decomp}
\end{figure}

When $\G$ is not a partition anymore and some of its groups overlap, the
penalty~\eqref{eq:grouplasso} is still a norm, because we assume that all covariates
belong to at least one group. However, while the
Lasso is sometimes loosely presented as \emph{selecting} covariates
and the group Lasso as \emph{selecting} groups of covariates, the group Lasso
estimator~\eqref{eq:glm} does not necessarily select groups in that case. The reason is that the precise effect of non-differentiable
penalties is to \emph{set} covariates, or groups of covariates, to zero, and not to select them. When
there is no overlap between groups, setting groups to zero leaves
the other full groups to nonzero, which can give the impression that
group Lasso is generally appropriate to select a small number of
groups. When the groups overlap, however, setting one group to zero
shrinks its covariates to zero even if they belong to other groups, in
which case these other groups will not be entirely selected. This is
illustrated in Figure~\ref{fig:removal_decomp}(a) with three
overlapping groups of covariates. If the penalty leads to an estimate
in which the norm of the first and of the third group are zero, what
remains nonzero is not the second group, but the covariates of the
second group which are neither in the first nor in the third one. More formally, the overlapping case has been extensively studied
by \citet{Jenatton2009Structured}, who showed that in the case where $L(\w)$
is an empirical risk and under very general assumptions on the data,
the support of a solution $\hat{\w}$ of~\eqref{eq:glm} almost surely
satisfies
\[
\supp{\hat{\w}} = \left( \bigcup_{g\in\G_0} g \right)^c
\]
for some $\G_0\subset\G$, \ie, the support is almost surely the complement of a union of
groups. Equivalently, the support is an intersection of the
complements of some of groups considered.

In this work, we are interested in penalties which induce a different
effect: we want the estimator to select entire groups of covariate, or more
precisely we want the support of the solution $\hat{\w}$ to be a \emph{union of groups}. For that
purpose, we introduce a set of latent variables $\vb = \br{\v^g}_{g\in\G}$ such that 
$\v^g\in\RR^p$ and $\supp{\v^g}\subset g$ for each group $g\in\G$, and propose
to solve the following problem instead of~\eqref{eq:glm}:
\begin{equation}
\min_{\w\in\RR^p,\vb\in\VV_\G} L(\w) + \lambda \sum_{g\in\G} \, d_g \, \nm{\v^g}\
\qquad \text{s.t.} \qquad \w =\sum_{g\in\G} \v^g.
\label{eq:oglm}
\end{equation}
Problem~\eqref{eq:oglm} is always feasible since we assume
that all covariates belong to at least one group.
Intuitively, the vectors
$\vb=\br{\v_g}_{g\in\G}$ in \eqref{eq:oglm} represent a decomposition
of $\w$ as a sum of latent vectors whose supports are included in each
group, as illustrated in Figure~\ref{fig:removal_decomp}(b). Applying the $\lonetwo$ penalty to these latent vectors favors
solutions which shrink some $\v^g$ to $0$, while the non-shrunk components satisfy $\supp{\v^g} = g$. On the other hand, since
we enforce $\w =\sum_{g\in\G} \v^g$, a $\w_i$ can be nonzero as long as $i$
belongs to at least one non-shrunk group. More precisely, if we denote
by $\G_1\subset\G$ the set of groups $g$ with $\hat{\v}^g\neq \zv$ for the solution of~\eqref{eq:oglm}, then we
immediately get $\hat{\w}=\sum_{g\in\G_1} \hat{\v}^g$, and therefore we can expect:
\[
\supp{\hat{\w}} = \bigcup_{g\in\G_1} g\,.
\]
In other words, this formulation leads to sparse solutions whose
support is likely to be a union of groups.

Interestingly, problem~\eqref{eq:oglm} can be reformulated as the
minimization of the cost function $L(\w)$ penalized by a new regularizer
which is a function of $\w$ only. Indeed since the minimization over $\vb$ only involves the penalty term
and the constraints, we can rewrite~\eqref{eq:oglm} as
\begin{equation}
\min_{\w \in \RR^p} L(\w) + \lambda \Om{\w},
\label{eq:cogl}
\end{equation}
with
\begin{equation}
\label{eq:pendef}
\Om{\w} \eqdef \min_{\vb\in\VG , \sum_{g\in\G}\v^g = \w} \quad \sum_{g\in\G} \, d_g \, \nm{\v^g}\,.
\end{equation}
We call this penalty the \emph{latent group Lasso} penalty, in reference to its formulation as a group Lasso over latent variables. 
When the groups do not overlap and form a partition, there exists a unique decomposition of
$\w\in\RR^p$ as $\w=\sum_{g\in\G} \v^g$ with $\supp{\v^g}\subset g$, namely,
$\v^g=\w_g$ for all $g\in\G$. In that case, both
the group Lasso penalty~\eqref{eq:grouplasso} and the latent group Lasso penalty~\eqref{eq:pendef} are equal and boil down to the same standard group Lasso. When some groups overlap, however, the two penalties differ.
For example, Figure~\ref{fig:balls} shows the unit ball for both norms in $\RR^3$ with
groups $\G=\{\{1,2\},\{2,3\}\}$.  The pillow shaped ball of
$\Ogroup{\cdot}$ has four singularities corresponding to cases where
either only $\w_1$ or only $\w_3$ is nonzero. By contrast, $\Omn$
has two circular sets of singularities corresponding to cases where
$(\w_1,\w_2)$ only or $(\w_2,\w_3)$ only is nonzero. For comparison, we also show the unit ball when we consider the partition $\G=\{\{1,2\},\{3\}\}$, in which case both norms coincide: singularities appear for $(\w_1,\w_2)=0$ or $\w_3=0$.

\begin{figure}
  \begin{center}
      \includegraphics[width=.3\linewidth]{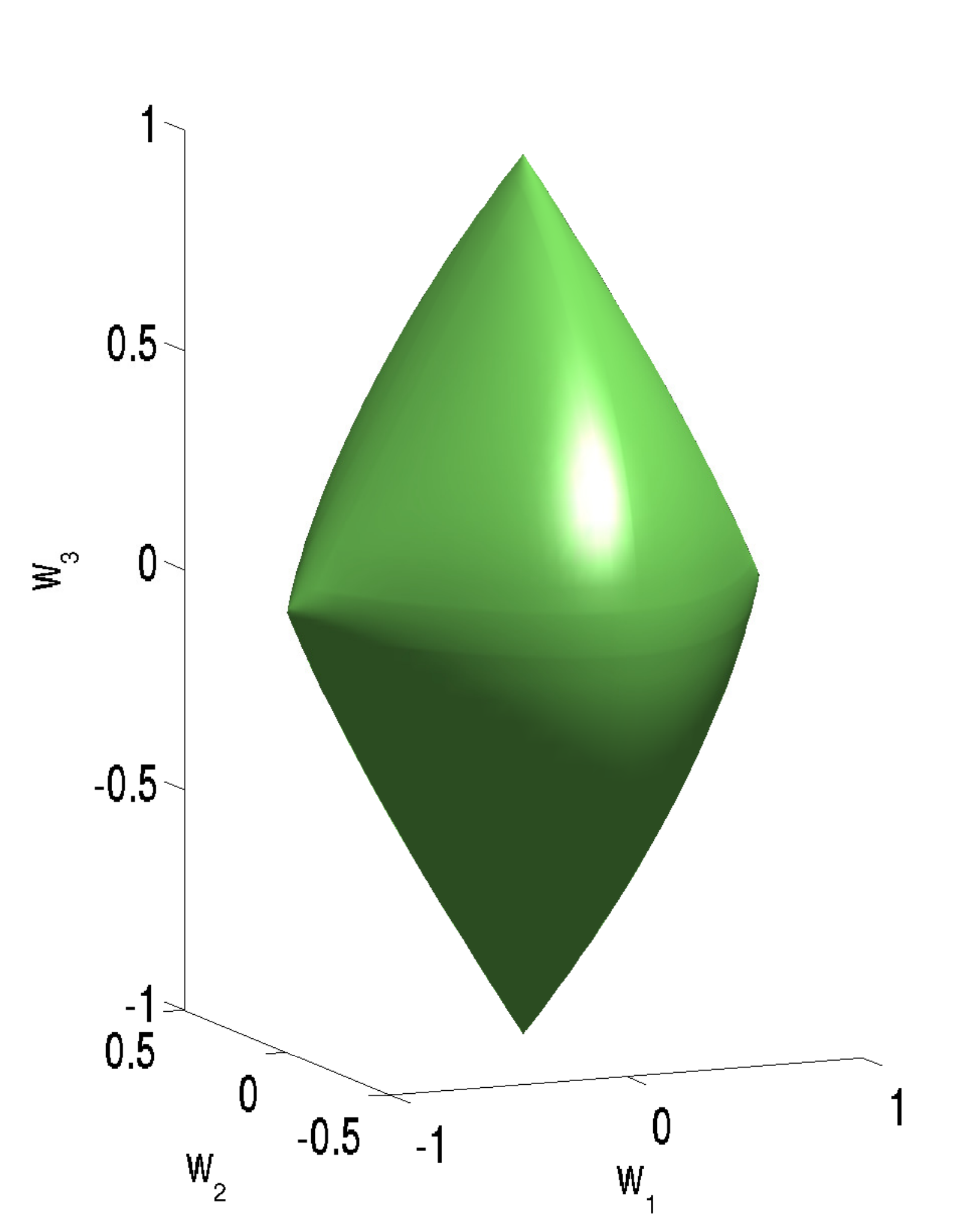}
      \includegraphics[width=.3\linewidth]{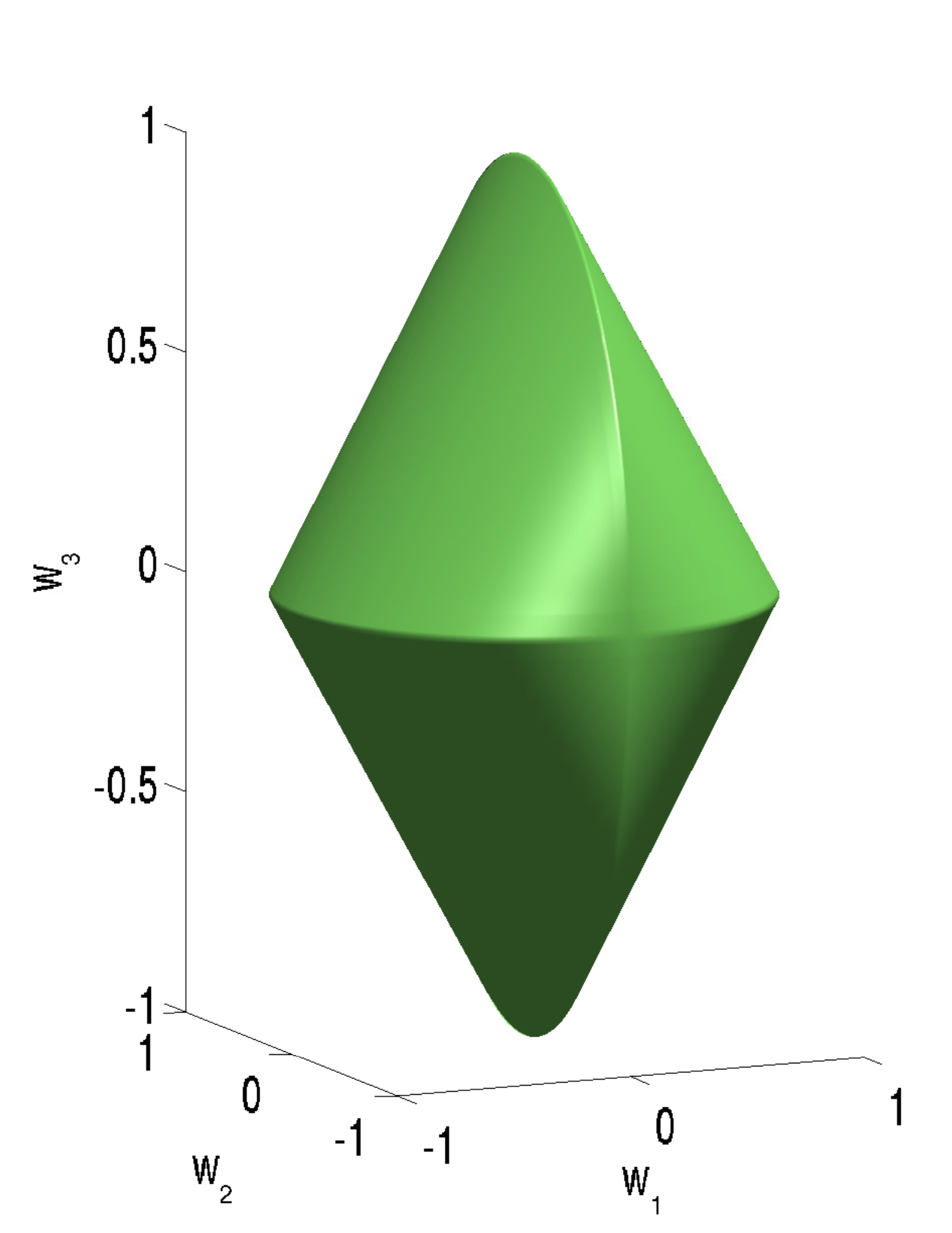}
      \includegraphics[width=.3\linewidth]{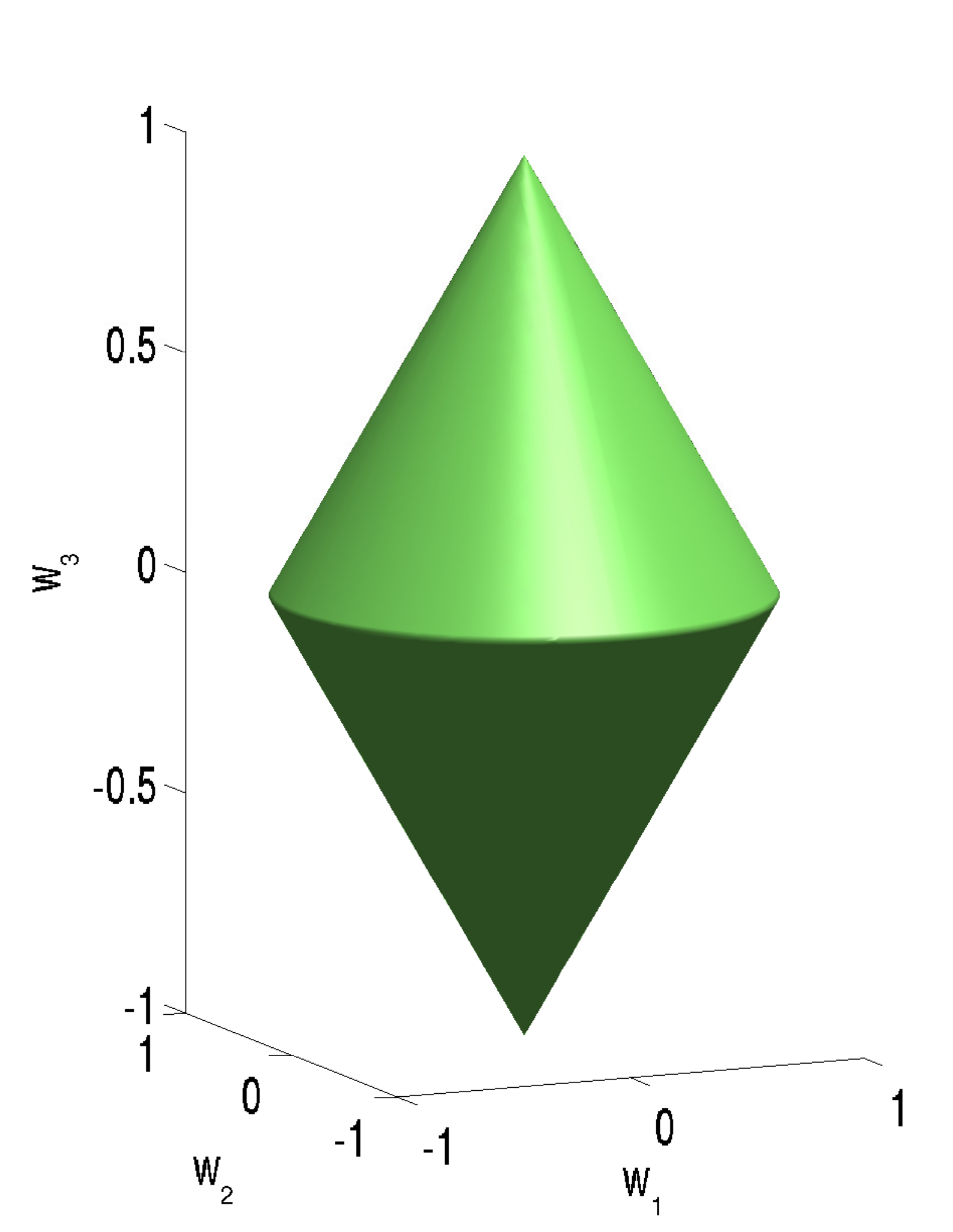}
  \end{center}
  \caption[Unit balls for $\Ogroup{\cdot}$, $\Omn$ and the
  group Lasso.]{Unit balls for $\Ogroup{\cdot}$ (left), proposed by \citet{Jenatton2009Structured}, and
    $\Omn$ (middle), proposed in this paper, for the groups $\G=\{\{1,2\},\{2,3\}\}$.
    $\w_2$ is represented as the vertical coordinate. We note that singularities exist in both cases, but occur at different positions: for $\Ogroup{\cdot}$ they correspond to situations where only $\w_1$ or only $\w_2$ is nonzero, \ie, where all covariates of one group are shrunk to $0$; for $\Omn$, they correspond to situations where only $\w_1$ or only $\w_3$ is equal to $0$, \ie, where all covariates of one group are nonzero. For comparison, we show on the right  the unit ball of both norms for the partition $\G=\{\{1,2\},\{3\}\}$, where they both reduce to the classical group Lasso penalty.}
\label{fig:balls}
\end{figure}

To summarize, we enforce a prior we have on $\w$ by introducing new variables in the optimization
problem~\eqref{eq:oglm}. The constraint we impose is that some groups should be shrunk to zero, and a covariate should have zero weight in $\w$ if all the
groups to which it belongs are set to zero. Equivalently, the support of $\w$ should be a union of groups. This new problem can be re-written as a classical
minimization of the empirical risk, penalized by a particular penalty
$\Omn$ defined in~\eqref{eq:pendef}. This penalty itself associates to each vector $\w$ the
solution of a particular constrained optimization problem. While this
formulation may not be the most intuitive, it allows to reframe the
problem in the classical context of penalized empirical risk
minimization.  In the remaining of this article, we
investigate in more details the latent group Lasso penalty $\Omn$, both theoretically and
empirically.

\subsection{Related work}

The idea of decomposing a parameter vector into some latent components and to regularize each of these components separately has appeared recently in the literature independently of this work.
In particular \cite{Jalali2010Dirty} proposed to consider such a decomposition in the case of multi-task learning, where each task specific parameter vector is decomposed into a first $\ell_1$ regularized vector and another vector, regularized with an $\ell_1/\ell_\infty$ norm; so as to share its sparsity pattern with all other tasks. The norm considered in that work could be interpreted as a special case of the latent group Lasso, where the set of groups consists of all singletons and groups of coefficients associated with the same feature across task.
The decomposition into latent variables is even more natural in the context of the work of 
\cite{Chen2011Robust}, \cite{Candes2009Robust}, or \cite{Agarwal2011Noisy} on robust PCA and matrix decomposition in which a matrix is decomposed in a low rank matrix regularized by the trace norm and a sparse or column-sparse matrix regularized by an $\ell_1$ or group $\ell_1$-norm. 

Another type of decompositions which is related to this norm is the idea of \emph{cover} of the support.
In particular it is interesting to consider the $\ell_0$ counterpart to this norm, which could be written as 
$$
\Omega^{\G}_0=\min_{\Gs \subset \G} \sum_{g \in \Gs} d_g \qquad \st \qquad \w=\sum_{g \in \Gs} \v^g, \: \supp{\v_g} \subset g\,.
$$

$\Omega^{\G}_0$ can then be interpreted as the value of a min set-cover. This penalization has been considered in \citet{Huang2009Learning} under the name \emph{block coding}, since, indeed, when $d_g$ is interpreted as a coding length, this penalization induces a code length on all sets, which can be interpreted in the MDL framework.

More generally, one could consider $\Omega^{\G}_q$ penalties, for all $q\geq 0$, by replacing the $\ell_2$ norm used in the definition of the latent group Lasso penalty~\eqref{eq:pendef} by a $\ell_q$ norm. It should be noted then that, unlike the support, the definition of group-support we introduce in Section~\ref{sec:group-support} changes if one considers the latent group Lasso with a different $\ell_q$-norm, and even if the weights $d_g$ change \footnote{We discuss the choice of weights in detail in Section~\ref{sec:weights}.}.

\citet{Obozinski2011Convex} considers the case of $\Omega^{\G}_q$, when the weights are given by a set function and shows that $\Omega^{\G}_q$ is then the tightest convex ``$\ell_q$ relaxation of the block-coding scheme of \citet{Huang2009Learning}. It also shows that when $\G=2^V$ and the weights are an appropriate power of a submodular function then $\Omega^{\G}_q$ is the norm that naturally extends the norm considered by \citet{Bach2010Structured}. 
 
It should be noted that recent theoretical analyses of the norm
studied in this paper have been proposed by
\citet{Percival2011Theoretical} and
\citet{Maurer2011Structured}. They adopt points of views or focus on
questions that are complementary of this work; we discuss those in
section~\ref{sec:related_theo}.

\section{Some properties of the latent group Lasso penalty}
\label{sec:prop}
In this section we study a few properties of the latent group Lasso $\Omn$, which will be in particular useful to prove consistency results in the next section. After showing that $\Omn$ is a valid norm, we compute its dual norm and provide two variational formulas. We then characterize its unit ball as the convex hull of basic disks, and compute its subdifferential. When used as a penalty for statistical inference, we further reinterpret it in the context of covariate duplication and multiple kernel learning. To lighten notations, in the rest of the paper we simply denote $\Omn$ by $\Oo$.

\subsection{Basic properties}
We first analyze the decomposition induced by~\eqref{eq:pendef} of a vector $\w\in\RR^p$ as
$\sum_{g\in\G} \v^g$. We denote by 
$\Vb(\w)\subset \VV_\G$ the set of $\m$-tuples of vectors $\vb
= \br{\v^g}_{g\in\G}\in\VV_\G$ that are solutions to the optimization problem in \eqref{eq:pendef},
\emph{i.e.}, which satisfy
\[
 \w=\sum_{g\in\G} \v^g \quad\text{and}\quad \Oo(\w) =
 \sum_{g\in\G}d_g\nm{\v^g}\,.
\] 
We first have that:
\begin{lemma}\label{lem:basic1}
For any $\w\in\RR^p$, $\Vb(\w)$ is
non-empty, compact and convex.
\end{lemma}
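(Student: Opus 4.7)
The plan is to verify the three properties in order — non-emptiness, convexity, compactness — by viewing the optimization problem defining $\Oo(\w)$ as minimizing a continuous, convex, coercive function over a non-empty closed affine subspace of the finite dimensional space $\VV_\G$, which we identify with a subspace of $\RR^{p|\G|}$.

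First, I would establish feasibility. The affine constraint set
\[
\A(\w) \eqdef \cbr{\vb \in \VV_\G \ :\ \sum_{g\in\G}\v^g = \w}
\]
is non-empty because, by assumption, every covariate belongs to at least one group, so one can assign each coordinate $\w_i$ to an arbitrary group containing $i$ and obtain a valid decomposition. The set $\A(\w)$ is clearly closed and affine, hence convex.

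Next I would apply Weierstrass' theorem to get existence of a minimizer. The objective $F(\vb) \eqdef \sum_{g\in\G} d_g \nm{\v^g}$ is continuous and convex on $\VV_\G$, and since all $d_g$ are positive, $F$ is a norm on $\VV_\G$. In particular, the sublevel sets $\cbr{F \leq c}$ are bounded, so $F$ is coercive. Therefore $F$ restricted to the closed set $\A(\w)$ attains its minimum, which shows $\Vb(\w)$ is non-empty. Compactness then follows: $\Vb(\w)$ is the preimage $F^{-1}(\Oo(\w)) \cap \A(\w)$, which is closed by continuity of $F$, and is bounded because for any $\vb \in \Vb(\w)$ one has $d_g \nm{\v^g} \leq F(\vb) = \Oo(\w)$ for every $g$.

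Convexity of $\Vb(\w)$ is the classical fact that the argmin of a convex function over a convex set is convex: if $\vb_1, \vb_2 \in \Vb(\w)$ and $t \in [0,1]$, then $t\vb_1 + (1-t)\vb_2 \in \A(\w)$ by convexity of $\A(\w)$, and $F(t\vb_1 + (1-t)\vb_2) \leq t F(\vb_1) + (1-t) F(\vb_2) = \Oo(\w)$, so equality holds and the convex combination is in $\Vb(\w)$. No step seems delicate; the only point worth being careful about is invoking coercivity of $F$, which relies on the assumption that each $d_g > 0$ so that $F$ is indeed a norm on the finite dimensional space $\VV_\G$.
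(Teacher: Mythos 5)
Your proof is correct, and it reaches the same conclusion by a more elementary, self-contained route than the paper. The paper disposes of the lemma in one line by viewing the objective of~\eqref{eq:pendef} (with the affine constraint absorbed) as a proper closed convex function with no direction of recession and citing Theorem~27.2 of \citet{Rockafellar1997Convex}, which directly yields a non-empty, compact, convex solution set. You instead verify feasibility explicitly, invoke Weierstrass together with coercivity of $F(\vb)=\sum_{g}d_g\nm{\v^g}$ (a norm on the finite-dimensional space $\VV_\G$, hence with bounded sublevel sets), and then get compactness from closedness plus the bound $d_g\nm{\v^g}\leq\Oo(\w)$, and convexity from the standard argmin-of-a-convex-function argument. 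These are two phrasings of the same underlying fact — for a closed proper convex function on a finite-dimensional space, coercivity of the restriction to the feasible affine set is exactly the absence of a direction of recession — so your argument buys self-containedness (no appeal to recession-cone theory), while the paper's buys brevity. One cosmetic caution if this were to be inserted into the paper: you denote the feasible affine set by $\A(\w)$, but the paper reserves $\A(\w)$ for the set of optimal dual variables in \eqref{eq:defA}, so that symbol should be renamed to avoid a clash.
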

\begin{proof}
The objective of problem~\eqref{eq:pendef} is a proper closed convex function with no \emph{direction of recession}. Lemma \ref{lem:basic1} is then the consequence of classical results in convex analysis, such as Theorem 27.2 page 265 of \cite{Rockafellar1997Convex}.
\end{proof}
The following statement shows that, unsurprisingly, we can regard $\Oo$ as a classical norm-based penalty.
\begin{lemma} 
$\w \mapsto \Oo(\w)$ is a norm.
\end{lemma}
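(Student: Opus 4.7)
The plan is to verify the four axioms of a norm directly using the variational definition~\eqref{eq:pendef}, exploiting \lemref{lem:basic1} to know that the infimum is attained.

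Non-negativity is immediate since the $d_g$ are positive and $\nm{\v^g}\geq 0$. For definiteness, one direction is obvious: taking $\v^g=\zv$ for all $g\in\G$ is feasible for $\w=\zv$ and yields $\Oo(\zv)=0$. Conversely, if $\Oo(\w)=0$, \lemref{lem:basic1} gives a minimizer $\vb\in\Vb(\w)$ with $\sum_{g\in\G} d_g \nm{\v^g}=0$; since each $d_g>0$, this forces $\v^g=\zv$ for all $g$, hence $\w=\sum_{g\in\G}\v^g=\zv$.

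For positive homogeneity, I would note that the map $\vb\mapsto \alpha\vb\eqdef(\alpha\v^g)_{g\in\G}$ is a bijection from the feasible set $\{\vb\in\VG:\sum_g \v^g=\w\}$ onto $\{\vb\in\VG:\sum_g \v^g=\alpha\w\}$, and it multiplies the objective by $\abs{\alpha}$; thus $\Oo(\alpha\w)=\abs{\alpha}\,\Oo(\w)$. For the triangle inequality, pick any $\vb_1\in\Vb(\w_1)$ and $\vb_2\in\Vb(\w_2)$; then $(\v_1^g+\v_2^g)_{g\in\G}$ is feasible for $\w_1+\w_2$ (support inclusions are preserved by addition inside a group), and the ordinary triangle inequality of $\nm{\cdot}$ yields
\[
\Oo(\w_1+\w_2)\leq \sum_{g\in\G} d_g\nm{\v_1^g+\v_2^g}\leq \sum_{g\in\G} d_g\nm{\v_1^g}+\sum_{g\in\G} d_g\nm{\v_2^g}=\Oo(\w_1)+\Oo(\w_2).
\]

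There is no real obstacle here; the only subtlety is appealing to \lemref{lem:basic1} so that one genuinely has minimizers (not just infima) when arguing definiteness and the triangle inequality, but even without attainment the same inequalities go through by taking infima on both sides. Finiteness of $\Oo(\w)$ on all of $\RR^p$ also needs a brief remark: it follows from the assumption $\bigcup_{g\in\G} g=[1,p]$, which ensures that at least one decomposition $\w=\sum_{g\in\G}\v^g$ with $\supp{\v^g}\subset g$ exists for every $\w$.
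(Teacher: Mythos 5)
Your proof is correct and follows essentially the same route as the paper: the triangle inequality via summing (sub-optimal) decompositions of the two vectors is exactly the paper's argument, and the homogeneity and definiteness checks you spell out are the steps the paper dismisses as trivial. Your remarks on attainment via \lemref{lem:basic1} (or infima) and on feasibility from $\bigcup_{g\in\G} g=[1,p]$ are sound but add nothing beyond the paper's implicit reasoning.
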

\begin{proof}
  Positive homogeneity and positive definiteness hold trivially. We
  show the triangular inequality.  Consider $\w,\w' \in \RR^p$, and let
  $\vb\in\Vb(\w)$ and $\vb'\in\Vb(\w')$ be respectively optimal
  decompositions of $\w$ and $\w'$, so that $\Oo(\w)=\sum_gd_g\nm{\v^g}$
  and $\Oo(\w')=\sum_g d_g\nm{{\v '}^g}$ with $\w=\sum_g \v^g$ and $\w' =
  \sum_{g} {\v'}^{g}$. Since $(\v^g+{\v'}^g)_{g \in \G}$ is a (a priori
  non-optimal) decomposition of $\w+\w'$, we clearly have~:
\[
\Oo(\w+\w') \leq \sum_{g \in \G} d_g\nm{\v^g+{\v'}^g} \leq \sum_{g\in\G}
d_g\left(\nm{\v^g}+\nm{{\v'}^g} \right)=\Oo(\w)+\Oo(\w')\,.
\]
\end{proof}

\subsection{Dual norm and variational characterizations}
\label{sec:variationalChar}
$\Oo$ being a norm, by Lemma \ref{lem:basic1}, we can consider its Fenchel dual norm $\Oo^*$ defined by:
\begin{equation}\label{eq:dualnorm}
\forall \alphab\in\RR^p,\quad\Oo^*(\alphab) = \sup_{\w\in\RR^p} \left\{ \w^\top \alphab \,|\, \Oo(\w) \leq 1\right\}\,.
\end{equation}
The following lemma shows that $\Oo^*$ has a simple closed form expression:
\begin{lemma}[dual norm]\label{lem:dualnorm}
The Fenchel dual norm $\Oo^*$ of $\Oo$ satisfies:
$$
\forall \alphab\in\RR^p,\quad \displaystyle \Oo^*(\alphab)=\max_{g \in \G} \: d_g^{-1}\, \nm{\alphab_g} \,.
$$
\end{lemma}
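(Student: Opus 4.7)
The plan is to compute the dual norm directly from its definition, by rewriting the constraint $\Oo(\w) \le 1$ in terms of the latent decomposition. Since $\Oo(\w)$ is defined as a minimum, the set $\{\w : \Oo(\w) \le 1\}$ coincides with the image under $\vb \mapsto \sum_{g \in \G} \v^g$ of the set $\{\vb \in \VV_\G : \sum_g d_g \nm{\v^g} \le 1\}$. Substituting this parametrization, and using the fact that $\supp{\v^g} \subset g$ so that $\v^{g\top} \alphab = \v^{g\top} \alphab_g$, the definition \eqref{eq:dualnorm} becomes
\[
\Oo^*(\alphab) \,=\, \sup_{\vb \in \VV_\G} \Bigl\{ \sum_{g \in \G} \v^{g\top} \alphab_g \;\Big|\; \sum_{g \in \G} d_g \nm{\v^g} \le 1 \Bigr\}.
\]

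For the upper bound I would apply Cauchy--Schwarz groupwise, writing $\v^{g\top}\alphab_g \le \nm{\v^g} \nm{\alphab_g} = \bigl(d_g\nm{\v^g}\bigr)\bigl(d_g^{-1}\nm{\alphab_g}\bigr)$, and then factor out the maximum over $g$ of $d_g^{-1}\nm{\alphab_g}$. The constraint $\sum_g d_g \nm{\v^g} \le 1$ then yields $\Oo^*(\alphab) \le \max_{g \in \G} d_g^{-1} \nm{\alphab_g}$.

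For the matching lower bound, I would exhibit a feasible $\vb$ saturating the inequality. Let $g^\star \in \arg\max_{g \in \G} d_g^{-1}\nm{\alphab_g}$; assuming $\alphab_{g^\star} \neq \zv$, set $\v^{g^\star} = d_{g^\star}^{-1} \alphab_{g^\star}/\nm{\alphab_{g^\star}}$ and $\v^g = \zv$ for $g \neq g^\star$. This tuple lies in $\VV_\G$, exactly meets the budget $\sum_g d_g\nm{\v^g} = 1$, and delivers objective value $d_{g^\star}^{-1}\nm{\alphab_{g^\star}}$. The degenerate case where $\alphab_g = \zv$ for every $g$ (equivalently $\alphab = \zv$, by the covering assumption $\bigcup_g g = [1,p]$) is handled separately, both sides being zero.

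There is no real obstacle here; the argument is the standard dual-norm calculation for a sum-of-norms penalty. The only minor point to watch is the bookkeeping that $\supp{\v^g} \subset g$ collapses $\v^{g\top}\alphab$ to $\v^{g\top}\alphab_g$, which is what allows the coupling between the groups in the objective to decouple and match the $\max$ appearing in the claimed dual norm.
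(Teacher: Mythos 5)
Your proof is correct and follows essentially the same route as the paper: both start from the definition of the dual norm, replace the constraint $\Oo(\w)\le 1$ by the existence of a decomposition $\vb\in\VV_\G$ with $\sum_{g}d_g\nm{\v^g}\le 1$, and then identify the optimum as being attained by concentrating on a single group with $\v^g$ aligned with $\alphab_g$. The only difference is organizational -- the paper eliminates $\vb$ exactly via auxiliary simplex variables $\eta_g$, whereas you give a Cauchy--Schwarz upper bound and a matching explicit maximizer -- which amounts to the same computation.
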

\begin{proof}
We start from the definition of the dual norm \eqref{eq:dualnorm} and compute:
  \begin{align*}
\Oo^*(\alphab) &=  \max_{\w \in \RR^p}& &\w^\top \alphab \quad & &\text{s.t.} \quad \Oo(\w) \leq 1 \\
&= \max_{\w\in\RR^p, \vb \in \VV_\G} & &\w^\top \alphab \quad & &\text{s.t.} \quad \w=\sum_{g \in \G} \v^g, \; \sum_{g\in\G} d_g\nm{\v^g} \leq 1\\
&= \max_{\vb\in\VV_\G} & \sum_{g\in\G} \: &{\v^g}^\top \alphab \quad & &\text{s.t.}\quad \sum_{g\in\G} d_g\nm{\v^g} \leq 1\\
&= \max_{\vb\in\VV_\G, \eta \in \RR_+^{\m}} & \sum_{g\in\G} \: &{\v^g}^\top \alphab \quad & &\text{s.t.}\quad \sum_{g\in\G} \eta_g \leq 1, \; \forall g\in\G, d_g\nm{\v^g} \leq \eta_g\\
&= \max_{\eta \in \RR_+^{\m}} & \sum_{g\in\G} \: &\eta_g \: d_g^{-1}\nm{\alphab_g} \quad & &\text{s.t.} \quad \sum_{g\in\G} \eta_g \leq 1\\
&= \max_{g \in\G} & & d_g^{-1}\nm{\alphab_g}\,.
\end{align*}
The second equality is due to the fact that~:
\[
\{\w \;|\; \Oo(\w) \leq 1 \} = \big \{\w \;|\; \exists \vb \in \VV_\G \quad \text{s.t.}\quad \w=\sum_g \v^g,\; \sum_g
d_g\nm{\v^g} \leq 1 \big \} \,,
\]
and the fifth results from the explicit solution $\v^g = \alphab_g \eta_g d_g^{-1} \nm{\alphab_g}^{-1}$ of the maximization in $\vb$ in the fourth line.
\end{proof}

\begin{remark}
Remembering that the infimal convolution $f \infconv g$ of two convex functions $f$ and $g$ is defined as 
$(f \infconv g)(\w)=\inf_{\v \in \RR^p}  \, \big \{ f(\v)+g(\w-\v) \big \}$ \citep[see][]{Rockafellar1997Convex}, it could be noted that $\Oo$ is the infimal convolution of all functions $\omega_g$ for $g \in \G$ defined as $\omega_g: \w \mapsto \|\w_g\| \iota_g(\w)$ with $\iota_g(\w)=0$ if $\supp{\w} \subset g$ and $+ \infty$ otherwise. One of the main properties motivating the notion of infimal convolution is the fact that it can be defined via $(f \infconv g)^*=f^*+g^*$, where $^*$ denotes Fenchel-Legendre conjugation. Several of the properties of $\Oo$ can be derived from this interpretation but we will however show them directly.
\end{remark}

The norm $\Oo$ was initially defined as the solution of an optimization problem in \eqref{eq:pendef}. From the characterization of $\Oo^*$ we can easily derive a second variational formulation:
\begin{lemma}[second variational formulation]\label{lem:variational1}
For any $\w\in\RR^p$, we have
\begin{equation}\label{eq:variational1}
\Oo(\w)=\max_{\alphab \in \RR^p} \: \alphab^\top \w \quad \text{s.t.} \quad \nm{\alphab_g} \leq d_g \quad\text{for all }g\in\G\,.
\end{equation}
\end{lemma}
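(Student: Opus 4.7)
The plan is to exploit the fact, established in the preceding lemma, that $\Oo$ is a norm on $\RR^p$, together with the closed-form expression for its Fenchel dual $\Oo^*$ derived in \lemref{lem:dualnorm}. Since $\Oo$ is a norm on a finite-dimensional space, it is a proper, lower semi-continuous, positively homogeneous convex function, so the biduality relation $\Oo^{**}=\Oo$ holds. Equivalently, for every $\w \in \RR^p$,
\[
\Oo(\w) \;=\; \sup_{\alphab \in \RR^p} \bigl\{ \alphab^\top \w \,:\, \Oo^*(\alphab) \leq 1 \bigr\}.
\]

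Given this, the proof amounts to rewriting the constraint $\Oo^*(\alphab)\leq 1$ using the formula of \lemref{lem:dualnorm}. Substituting $\Oo^*(\alphab) = \max_{g\in\G} d_g^{-1}\nm{\alphab_g}$, the condition $\Oo^*(\alphab)\leq 1$ becomes $d_g^{-1}\nm{\alphab_g}\leq 1$ for every $g\in\G$, i.e.\ $\nm{\alphab_g}\leq d_g$ for all $g\in\G$. This immediately yields the claimed variational formula. Finally, the supremum is attained (so it may be written as a $\max$), because the feasible set $\{\alphab : \nm{\alphab_g}\leq d_g,\ g\in\G\}$ is a closed bounded subset of $\RR^p$ (recall that the groups cover $[1,p]$, so each coordinate is bounded by some $d_g$), hence compact, and the objective $\alphab\mapsto \alphab^\top \w$ is continuous.

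There is no substantive obstacle here: the only nontrivial ingredient is biduality for norms, which is standard (it follows, for example, from the separating hyperplane theorem, or directly from Theorem 12.2 and Corollary 13.2.1 in \cite{Rockafellar1997Convex}). Alternatively, one could avoid invoking biduality by a direct Lagrangian argument: start from \eqref{eq:pendef}, introduce a dual variable $\alphab$ for the linear constraint $\w=\sum_g \v^g$, and write
\[
\Oo(\w) \;=\; \min_{\vb\in\VV_\G}\sup_{\alphab\in\RR^p} \Big\{ \alphab^\top\!\big(\w-\textstyle\sum_g \v^g\big) + \sum_g d_g\nm{\v^g}\Big\}.
\]
Swapping min and max (valid since the inner problem is convex in $\vb$, concave in $\alphab$, and the feasible set for $\vb$ is such that strong duality applies, as in Lemma \ref{lem:dualnorm}), the inner minimization over $\v^g$ yields the indicator of $\{\nm{\alphab_g}\leq d_g\}$ for each $g$, leaving exactly the stated maximization. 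Either route is straightforward; the present Lemma is essentially a corollary of \lemref{lem:dualnorm}.
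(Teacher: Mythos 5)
Your argument is correct and follows essentially the same route as the paper: invoke the biduality $\Oo^{**}=\Oo$ valid for any norm, then substitute the closed-form expression of $\Oo^*$ from Lemma~\ref{lem:dualnorm} to rewrite the constraint $\Oo^*(\alphab)\leq 1$ as $\nm{\alphab_g}\leq d_g$ for all $g\in\G$. The additional remarks on attainment of the supremum and the alternative Lagrangian derivation are fine but not needed beyond what the paper's proof already contains.
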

\begin{proof}
Since the bi-dual of a norm is the norm itself, we have the variational form
\begin{equation}
\label{dualvar}
\Oo(\w)=\max_{\alphab \in \RR^p} \: \alphab^\top \w \quad \text{s.t.} \quad \Oo^*(\alphab) \leq 1\,.
\end{equation}
Plugging the characterization of $\Oo^*$ of Lemma \ref{lem:dualnorm} into this equation finishes the proof.
\end{proof}
For any $\w\in\RR^p$, we denote by $\A(\w)$ the set of $\alphab\in\RR^p$ in the dual unit sphere which solve the second variational formulation \eqref{eq:variational1} of $\Oo$, namely:
\begin{equation}\label{eq:defA}
\A(\w) \eqdef
\underset{\alphab \in \RR^p, \; \Oo^*(\alphab) \leq 1}{\text{argmax}}\:
\alphab^\top \w\,.
\end{equation}
With a few more efforts, we can also derive a third variational representation of the norm $\Oo$, which will be useful in Section \ref{sec:consistency} in the proofs of consistency:
\begin{lemma}[third variational formulation]\label{lem:trick}
For any $\w\in\RR^p$, we also have
\begin{equation}
\label{eq:trick}
\Oo(\w)= \frac{1}{2} \min_{\lambdab \in \RR^\m_+} \:\sum_{i=1}^p \:\frac{\w_i^2}{\sum_{g \ni i} \lambdag} + \sum_{g \in \G} \: d_g^2 \, \lambdag\,.
\end{equation}
\end{lemma}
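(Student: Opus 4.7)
The plan is to start from the defining variational formula \eqref{eq:pendef} and invoke the elementary scalar identity
\[
2a \;=\; \min_{t \geq 0}\left(\frac{a^2}{t}+t\right), \qquad a \geq 0,
\]
valid under the $0/0 = 0$ convention recalled in Section~\ref{sec:notations}, with minimum attained at $t=a$ when $a > 0$ and at $t=0$ when $a = 0$. Applying this identity with $a = d_g \nm{\v^g}$ and the change of variable $t = d_g^2 \lambdag$ rewrites each summand of the penalty as
\[
2\, d_g \nm{\v^g} \;=\; \min_{\lambdag \geq 0}\left(\frac{\nm{\v^g}^2}{\lambdag} + d_g^2 \lambdag\right).
\]

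Next, I would substitute this into \eqref{eq:pendef} and swap the order of the two minimizations -- a legitimate step since joint minimization can be performed in any order -- to obtain
\[
2\,\Oo(\w) \;=\; \min_{\lambdab \in \RR^\m_+}\left\{\sum_{g\in\G} d_g^2 \lambdag \;+\; \min_{\vb\in\VG,\; \sum_g \v^g = \w}\; \sum_{g\in\G}\frac{\nm{\v^g}^2}{\lambdag}\right\}.
\]
For fixed $\lambdab$, the inner problem decouples coordinate-wise: each $i \in [1,p]$ enters the objective only through the scalars $x_g \eqdef (\v^g)_i$ for $g \ni i$, the remaining entries being forced to zero by the support constraint on the latent variables. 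The inner minimum therefore reduces to $p$ independent one-dimensional quadratic problems of the form $\min \sum_{g \ni i} x_g^2/\lambdag$ subject to $\sum_{g \ni i} x_g = \w_i$. A one-line Cauchy--Schwarz argument applied to the pairs $\bigl(x_g/\sqrt{\lambdag},\sqrt{\lambdag}\bigr)$ then shows that this minimum equals $\w_i^2 / \sum_{g \ni i}\lambdag$, attained at $x_g = \lambdag \w_i / \sum_{g' \ni i}\lambda_{g'}$. Summing over $i$ and dividing by $2$ yields exactly \eqref{eq:trick}.

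The main, if rather minor, obstacle is the boundary case $\sum_{g \ni i}\lambdag = 0$, which must be reconciled with the convention of Section~\ref{sec:notations}. Under that convention, any $g \ni i$ with $\lambdag = 0$ forces $(\v^g)_i = 0$ in the inner problem (otherwise the inner objective is $+\infty$); combined with the support constraint, the equality $\sum_g \v^g = \w$ then demands $\w_i = 0$. When $\w_i = 0$, both sides of the claimed identity vanish (since $0/0 = 0$); otherwise both diverge to $+\infty$. Hence \eqref{eq:trick} holds uniformly for all $\lambdab \in \RR^\m_+$, and the rest of the argument is simply a mechanical application of Cauchy--Schwarz together with the interchange of minima.
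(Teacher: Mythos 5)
Your proof is correct, but it takes a genuinely different route from the paper's. The paper proves \eqref{eq:trick} by Lagrangian duality applied to the \emph{second} variational formulation \eqref{eq:variational1}: it introduces multipliers $\lambdag$ for the constraints $\nm{\alphab_g}\leq d_g$, writes $\Oo(\w)$ as the saddle point $\min_{\lambdab\in\RR^\m_+}\max_{\alphab\in\RR^p}\, \w^\top\alphab-\frac{1}{2}\sum_{g}\lambdag(\nm{\alphab_g}^2-d_g^2)$, and eliminates $\alphab$ via the stationarity relation $\w_i=\alphab_i\sum_{g\ni i}\lambdag$. You instead stay entirely in the primal: starting from \eqref{eq:pendef}, you apply the scalar identity $2a=\min_{t\geq 0}(a^2/t+t)$ to each term $d_g\nm{\v^g}$, swap the two minimizations, and solve the inner problem in $\vb$ for fixed $\lambdab$ coordinate-wise by Cauchy--Schwarz, which yields $\w_i^2/\sum_{g\ni i}\lambdag$ with minimizer $\v^g_i=\lambdag\w_i/\sum_{g'\ni i}\lambda_{g'}$. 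This is essentially the computation the paper itself performs later, in the proof of Lemma~\ref{lem:bijection} (the joint formulation \eqref{eq:kenta} and its reduction), so your argument in effect proves Lemma~\ref{lem:trick} by the route the paper reserves for the bijection result. What your approach buys: it is self-contained and elementary (no need for the dual norm of Lemma~\ref{lem:dualnorm} or for \eqref{eq:variational1}), it treats the boundary case $\sum_{g\ni i}\lambdag=0$ explicitly rather than with the paper's brief ``distinguishing the cases $\w_i=0$ and $\w_i\neq 0$'', and it exhibits the explicit correspondence between optimal $\vb$ and optimal $\lambdab$, which is exactly what Lemma~\ref{lem:bijection} needs. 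What the paper's route buys: it is shorter, and it produces as a by-product the relation $\w_i=\alphab_i\sum_{g\ni i}\lambdag$ tying $\Lambdab(\w)$ to the dual variables $\A(\w)$, which is reused later (end of the proof of Lemma~\ref{lem:bijection}, and Lemma~\ref{zeta_prop}), and it identifies the $\lambdag$ as Lagrange multipliers of the dual constraints, an interpretation exploited in the closed-form examples of the appendix.
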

\begin{proof}
For any $\w\in\RR^p$, we can rewrite the solution of the constrained optimization problem of the second variational formulation \eqref{eq:variational1} as the saddle point of the Lagrangian:
$$
\Oo(\w) = \min_{\lambdab \in \RR^{\m}_+}\max_{\alphab \in \RR^p} \: \w^\top \alphab - \frac{1}{2}\sum_{g \in \G} \lambdag \big ( \nm{\alphab_g}^2-d_g^2\big )\,.
$$
Optimizing in $\alphab$ leads to $\alphab$ being solution of  $\w_i=\alphab_i \sum_{g \ni i} \lambdag$, which (distinguishing the cases $\w_i=0$ and $\w_i \neq 0$) yields problem (\ref{eq:trick}) when replacing $\alphab_i$ by it optimal value.
\end{proof}
Let us denote by $\Lambdab(\w) \subset \RR^m_+$ the set of solutions to the third variational formulation \eqref{eq:trick}. Note that there is not necessarily a unique solution to \eqref{eq:trick}, because the Hessian of the objective function is not always positive definite (see lemma \ref{lem:uniqueness} in Appendix \ref{app:uniqueness} for a characterization of cases in which positive definiteness can be guaranteed). For any $\w\in\RR^p$, we now have three variational formulations for $\Oo(\w)$, namely \eqref{eq:pendef}, \eqref{eq:variational1} and \eqref{eq:trick}, with respective solutions sets $\Vb(\w)$, $\A(\w)$ and $\Lambdab(\w)$. The following lemma shows that $\Vb(\w)$ is in bijection with $\Lambdab(\w)$. 
\begin{lemma}\label{lem:bijection}
Let $\w\in\RR^p$. The mapping 
\begin{equation}\label{eq:lambdamapping}
\begin{split}
\lambdab : \,&\VV_\G  \rightarrow \RR^m \\
& \vb \mapsto   \lambdab(\vb) = \br{d_g^{-1}\nm{\v^g}}_{g\in\G}
\end{split}
\end{equation}
is a bijection from $\Vb(\w)$ to $\Lambdab(\w)$. For any $\lambdab\in\Lambdab(\w)$, the only vector $\vb\in\Vb(\w)$ that satisfies $\lambdab(\vb) = \lambdab$ is given by $\v^g_g=\lambdag\alphab_g$, where $\alphab$ is any vector of $\A(\w)$.
\end{lemma}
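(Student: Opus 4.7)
The plan is to use Lagrangian duality to tie together the three variational formulations of $\Oo$. The key observation is that $\A(\w)$ is the set of optimal dual multipliers for the equality constraint in \eqref{eq:pendef}, while $\Lambdab(\w)$ is the set of optimal dual multipliers for the inequality constraints in \eqref{eq:variational1}. Once these identifications are in place, writing KKT for each of the two primal/dual pairs simultaneously gives an algebraic relation between $\vb$, $\alphab$ and $\lambdab$, and the bijection follows essentially by bookkeeping.

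First I would dualize \eqref{eq:pendef}: since $\inf_{\v^g\in\RR^g}(d_g\|\v^g\|-\alphab_g^\top\v^g)$ equals $0$ when $\|\alphab_g\|\leq d_g$ and $-\infty$ otherwise, the Lagrangian dual coincides with \eqref{eq:variational1}, hence $\A(\w)$ is exactly its optimal dual set. Stationarity of the Lagrangian in $\v^g$ then asserts $\alphab_g\in\partial_{\v^g}(d_g\|\v^g\|)$ for any $(\vb,\alphab)\in\Vb(\w)\times\A(\w)$; using the standard subdifferential of the Euclidean norm on $\RR^g$ (the singleton $d_g\v^g/\|\v^g\|$ when $\v^g\neq 0$, the closed ball of radius $d_g$ otherwise), this translates into the single identity
\[
\v^g_g=\lambdag(\vb)\,\alphab_g\qquad\text{valid for every }g\in\G,
\]
whether $\v^g$ vanishes or not. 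Symmetrically, attaching nonnegative multipliers $\lambdag$ to the constraints $\|\alphab_g\|^2\leq d_g^2$ of \eqref{eq:variational1} and maximizing in $\alphab$ reproduces, as already noted in the proof of Lemma~\ref{lem:trick}, the objective of \eqref{eq:trick}; strong duality identifies $\Lambdab(\w)$ with the corresponding optimal dual set. Any pair $(\alphab,\lambdab)\in\A(\w)\times\Lambdab(\w)$ is therefore a saddle point, so the first-order condition in $\alphab$ yields $\w_i=\alphab_i\sum_{g\ni i}\lambdag$ for every $i$, and complementary slackness yields $\|\alphab_g\|=d_g$ whenever $\lambdag>0$.

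Armed with these three relations, the bijection is a calculation. Given $\vb\in\Vb(\w)$ and any $\alphab\in\A(\w)$, set $\lambdab:=\lambdab(\vb)$ and substitute $\v^g_g=\lambdag\alphab_g$: summing over $g$ gives $\w_i=\alphab_i\sum_{g\ni i}\lambdag$, and then with the convention $0/0=0$,
\[
\sum_i \frac{\w_i^2}{\sum_{g\ni i}\lambdag}=\sum_i \alphab_i^2\sum_{g\ni i}\lambdag=\sum_g \lambdag\|\alphab_g\|^2=\sum_g d_g^2\lambdag,
\]
where the last equality uses $\|\alphab_g\|=d_g$ on indices with $\lambdag>0$ (and both sides vanish otherwise). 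Hence the objective of \eqref{eq:trick} at $\lambdab$ equals $\sum_g d_g^2\lambdag=\sum_g d_g\|\v^g\|=\Oo(\w)$, proving $\lambdab\in\Lambdab(\w)$. For the converse, given $\lambdab\in\Lambdab(\w)$ and any $\alphab\in\A(\w)$, I would define $\v^g$ by $\v^g_g:=\lambdag\alphab_g$ and $\v^g_{g^c}:=0$, then verify in turn $\sum_g\v^g=\w$ (from stationarity in $\alphab$), $d_g^{-1}\|\v^g\|=\lambdag$ (from $\|\alphab_g\|=d_g$ when $\lambdag>0$), and $\sum_g d_g\|\v^g\|=\sum_g d_g^2\lambdag=\Oo(\w)$, so that $\vb\in\Vb(\w)$ with $\lambdab(\vb)=\lambdab$. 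Injectivity of $\lambdab$ on $\Vb(\w)$ is then automatic from $\v^g_g=\lambdag(\vb)\,\alphab_g$.

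The one technical subtlety is showing that the inverse formula $\v^g_g=\lambdag\alphab_g$ is independent of the choice of $\alphab\in\A(\w)$, which is what makes it a single-valued map. The relation $\w_i=\alphab_i\sum_{g\ni i}\lambdag$ pins down $\alphab_i$ uniquely except on coordinates $i$ with $\sum_{g\ni i}\lambdag=0$; but on those coordinates every $\lambdag$ with $g\ni i$ vanishes, so $\v^g_i=\lambdag\alphab_i=0$ regardless of $\alphab_i$. This careful $0/0$ bookkeeping, together with the observation that feasibility of $\lambdab$ in \eqref{eq:trick} already forces $\w_i=0$ on precisely those coordinates, is the only real work in the argument.
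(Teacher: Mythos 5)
Your proof is correct, and it is organized differently from the paper's. The paper introduces the joint problem \eqref{eq:kenta} in $(\vb,\lambdab')$ via the quadratic-over-linear representation of each term $d_g\nm{\v^g}$, identifies its solution set with $\{(\vb,\lambdab(\vb)):\vb\in\Vb(\w)\}$, and then marginalizes out $\vb$ by a Lagrangian computation to recognize the residual problem in $\lambdab'$ as \eqref{eq:trick}; the explicit preimage formula $\v^g_i=\frac{\lambdag}{\sum_{h\ni i}\lambdab_h}\w_i$ falls out of that elimination, and the link to $\alphab$ is made only at the end. You instead never form the joint problem: you identify $\A(\w)$ and $\Lambdab(\w)$ as the optimal multiplier sets of \eqref{eq:pendef} and \eqref{eq:variational1} respectively, extract the KKT/saddle-point relations $\v^g_g=\lambdag(\vb)\,\alphab_g$ (which is Lemma~\ref{lem:alpha}), $\w_i=\alphab_i\sum_{g\ni i}\lambdag$ and $\nm{\alphab_g}=d_g$ when $\lambdag>0$, and then certify both directions of the bijection by direct evaluation of the objectives against the common optimal value $\Oo(\w)$ guaranteed by Lemmas~\ref{lem:variational1} and~\ref{lem:trick}. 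What each route buys: the paper's marginalization is self-contained and produces the alternative closed form of the decomposition in terms of $\w$ alone, whereas your argument reuses the already-established lemmas to reduce the bijection to bookkeeping, and it treats explicitly the one point the paper leaves implicit, namely that the formula $\v^g_g=\lambdag\alphab_g$ does not depend on the choice of $\alphab\in\A(\w)$. One small wording correction: it is not mere feasibility of $\lambdab$ in \eqref{eq:trick} that forces $\w_i=0$ on coordinates with $\sum_{g\ni i}\lambdag=0$, but finiteness of the objective at an optimal $\lambdab$ (or, just as well, the stationarity relation $\w_i=\alphab_i\sum_{g\ni i}\lambdag$ you already have); this does not affect the argument.
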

\begin{proof}
To express the penalty as a minimization problem, let us use the following basic equality valid for any $x\in\RR_+$:
$$
x = \frac{1}{2} \min_{\eta \geq 0} \sqb{\frac{x^2}{\eta} + \eta}\,,
 $$
where the unique minimum in $\eta$ is reached for $\eta=x$. From this we deduce that, for any $\v\in\RR^p$ and $d>0$~:
$$
\displaystyle d \nm{\v}= \frac{1}{2}\min_{\eta \geq 0} d \sqb{\frac{\nm{\v}^2}{\eta} + \eta }=\frac{1}{2} \min_{\lambda' \geq 0} \sqb{\frac{\nm{\v}^2}{\lambda'} + d^2 \, \lambda' }\,,
$$
where the unique minimum in the last term is attained for $\lambda'=d^{-1} \nm{\v}$. Using definition \eqref{eq:pendef} 
we can therefore write $\Oo(\w)$ as the optimum value of a jointly convex optimization problem in $\vb\in\VV_\G$ and $\lambdab' = \br{\lambdapg}_{g\in\G}\in\RR_+^m$~:
\begin{equation}\label{eq:kenta}
\Oo(\w)  = \min_{\vb \in \VV_\G,\: \sum_{g \in \G} \v^g=\w, \: \lambdab' \in \RR^{\m}_+} \quad \frac{1}{2} \sum_{g \in \G}   \Big [ \frac{\nm{\v^g}^2}{\lambdapg} + d_g ^2 \, \lambdapg \Big ]\,,
\end{equation}
where for any $\vb$, the minimum in $\lambdab'$ is uniquely attained for $\lambdab' = \lambdab(\vb)$ defined in~\eqref{eq:lambdamapping}. By definition of $\Vb(\w)$, the set of solutions of \eqref{eq:kenta} is therefore exactly the set of pairs of the form $\br{\vb,\lambdab(\vb)}$ for $\vb\in\Vb(\w)$. Let us now isolate the minimization over $\vb$ in \eqref{eq:kenta}. To incorporate the constraint $\sum_{g \in \G} \v^g=\w$ we rewrite \eqref{eq:kenta} with a Lagrangian:
$$
\Oo(\w) =  \min_{\lambdab' \in \RR^{\m}_+} \max_{\alphab' \in \RR^p} \min_{\vb \in \VV_\G} \quad \frac{1}{2} \sum_{g \in \G}   \Big [ \frac{\nm{\v^g}^2}{\lambdapg} + d_g ^2 \, \lambdapg \Big ] + {\alphab'}^\top (\w -\sum_{g \in \G} \v^g)\,.
$$
The inner minimization in $\vb$, for fixed $\lambdab'$ and $\alphab'$, yields $\v^g_i=\lambdapg\alphab'_i$. The constraint $\w =\sum_{g \in \G} \v^g$ therefore implies that, after optimization in $\vb$ and $\alphab'$, we have $\alphab'_i=\frac{w_i}{\sum_{g \ni i} \lambdapg}$, and as a consequence that $\v^g_i=\frac{\lambdapg}{\sum_{h \ni i } \lambdab'_h} \, \w_i$. A small computation now shows that, after optimization in $\vb$ and $\alphab'$ for a fixed $\lambdab'$, we have:
$$
\sum_{g \in \G} \frac{\nm{\v^g}^2}{\lambdapg}=\sum_{i=1}^p \sum_{g \ni i} \frac{\big ( \v^g_i \big )^2}{\lambdapg}=\sum_{i=1}^p \sum_{g \ni i} \frac{\lambdapg\, \w_i^2}{\Big(\sum_{h \ni i } \lambdab'_h \Big)^2}=\sum_{i=1}^p  \frac{\w_i^2}{\sum_{h \ni i } \lambdab'_h}\,.
$$
Plugging this into \eqref{eq:kenta}, we see that after optimization in $\vb$, the optimization problem in $\lambdab'$ is exactly \eqref{eq:trick}, which by definition admits $\Lambdab(\w)$ as solutions, while we showed that \eqref{eq:kenta} admits $\lambdab\br{\Vb(\w)}$ as solutions. This shows that $\lambdab\br{\Vb(\w)} = \Lambdab(\w)$, and since for any $\lambdab'\in\Lambdab(\w)$ there exists a unique $\vb\in\Vb(\w)$ that satisfies $\lambdab(\vb)=\lambdab'$, namely, $\v^g_i=\frac{\lambdapg}{\sum_{h \ni i } \lambdab'_h} \, \w_i$, $\lambdab$ is indeed a bijection from $\Vb(\w)$ to $\Lambdab(\w)$. Finally, we noted in the proof of Lemma \ref{lem:trick} that for any $\lambdab\in\Lambdab(\w)$ and $\alphab\in\A(\w)$, $\w_i = \alphab_i \sum_{h \ni i } \lambdab_h$. This shows that the unique $\vb\in\Vb(\w)$ associated to a $\lambdab\in\Lambdab(\w)$ can equivalently be written $\v_g^g = \lambdab_g \alphab_g$, which concludes the proof of Lemma \ref{lem:bijection}.
\end{proof}

\subsection{Characterization of the unit ball of $\Oo$ as a convex hull}
Figure \ref{fig:balls}(b) suggests visually that the unit ball of $\Oo$ is just the convex hull of a horizontal disk and a vertical one. This impression is correct and formalized more generally in the following lemma.
\begin{lemma} 
For any group $g\in\G$, define the hyperdisks $\mathcal{D}_g=\{\w \in \RR^p \mid \nm{\w_g} \leq d_g^{-1}, \: \w_{g^c}=\zv\}$. Then, the unit ball of $\Omega$ is the convex hull of the union of hyper-disks $\cup_{g \in \G} \: \mathcal{D}_g$.
\end{lemma}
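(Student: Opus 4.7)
My plan is to derive this identity via polar duality, leveraging the dual norm formula already proved in Lemma~\ref{lem:dualnorm}. Writing $S^\circ$ for the polar of a set $S \subset \RR^p$, the strategy is to compute $\bigl(\bigcup_g \mathcal{D}_g\bigr)^\circ$ and show it coincides with the unit ball of $\Oo^*$; the claim then follows from the bipolar theorem.

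The first step I would carry out is to identify the polar of each individual hyperdisk. Since any $\w \in \mathcal{D}_g$ satisfies $\w_{g^c} = \zv$, one has $\alphab^\top \w = \alphab_g^\top \w_g$, and the supremum of this over $\|\w_g\| \leq d_g^{-1}$ equals $d_g^{-1}\nm{\alphab_g}$. Hence $\mathcal{D}_g^\circ = \{\alphab \in \RR^p : \nm{\alphab_g} \leq d_g\}$ (unbounded in the directions $g^c$). Using the standard identity $\bigl(\bigcup_g \mathcal{D}_g\bigr)^\circ = \bigcap_g \mathcal{D}_g^\circ$ and Lemma~\ref{lem:dualnorm}, this gives
$$
\Bigl(\bigcup_{g \in \G} \mathcal{D}_g\Bigr)^\circ = \bigcap_{g \in \G} \{\alphab : \nm{\alphab_g} \leq d_g\} = \{\alphab : \Oo^*(\alphab) \leq 1\}.
$$

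Next I would take polars again. Since $\G$ is finite and each $\mathcal{D}_g$ is compact, $\bigcup_g \mathcal{D}_g$ is compact, so its convex hull is compact, hence closed, and contains $\zv$ (as $\zv \in \mathcal{D}_g$ for every $g$). The bipolar theorem therefore yields
$$
\mathrm{conv}\Bigl(\bigcup_{g \in \G} \mathcal{D}_g\Bigr) = \Bigl(\bigcup_{g \in \G} \mathcal{D}_g\Bigr)^{\circ \circ} = \{\alphab : \Oo^*(\alphab) \leq 1\}^\circ,
$$
and the last set is by construction (standard duality between a norm's unit ball and that of its dual) the unit ball $\{\w : \Oo(\w) \leq 1\}$ of $\Oo$, finishing the argument.

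If I preferred to avoid invoking the bipolar theorem, a direct two-inclusion proof is equally short. Given $\w$ with $\Oo(\w) \leq 1$, I would pick $\vb \in \Vb(\w)$, set $\mu_g = d_g \nm{\v^g}$ and $\u^g = \v^g/\mu_g$ when $\mu_g > 0$ (and $\u^g = \zv$ otherwise); then $\u^g \in \mathcal{D}_g$, $\w = \sum_g \mu_g \u^g$, and $\sum_g \mu_g = \Oo(\w) \leq 1$, so padding with weight $1 - \sum_g \mu_g$ on $\zv \in \mathcal{D}_g$ exhibits $\w$ as a convex combination of elements of $\bigcup_g \mathcal{D}_g$. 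Conversely, any convex combination $\w = \sum_i \mu_i \u^i$ with $\u^i \in \mathcal{D}_{g_i}$, $\sum_i \mu_i = 1$, yields, by grouping terms sharing the same $g$, a feasible decomposition in \eqref{eq:pendef} whose cost is bounded by $\sum_i \mu_i = 1$. I do not expect any real obstacle: the only technical points to watch are the degenerate case $\mu_g = 0$ when normalising and the closure issue in the bipolar route, both handled by the compactness and the inclusion $\zv \in \mathcal{D}_g$.
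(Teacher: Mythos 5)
Your argument is correct, and your primary route is genuinely different from the paper's. The paper proves the lemma by a direct two-inclusion argument: any convex combination $\w=\sum_g t_g\alphab^g$ with $\alphab^g\in\mathcal{D}_g$ supplies a suboptimal decomposition certifying $\Oo(\w)\leq\sum_g t_g\leq 1$, and conversely an optimal (or merely feasible) decomposition $\vb$ with $\sum_g d_g\nm{\v^g}\leq 1$ is renormalised, exactly as in your fallback sketch, into a point of the convex hull --- so your ``direct two-inclusion'' alternative is essentially the paper's proof. Your main route instead passes through duality: you compute $\mathcal{D}_g^\circ=\{\alphab : \nm{\alphab_g}\leq d_g\}$, use $(\cup_g\mathcal{D}_g)^\circ=\cap_g\mathcal{D}_g^\circ$ together with \lemref{lem:dualnorm} to identify this with the unit ball of $\Oo^*$, and then apply the bipolar theorem, with the closure issue correctly dispatched by compactness of the convex hull of a finite union of compact disks and the fact that $\zv\in\mathcal{D}_g$. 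This buys a conceptually clean statement --- the lemma is literally the polar dual of \lemref{lem:dualnorm}, which also makes transparent why $\Oo$ is the gauge (atomic norm) of the union of disks --- at the price of invoking the bipolar theorem and the norm/dual-ball polarity, whereas the paper's argument is entirely elementary and self-contained, exhibiting the convex combinations and decompositions explicitly without any closure or duality machinery.
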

\begin{proof}
Let $\w \in \text{ConvHull}\big ( \cup_{g \in \G} \: \mathcal{D}_g \big )$, then there exist  $\alphab^g \in \mathcal{D}_g$ and $t_g \in \RR_+$, for all $g \in \G$, such that $\sum_{g \in \G} t_g \leq 1$ and $\w = \sum_{g\in\G}t_g \alphab^g$. Letting $\vb=(t_g \, \alphab^g)_{g \in \G}$ as a suboptimal decomposition of $\w$, we easily get
$$
\Oo(\w) \leq \sum_{g\in\G} d_g \nm{t_g \, \alphab^g} \leq  \sum_{g\in\G} t_g \leq 1\,.
$$
Conversely, if $\Oo(\w) \leq 1$, then there exists $\vb \in \VV_\G$, such that $\sum_{g \in \G} d_g \nm{\v^g} \leq 1$ and we obtain $\alphab^g \in \mathcal{D}_g$ and $\mathbf{t}$ in the simplex by letting $t_g=d_g \nm{\v^g}$ and $$\alphab^g=
\begin{cases}
\zv  &\text{if} \quad t_g=0\,,\\
\frac{\v^g}{d_g \, \nm{\v^g}} & \text{else}\,.
\end{cases}$$
\end{proof}

It should be noted that this lemma shows that $\Oo$ is the gauge of the convex hull of
the disks $\mathcal{D}_g$, in other words, $\Oo$ is, in the terminology introduced by \citet{Chandrasekaran2010Convex}, the unit ball of the \emph{atomic norm} associated with the union of disks $\mathcal{D}_g$. 

\subsection{Subdifferential of $\Oo$}
\label{sec:subdiff}
The subdifferential of $\Oo$ at $\w$ is, by definition:
\[
\partial \Oo(\w)\eqdef \{\s \in \RR^p \:|\: \forall \h \in \RR^p , \: \Oo(\w+\h)-\Oo(\w) \geq \s^\top \h\}\,.
\]
It is a standard result of convex optimization \citep[resulting e.g. from characterization (${\rm b}^*$) of the subdifferential in Theorem 23.5, p. 218,][]{Rockafellar1997Convex} that for all $\w \in \RR^p, \: \partial \Oo(\w)=\A(\w)$, where $\A(\w)$ was defined in \eqref{eq:defA}. 

We can now show a simple relationship between the decomposition $(\v^g)_{g \in \G}$ of a vector $\w$ induced by $\Omega$, and the subdifferential of $\Omega$.
\begin{lemma}
\label{lem:alpha}
For any $\alphab \in \A(\w)=\partial \Oo(\w)$ and any $\vb \in\Vb(\w)$,
\[\begin{cases} \text{either} \; & \v^g \neq \zv \quad \text{and} \quad \alphab_g= d_g \frac{\v^g}{\nm{\v^g}}\,,\\
\text{or}\;  & \v^g=\zv \quad  \text{and}  \quad \nm{\alphab_g} \leq d_g\,.
\end{cases}
\]
\end{lemma}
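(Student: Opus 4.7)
The plan is to exploit the two variational characterizations of $\Oo(\w)$ that coincide at the given $\vb$ and $\alphab$. By definition of $\Vb(\w)$ we have $\w=\sum_{g\in\G}\v^g$ and $\Oo(\w)=\sum_{g\in\G}d_g\nm{\v^g}$, and by definition of $\A(\w)$ together with Lemma~\ref{lem:variational1} we have $\Oo(\w)=\alphab^\top \w$ with $\nm{\alphab_g}\leq d_g$ for all $g\in\G$. Combining these, and using that $\supp{\v^g}\subset g$ so that $\alphab^\top \v^g = \alphab_g^\top \v^g$, I get
\[
\sum_{g\in\G} d_g\nm{\v^g} = \Oo(\w) = \alphab^\top\w = \sum_{g\in\G}\alphab_g^\top \v^g.
\]

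The next step is to apply Cauchy--Schwarz termwise: $\alphab_g^\top \v^g \leq \nm{\alphab_g}\nm{\v^g} \leq d_g\nm{\v^g}$, for each $g\in\G$. Summing these inequalities recovers $\sum_g \alphab_g^\top \v^g \leq \sum_g d_g\nm{\v^g}$, and the display above forces this to be an equality. Since the inequalities are termwise, equality must hold in each one, giving $\alphab_g^\top \v^g = d_g\nm{\v^g}$ for every $g\in\G$.

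From this equality I split into the two announced cases. If $\v^g=\zv$, there is nothing to prove beyond the already-known constraint $\nm{\alphab_g}\leq d_g$. If $\v^g\neq\zv$, equality in the chain $\alphab_g^\top \v^g \leq \nm{\alphab_g}\nm{\v^g}\leq d_g\nm{\v^g}$ forces both $\nm{\alphab_g}=d_g$ and equality in Cauchy--Schwarz, hence $\alphab_g$ is colinear with $\v^g$ with positive proportionality, i.e. $\alphab_g = d_g\,\v^g/\nm{\v^g}$, which is exactly the desired conclusion.

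There is no real obstacle here: the argument is essentially a sum-of-nonnegative-terms trick applied to the matched pair of primal (Lemma~\ref{lem:basic1}/definition~\eqref{eq:pendef}) and dual (Lemma~\ref{lem:variational1}) variational formulas. The only subtlety worth noting is the identification $\alphab^\top\v^g = \alphab_g^\top\v^g$, which relies on $\supp{\v^g}\subset g$ built into the definition of $\VV_\G$, and the fact that the equality $\Oo(\w)=\alphab^\top\w$ comes precisely from the duality relation $\partial\Oo(\w)=\A(\w)$ invoked just above the lemma.
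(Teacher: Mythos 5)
Your proof is correct and follows essentially the same route as the paper: both match the primal value $\sum_g d_g\nm{\v^g}$ with the dual value $\alphab^\top\w$, observe that $\Oo(\w)-\alphab^\top\w=\sum_g\big(d_g\nm{\v^g}-\alphab_g^\top\v^g\big)$ is a sum of non-negative terms that must vanish, and conclude termwise via the equality case of Cauchy--Schwarz. No gaps to report.
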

\begin{proof}
  Let $\vb\in\Vb(\w)$ and
$\alphab\in\A(\w)$. Since $\Oo^*(\alphab) \leq 1$, we have $\nm{\alphab_g} \leq
d_g$ which implies $\alphab^\top \v^g \leq d_g\nm{\v^g}$.
On the other hand, we also have $\alphab^\top \w=\Oo(\w)$ so
that $0=\Oo(\w)-\alphab^\top \w= \sum_g \left ( d_g\nm{\v^g}-\alphab_g^\top \v^g
\right ),$ which is a sum of non-negative terms. We conclude that, for
all $g \in \G$, we have $\alphab_g^\top \v^g=d_g\nm{\v^g}$ which yields the
result.
\end{proof}
We can deduce a general property of all decompositions of given vector:
\begin{corollary}
  Let $\w\in\RR^p$. For all $\vb,\:\vb' \in\Vb(\w)$, and for all $g \in
  \G$ we have $\v^g=\zv$ or ${\v'}^g=\zv$ or there exists $\gamma \in \RR$
  such that $\v^g=\gamma {\v'}^g$.
\end{corollary}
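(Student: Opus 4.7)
The plan is to exploit Lemma \ref{lem:alpha}, which pins down the behavior of every optimal decomposition $\vb \in \Vb(\w)$ in terms of any single element $\alphab$ of the (nonempty) subdifferential $\A(\w) = \partial \Oo(\w)$. The crucial observation is that Lemma \ref{lem:alpha} holds for \emph{any} pair $(\vb, \alphab) \in \Vb(\w) \times \A(\w)$, so if we freeze one $\alphab$ and vary the decomposition, the block $\alphab_g$ acts as a common reference direction for every nonzero $\v^g$.

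More concretely, I would first pick any $\alphab \in \A(\w)$; existence is guaranteed because $\Oo$ is a norm on $\RR^p$ and the subdifferential of a finite convex function is nonempty, while the identification $\partial \Oo(\w) = \A(\w)$ is recalled at the start of Section \ref{sec:subdiff}. Then, given $\vb, \vb' \in \Vb(\w)$, I would examine the group $g$ and split into cases exactly as in Lemma \ref{lem:alpha}. If $\v^g = \zv$ or ${\v'}^g = \zv$ we are done. Otherwise both are nonzero, and Lemma \ref{lem:alpha} applied to the pair $(\vb, \alphab)$ gives
\[
\alphab_g = d_g \frac{\v^g}{\nm{\v^g}},
\]
while the same lemma applied to $(\vb', \alphab)$ yields
\[
\alphab_g = d_g \frac{{\v'}^g}{\nm{{\v'}^g}}.
\]
Equating the two right-hand sides and solving for $\v^g$ produces $\v^g = \gamma \, {\v'}^g$ with $\gamma = \nm{\v^g}/\nm{{\v'}^g} \in \RR$, which is the desired conclusion.

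There is essentially no obstacle here beyond checking that $\A(\w)$ is nonempty and recalling that Lemma \ref{lem:alpha} is stated uniformly in $(\vb,\alphab)$; the point of the argument is simply that a single dual certificate $\alphab$ forces the direction of the $g$-restriction of every primal-optimal decomposition to coincide whenever that restriction is nonzero. The scalar $\gamma$ is automatically positive in this derivation, but the statement of the corollary only requires $\gamma \in \RR$, so no additional sign discussion is needed.
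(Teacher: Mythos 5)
Your proof is correct and follows essentially the same route as the paper: fix one dual certificate $\alphab \in \A(\w)$, apply Lemma~\ref{lem:alpha} to both decompositions on the group $g$, and equate $\alphab_g = d_g \v^g/\nm{\v^g} = d_g {\v'}^g/\nm{{\v'}^g}$ to get $\v^g = \frac{\nm{\v^g}}{\nm{{\v'}^g}}\,{\v'}^g$. The extra remarks about nonemptiness of $\A(\w)$ are fine but not needed beyond what the paper already records.
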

\begin{proof}
  By Lemma \ref{lem:alpha}, if $\v^g \neq \zv$ and ${\v'}^g \neq \zv$, then
  $\alphab_g=d_g\frac{\v^g}{\nm{\v^g}}=d_g\frac{{\v'}^g}{\nm{{\v'}^g}}$ so that
  $\v^g=\frac{\nm{\v^g}}{\nm{{\v'}^g}} {\v'}^g$.
\end{proof}

\subsection{$\Oo$ as a regularizer}\label{sec:learning}
We consider in this section the situation where $\Oo$ is used as a regularizer for an empirical risk minimization problem.
Specifically, let us consider a convex differentiable loss function $\ell: \RR \times \RR \rightarrow \RR$, such as the squared error $\ell(t,y) = (t-y)^2$ for regression problems or the logistic loss $\ell(t,y) = \log(1+e^{-yt})$ for classification problems where $y=\pm1$.
Given a set of $n$ training pairs $(\x^{(i)},y^{(i)}) \in \RR^p \times \RR$, $i=1,\dots,n$, we define the empirical risk $L(\w)=\frac{1}{n} \sum_{i=1}^n \ell(\w^\top \x^{(i)},y^{(i)})$ and consider the regularized empirical risk minimization problem  
\begin{equation}
\min_{\w \in \RR^p} L(\w) + \lambda \Oo(\w)\,.
\label{eq:opt1}
\end{equation}
Its solutions are characterized by optimality conditions from subgradient calculus:

\begin{lemma}\label{lem:equivgrad}
 A vector $\w\in\RR^p$ is a solution of~\eqref{eq:opt1} if and only if
one of the following equivalent conditions is satisfied 
\begin{enumerate}
\item[(a)]  $-\nabla L(\w) / \lambda \in\A(\w)$
\item[(b)]  $\w$ can be decomposed as $ \w=\sum_{g\in\G} \v^g$ for some $\bar{\v} \in \VV_\G$ with for all $g \in \G$:
$$\text{either} \quad \v_g \neq \zv \:\:\: \text{and}\:\:\: \nabla_{\!g} L(\w) = -
  \lambda d_g\v^g / \nm{\v^g}
  \quad \text{or} \quad
  \v^g=\zv \:\:\: \text{and}\:\:\:
  d_g^{-1}\nm{\nabla_{\!g} L(\w) }\leq \lambda\,.
  $$
\end{enumerate}
\end{lemma}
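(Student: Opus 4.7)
The plan is to deduce the lemma from the general first-order optimality condition for convex minimization combined with the subdifferential characterization $\partial\Oo(\w)=\A(\w)$ from Section~\ref{sec:subdiff} and the structural Lemma~\ref{lem:alpha} linking $\A(\w)$ to $\Vb(\w)$.

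\textbf{Step 1: Reduce to a subgradient inclusion.} Since $L$ is convex differentiable and $\Oo$ is a convex (finite) norm, Fermat's rule applies and gives: $\w$ solves~\eqref{eq:opt1} iff $\mathbf{0}\in\nabla L(\w)+\lambda\,\partial\Oo(\w)$. Using $\partial\Oo(\w)=\A(\w)$, this is exactly condition~(a). So the first half of the lemma is immediate, and it only remains to prove the equivalence (a)$\Leftrightarrow$(b).

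\textbf{Step 2: (a) $\Rightarrow$ (b).} Assume $\alphab\eqdef-\nabla L(\w)/\lambda\in\A(\w)$. By Lemma~\ref{lem:basic1}, $\Vb(\w)\neq\emptyset$; pick any $\vb\in\Vb(\w)$. Applying Lemma~\ref{lem:alpha} to this pair $(\alphab,\vb)$ yields, for each $g\in\G$, either $\v^g\neq\zv$ with $\alphab_g=d_g\v^g/\nm{\v^g}$, or $\v^g=\zv$ with $\nm{\alphab_g}\le d_g$. Substituting $\alphab_g=-\nabla_{\!g}L(\w)/\lambda$ into these two alternatives produces exactly the two cases of~(b).

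\textbf{Step 3: (b) $\Rightarrow$ (a).} Conversely, assume the decomposition $\w=\sum_g\v^g$ satisfies the conditions in (b), and set $\alphab\eqdef-\nabla L(\w)/\lambda$. From the two cases in (b), $\nm{\alphab_g}=d_g$ when $\v^g\neq\zv$ and $\nm{\alphab_g}\le d_g$ when $\v^g=\zv$, so $\Oo^*(\alphab)=\max_g d_g^{-1}\nm{\alphab_g}\le 1$ by Lemma~\ref{lem:dualnorm}. On one hand, by the duality inequality,
\[
\alphab^\top\w \le \Oo^*(\alphab)\,\Oo(\w)\le\Oo(\w).
\]
On the other hand, using the constraint $\w=\sum_g\v^g$ and the explicit form of $\alphab_g$ on the support of $\v^g$,
\[
\alphab^\top\w=\sum_{g\in\G}\alphab_g^\top\v^g=\sum_{g:\,\v^g\neq\zv}d_g\nm{\v^g}=\sum_{g\in\G}d_g\nm{\v^g}\ge\Oo(\w),
\]
the last inequality holding because $\sum_g d_g\nm{\v^g}$ is the value at a feasible decomposition of the infimum defining $\Oo(\w)$. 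Combining the two bounds gives $\alphab^\top\w=\Oo(\w)$, so $\alphab$ attains the maximum in~\eqref{dualvar} and $\alphab\in\A(\w)$, establishing (a).

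No real obstacle is expected: the work is entirely packaged by Lemma~\ref{lem:alpha} (and the observation that the suboptimal-decomposition inequality $\sum_g d_g\nm{\v^g}\ge\Oo(\w)$ is automatically saturated when it is sandwiched between two tight bounds coming from the dual-norm inequality). The only point requiring a little care is checking that the sum $\sum_g\alphab_g^\top\v^g$ can be restricted to indices with $\v^g\neq\zv$ without loss, which is trivial.
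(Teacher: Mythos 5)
Your proof is correct and follows essentially the same route as the paper: Fermat's rule together with $\partial \Oo(\w)=\A(\w)$ yields (a), and Lemma~\ref{lem:alpha} yields the dichotomy in (b). The only difference is that you spell out the converse direction (b)$\Rightarrow$(a) via the dual-norm/saturation argument, a detail the paper's one-line proof leaves implicit (and which is indeed worth supplying, since Lemma~\ref{lem:alpha} as stated applies only to optimal decompositions $\vb\in\Vb(\w)$).
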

\begin{proof}
(a) is immediate from subgradient calculus and the fact that $\partial \Oo(\w)=\A(\w)$ (see \secref{sec:subdiff}). (b) is immediate from Lemma~\ref{lem:alpha}.
\end{proof}

\subsection{Covariate duplication}
\label{sec:duplication}
In this section we show that 
empirical risk minimization penalized by $\Oo$ is equivalent
to a regular group Lasso in a covariate space of higher dimension
obtained by duplication of the covariates belonging to several groups. 
This has implications for practical implementation of
$\Oo$ as a regularizer and for its generalization to non-linear
classification.

More precisely, let us consider the duplication operator:
\begin{equation}
\begin{split}
\RR^p &\rightarrow \RR^{\sum_{g\in\G}|g|} \\
\x & \mapsto \tilde{\x} =  \bigoplus_{g\in\G} (\x_i)_{i\in g}\,.
\end{split}
\end{equation}
In other words, $\tilde{\x}$ is obtained by stacking the restrictions of $\x$ to each group on top of each other, resulting in a $\br{\sum_{g\in\G}|g|}$-dimensional vector. Note that any coordinate of $\x$ that occurs in several groups will be duplicated as many times in $\tilde{\x}$. Similarly, for a vector $\v\in\VV_\G$, let us denote by $\tilde{\v}$ the $\br{\sum_{g\in\G}|g|}$-dimensional vector obtained by stacking the restrictions of the successive $\v^g$ on their corresponding groups $g$ on top of each other (resulting in no loss of information, since $\v^g$ is null outside of $g$). This operation is illustrated in \eqref{eq:ogdup} below. Then for any $\w\in\RR^p$ and $\v\in\VV_\G$ such that $\w=\sum_{g\in\G}\v^g$, we easily get, for any $\x\in\RR^p$:
\begin{equation}\label{eq:dupliinner}
\w^\top \x = \sum_{g\in\G} \v^{g \top} \x = \tilde{\v}^\top \tilde{\x}\,.
\end{equation}
Consider now a learning problem with training points $\x^{1},\ldots,\x^{n}\in\RR^p$ where we minimize over $\w\in\RR^p$ a penalized risk function that depends of $\w$ only through inner products with the training points, \ie, or the form
\begin{equation}\label{eq:optdupli}
\min_{\w\in\RR^p} \tilde{L}(\X\w) + \lambda \Oo(\w)\,,
\end{equation}
where $\X$ is the $n\times p$ matrix of training points and $\X\w$ is therefore the vector of inner products of $\w$ with the training points. Many problems, in particular those considered in \secref{sec:learning}, have this form. By definition of $\Oo$ we can rewrite~\eqref{eq:optdupli} as
$$
\min_{\w\in\RR^p , \v\in\VV_\G , \sum_{g} \v^g = \w} \tilde{L}(\X\w) + \lambda \sum_{g\in\G} d_g\, \nm{\v^g}\,,
$$
which by~\eqref{eq:dupliinner} is equivalent to
\begin{equation}
\label{eq:optdupli2}
\min_{\tilde{v} \in \RR^{\sum_{g\in\G}|g|}} \tilde{L}(\tilde{\X}\tilde{\v}) + \lambda \sum_{g\in\G} d_g \, \nm{\tilde{\v}_g}\,,
\end{equation}
where $\tilde{\X}$ is the $n\times(\sum_{g\in\G}|g|)$ matrix of duplicated training points, and $\tilde{\v}_g$ refers to the restriction of $\tilde{\v}$ to the coordinates of group $g$. In other words, we have eliminated $\w$ from the optimization problem and reformulated it as a simple group Lasso problem without overlap between groups in an expanded space of size $\sum_{g\in\G}|g|$.

On the example of Figure~\ref{fig:removal_decomp},with $3$ overlapping groups, this duplication trick can be rewritten as follows~:
\begin{equation}
  \X\w =\X.
  {\begin{array}{c}
      \colorcellfive{black}{lightblue}{white}{\tilde{\v}^1}\\
      \colorcellsix{black}{lightgray}{black}{0}
    \end{array}}+\X.
  {\begin{array}{c}
      \colorcellthree{black}{lightgray}{black}{0}\\
      \colorcellfive{black}{lightblue}{white}{\tilde{\v}^2}\\
      \colorcellthree{black}{lightgray}{black}{0}
    \end{array}}
  +\X.{\begin{array}{c}
      \colorcellsix{black}{lightgray}{black}{0}\\
      \colorcellfive{black}{lightblue}{white}{\tilde{\v}^3}
    \end{array}}
  = \left(\X_{g_1}, \X_{g_2}, \X_{g_3}\right).
  {\begin{array}{c}
      \colorcellfive{black}{lightblue}{white}{\tilde{\v}_1}\\
      \colorcellfive{black}{lightblue}{white}{\tilde{\v}_2}\\
      \colorcellfive{black}{lightblue}{white}{\tilde{\v}_3}\\
    \end{array}} \stackrel{\Delta}{=} \tilde{\X}\tilde{\v}.
\label{eq:ogdup}
\end{equation}

This formulation as a classical group Lasso problem in an expanded space has several implications, detailed in the next two sections. On the one hand, it allows to extend the penalty to non-linear functions by considering infinite-dimensional duplicated spaces endowed with positive definite kernels (\secref{sec:mkl}). On the other hand, it leads to straightforward implementations by borrowing classical group Lasso implementations after feature duplications (\secref{sec:implementation}). Note, however, that the theoretical results we will show in \secref{sec:consistency}, on the consistency of the estimator proposed, are not mere consequences of existing results for the classical group Lasso, because, in the case we consider, not only is the design matrix $\tilde{\X}$ rank deficient, but so are all of its restriction to sets of variables corresponding to any union of overlapping groups.

\subsection{Multiple Kernel Learning formulations}\label{sec:mkl}

Given the reformulation in a duplicated variable space presented above, we provide in this section a multiple kernel learning (MKL) interpretation to the regularization by our norm and show that it extends naturally the case with disjoint groups.

To introduce it, we return first to the concept of MKL \citep{Lanckriet2004Learning,Bach2004Multiple}  which we can present as follows. If one considers a learning problem of the form
\begin{equation}
H=\min_{\w \in \RR^p} \tilde{L}(\X\w)+\frac{\lambda}{2} \|\w\|^2,
\end{equation}
then by the representer theorem the optimal value of the objective $H$ only depends on the input
data $\X$ through the Gram matrix $\mathbf{K}=\X \X^\top$, which
therefore can be replaced by any positive definite (p.d.) kernel between the
datapoints.  Moreover $H$ can be shown to be a convex function of
$\mathbf{K}$ \citep{Lanckriet2004Learning}. Given a collection of p.d.
kernels $\mathbf{K}_1, \ldots, \mathbf{K}_k$, any convex combination
$\mathbf{K}=\sum_{i=1}^k \eta_i \mathbf{K}_i$ with $\eta_i\geq 0$ and
$\sum_i \eta_i=1$ is itself a p.d.\ kernel. The multiple kernel
learning problem consists in finding the best such combination in the
sense of minimizing $H$:
\begin{equation}
\min_{\etab \in \RR^k_+} H\big ( {\textstyle \sum_i \eta_i \mathbf{K}_i }\big) \quad \st \quad \sum_{i} \eta_i=1. 
\end{equation}
The kernels considered in the linear combination above are typically reproducing kernels associated with different reproducing kernel Hilbert spaces (RKHS).

\citet{Bach2004Multiple} showed that problems regularized by a squared $\ell_1/\ell_2$-norm and multiple kernel learning were intrinsically related.
More precisely he shows that, if $\G$ forms a partition of
$\{1,\ldots,p\}$, letting problems $(P)$ and $(P')$ be defined through
\begin{equation*}
(P) \: \min_{\w \in \RR^p} \tilde{L}(\X \w)+\frac{\lambda}{2} \big ({\textstyle  \sum_{g \in \G } d_g \|\w_g\|} \big )^2 \quad \text{and} \quad (P') \:\min_{\etab \in \RR^{\m}_+} H \big ({\textstyle \sum_{g \in \G} \eta_g \mathbf{K}_g }\big) \: \st \: \sum_{g \in \G} d_g^2 \eta_g =1,
\end{equation*}
with $\mathbf{K}_g= \X_g \X_g^\top$, then $(P)$ and $(P')$ are equivalent in the sense that the optimal values of both objectives are equal with a bijection between the optimal solutions.
Note that such an equivalence does not hold if the groups $g \in \G$ overlap.

Now turning to the norm we introduced, using the same derivation as the one leading from problem (\ref{eq:optdupli}) to problem (\ref{eq:optdupli2}), we can show that minimizing $\tilde{L}(\X \w)+\frac{\lambda}{2} \Omega(\w)^2$ w.r.t. $\w$ is equivalent to minimizing $\tilde{L}(\tilde{\X} \tilde{\v})+\frac{\lambda}{2} \big( \sum_g \|\v^g\| \big)^2$ and setting $\w=\sum_{g \in \G} \v^g$. At this point, the result of \citet{Bach2004Multiple} applied to the latter formulation in the space of duplicates shows that it is equivalent to the multiple kernel learning problem
\begin{equation}
\label{eq:flat_mkl}
\min_{\etab \in \RR^{\m}_+} H \big ( {\textstyle \sum_{g \in \G} \eta_g \mathbf{K}_g}\big ) \quad \st \quad \sum_{g \in \G} d_g^2 \eta_g=1, \quad 
\text{with} \quad \mathbf{K}_g= \X_g \X_g^\top.
\end{equation}

This shows that minimizing $\tilde{L}(\X \w)+\frac{\lambda}{2} \Omega(\w)^2$ is equivalent to the MKL problem above. Compared with the original result of \citet{Bach2004Multiple}, it should be noted now that, because of the duplication mechanism implicit in our norm, the original sets $g \in \G$ are no longer required to be disjoint. In fact this derivation shows that, in some sense, the norm we introduced is the one that corresponds to the most natural extension of multiple kernel learning to the case of overlapping groups.

Conversely, it should be noted that, while one of the application of multiple kernel learning is \emph{data fusion} and thus allows to combine kernels corresponding to functions of intrinsically different input variables, MKL can also be used to select and combine elements from different function spaces defined on the same input.
In general these function spaces are not orthogonal and are typically not even disjoint. In that case the MKL formulation corresponds implicitly to using the norm presented in this paper.  

Finally, another MKL formulation corresponding to the norm is possible.
If we denote $\mathbf{K}_i=\X_i \X_i^\top$ the rank one kernel corresponding to the $i$th feature, then we can write $\K_g=\sum_{i \in g} \K_i$. If $\B \in \RR^{p \times m}$ is the binary matrix defined by $\B_{ig}= \mathbf{1}_{\{ i \in g\}}$, and $Z=\{\B\etab \mid \etab \in \RR^{\m}_+, \, \sum_{g \in \G} \eta_g=1\}$ is the image of the canonical simplex of $\RR^m$ by the linear transformation associated with $\B$, then with $\zetab \in Z$ obtained through $\zeta_i=\sum_{g \ni i} \eta_g$, the MKL problem above can be reformulated as

\begin{equation}
\label{eq:struct_mkl}
\min_{ \zetab \in Z} \: H \big (\sum_{i=1}^p \zeta_i \mathbf{K}_i \big ).
\end{equation}

This last formulation can be viewed as the structured MKL formulation associated with the norm $\Omega$ \citep[see][sec.~1.5.4]{Bach2011Optimization}. It is clearly more interesting computationally when
$\m\gg p$. It is however restricted to a particular form of kernel
$\mathbf{K}_g$ for each group, which has to be a sum of feature kernels
$\mathbf{K}_i$. In particular, it doesn't allow for interactions among features
in the group.

In the two formulations above, it is obviously possible to replace the
linear kernel used for the derivation by a non-linear kernel. In the
case of \eqref{eq:flat_mkl} the combinatorial structure of the problem
is a priori lost in the sense that the different kernels are no longer
linear combinations of a set of ``primary" kernels, while this is still the
case for \eqref{eq:struct_mkl}.

Using non-linear kernels like RBF, or kernels on discrete structures such as sequence- or graph-kernels may prove useful in cases where the relationship between the covariates in the groups and the output is expected to be non-linear. For example if $g$ is a group of
genes and the coexpression patterns of genes within the group are
associated with the output, the group will be deemed important by a
non linear kernel while a linear one may miss it. More generally, it
allows for structured non-linear feature selection.

\subsection{Algorithms}
\label{sec:implementation}

There are several possible algorithmic approaches to solve the
optimization problem (\ref{eq:opt1}), depending on the structure of
the groups in $\G$.  The approach we chose in this paper is based on
the reformulation by \emph{covariate duplication} of
section~\ref{sec:duplication}, and applies an algorithm for the group
Lasso in the space of duplicates.  To be specific, for the experiments
presented in section \ref{sec:experiments}, we implemented the
block-coordinate descent algorithm of~\citet{Meier2008The} combined
with the working set strategy proposed by~\cite{Roth2008The}. Note
that the covariate duplication of the input matrix $X$ needs not to be
done explicitly in computer memory, since only fast access to the
corresponding entries in $X$ is required. Only the vector $\tilde{v}$
which is optimized has to be stored in the duplicated space
$\RR^{\sum_{g\in\G}|g|}$ and is potentially of large dimension
(although sparse) if $\G$ has many groups.

Alternatively, efficient algorithms which do not require working in
the space of duplicated covariates are possible. Such an algorithm was
proposed by \citet{Mosci2010Primal} who suggested to use a proximal
algorithm, and to compute the proximal operator of the norm $\Oo$ via
an approximate projection on the unit ball of the dual norm in the
input space. To avoid duplication, it would also be possible to use an
approach similar to that
of~\citep{Rakotomamonjy2008SimpleMKL}. Finally, one could also
consider algorithms from the multiple kernel learning literature.

\section{Group-support}
 \label{sec:group-support}
A natural question associated with the norm $\Oo$ is what sparsity pattern are elicited when the norm is used as a regularizer. This question is natural in the context of support recovery.
If the groups are disjoint, one could equivalently ask which patterns of selected group are possible, since answering the latter or the former questions are equivalent.
This suggest a view in which the support is expressed in terms of groups. We formalize this idea through the concept of 
group-support of a vector $\w$, which, put informally, is the set of groups that are non-zero in a decomposition of $\w$.
We will see that this notion is useful to characterize induced decompositions and recovery properties of the norm.

\subsection{Definitions}
More formally, we naturally call \emph{group-support} of a decomposition $\vb\in\VV_\G$, the set of groups $g$ such that $\v^g \neq \zv$. We extend this definition to a vector as follows:
\begin{definition}[\bf Strong group-support]
The {\sl strong group-support} $\Gs(\w)$ of a vector $\w\in\RR^p$ is the union of the group-supports of all its optimal decompositions, namely:
$$
\Gs(\w) \eqdef \{g \in \G \mid \exists \vb \in \Vb(\w) \:\: \text{s.t.}\:\: \v^g \neq \zv \, \}\,.
$$
\end{definition}
If $\w$ has a unique decomposition $\vb(\w)$, then $\Gs(\w)=\{g \in \G \, | \, \v^g(\w) \neq \zv\}$ is the group-support of its decomposition.
We also define a notion of {\sl weak group-support} in terms of uniqueness of the optimal dual variables.
\begin{definition}[\bf Weak group-support]
The {\sl weak group-support} of a vector $\w\in\RR^p$ is  
 $$
 \Gw(\w) \eqdef \big \{g \in \G \,|\, \exists \, \alphab_g\in\RR^p\:\, \text{s.t.}\:\, \Pi_g\mathcal{A}(\w)=\{\alphab_g\}\: \text{and}\: \nm{\alphab_g}=d_g \,\big \} \,.
 $$
\end{definition}
It follows immediately from Lemma \ref{lem:alpha} that $\Gs(\w) \subset \Gw(\w)$.
When $\Gs(\w)=\Gw(\w)$, we refer to $\Gs(\w)$ as the group-support of $\w$; otherwise we say that the group-support is ambiguous.

The definitions of {\sl strong group-support} and {\sl weak group-support} are motivated by the fact that in the variational formulation \eqref{dualvar},
the {\sl strong group-support} is the set of groups for which the constraints $\nm{\alphab_g} \leq 1$ are {\sl strongly active} whereas the {\sl weak group-support}
is the set of {\sl weakly} or {\sl strongly active} such constraints \citep[p.342]{Nocedal2006Numerical}. We illustrate these two notions on a few examples in Section \ref{sec:examples}.

\subsection{Supports induced by the group-support}
For any $\w\in\RR^p$, we denote by $\Jw(\w)$ (resp. $\Js(\w)$) the set of variables in groups of the weak group-support (resp. strong group-support):
 $$
 \Jw(\w) \eqdef \bigcup_{g \in \Gw(\w)}  g \qquad \text{and} \qquad \Js(\w) \eqdef \bigcup_{g \in \Gs(\w)} \: g \,.
 $$
Since $\Gs(\w) \subset \Gw(\w)$, it immediately follows that $\Js(\w) \subset \Jw(\w)$\footnote{It is possible to have $\Js(\w) \neq \Jw(\w)$ consider $\G=\big \{ \{1,2\},\{1,3\},\{2,3,4\} \big \}$ and $\w=\frac{1}{\sqrt{2}}(1, \mu, 1-\mu,0)$ for any $\mu \in (0,1)$. We then have $\Gs=\big \{ \{1,2\},\{1,3\} \big \}$ and $\Gw=\G$ so that $\Js=\{1,2,3\} \neq  \Jw=\{1,2,3,4\}$.}. The following two lemmas show that, on $\Jw(\w)$, any dual variables $\alphab\in\A(\w)$ are uniquely determined.
\begin{lemma}
\label{Jbar}
If  $\Jw(\w) \backslash \Js(\w) \neq \varnothing$, then for any $\alphab\in\A(\w)$, $\alphab_{\Jw(\w) \backslash \Js(\w)}=\zv$.
\end{lemma}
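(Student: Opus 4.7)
My plan is to fix any $i \in \Jw(\w) \setminus \Js(\w)$ and show that $\alphab_i = 0$ for every $\alphab \in \A(\w)$, via a symmetry argument which uses that $\w_i = 0$ on this index set. I would proceed in three short steps.

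First, I would establish the preliminary fact that $\w_i = 0$ whenever $i \notin \Js(\w)$. Indeed, if $i \notin \Js(\w)$, then no group containing $i$ belongs to $\Gs(\w)$; by definition of $\Gs$, this means that for every $\vb \in \Vb(\w)$ and every $g \ni i$, we have $\v^g = \zv$. Summing the decomposition gives $\w_i = \sum_{g \ni i} \v^g_i = 0$.

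Second, given any $\alphab \in \A(\w)$, I would consider its coordinate-flip $\alphab'$ defined by $\alphab'_j = \alphab_j$ for $j \neq i$ and $\alphab'_i = -\alphab_i$, and check that $\alphab'$ is again in $\A(\w)$. By the previous step, $\w^\top \alphab' = \w^\top \alphab - 2w_i \alphab_i = \w^\top \alphab = \Oo(\w)$, so $\alphab'$ still attains the supremum in the variational formulation \eqref{eq:variational1}. For any group $g$ with $i \notin g$, $\nm{\alphab'_g} = \nm{\alphab_g} \leq d_g$ trivially, and for groups $g$ with $i \in g$, $\nm{\alphab'_g}^2 = \nm{\alphab_g}^2 \leq d_g^2$ because flipping the sign of one component of a Euclidean vector preserves its norm. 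Hence $\Oo^*(\alphab') \leq 1$ by Lemma~\ref{lem:dualnorm}, and $\alphab' \in \A(\w)$.

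Third, since $i \in \Jw(\w)$, pick any $g_0 \in \Gw(\w)$ with $i \in g_0$. The defining property of weak group-support gives $\Pi_{g_0} \A(\w) = \{\alphab_{g_0}\}$, a singleton. Since both $\alphab$ and $\alphab'$ lie in $\A(\w)$, their restrictions to $g_0$ must coincide; in particular $\alphab_i = \alphab'_i = -\alphab_i$, forcing $\alphab_i = 0$. The conclusion $\alphab_{\Jw(\w) \setminus \Js(\w)} = \zv$ follows by applying this argument to every $i$ in the set. The only non-trivial point is the first step — translating the combinatorial condition $i \notin \Js(\w)$ into the pointwise vanishing $\w_i = 0$ — after which the symmetry argument makes the coordinates of $\alphab$ along $\Jw(\w) \setminus \Js(\w)$ orthogonal to themselves, hence zero.
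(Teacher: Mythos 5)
Your proof is correct and follows essentially the same route as the paper: both exploit the fact that $\w$ vanishes on $\Jw(\w)\setminus\Js(\w)$ to perturb an optimal dual vector at such a coordinate while staying inside $\A(\w)$, and then invoke the definition of weak group-support to force $\alphab_i=0$. The only cosmetic difference is that you flip the sign of $\alphab_i$ and use the uniqueness (singleton) clause of the definition of $\Gw(\w)$, whereas the paper sets $\alphab_i=0$ and contradicts the requirement $\nm{\alphab_g}=d_g$.
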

\begin{proof}
Note that $\w_{\Jw(\w)\backslash \Js(\w)}=\zv$ since $\v^g=\zv$ for $g \in \Gw(\w) \backslash \Gs(\w)$. Let $g \in \Gw(\w) \backslash \Gs(\w)$. If $g \backslash \Js(\w) \neq \varnothing$,
and if $\Pi_{g \backslash \Js(\w)}\mathcal{A}(\w) \neq \{\zv\}$ then, let $i \in g \backslash \Js(\w)$ such that there exists $\alphab \in \mathcal{A}(\w)$ with $\alphab_i \neq 0$. Setting $\alphab_i=0$ leads to another vector that solves the second variational formulation \eqref{eq:variational1} and such that $\nm{\alphab_g} < d_g$ which contradicts the hypothesis that $g \in \Gw(\w)$.
\end{proof}

\begin{lemma} For any $\w\in\RR^p$, $\Pi_{\Jw(\w)}\A(\w)$ is a singleton, \ie, there exists $\alphab_{\Jw(\w)}\!(\w) \in \RR^{|\Jw(\w)|}$
  such that, for all $\alphab' \in \A(\w), \;
  \alphab'_{\Jw(\w)}=\alphab_{\Jw(\w)}\!(\w)$.
\label{Aunq}
\end{lemma}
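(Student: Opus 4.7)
The statement unpacks almost directly from the definition of the weak group-support; the proof should be a one-paragraph argument rather than anything substantial. Here is the plan.

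The plan is to fix any coordinate $i \in \Jw(\w)$ and show that $\alphab_i$ takes the same value for every $\alphab \in \A(\w)$. Since $\Jw(\w) = \bigcup_{g \in \Gw(\w)} g$, there exists at least one group $g \in \Gw(\w)$ with $i \in g$. By the very definition of $\Gw(\w)$, the set $\Pi_g \A(\w)$ is a singleton, say $\{\alphab_g(\w)\}$. Consequently any two elements $\alphab, \alphab' \in \A(\w)$ agree on all coordinates of $g$, and in particular $\alphab_i = \alphab'_i$. Since this holds at every $i \in \Jw(\w)$, the projection $\Pi_{\Jw(\w)} \A(\w)$ reduces to a single vector, which we denote $\alphab_{\Jw(\w)}(\w)$ by concatenating (consistently) the unique projections $\alphab_g(\w)$ across all $g \in \Gw(\w)$. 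Consistency of this concatenation is automatic: if $i \in g \cap g'$ with $g, g' \in \Gw(\w)$, the common value $\alphab_i$ is the same whether obtained from $\Pi_g \A(\w)$ or from $\Pi_{g'} \A(\w)$.

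There is no real obstacle here, since the work has already been done in the definition of $\Gw(\w)$: each group in the weak group-support is exactly one on which $\A(\w)$ is forced to be a singleton in projection. The non-trivial content of the notion of weak group-support lies in the fact that it is well-defined and strictly larger than $\Gs(\w)$ in general (as illustrated in the footnote preceding the lemma), not in this uniqueness statement. A complementary sanity check is to compare with Lemma \ref{Jbar}: on $\Jw(\w) \setminus \Js(\w)$ the uniquely determined value of $\alphab_{\Jw(\w)}(\w)$ must vanish, while on $\Js(\w)$ Lemma \ref{lem:alpha} pins it down as $d_g \v^g / \nm{\v^g}$ for any $\vb \in \Vb(\w)$ with $\v^g \neq \zv$; this gives a concrete description of $\alphab_{\Jw(\w)}(\w)$ that is consistent with the uniqueness we just established.
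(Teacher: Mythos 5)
Your proof is correct, but it takes a different (and more elementary) route than the paper's. You observe that the statement essentially unpacks the definition of the weak group-support: every $i \in \Jw(\w)$ lies in some $g \in \Gw(\w)$, and membership of $g$ in $\Gw(\w)$ already stipulates that $\Pi_g\A(\w)$ is a singleton, so all dual optima agree coordinatewise on $\Jw(\w)$; the consistency-of-concatenation point you make is indeed automatic. The paper instead argues through the strong group-support: for $i \in \Js(\w)$ it picks $\vb\in\Vb(\w)$ and $g \ni i$ with $(\v^g)_i \neq 0$ and invokes Lemma~\ref{lem:alpha} to force $\alphab_g = d_g\,\v^g/\nm{\v^g}$ for every $\alphab\in\A(\w)$, while for $i \in \Jw(\w)\backslash\Js(\w)$ it invokes Lemma~\ref{Jbar} to force $\alphab_i = 0$. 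What your argument buys is brevity and the observation that the uniqueness claim is, given the definitions, nearly tautological; what the paper's argument buys is an explicit identification of the unique vector $\alphab_{\Jw(\w)}(\w)$ (as $d_g\,\v^g/\nm{\v^g}$ on the strongly supported groups and $\zv$ on $\Jw(\w)\backslash\Js(\w)$), which is exactly the description you relegate to your closing sanity check and which is what gets used downstream in the consistency analysis. Both are valid proofs of the stated lemma.
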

\begin{proof}
  By definition of $\Js(\w)$, for all $i \in \Js(\w)$ there exists at least one
  $\v\in\Vb(\w)$ and one group $g\ni i$, such that $(\v^g)_i \neq 0$. Now
  as a consequence of Lemma~\ref{lem:alpha}, for any two solutions
  $\alphab,\alphab'\in\A(\w)$, we have that $\alphab_g = \alphab'_{g} =
  d_g\frac{\v^g}{\nm{\v^g}}$, so in particular $\alphab_i = \alphab'_i$.
  For $i \in \Jw(\w) \backslash \Js(\w)$, Lemma \ref{Jbar} shows that $\alphab_i=0$.
\end{proof}

\section{Illustrative examples}\label{sec:examples}
In this section, we consider a few examples that illustrate some of
the properties of $\Oo$, namely situations where weak and strong group
support differ, or where there is an entire set of optimal
decompositions. We will abuse notations and write $\v_g$ for $\v^g_g$
when writing explicit decompositions. We will denote by ${\Sign}$ the correspondence (or set-valued function) defined by $\Sign(x)={1}$ if $x>0$, $\Sign(x)={-1}$ if $x<0$ and $\Sign(0)=[-1,1]$.

\subsection{Two overlapping groups}
We first consider the case $p=3$ and $\G=\{\{1,2\},\{2,3\}\}$.
\begin{lemma} 
We have $\Oo(\w)=\nm{(w_2,|w_1|+|w_3|)^\top}$. If $(w_1,w_3)\neq \zv$, the optimal decomposition is unique
with 
\begin{equation}
\v_{\{12\}}=
\Big ( w_1\:,\:
\frac{ |w_1|}{|w_1|+|w_3|}\,w_2 \Big )^\top
 \quad \text{and} \quad
 \v_{\{23\}}=
\Big ( 
\frac{ |w_3|}{|w_1|+|w_3|} \, w_2\:,\:
w_3 \Big )^\top,
\end{equation}
\begin{equation*}
\mathcal{A}(\w)=\big\{\, \big ( (|w_1|\!+\!|w_3|) \, \gamma_1,w_2, (|w_1|\!+\!|w_3|)\, \gamma_3 \big ) /{\Oo(\w)} \mid \gamma_i \in \Sign(w_i), i \in \{1,3\} \big\},
\end{equation*}
$\Jw=\Js=\supp{\w}$ and $\Gw=\Gs$ includes $\{w_1,w_2\}$ if $w_1\neq 0$ and $\{w_2,w_3\}$ if $w_3\neq 0$.\\
If $(w_1,w_3)=\mathbf{0}$, then $\v_{\{12\}}=
( 0\:,
\gamma \, w_2 )^\top
 \quad \text{and} \quad
 \v_{\{23\}}=
( \,(1-\gamma) \, w_2\:,\:
0  )^\top
$ is an optimal decomposition for any $\gamma \in [0,1]$, $\mathcal{A}(\w)=\{(0,\sign(w_2),0)\}$, $\Jw=\Js=\{1,2,3\}$ and $\Gw=\Gs=\G$.
\end{lemma}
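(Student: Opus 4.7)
The plan is to parametrize all feasible decompositions of $\w$ by a single real scalar, reduce the variational formulation \eqref{eq:pendef} of $\Oo(\w)$ to a one-dimensional minimization, solve that minimization by an elementary triangle-inequality argument, and then read off the optimal dual variables and the two flavours of group-support via \lemref{lem:alpha}.

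Any $\vb\in\VV_\G$ with $\v^{\{1,2\}}+\v^{\{2,3\}}=\w$ takes the form $\v^{\{1,2\}}=(w_1,a,0)^\top$ and $\v^{\{2,3\}}=(0,w_2-a,w_3)^\top$ for some $a\in\RR$, hence
\[
\Oo(\w)=\min_{a\in\RR} f(a), \qquad f(a):=\sqrt{w_1^2+a^2}+\sqrt{w_3^2+(w_2-a)^2}.
\]
I would view $f(a)$ as the sum of the Euclidean norms of the two planar vectors $u=(a,|w_1|)$ and $v=(w_2-a,|w_3|)$ and invoke the triangle inequality to get $f(a)\ge\nm{u+v}=\sqrt{w_2^2+(|w_1|+|w_3|)^2}$, with equality when $u,v$ are positively colinear, i.e. at $a^{\star}=|w_1|w_2/(|w_1|+|w_3|)$. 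Whenever $(w_1,w_3)\neq\zv$, at least one summand of $f$ is strictly convex, so $a^{\star}$ is the unique minimizer and $\Vb(\w)$ reduces to the single decomposition announced in the lemma.

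To obtain $\A(\w)$ I would apply \lemref{lem:alpha} to this unique $\vb^{\star}$: on each group $g$ with $\v^{\star,g}\neq\zv$ one has $\alphab_g=d_g\v^{\star,g}/\nm{\v^{\star,g}}$, and substituting the explicit coordinates (with $d_g=1$) and factoring $\Oo(\w)$ in the denominator yields exactly the stated vector with $\gamma_i=\sign(w_i)$ whenever $w_i\neq 0$. The delicate sub-case is when exactly one of $w_1,w_3$ vanishes, say $w_1=0$: then $\v^{\star,\{1,2\}}=\zv$ so \lemref{lem:alpha} only imposes $\nm{\alphab_{\{1,2\}}}\le 1$, which combined with the value $\alphab_2=w_2/\Oo(\w)$ already fixed by the non-vanishing latent variable on $\{2,3\}$ recovers exactly $|\alphab_1|\le|w_3|/\Oo(\w)$, matching the formula with $\gamma_1\in\Sign(0)=[-1,1]$. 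The strong group-support is then read off directly from which $\v^{\star,g}$ are nonzero, and one checks that $\Pi_g\A(\w)$ is a singleton of norm $1$ on the same groups, proving $\Gw=\Gs$.

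For the degenerate case $(w_1,w_3)=\zv$, feasibility forces $\v^{\{1,2\}}=(0,a,0)^\top$ and $\v^{\{2,3\}}=(0,w_2-a,0)^\top$, with $f(a)=|a|+|w_2-a|$ attaining its minimum $|w_2|$ on the whole segment $a=\gamma w_2$, $\gamma\in[0,1]$, giving the claimed one-parameter family of optimal decompositions. The identity $\alphab^\top\w=|w_2|$ combined with $\nm{\alphab_g}\le 1$ on each group then forces $\alphab_2=\sign(w_2)$ and $\alphab_1=\alphab_3=0$, and both constraints $\nm{\alphab_g}=1$ being active yields $\Gw=\Gs=\G$. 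The main obstacle throughout is the bookkeeping in the mixed cases where a single $w_i$ vanishes, where the formula must simultaneously encode uniqueness of $\alphab_g$ on groups whose latent component is nonzero and the admissible interval of dual coordinates on groups whose latent component is zero; once the scalar parametrization $a$ is set up, no convex-analytic tool beyond \lemref{lem:alpha} is needed.
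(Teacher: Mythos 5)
Your argument is correct, and it takes a genuinely different route from the paper's. The paper proves this lemma in Appendix C.1.1 as the ``at most two groups active'' sub-case of the length-3 cycle: it starts from the dual formulation, forms a Lagrangian with multipliers $\lambda_{12},\lambda_{23}$, solves the stationarity and complementary-slackness equations for the $\lambda_g$ and $\alphab$, and then recovers the decomposition via $\v^g=\lambda_g\alphab_g$; moreover it explicitly declines to treat the cases $w_1=0$ or $w_3=0$. You instead stay in the primal: the feasible decompositions form a one-parameter family indexed by $a=\v^{\{1,2\}}_2$, the triangle inequality immediately gives the closed form $\nm{(w_2,|w_1|+|w_3|)}$ together with the equality case $a^\star=|w_1|w_2/(|w_1|+|w_3|)$, and strict convexity of at least one summand gives uniqueness when $(w_1,w_3)\neq\zv$; the dual set $\A(\w)$ is then read off from \lemref{lem:alpha}. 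What the paper's KKT computation buys is uniformity with the harder 3-cycle and 4-cycle examples treated in the same appendix; what your route buys is an elementary, self-contained argument that also settles the boundary cases ($w_1=0$ or $w_3=0$, and $(w_1,w_3)=\zv$) that the paper leaves undeveloped, and which are exactly where the interval-valued $\Sign$ and the distinction $\Gw$ versus $\Gs$ matter. One small point to make explicit: \lemref{lem:alpha} only gives \emph{necessary} conditions on $\alphab\in\A(\w)$, so in the mixed case (say $w_1=0\neq w_3$) you should add the one-line verification that every candidate $\alphab$ with $\gamma_1\in[-1,1]$ is feasible and attains $\alphab^\top\w=\Oo(\w)$ (immediate, since the candidates differ only in the coordinate where $\w$ vanishes), and in the non-degenerate case invoke non-emptiness of $\A(\w)$ so that the unique candidate forced by \lemref{lem:alpha} is indeed the maximizer; these are routine checks, not gaps in the approach.
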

We prove this lemma in section \ref{sec:at_most_two} (as a special
case of the ``cycle of length three'' which we consider next). Here,
the case where the decomposition is not unique seems to be a
relatively pathological case where the true support is included in the
intersection of two groups. However, note that the weak group-support
and strong-group support coincide, even in the latter case.

\subsection{Cycle of length 3}
\label{sec:tricyle}
We now turn to the case $p=3$ and $\G=\{\{1,2\},\{2,3\},\{1,3\}\}$.
Note that if at least one of the groups is not part of the weak-group support, we fall back on the case of two overlapping groups. We therefore have the following lemma:
\begin{lemma} 
Define  $\mathcal{W}_{\text{bal}} \eqdef \big \{ \w \, \in \RR^3 |\:
 |w_i| \leq \nm{\w_{\{i\}^c}}_1,\: i \in \{1,2,3\} \big \}$. We have
 \begin{equation*}
\Om{\w}=
\begin{cases}
\frac{1}{\sqrt{2}}\, \nm{\w}_1 & \text{if}\; \w \in \mathcal{W}_{\text{bal}} \\
\min_{i \in \{1,2,3\}}
 \big \|\, \big (w_i,\|\w_{\{i\}^c}\|_1 \big )\, \big \|
& \text{else.}
\end{cases}
\end{equation*}
If $|\supp{\w}| \neq 1$ the optimal decomposition is unique.
If in addition, $\w \in \mathcal{W}_{\text{bal}}$ we have for $(i,j,k) \in \{(1,2,3),(2,3,1),(3,1,2)\}$:
$$\v_{\{ij\}}=\frac{1}{2} \, (|w_i| +|w_j|-|w_k|) 
\begin{pmatrix}
\text{sign}(w_i)\\
\text{sign}(w_j)\\
\end{pmatrix}
\quad \text{and} \quad \mathcal{A}(\w)= \big \{\frac{1}{\sqrt{2}} \big (\sign(w_1),\sign(w_2),\sign(w_3) \big ) \big \}.$$
Moreover, we have $\Jw=\Js=\{1,2,3\}$, $\Gw=\G$ and for $\w \in \mathring{\mathcal{W}}_{\text{bal}}$, $\Gw=\Gs=\G$.
\end{lemma}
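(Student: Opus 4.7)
The plan is to use the second variational formulation (Lemma~\ref{lem:variational1}) with unit weights $d_g=1$,
\[
\Oo(\w)\;=\;\max_{\alphab\in\RR^3}\alphab^\top\w\quad\st\quad\alpha_i^2+\alpha_j^2\leq 1\ \text{for all}\ \{i,j\}\in\G,
\]
and analyze its KKT system. Since the feasible set is invariant under coordinate-wise sign flips, I would reduce to $\w\geq\zv$ and reinstate signs at the end via $\alpha_i\mapsto\sign(w_i)\alpha_i$ (and the same on $\v^g_i$). Writing the Lagrangian with multipliers $\lambdab\in\RR_+^3$, stationarity reads $w_i=\alpha_i\sum_{g\ni i}\lambda_g$, and by Lemma~\ref{lem:bijection} each optimal $\lambdab$ is in bijection with an optimal decomposition via $\v^g=\lambda_g\alphab_g$.

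I would first try the ansatz where all three constraints are active. Then $\alpha_i^2+\alpha_j^2=1$ for all pairs forces $\alpha_i^2=\tfrac12$, so $\alpha_i=1/\sqrt{2}$. The linear system $\lambda_{ij}+\lambda_{ik}=\sqrt{2}\,w_i$ has the unique solution $\lambda_{ij}=\tfrac{1}{\sqrt{2}}(w_i+w_j-w_k)$, which is compatible with complementary slackness precisely when $w_k\leq w_i+w_j$ for all permutations, i.e.\ when $\w\in\mathcal{W}_{\text{bal}}$. In that regime the value is $\tfrac{1}{\sqrt{2}}\|\w\|_1$, and $\v^g=\lambda_g\alphab_g$ reproduces the lemma's explicit decomposition. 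For $\w\notin\mathcal{W}_{\text{bal}}$, WLOG $w_1>w_2+w_3$: the balanced ansatz would give $\lambda_{23}<0$, so complementary slackness forces $\lambda_{23}=0$ and constraint $\{2,3\}$ inactive. With only the two constraints through coordinate $1$ binding, substituting the optimal $\alpha_2=\alpha_3=\sqrt{1-\alpha_1^2}$ (optimal since $w_2,w_3\geq 0$) reduces the problem to a one-dimensional maximization of $\alpha_1 w_1+\sqrt{1-\alpha_1^2}(w_2+w_3)$, yielding $\alpha_1=w_1/\sqrt{w_1^2+(w_2+w_3)^2}$ and $\Oo(\w)=\sqrt{w_1^2+(w_2+w_3)^2}$. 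A short comparison of $w_i^2+(\|\w_{\{i\}^c}\|_1)^2$ across $i$ under $w_1\geq w_2,w_3$ confirms this equals $\min_i\|(w_i,\|\w_{\{i\}^c}\|_1)\|$.

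For uniqueness and the group-support claims I would invoke Lemma~\ref{lem:alpha} together with Lemma~\ref{Aunq}. In $\mathring{\mathcal{W}}_{\text{bal}}$ all three $\lambda_g>0$ and all $\alpha_i\neq 0$, so $\A(\w)$ collapses to the single vector $\tfrac{1}{\sqrt{2}}(\sign(w_1),\sign(w_2),\sign(w_3))$, every $\v^g$ is nonzero, and $\Gs=\Gw=\G$, $\Js=\Jw=\{1,2,3\}$. At a boundary point $w_k=w_i+w_j$ with $w_i,w_j>0$, a similar one-variable argument shows $\A(\w)$ is still a singleton with $\|\alpha_g\|=1$ for all $g$, hence $\Gw=\G$, but only two $\lambda_g$ are positive, so $\Gs\subsetneq\G$ while $\Js=\Jw=\{1,2,3\}$ is still covered by the two active groups. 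Uniqueness of the decomposition when $|\supp{\w}|\geq 2$ then follows case-by-case from Lemma~\ref{lem:alpha}: for each $g\in\Gs$, the direction $\v^g/\|\v^g\|$ is pinned down by $\alphab_g$ (which Lemma~\ref{Aunq} determines on $\Jw$), and the linear relation $\sum_g\v^g=\w$ fixes the remaining scalars by a small triangular linear system.

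The main obstacle, and the step that requires the most care, is uniqueness in the sub-case $|\supp{\w}|=2$ on the balanced boundary (e.g.\ $\w=(w_1,w_2,0)$ with $|w_1|=|w_2|$), where $\A(\w)$ is no longer a singleton. There one must use Lemma~\ref{lem:alpha} to rule out nonzero $\v^{\{1,3\}}$ or $\v^{\{2,3\}}$: each of them being nonzero would force $\alphab_g=\v^g/\|\v^g\|$ to have unit norm and determine $\alpha_3$ with a specific sign, whereas the constraint $v^{\{1,3\}}_3+v^{\{2,3\}}_3=w_3=0$ combined with this sign consistency (a single $\alphab\in\A(\w)$ must serve both groups) forces both to vanish, leaving $\v^{\{1,2\}}=(w_1,w_2)$ as the unique optimal decomposition.
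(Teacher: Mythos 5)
Your proposal is correct and follows essentially the same route as the paper's own derivation (Appendix~\ref{sec:cycle_three}): dualize via the second variational formulation of Lemma~\ref{lem:variational1}, run the KKT/complementary-slackness case analysis on which group constraints are active (all three active exactly when $\w\in\mathcal{W}_{\text{bal}}$, otherwise one multiplier vanishes and the problem reduces to the two-overlapping-group case), and read off the decomposition through $\v^g=\lambda_g\alphab_g$ as in Lemma~\ref{lem:bijection}. Your handling of uniqueness and of the ambiguous cases (boundary of $\mathcal{W}_{\text{bal}}$, support of size two) via Lemmas~\ref{lem:alpha} and~\ref{Aunq} is if anything slightly more explicit than the paper's sketch, but it is the same argument.
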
 
We prove this lemma in appendix \ref{sec:cycle_three}, and illustrate it on Figure~\ref{fig:tortoise} with the unit ball of the obtained norm. In this case it is interesting to note that the group-support (weak or strong) is not necessarily a \textit{minimal cover}, where we say that a set of groups provides a minimal cover if it is impossible to remove a group while still covering the support. For instance, for $\w$ in the interior of $\mathcal{W}_{\text{bal}}$, the group-support contains all three groups, while the support is covered by any two groups. This is clearly a consequence of the convexity of the formulation.
The cycle of length 3 is also interesting because, for any $\w$ on the boundary of $\mathcal{W}_{\text{bal}}$, the weak and strong group-support do not coincide, as illustrated on Figure~\ref{fig:tortoise} (right).   Indeed if for example $|w_3|=|w_1|+|w_2|$, then $\v_{\{1,2\}}=(0,0)^\top, \v_{\{1,3\}}=|w_1| (\sign(w_1),\sign(w_3))^\top$ and $\v_{\{2,3\}}=|w_2|(\sign(w_2),\sign(w_3))^\top$ so that by lemma ~\ref{lem:alpha} the dual variable satisfies $\nm{\alphab_{\{1,2\}}}=1$, which means that $\{1,2\}$ is in the weak but not in the strong group-support. 

\begin{figure}
\centering
\includegraphics[width=6cm]{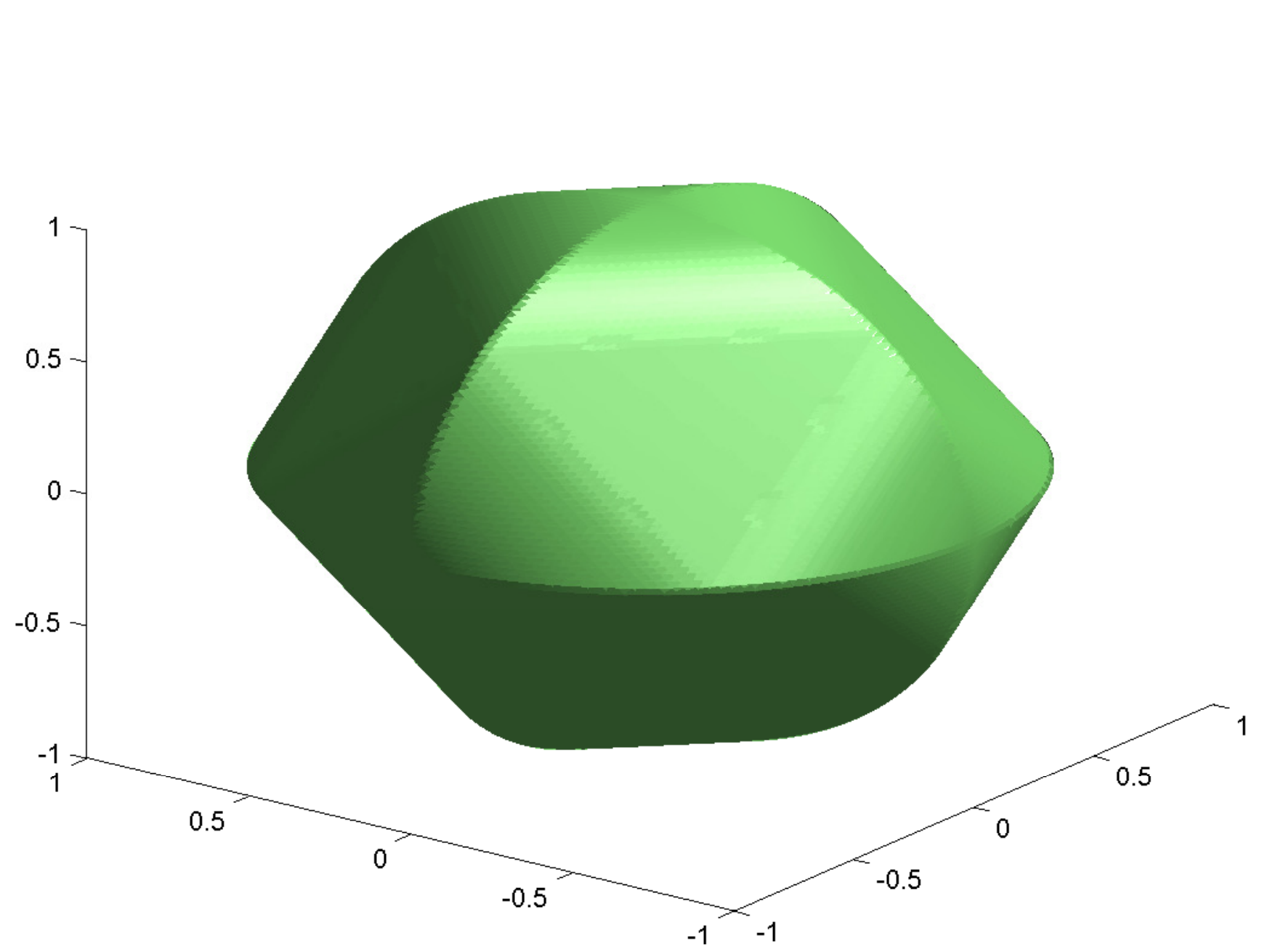} \hspace{1cm}
\includegraphics[width=6cm]{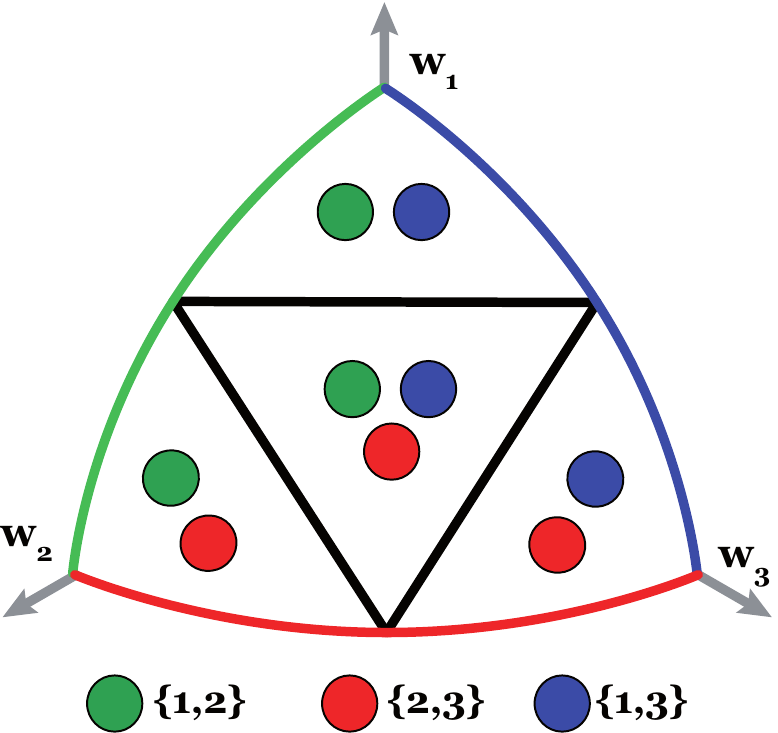}
\caption{(Left) The unit ball of $\Oo$ for the groups $\{1,2\},
  \{1,3\}, \{2,3\}$ in $\RR^3$. (Right) a diagram that represents the
  restriction of the unit ball to the positive orthant. The black
  lines separate the surface in four regions. The triangular central
  region is $\mathcal{W}_{\text{bal}}$.  On the interior of each
  region and on the colored outer boundaries, the group-support is
  constant, non-ambiguous (i.e., the weak and strong group-supports
  coincide) and represented by color bullets or the color of the edge,
  with one color associated to each group. On the boundary of
  $\mathcal{W}_{\text{bal}}$, the black lines indicate the
  group-support is ambiguous, the weak group-support containing all
  three groups, and the strong group-support being equal to that of
  the outer adjacent region for each black segment.  }
\label{fig:tortoise}
\end{figure}

\subsection{Cycle of length 4}
We consider the case $p=4$ and show the following result in appendix \ref{sec:cycle_four}.
\begin{lemma}
For $\G=\big \{ \{1,2\},\{1,3\},\{2,4\},\{3,4\} \big \}$. $\Oo$ has the closed form
$$\Oo(\w)=\nm{(|w_1|+|w_4|,|w_2|+|w_3|)}_2.$$ However, if $|\supp{\w}|=4$, the optimal decomposition is never unique.
\end{lemma}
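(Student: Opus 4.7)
The approach is two-fold: first compute the closed form for $\Om{\w}$ via the dual maximization of \lemref{lem:variational1}, then, when $|\supp{\w}|=4$, reduce the primal constraint $\w=\sum_g \v^g$ to a linear system in one scalar per group and exploit a rank deficiency of that system to exhibit an affine family of optimal decompositions.

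First I would apply \lemref{lem:variational1}. Assuming $d_g \equiv 1$ (as suggested by the closed form to be proved), this yields
\[
\Om{\w}=\max\bigl\{\alphab^\top\w : \alpha_i^2+\alpha_j^2\leq 1\ \text{for every}\ \{i,j\}\in\G\bigr\}.
\]
After absorbing signs (replacing $\alpha_i$ by $\sign(w_i)|\alpha_i|$) we may assume $w_i\geq 0$ and $\alpha_i\geq 0$. The key combinatorial observation is that $\G$ is exactly the ``grid'' $\{1,4\}\times\{2,3\}$: for any $i^\star\in\arg\max(\alpha_1,\alpha_4)$ and $j^\star\in\arg\max(\alpha_2,\alpha_3)$, the pair $\{i^\star,j^\star\}$ belongs to $\G$, hence $\max(\alpha_1,\alpha_4)^2+\max(\alpha_2,\alpha_3)^2\leq 1$. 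Combined with Cauchy--Schwarz, this gives $\alphab^\top\w\leq \sqrt{(w_1+w_4)^2+(w_2+w_3)^2}$, with equality attained for $\alpha_1=\alpha_4=(w_1+w_4)/\Om{\w}$ and $\alpha_2=\alpha_3=(w_2+w_3)/\Om{\w}$, yielding the announced formula.

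Next, suppose $|\supp{\w}|=4$. The optimization above then forces $|\alpha_i|$ to attain the maximum in each of its pair constraints, so the optimal $\alphab^\star\in\A(\w)$ is unique with $|\alpha_1^\star|=|\alpha_4^\star|>0$ and $|\alpha_2^\star|=|\alpha_3^\star|>0$. By \lemref{lem:alpha}, every $\vb\in\Vb(\w)$ must satisfy $\v^g=t_g\,\Pi_g\alphab^\star$ for some $t_g\geq 0$ (with $t_g=0$ on inactive groups), and the identity $\w=\sum_g \v^g$ becomes a $4\times 4$ linear system in $(t_g)_{g\in\G}$ whose matrix is the vertex-edge incidence matrix of the $4$-cycle $1{-}2{-}4{-}3{-}1$. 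This matrix has rank $3$ (the alternating sum of rows vanishes), and the system is consistent because $(w_1+w_4)/|\alpha_1^\star|=(w_2+w_3)/|\alpha_2^\star|=\Om{\w}$. Hence the solution set is an affine line; parameterizing by $s:=t_{\{1,2\}}$ one gets $t_{\{1,3\}}=A_1-s$, $t_{\{2,4\}}=A_2-s$, $t_{\{3,4\}}=A_4-A_2+s$ with $A_i=|w_i|/|\alpha^\star_{\cdot}|$, and a short estimate shows that the non-negativity constraints cut out an interval of positive length as soon as all $|w_i|>0$, producing infinitely many optimal decompositions.

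The main obstacle is the combinatorial observation at the start of Step~1 that every choice of maximizers $(i^\star,j^\star)$ produces an edge of $\G$; once this is in hand, the closed form and the rank-$3$ incidence structure driving non-uniqueness follow by routine computation.
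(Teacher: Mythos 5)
Your proof is correct, and the first half takes a genuinely different (and cleaner) route than the paper. The paper derives the closed form by writing the dual problem, forming a Lagrangian with multipliers $\l_g$, invoking complementary slackness, and solving the resulting system of equations for the $\zeta_i$ under the assumption that all four groups are active; your argument instead bounds $\alphab^\top\w$ directly, using the observation that $\G$ is the complete bipartite family $\{1,4\}\times\{2,3\}$ so that $\max(|\alpha_1|,|\alpha_4|)^2+\max(|\alpha_2|,|\alpha_3|)^2\leq 1$, and then Cauchy--Schwarz plus an explicit attaining $\alphab$. This buys you a proof of the closed form valid for every $\w$ in one stroke, whereas the paper's computation only covers the ``all multipliers positive'' regime and leaves the degenerate cases implicit. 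For the non-uniqueness part the two arguments are structurally the same: both reduce to the rank-$3$ vertex-edge incidence matrix of the $4$-cycle with kernel spanned by $(1,-1,-1,1)$, the paper phrasing it in terms of $\Lambdab(\w)$ and the Moore--Penrose pseudo-inverse, you in terms of the scalars $t_g=\nm{\v^g}$ (which, by Lemma~\ref{lem:bijection} with $d_g=1$, is the same parametrization). Two small points you gloss over but which are indeed routine: the uniqueness of $\alphab^\star$ when $|\supp{\w}|=4$ follows from tracking the equality cases in your chain of inequalities (each $\alpha_i$ must attain the max of its class and Cauchy--Schwarz must be tight), and to conclude that \emph{every} point of the positive-length interval is an optimal decomposition you should note that the objective $\sum_g t_g$ is constant along the kernel direction $(1,-1,-1,1)$, so optimality of one feasible point (guaranteed by Lemma~\ref{lem:basic1}) propagates to the whole interval.
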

This suggests that for a general $\G$, unique solutions are the exception rather than the rule. This motivates a posteriori definitions of group-support that are meaningful in the case where the decomposition is not unique. We consider a necessary and sufficient condition for uniqueness in lemma \ref{lem:uniqueness}.

\section{Model selection consistency}\label{sec:consistency}
\label{sec:supp_rec}
In this section we consider the estimator $\wh$ obtained as a solution of the learning problem 
(\ref{eq:opt1}) in the context of a well-specified model. Specifically, we consider the linear regression model:
\begin{equation}\label{eq:linreg}
\y = \X \ws + \epsb\,,
\end{equation}
where $\X\in\RR^{n\times p}$ is a design matrix, $\y\in\RR^p$ is the response vector and $\epsb\in\RR^p$ is a vector of i.i.d. random variables with mean $0$ and finite variance. We denote by $\ws$ the true regression function, and by $\wh$ the one we estimate as the solution of the following optimization problem, which is a particular case of~\eqref{eq:opt1}:
\begin{equation}\label{eq:linregopt}
  \min_{\w\in\RR^p} \frac{1}{2n}\nm{\y-\X \w}^2 +
  \lambda_n \, \Oo(\w)\,.
\end{equation}

Several types of consistency results are of interest when using a sparsity-inducing norm as a regularizer. One typically distinguishes \textit{classical consistency} where $\nm{\wh-\ws}_p$ converges in probability  to zero, \textit{prediction consistency} where  $|L(\wh)-L(\ws)|$ converges to zero in probability, and \textit{model selection consistency} or \textit{support recovery} where the support of $\wh$ coincides with the support of $\ws$ with high probability. We are interested in the discussion of the last type of result, support recovery, for solutions of~\eqref{eq:linregopt}. 

As compared with the Lasso and the group Lasso in the case of disjoint
supports, the discussion of support recovery is complicated by several
factors here. First, supports that can be recovered are not exactly
the ones that can be expressed as unions of groups in $\G$: as the
reader might expect, the appropriate notion of support is $\Jw(\ws)$ (or $\Js(\ws)$),
the one induced by the concept of group-support introduced in section
\ref{sec:group-support}. 
Second, by contrast with the
situation of the group Lasso with disjoint groups, the identification
of the support $\Jw(\ws)$ (or $\Js(\ws)$) is not equivalent to the identification of the
group-support $\Gw(\ws)$ (or $\Gs(\ws)$), the latter being now a harder problem.
As a consequence one should distinguish \text{support recovery} from
\text{group-support recovery}, and, depending on the context, the
appropriate notion to consider for \text{model selection consistency}
might be one or the other. Third, the group-support is characterized
by properties of the set $\Vb(\wh)$ whose convergence is less trivial
to study than that of a vector. For these reasons, we consider only in
this paper the classical asymptotic regime in which the model
generating the data is of fixed finite dimension $p$ while $n
\rightarrow \infty$. However we focus on the harder problem of
\textit{group-support recovery}, which will then imply \textit{support
  recovery} results.  

The proof of consistency we present below follows a classical proof scheme \citep{Bach2008Consistencya}.
However the originality of our work reside in that we characterize the \text{group-support consistency} here, which requires in particular to study the convergence of the set-valued map $\Vb(\wh)$. We therefore start in the next section by introducing appropriate notions of continuity for set-valued functions.

\subsection{Correspondence theory to the rescue}
We appeal to the theory of {\sl correspondences} developed by Claude Berge at the end of the 1950's \citep{Berge1959Espaces}. In particular, we follow closely its presentation by \citet{Border1985Fixed}.
\begin{definition}[\bf correspondence]
A {\sl correspondence} $\phi$ from a set $X$ to a set $Y$, denoted $\phi: X \twoheadrightarrow Y$, is a set-valued mapping which to each element $x \in X$ associates a set $\phi(x) \subset Y$.
\end{definition}
When $X$ and $Y$ are metric spaces, the usual notion of continuity of a function is replaced for correspondences by the following notions:
\begin{definition}[\bf hemicontinuity and continuity]
Given two metric spaces $(X,d)$ and $(Y,\rho)$, a correspondence $\phi: X \twoheadrightarrow Y$ is said to be {\sl upper hemicontinuous} or u.h.c. (resp. {\sl lower hemicontinuous} or l.h.c.) if for any point $x\in X$ and any open set $U\subset Y$ such that $\phi(x) \subset U$ (resp. $\phi(x) \cap U \neq \varnothing$) there exists a neighborhood $V$ of $x$ such that, for all $x' \in V$, $\phi(x') \subset U$ (resp. $\phi(x') \cap U \neq \varnothing$). A correspondence is said to be {\sl continuous} if it is both upper and lower hemicontinuous.
 \end{definition}
Note that a singleton valued correspondence $\phi$ can be identified with the function $f$ taking this unique value, and that $f$ is continuous if and only if $\phi$ is lower or upper hemicontinuous, both notions being equivalent in that case. The following results, which we prove in appendix \ref{sec:two_lem}, are key to study the consistency of our method in the next section.
\begin{lemma}
\label{Auhc}
$\w \mapsto \mathcal{A}(\w)$ is an upper hemicontinuous correspondence.
\end{lemma}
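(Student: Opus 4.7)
The plan is to recognize that $\mathcal{A}(\w) = \operatorname{argmax}_{\alphab \in B^*} \alphab^\top \w$, where $B^* \eqdef \{\alphab \in \RR^p : \Oo^*(\alphab) \leq 1\}$ is the unit ball of the dual norm, which is closed, bounded and hence compact in $\RR^p$ by Lemma \ref{lem:dualnorm}. The statement is then essentially an instance of Berge's Maximum Theorem applied to the continuous objective $(\w,\alphab) \mapsto \alphab^\top \w$ with the constant constraint correspondence $\w \mapsto B^*$, but since we only need upper hemicontinuity (not the continuity of the value function), I would give a short self-contained sequential argument, which is legitimate because both $\RR^p$ and the image space are metric, and $\mathcal{A}(\w)$ takes values in the compact set $B^*$.

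Concretely, I would argue by contradiction: assume there exist $\w\in\RR^p$ and an open $U\subset\RR^p$ with $\mathcal{A}(\w)\subset U$, but a sequence $\w_n \to \w$ and points $\alphab_n \in \mathcal{A}(\w_n)\setminus U$. Since $\alphab_n \in B^*$ and $B^*$ is compact, extract a subsequence (still denoted $\alphab_n$) converging to some $\alphab^\star \in B^*$. The complement of $U$ is closed, so $\alphab^\star \notin U$.

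Now pass to the limit in the optimality identity $\alphab_n^\top \w_n = \Oo(\w_n)$, which holds by Lemma \ref{lem:variational1} (second variational formulation) since $\alphab_n \in \mathcal{A}(\w_n)$. The right-hand side converges to $\Oo(\w)$ by continuity of the norm $\Oo$, while the left-hand side converges to $\alphab^{\star\top}\w$ by joint continuity of the inner product. Thus $\alphab^{\star\top}\w = \Oo(\w)$, and combined with $\Oo^*(\alphab^\star)\leq 1$ this shows $\alphab^\star \in \mathcal{A}(\w) \subset U$, contradicting $\alphab^\star \notin U$.

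The only step with any subtlety is justifying that we may extract a convergent subsequence and that the limit remains feasible; both follow immediately because $B^*$ is a compact (hence sequentially compact and closed) subset of $\RR^p$ in finite dimension. No delicate analytic estimate is needed, so I do not expect any real obstacle, merely the bookkeeping of the sequential characterization of upper hemicontinuity for compact-valued correspondences into metric spaces.
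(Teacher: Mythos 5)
Your proposal is correct, and the key recognition is exactly the paper's: $\mathcal{A}(\w)$ is the argmax correspondence of the continuous objective $(\alphab,\w)\mapsto\alphab^\top\w$ over the constant, compact-valued constraint correspondence $\w\mapsto\{\alphab : \Oo^*(\alphab)\leq 1\}$, so the paper simply invokes Berge's maximum theorem (Theorem~\ref{Berge}) and is done. The only difference is that instead of citing the theorem you prove the relevant instance by hand, via the sequential (closed-graph plus compact range) characterization of upper hemicontinuity: extract a convergent subsequence in the compact dual ball, pass to the limit in $\alphab_n^\top\w_n=\Oo(\w_n)$ using continuity of the norm, and conclude the limit point lies in $\mathcal{A}(\w)$. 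This is a valid and self-contained substitute --- it buys independence from Berge's theorem at the cost of a few lines of bookkeeping, while the paper's citation keeps the proof to two lines; neither argument needs anything beyond compactness of the dual unit ball (Lemma~\ref{lem:dualnorm}) and the variational identity of Lemma~\ref{lem:variational1}, both of which you use correctly.
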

\begin{lemma}
\label{Vlhc}
If $\supp{\w}=\Jw$, then, on the domain $\mathcal{D}=\{\u \in \RR^p \mid \supp{\u}=\Jw\}$, $\u \mapsto \Vb(\w+\u)$ is a continuous correspondence at $\u=\zv$.
\end{lemma}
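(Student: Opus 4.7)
The plan is to prove upper hemicontinuity (UHC) and lower hemicontinuity (LHC) of $\u \mapsto \Vb(\w+\u)$ at $\u = \zv$ separately. UHC follows from a standard compactness argument on the minimization problem~\eqref{eq:pendef}; LHC is the delicate part and uses the bijection of \lemref{lem:bijection} together with the uniqueness of $\alphab$ on $\Jw$ guaranteed by \lemref{Aunq}.

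For upper hemicontinuity, I would take any sequence $\u_n \to \zv$ in $\mathcal{D}$ and any selection $\vb_n \in \Vb(\w+\u_n)$. Each $\vb_n$ satisfies $d_g \nm{\v_n^g} \leq \Omega(\w+\u_n)$, and continuity of the norm $\Omega$ gives $\Omega(\w+\u_n) \to \Omega(\w)$, so $(\vb_n)$ is uniformly bounded in $\VV_\G$. Any cluster point of $(\vb_n)$ satisfies the closed feasibility constraints $\sum_g \v^g = \w$, $\supp{\v^g}\subset g$, and achieves objective value $\Omega(\w)$ by continuity, hence lies in $\Vb(\w)$. This prevents $\vb_n$ from escaping any open neighborhood of $\Vb(\w)$.

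For lower hemicontinuity, fix $\vb_0 \in \Vb(\w)$. Let $\lambdab_0 \in \Lambdab(\w)$ be the multiplier associated to $\vb_0$ by \lemref{lem:bijection}, and pick $\alphab_0 \in \A(\w)$ satisfying $\v_0^g = \lambdab_{0,g}\alphab_{0,g}$ on each group (\lemref{lem:alpha}). Under $\supp{\w}=\Jw$, \lemref{Aunq} pins down $\alphab_{0,i} = \w_i / \sum_{g\ni i}\lambdab_{0,g}$ for $i\in\Jw$, and the denominator is strictly positive (otherwise the third variational formulation \eqref{eq:trick} would be infinite at $\w_i\neq 0$). For $\u$ small with $\supp{\u}\subset\Jw$, I would build $\alphab(\u)$ on $\Jw$ by the same formula with $\w+\u$ in place of $\w$, extend it by $\alphab_0$ on $\Jw^c$, and set $\v^g(\u)_i=\lambdab_{0,g}\alphab(\u)_i$. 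This is a feasible decomposition of $\w+\u$ that converges to $\vb_0$. The dual constraint $\nm{\alphab(\u)_g}\leq d_g$ is automatic for $g \notin \Gw(\w)$ by persistence of the strict inequality, while for groups with $\v_0^g \neq 0$ (where the constraint is already tight) I would slightly perturb $\lambdab_0$ on those groups by resolving the KKT system $\nm{\alphab(\u,\lambdab)_g}^2 = d_g^2$. Local solvability follows from the convexity of the program \eqref{eq:trick} and continuity of its optimal value, yielding $\vb(\u) \in \Vb(\w+\u)$ arbitrarily close to $\vb_0$.

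The main obstacle is the potential non-uniqueness of $\Lambdab(\w)$ and a possibly degenerate KKT Jacobian --- in general the argmin of a non-strictly convex program need not be lower hemicontinuous. The hypothesis $\supp{\w}=\Jw$ is crucial here: it forces $\Js=\Jw$, removes ambiguity in $\alphab$ on the support, and confines the perturbed KKT equations to the groups where $\lambdab_{0,g} > 0$. Without it, a perturbation $\u$ hitting a coordinate of $\Jw\setminus\Js$ could trigger a discontinuous rearrangement of $\Vb$. The technical crux is either an implicit-function argument when the reduced Jacobian is nondegenerate, or a Berge maximum / $\Gamma$-convergence argument on the convex program \eqref{eq:trick} otherwise.
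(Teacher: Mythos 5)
Your upper hemicontinuity argument is fine: it is a sequential version of what the paper gets from Berge's maximum theorem (\lemref{Vuhc}), and nothing is lost there. The gap is in the lower hemicontinuity half, exactly at the step you flag yourself. Freezing $\lambdab_0$ and rescaling $\alphab$ does give a feasible decomposition of $\w+\u$ close to $\vb_0$, but it is not optimal: optimality forces $\nm{\alphab_g}=d_g$ exactly for every group with $\lambdab_{0,g}>0$, and your repair --- perturb $\lambdab_0$ by ``resolving the KKT system'', with ``local solvability following from convexity of \eqref{eq:trick} and continuity of its optimal value'' --- is not an argument. Convexity plus continuity of the value function yields at best \emph{upper} hemicontinuity of the argmin (that is precisely Berge's theorem), not the lower hemicontinuity you need, as you yourself note at the start of the same paragraph; and the nondegenerate implicit-function branch is unavailable in general, since \lemref{lem:uniqueness} and the cycle of length~4 (\appref{sec:cycle_four}) show that $\Lambdab(\w)$ can be a whole segment, i.e.\ the reduced Jacobian $\B_{\Jw \Gs}^\top \Db\, \B_{\Jw \Gs}$ is singular in exactly the cases the lemma must cover. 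So the hard case --- approximating an arbitrary $\vb_0\in\Vb(\w)$ when the decomposition is non-unique --- is left unproved.

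The paper closes this by changing variables rather than perturbing the KKT system. Under $\supp{\w}=\Jw$, the aggregates $\zeta_i=\sum_{g\ni i}\lambdab_g$, $i\in\Jw$, are unique and continuous (\lemref{zeta_prop}) even though $\lambdab$ is not, so $\Pi_{\Gw}\Lambdab(\w+\u)$ is the polyhedron $\{\lambdab\geq 0 : \B_{\Jw\Gw}\lambdab=\zetab_{\Jw}(\w+\u)\}$; \lemref{Llhc} proves its lower hemicontinuity by an explicit construction (mix the target point with fiber points that are strictly positive on the $\Gs$-coordinates, so positivity survives the perturbation of $\zetab$, and use upper hemicontinuity to keep the coordinates outside $\Gs$ small), \lemref{Gbar_ngbrs} disposes of groups outside $\Gw$, and then $\Vb(\w+\u)=\alphab(\w+\u)\,\Lambdab(\w+\u)$ with the product rule for l.h.c.\ correspondences (\lemref{product}). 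To salvage your route you would need an argument of this polyhedral type for the degenerate case; an implicit-function or generic $\Gamma$-convergence appeal will not supply it.
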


\subsection{Group-support recovery}
In this section, we state and prove our main consistency results for group-support and support recovery in the least-square linear regression framework (\ref{eq:linreg}). We consider two main hypotheses:
$$
\text{(H1)}  \quad \XX\eqdef\frac{1}{n} \X^\top \X \succ 0\,, \qquad \qquad 
\text{(H2)} \quad  \supp{\ws}=\Jw(\ws)\,.
$$

We denote $\Gwc(\ws) \eqdef\G \backslash \Gw(\ws)$ and
$\Jwc(\ws)\eqdef [1,p\,]\backslash \Jw(\ws)$.  For convenience, for
any group of covariates $g$ we note $\X_g$ the $n\times\abs{g}$ design
matrix restricted to the covariates in $g$, and for any two groups $g,
g'$ we note $\XX_{gg'}=\frac{1}{n}\X_g^\top \X_{g'}$.

Consider the following two conditions, where we denote $\Jw(\ws)$ simply by $\Jw$ for sake of clarity:
\begin{equation}
\label{eq:necmutinc}
\forall g \in \Gwc(\ws)\,, \quad \nm{\XX_{g \Jw} \XX_{\Jw \Jw}^{-1} \alphab_{\Jw}(\ws)} \leq d_g\,,
\tag{C1}
\end{equation}
\begin{equation}
\label{eq:suffmutinc}
\forall g \in \Gwc(\ws)\,, \quad \nm{\XX_{g \Jw} \XX_{\Jw \Jw}^{-1} \alphab_{\Jw}(\ws)} < d_g\,.
\tag{C2}
\end{equation}

\begin{theorem}
\label{theo:part_cons}
 Under assumption (H1), for $\lambda_n \rightarrow 0$
  and $\lambda_n n^{1/2} \rightarrow \infty$, conditions
  (\ref{eq:necmutinc}) and (\ref{eq:suffmutinc}) are respectively
  necessary and sufficient for the strong group-support of the solution of (\ref{eq:opt1}), $\Gs(\wh)$ to
  satisfy with probability tending to $1$ as $n\rightarrow +\infty$:
  $$\Gs(\wh) \subset \Gw(\ws)\,.$$
\end{theorem}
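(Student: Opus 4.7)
The plan is to adapt the classical \emph{primal-dual witness} scheme of~\citet{Wainwright2009Sharp} and its group Lasso extension by~\citet{Bach2008Consistencya} to the latent group Lasso setting, with the correspondence-theoretic continuity Lemmas~\ref{Vlhc} and~\ref{Auhc} playing a crucial role to compensate for the intrinsic rank deficiency of the duplicated design. Under~(H1) the objective~\eqref{eq:linregopt} is strongly convex so $\wh$ is unique; a standard comparison of the objective at $\wh$ and $\ws$, together with $\tfrac{1}{n}\X^\top\epsb=\Op(n^{-1/2})$, yields $\nm{\wh-\ws}=\Op(\lambda_n+n^{-1/2})$, hence $\wh\to\ws$ in probability since $\lambda_n\to 0$.

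Next, I would introduce the \emph{restricted} problem in which the latent variables $\v^g$ are constrained to zero for every $g\notin\Gw(\ws)$. By the duplication trick of Section~\ref{sec:duplication}, this is a standard non-overlapping group Lasso on the duplicated design restricted to the blocks indexed by $\Gw(\ws)$; thanks to~(H1), $\XXS$ is invertible, so the restricted primal solution $\tilde{\w}=\sum_{g\in\Gw(\ws)}\tilde{\v}^g$ is unique, supported in $\Jw(\ws)$, and converges to $\ws$ in probability. Because~(H2) gives $\supp{\ws}=\Jw(\ws)$, Lemma~\ref{Vlhc} provides lower hemicontinuity of $\Vb$ at $\ws$ along vectors supported in $\Jw(\ws)$, so $\tilde{\vb}\to\vbs$ for some $\vbs\in\Vb(\ws)$. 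Correspondingly, the dual variable $\tilde{\alphab}\eqdef-\nabla L(\tilde{\w})/\lambda_n$ satisfies $\tilde{\alphab}_g\in d_g\,\partial\nm{\cdot}(\tilde{\v}^g)$ for every $g\in\Gw(\ws)$ by the restricted KKT, and by upper hemicontinuity of $\A$ (Lemma~\ref{Auhc}) combined with the singleton property $\Pi_\Jw\A(\ws)=\{\alphab_\Jw(\ws)\}$ of Lemma~\ref{Aunq}, $\tilde{\alphab}_\Jw$ converges in probability to $\alphab_\Jw(\ws)$.

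To establish sufficiency of~(\ref{eq:suffmutinc}), I would certify that extending $\tilde{\vb}$ by $\v^g=\zv$ for $g\notin\Gw(\ws)$ satisfies the global optimality conditions of Lemma~\ref{lem:equivgrad}, which reduces to $\nm{\tilde{\alphab}_g}<d_g$ for $g\notin\Gw(\ws)$. Eliminating $\tilde{\w}_\Jw-\ws_\Jw$ from the restricted normal equation $\XXS(\tilde{\w}_\Jw-\ws_\Jw)=\tfrac{1}{n}\X_\Jw^\top\epsb-\lambda_n\tilde{\alphab}_\Jw$ and substituting into $\tilde{\alphab}_g=-\tfrac{1}{\lambda_n}\nabla_g L(\tilde{\w})$ produces the master identity
\[
\tilde{\alphab}_g \;=\; \XX_{g\Jw}\XXS^{-1}\tilde{\alphab}_\Jw \;+\; \tfrac{1}{\lambda_n n}\bigl(\X_g^\top - \XX_{g\Jw}\XXS^{-1}\X_\Jw^\top\bigr)\epsb,
\]
whose second term is $\Op\bigl(1/(\lambda_n\sqrt{n})\bigr)=o_p(1)$ under $\lambda_n\sqrt{n}\to\infty$; combined with $\tilde{\alphab}_\Jw\to\alphab_\Jw(\ws)$, the right-hand side tends in probability to $\XX_{g\Jw}\XXS^{-1}\alphab_\Jw(\ws)$, whose norm is strictly less than $d_g$ by~(\ref{eq:suffmutinc}). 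Hence, with probability tending to one, $\nm{\tilde{\alphab}_g}<d_g$ for all $g\notin\Gw(\ws)$; by uniqueness of the unrestricted primal solution this forces $\tilde{\w}=\wh$, and applying Lemma~\ref{lem:alpha} to the witness $\tilde{\alphab}\in\A(\wh)$ forces $\v^g=\zv$ for every $\vb\in\Vb(\wh)$ and every $g\notin\Gw(\ws)$, giving $\Gs(\wh)\subset\Gw(\ws)$.

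For necessity I argue by contrapositive: if~(\ref{eq:necmutinc}) is violated at some $g_0\in\Gwc(\ws)$, then on the event $\{\Gs(\wh)\subset\Gw(\ws)\}$ the estimator $\wh$ is feasible for the restricted problem and hence equals $\tilde{\w}$; the master identity then forces $\nm{\tilde{\alphab}_{g_0}}\le d_{g_0}$, whose in-probability limit $\nm{\XX_{g_0\Jw}\XXS^{-1}\alphab_\Jw(\ws)}>d_{g_0}$ yields a contradiction with strictly positive probability, preventing $\PPP(\Gs(\wh)\subset\Gw(\ws))\to 1$. The main obstacle is bridging the primal convergence $\wh\to\ws$ to the set-valued decomposition and its dual: the master identity must remain valid despite the duplicated design being rank-deficient on every union of overlapping groups, which is precisely where the continuity of the correspondence $\Vb$ and the uniqueness of $\alphab_\Jw(\ws)$ on $\Jw$ are indispensable to pin down limits of the set-valued KKT residuals.
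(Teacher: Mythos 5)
Your proposal is correct and follows essentially the same route as the paper's proof: an oracle problem restricted to the groups of $\Gw(\ws)$, consistency of its solution, the same elimination identity expressing $\nabla_{\!g}L$ on $g\in\Gwc(\ws)$ in terms of $\XX_{g\Jw}\XXS^{-1}\alphab_{\Jw}$ plus an $\Op(1/(\lambda_n\sqrt{n}))$ noise term, convergence of the restricted dual variable to $\alphab_{\Jw}(\ws)$ via upper hemicontinuity of $\A$ (Lemma~\ref{Auhc}) together with its uniqueness on $\Jw$ (Lemma~\ref{Aunq}), and the conclusion via the optimality conditions and Lemma~\ref{lem:alpha}; your necessity argument is likewise the paper's, phrased contrapositively.

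One correction: your intermediate appeal to (H2) and to the lower hemicontinuity of $\Vb$ (Lemma~\ref{Vlhc}) is out of place here, since Theorem~\ref{theo:part_cons} assumes only (H1) --- (H2) and Lemma~\ref{Vlhc} are reserved for the lower inclusion $\Gs(\ws)\subset\Gs(\wh)$ of Theorem~\ref{theo:ambig_cons}. Fortunately that step is not actually load-bearing in your argument: the restricted KKT conditions give $\tilde{\alphab}_g\in d_g\,\partial\nm{\cdot}(\tilde{\v}^g)$ directly, and the only limit you need is that of $\tilde{\alphab}_{\Jw}$, which Lemmas~\ref{Auhc} and~\ref{Aunq} already provide; so you should simply delete the claim that $\tilde{\vb}\to\vbs$ (and the closing remark that continuity of $\Vb$ is ``indispensable''), otherwise your proof only establishes the statement under the extra hypothesis (H2).
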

\begin{proof}
  We follow the line of proof of \cite{Bach2008Consistencya} but
  consider a fixed design for simplicity of notations. Let us first
  consider the subproblem of estimating a vector only on the support
  of $\ws$ by using only the groups in $\Gw(\ws)$ in the penalty, \emph{i.e.},
  consider $\w_1\in\RR^{\Jw}$ a solution of
$$
  \min_{\w_{\Jw}\in\RR^{\Jw}} \frac{1}{2n}\nm{\y-\X_{\Jw} \w_{\Jw}}^2 +
  \lambda_n \, \Omone{\w_{\Jw}}\,.
$$
By standard arguments, we can prove that $\w_1$ converges in Euclidean
norm to $\ws$ restricted to $\Jw$ as $n$ tends to infinity
\citep{Knight2000Asymptotics}.  In the rest of the proof we show how
to construct a vector $\w\in\RR^p$ from $\w_1$ which under condition
(\ref{eq:suffmutinc}) is with high probability a solution to
\eqref{eq:linregopt}.  By adding null components to $\w_1$, we obtain a
vector $\w\in\RR^p$ whose support is also $\Jw$, and $\u=\w-\ws$
therefore satisfies $\supp{\u}\subset \Jw$. A direct computation of the
gradient of the loss $L(\w)= \frac{1}{2 n}\nm{\y - \X\w}^2$ gives $ \nabla L(\w) =
\XX \u - \qn\,$, where $\qn=\frac{1}{n}\X^\top\epsb$.  From this we deduce
that $\u_{\Jw}=\XX_{\Jw \Jw}^{-1} \br{ \nabla_{\Jw} L(\w) + \qn_{\Jw}}$, and
since, by Lemma \ref{lem:equivgrad}, $-\nabla_{\Jw} L(\w) \in \lambda_n \Pi_{\Jw}\mathcal{A}(\w)$, there exists  $\alphab_{\Jw} \in \Pi_{\Jw}\mathcal{A}(\w)$ such that we have
$$
\nabla_{\Jwc} L(\w) = \XX_{\Jwc \Jw}\u - \qn_{\Jwc} =
\XX_{\Jwc \Jw}\XX_{\Jw \Jw}^{-1} \br{ -\lambda_n \alphab_{\Jw} + \qn_{\Jw}} - \qn_{\Jwc}\,.
$$
To show that $\w$ is a feasible solution to (\ref{eq:linregopt}) it is enough to show that $\forall g \in \Gwc(\ws),\: \nm{\nabla_{\!g} L(\w)} \leq \lambda_n \, d_g $.
But since the noise has bounded variance, 
$$ 
\XX_{\Jwc \Jw}\XX_{\Jw \Jw}^{-1}\qn_{\Jw} - \qn_{\Jwc} =\frac{1}{n} \X_{\Jwc}^\top \left [ \frac{1}{n} \X_{\Jw} \XX_{\Jw \Jw}^{-1} \X_{\Jw}^\top - I \right ] \epsb
$$
is $\sqrt{n}$-consistent, and by the union bound we get
$ \PP(\forall g \in \Gwc(\ws), \nm{\nabla_{\!g} L(\w)} \leq \lambda_n\,  d_g)
\geq 1- \sum_{g \in \Gwc(\ws)} \PP(\nm{\nabla_{\!g} L(\w)} > \lambda_n\,  d_g)$.
We therefore deduce that, for any $g\in\Gwc(\ws)$,
$$
\frac{1}{\lambda_n}\nm{\nabla_{\!g} L(\w)} \leq \nm{\XX_{g \Jw} \XX_{\Jw
  \Jw}^{-1} \alphab_{\Jw}}+\Op(\lambda_n^{-1} n^{-1/2}) \,.
$$
By Lemma~\ref{Auhc}, we have that $\Pi_{\Jw}\mathcal{A}(\w)$ is an upper hemicontinuous correspondence so that
$\w_{\Jw} \overset{\PPP}{\rightarrow} \ws_{\Jw}$ implies that $$\max_{\alphab' \in \mathcal{A}(\w)}\nm{\alphab'_{\Jw}-\alphab_{\Jw}(\ws)} \overset{\PPP}{\rightarrow}0\,.$$
Since we chose $\lambda$ such that $\lambda_n^{-1} n^{-1/2} \rightarrow 0$, we have
$$\frac{1}{\lambda_n}\nm{\nabla_{\!g} L(\w)} \leq \nm{\XX_{g \Jw} \XX_{\Jw \Jw}^{-1} \alphab_{\Jw}(\ws)}+o_p(1)\,.$$
This shows that, under (C2), $\w$ is a feasible solution to (\ref{eq:linregopt}) whose group-support is contained in $\Gw(\ws)$, i.e., we have shown $\Gs(\wh) \subset \Gw(\ws)$.

For the
necessary condition, by contradiction, consider a solution supported on $\Jw$. Then, reusing the previous argument we have
$$\frac{1}{\lambda_n}\nm{\nabla_{\!g} L(\w)} \geq \nm{\XX_{g \Jw} \XX_{\Jw \Jw}^{-1} \alphab_{\Jw}(\ws)}-o_p(1)\,,$$
which shows that for the optimality conditions of Lemma \ref{lem:equivgrad}(b) to hold, condition (C1) is necessary.
\end{proof}
The previous theorem shows some partial consistency result in the sense that it guarantees that no group outside of the group-support will be selected. Since $\wh$ also converges with high probability in Euclidean norm to $\ws$, this implies for the support that with high probability
$$\supp{\ws} \subset \supp{\wh} \subset \Jw(\ws)\,.$$

However, the theorem does not guarantee that all groups in $\Gs(\ws)$ will be selected. This is not a shortcoming of the theorem: we provide an example in \appref{sec:part_cons} which shows that it is possible that $\Gs(\wh) \subsetneq \Gs(\ws)$ with probability 1. Nonetheless, we also show in the same appendix that with high probability there exists $\vbs \in \Vb(\ws)$ whose group-support is included in $\Gs(\wh)$.

\begin{theorem}
\label{theo:ambig_cons}
 With assumptions (H1,H2) and for $\lambda_n \rightarrow 0$
  and $\lambda_n n^{1/2} \rightarrow \infty$, condition
  (\ref{eq:necmutinc}) is sufficient for the strong group-support of the solution of (\ref{eq:linregopt}), $\Gs(\wh)$, to
  satisfy with high probability:
  $$
  \Gs(\ws) \subset \Gs(\wh) \subset \Gw(\ws)\,.
  $$
\end{theorem}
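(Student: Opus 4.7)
The plan is to split the compound inclusion into two separate arguments, handling the upper bound $\Gs(\wh)\subset\Gw(\ws)$ essentially as in Theorem~\ref{theo:part_cons} and then establishing the lower bound $\Gs(\ws)\subset\Gs(\wh)$ using correspondence theory for the set-valued map $\Vb$.

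For the upper bound, I would re-run the constructive scheme already used in the proof of Theorem~\ref{theo:part_cons}: form the restricted estimator $\w_1$ by minimizing the empirical risk on $\RR^{\Jw}$ with the penalty $\Omone{\cdot}$ (only groups in $\Gw(\ws)$), deduce $\sqrt{n}$-consistency of $\w_1$ toward $\ws_{\Jw}$ from \citet{Knight2000Asymptotics} under (H1), and zero-pad to obtain a candidate $\w\in\RR^p$ with $\supp(\w)\subset\Jw$. Propagating convergence from $\w$ to its dual certificate via Lemma~\ref{Auhc} (upper hemicontinuity of $\mathcal{A}$) and Lemma~\ref{Aunq} (unique determination of $\alphab_{\Jw}(\ws)$ on $\Jw$) yields, for every $g\in\Gwc(\ws)$, the bound $\lambda_n^{-1}\nm{\nabla_{\!g} L(\w)}\le \nm{\XX_{g\Jw}\XX_{\Jw\Jw}^{-1}\alphab_{\Jw}(\ws)}+o_p(1)$. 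Under (C1) this is at most $d_g+o_p(1)$, which suffices for the KKT characterization of Lemma~\ref{lem:equivgrad}(b) to be met with high probability; strict convexity from (H1) identifies $\w$ with $\wh$, delivering $\Gs(\wh)\subset\Gw(\ws)$.

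For the lower bound, the key tool is Lemma~\ref{Vlhc}. Hypothesis (H2) gives $\supp(\ws)=\Jw$, and because $\wh\overset{\PPP}{\to}\ws$ while every coordinate of $\ws_{\Jw}$ is nonzero, $\supp(\wh)=\Jw$ with high probability, so $\wh$ lies in the domain $\mathcal{D}=\{\u\in\RR^p : \supp(\u)=\Jw\}$ on which Lemma~\ref{Vlhc} asserts lower hemicontinuity of $\Vb$ at $\ws$. Fix $g_0\in\Gs(\ws)$. By definition there is $\vbs\in\Vb(\ws)$ with $(\vbs)^{g_0}\neq\zv$; the set $U_{g_0}=\{\vb\in\VV_\G : \vb^{g_0}\neq\zv\}$ is open and meets $\Vb(\ws)$, so lower hemicontinuity produces a neighborhood $V_{g_0}$ of $\ws$ such that $\Vb(\w)\cap U_{g_0}\neq\varnothing$ for every $\w\in V_{g_0}\cap\mathcal{D}$. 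Intersecting the finitely many high-probability events $\{\wh\in V_{g_0}\cap\mathcal{D}\}$ for $g_0\in\Gs(\ws)$ shows that each such $g_0$ lies in $\Gs(\wh)$, i.e.\ $\Gs(\ws)\subset\Gs(\wh)$ with high probability.

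The main obstacle I expect is the apparent tightness gap in the upper inclusion: Theorem~\ref{theo:part_cons} only proves (C2) sufficient, whereas here we assume the weaker (C1). One must argue that the added hypothesis (H2), together with the uniqueness of $\alphab_{\Jw}(\ws)$ on $\Jw$ coming from Lemma~\ref{Aunq} and the asymptotic feasibility $\le d_g+o_p(1)$, is enough to secure the KKT conditions asymptotically; intuitively, (H2) forces the restricted problem to genuinely activate every group in $\Gw(\ws)$, which is exactly what Theorem~\ref{theo:part_cons}'s scheme needed. A secondary technical issue is that lower hemicontinuity of $\Vb$ only holds on the support-preserving domain $\mathcal{D}$, so one must carefully condition on the event $\{\supp(\wh)=\Jw\}$, itself a consequence of classical consistency combined with (H2), before invoking Lemma~\ref{Vlhc}.
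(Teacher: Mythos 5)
Your proposal follows essentially the same route as the paper: the upper inclusion is obtained by re-running the construction from Theorem~\ref{theo:part_cons}, and the lower inclusion by combining consistency of $\wh$ with the lower hemicontinuity of $\Vb$ at $\ws$ given by Lemma~\ref{Vlhc} under (H2), picking for each $g\in\Gs(\ws)$ a decomposition $\vbs$ with $\vbs{}^g\neq\zv$ and concluding by a union bound over the finitely many groups. The (C1)-versus-(C2) tension you flag is not resolved in the paper either --- its proof simply cites Theorem~\ref{theo:part_cons} as yielding $\Gs(\wh)\subset\Gw(\ws)$ under (C1) --- so your treatment matches the published argument.
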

\begin{proof}
The previous theorem shows that (C1) implies, with high probability, $ \Gs(\wh) \subset \Gw(\ws)$.
However, by Lemma \ref{Vlhc},
we have that hypothesis (H2) guarantees that $\w \mapsto \Vb(\w)$ is continuous at $\ws$ for $\w$ with $\supp{\w} \subset \Jw(\ws)$.
Combined with the fact that $\wh$ converges in probability with $\ws$, this implies that $\forall \epsilon >0, \exists n_0, \forall n>n_0$, with probability larger than $1-\epsilon$, $\forall \vbs \in \Vb(\ws)$, 
there exists $\vb \in \Vb(\wh)$ such that $\nm{\vb-\vbs}<\epsilon$. For each $g \in \Gs(\ws)$, for $\vbs \in \Vb(\ws)$ such that ${\vs}^g \neq 0$, there thus exists $\epsilon>0$ such that the previous convergence results implies that $g \in \Gs(\wh)$ with high probability.
Finally, since $|\Gs(\ws)|$ is finite, for $n$ large enough, the union bound ensures that, with high probability, $\Gs(\ws) \subset \Gs(\wh)$.
\end{proof}

The previous theorem shows the best result possible for the situation where $\Gs(\ws)\neq \Gw(\ws)$, as, in the example of the cycle of length 3 of section~\ref{sec:tricyle}, the case of $\ws=(2,1,1)$. 
If $\Gs(\ws)=\Gw(\ws)$, then we have the obvious corollary:

\begin{corollary}
\label{cor:consistency}
 With assumptions (H1,H2), and assuming $\Gs(\ws)=\Gw(\ws)$, for $\lambda_n \rightarrow 0$
  and $\lambda_n n^{1/2} \rightarrow \infty$, conditions
  (\ref{eq:necmutinc}) and (\ref{eq:suffmutinc}) are respectively
  necessary and sufficient for the solution of (\ref{eq:opt1}) to
  estimate consistently the correct group-support $\Gw(\ws)$.
\end{corollary}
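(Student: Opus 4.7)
The plan is to derive the corollary as a direct combination of Theorem \ref{theo:part_cons} and Theorem \ref{theo:ambig_cons}, since under the extra hypothesis $\Gs(\ws)=\Gw(\ws)$ the sandwich inclusion of the latter theorem collapses to an equality. No new analytic work is needed; we only have to translate the sandwich conclusion into a consistency statement and read off the necessary condition from the first theorem.

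For the sufficient direction, I would invoke Theorem \ref{theo:ambig_cons} under condition (\ref{eq:suffmutinc}). That result yields, under (H1) and (H2) with the prescribed scaling of $\lambda_n$, the inclusion
\[
\Gs(\ws) \subset \Gs(\wh) \subset \Gw(\ws)
\]
with probability tending to one. Because the additional assumption is $\Gs(\ws)=\Gw(\ws)$, the leftmost and rightmost sets coincide, so the squeeze forces $\Gs(\wh)=\Gw(\ws)$ with probability tending to one, which is exactly group-support consistency.

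For the necessary direction, I would argue by contraposition using Theorem \ref{theo:part_cons}. If $\Gs(\wh)=\Gw(\ws)$ with probability tending to one, then in particular $\Gs(\wh)\subset\Gw(\ws)$ asymptotically with high probability. Theorem \ref{theo:part_cons} states that condition (\ref{eq:necmutinc}) is necessary for precisely this one-sided inclusion, so it is a fortiori necessary for the stronger equality required by group-support consistency.

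The only subtlety worth flagging is that ``consistently estimating the correct group-support'' must be interpreted as $\PPP(\Gs(\wh)=\Gw(\ws))\to 1$; under the identification $\Gs(\ws)=\Gw(\ws)$ this is the natural reading, and the two directions above then line up with the necessary and sufficient parts of Theorem \ref{theo:part_cons}. There is no genuine obstacle in the proof — it is essentially a bookkeeping step closing out the consistency discussion.
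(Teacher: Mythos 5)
Your argument is exactly the paper's intended one: the paper presents this as an ``obvious corollary'' obtained by combining Theorem~\ref{theo:part_cons} (necessity of (\ref{eq:necmutinc}) for the inclusion $\Gs(\wh)\subset\Gw(\ws)$) with the sandwich of Theorem~\ref{theo:ambig_cons}, which collapses to equality when $\Gs(\ws)=\Gw(\ws)$. Your bookkeeping, including the reading of consistency as $\PPP(\Gs(\wh)=\Gw(\ws))\to 1$, matches the paper, so there is nothing to add.
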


{\bf Remarks}: For the Lasso and the usual group Lasso with disjoint groups, the most favorable case w.r.t. to condition (C2) is the case where the empirical covariance of the design is the identity (the same analysis can be done in the random design case), \ie, the case where there is no correlations between groups.
In that case, we have $\XX_{\Jwc \Jw} \XX_{\Jw \Jw}^{-1}=0$ and the mutual incoherence condition is 0. However, in the case of overlap, for $g \in \G$ such that $g \cap \Jw \neq \emptyset$, then $\XX_{g \Jw} \XX_{\Jw \Jw}^{-1} \neq 0$ and we have $\nm{\XX_{g \Jw} \XX_{\Jw \Jw}^{-1} \alphab_{\Jw}}=\nm{\alphab_{g \cap \Jw}}$. First, this gives yet another motivation to consider the weak-group support, since those groups in the weak-group support are exactly the ones for which $\nm{\alphab_{g \cap \Jw}}=1$ (see Lemma \ref{Jbar}). Second this show that if $g_1 \in \Gs(\ws)$ and $g_2 \notin \Gw(\ws)$ have a large overlap then $\nm{\alphab_{g_1 \cap g_2}}$ can be fairly close to $1$ even for a design with identity covariance. This means that it might be very difficult in practice to identify $g_2$ correctly as being outside of the support unless large amounts of data are available.

\subsection{Related theoretical results}
\label{sec:related_theo}
Two papers proposed recently some theoretical results on the estimator
via regularization by $\Oo$ in the high-dimensional setting.
\citet{Percival2011Theoretical} shows two types of results. First, he
proposes a generalization of the restricted eigenvalue condition of
\citet{Bickel2009Simultaneous} and generalize their proof to obtain
fast-rate type of concentration results for the prediction error and
convergence in $\ell_2$-norm. The bounds obtained scales as $\sqrt{B}
\log(M)$, where $M$ is the total number of groups and $B$ is the
largest group size.  Then he considers an adaptive version of the
regularization (in the sense of the adaptive Lasso) and shows for the
resulting estimator a central limit theorem under high-dimensional
scaling, under the conditions that the support is exactly a union of
groups and that the decomposition of any point in a neighborhood of
the optimum is unique.  These results do not focus on support or
group-support recovery. Also, it was one of our concerns to relax the
assumption that the decomposition was unique or that the support was
exactly a union of groups.

\citet{Maurer2011Structured} give a bound on the Rademacher complexity of 
linear functions whose parameter vector lies in the unit ball of the norm $\Omn$, hence bounding the generalization error of such function. They consider as well
extensions of this norm where each of the latent variables in the latent group Lasso are penalized by the norm of their image by some operator.

Our paper and these two papers have thus considered complementary
aspects of estimation and recovery in statistical and compressed
sensing based on $\Omn$ settings which should all contribute to
understanding the high-dimensional learning setting.

\section{Choice of the weights}\label{sec:weights}
The choice of the weights $d_g$ associated to each group
has been discussed in the literature on the classical group Lasso, when groups do not overlap. The main motivation for the introduction of these weights is to take into account the discrepancies of size existing between different groups.
\cite{Yuan2006Model} used $d_g =
\sqrt{|g|}$, which yields solutions similar to the ANOVA test under a
certain design. \cite{Bach2004Multiple} in the context of
\emph{multiple kernel learning} used $d_g \propto \sqrt{\text{tr}
    K_g}$, where $\{K_g\}_{g\in\G}$ are positive definite
kernels, with $K_g =\X_g\X_g^\top$ in our context; for normalized features such as $\X\X^\top = I$,
this yields $d_g =\sqrt{|g|}$ as well. 

In the context of our latent group Lasso with overlapping groups, the choice of the weights 
is significantly more important than in the case of disjoint groups, and, arguably, than
in the case of other formulations considering overlapping groups: indeed, the notions
of group-support $\Gw(\w)$ and $\Gs(\w)$ and of support $\Jw(\w)$ and $\Js(\w)$ associated to a vector $\w$ through the norm $\Oo(\w)$ themselves change according to the choice of the weights.

In this section we propose two types of arguments to study the effect of and guide the choice of weights:
\begin{itemize}
\item On the one hand we consider
a vector $\w$ and ask, independently of a learning problem, which groups participate in its group support: there is no point in introducing a group in $\G$ if the weights are such that it can never be included in the group support. We show in \secref{sec:redundant} that, for all groups to be useful, weights should increase with the size of the groups, but not too quickly; in \secref{sec:dominating} we attempt to characterize when large groups are preferred over unions of smaller ones.

\item On the other hand, we consider in \secref{sec:fdr} a simple regression scenario, and discuss the impact of the weights on the probability to correctly identify relevant groups, and simultaneously control the rate of false positives.
\end{itemize}

\subsection{Redundant groups}
\label{sec:redundant}
Informally, we are concerned in this section with the fact that, if a group $g$ contains a group $h$ and ${d_g}/{d_h}$ is too small, $h$ will never enter the group support, and, conversely,
if $g$ is covered by a certain number of groups and $d_g$ is too large, then $g$ will never enter the group-support.

Formally, we say that a group $g\in\G$ is \textit{redundant} for a certain set of weights $(d_g)_{g \in \G}$ 
if it can be removed without changing the value of the norm $\Oo$ for any $\w$; this is equivalent to asking that the dual norm $\Oo^*$ is unchanged.

We first show that if there exists another group $g'\in\G$ such that $g \subset g'$, $g$ is redundant unless we require that $d_g < d_g'$: 

\begin{lemma}
\label{lem:weights_incr}
If $g,g'\in\G$ satisfy $g \subset g'$ and $d_g \geq d_{g'}$, then for any $\w$, $(g \in \Gw(\w)) \Rightarrow (g'\in \Gw(\w))$.
\end{lemma}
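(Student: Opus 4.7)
The plan is to unpack the definition of weak group-support and exploit the monotonicity of the Euclidean norm on nested index sets, combined with the explicit form of the dual norm given in Lemma \ref{lem:dualnorm}.

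First, I would recall that $g \in \Gw(\w)$ means that every $\alphab \in \A(\w)$ agrees on $g$ with a fixed vector $\alphab_g$ satisfying $\nm{\alphab_g} = d_g$. Fix an arbitrary $\alphab \in \A(\w)$. Since $\alphab$ lies in the dual unit ball, by Lemma \ref{lem:dualnorm} we have $\nm{\alphab_{g'}} \leq d_{g'}$. On the other hand, $g \subset g'$ gives $\nm{\alphab_{g'}}^2 = \nm{\alphab_g}^2 + \nm{\alphab_{g' \setminus g}}^2 \geq d_g^2 \geq d_{g'}^2$, where the last inequality uses the assumption $d_g \geq d_{g'}$. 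The two bounds force equality throughout, which yields simultaneously $\nm{\alphab_{g'}} = d_{g'}$, the identity $d_g = d_{g'}$, and $\alphab_{g' \setminus g} = \zv$.

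Next I would observe that this argument holds for every $\alphab \in \A(\w)$. Since the restriction of $\alphab$ to $g$ is the fixed $\alphab_g$ (by $g \in \Gw(\w)$), and its restriction to $g' \setminus g$ must vanish, the full restriction $\Pi_{g'}\alphab$ is uniquely determined: it equals $\alphab_g$ on $g$ and $\zv$ on $g' \setminus g$. Thus $\Pi_{g'}\A(\w)$ is a singleton whose unique element has Euclidean norm $d_{g'}$, which is precisely the criterion for $g' \in \Gw(\w)$.

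There is no real obstacle here: the argument is essentially algebraic, just the elementary observation that $\nm{\alphab_{g'}}^2$ splits as a sum over $g$ and $g' \setminus g$, combined with the dual-norm constraint. The only subtle point worth stressing in the write-up is that the two norm inequalities collapse to equalities \emph{simultaneously}, forcing $d_g = d_{g'}$ as a byproduct and the vanishing of $\alphab$ on $g' \setminus g$, which is what gives the singleton property needed for the weak group-support membership (not merely the saturation $\nm{\alphab_{g'}} = d_{g'}$).
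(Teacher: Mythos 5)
Your proof is correct and follows essentially the same route as the paper's one-line argument: since $g\subset g'$ and $d_g\geq d_{g'}$, the saturation $\nm{\alphab_g}=d_g$ together with dual feasibility $\nm{\alphab_{g'}}\leq d_{g'}$ forces $\nm{\alphab_{g'}}=d_{g'}$. Your write-up is in fact slightly more complete than the paper's, since you also verify explicitly that $\Pi_{g'}\A(\w)$ is a singleton (via $\alphab_{g'\setminus g}=\zv$ for every $\alphab\in\A(\w)$), a point the paper's proof leaves implicit when it writes $\alphab_{g'}(\w)$ as if unique.
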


\begin{proof} If $d_g \geq d_{g'}$, and if $g \in \Gw(\w)$ then $1=\frac{\nm{\alphab_g(\w)}}{d_g}\leq \frac{\nm{\alphab_{g'}(\w)}}{d_{g'}}$, which implies $g' \in \Gw(\w)$. 
\end{proof}

It would be very natural to try and require that the weights are
chosen so that, if $g=\supp{\w}$, its group-support is exactly
$g$. Unfortunately, this is in general not possible: we show a
negative result, which arises as a consequence of the previous lemma.
\begin{lemma}
\label{lem:impossible}
For some group sets $\G$, it is impossible to choose the weights $d_g$ independently of $\w$
so that $\Jw(\w)=\supp{\w}$ (or $\Js(\w)=\supp{\w}$) if the latter is a union of groups.
\end{lemma}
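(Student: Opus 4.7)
The plan is to prove this impossibility by exhibiting an explicit group set $\G$ and showing that no positive weight vector $(d_g)_{g\in\G}$ can make the property hold. Since the claim is existential in $\G$, I only need one well-chosen counterexample, but the argument has to be carefully designed so that the requirement $\Jw(\w)=\supp{\w}$ imposes mutually incompatible constraints on the weights.

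The starting point is Lemma~\ref{lem:weights_incr}: if $g \subsetneq g'$ are both in $\G$, then requiring $\Jw(\w)=\supp{\w}$ for some $\w$ with $\supp{\w}$ equal to a union of groups contained in $g$ forces $d_g < d_{g'}$, since otherwise $g \in \Gw(\w)$ would imply $g' \in \Gw(\w)$ and hence $\Jw(\w)$ would include coordinates of $g' \setminus g \not\subset \supp{\w}$. This pins down a partial order on the weights along any chain of inclusions in $\G$.

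Beyond this monotonicity, using the dual characterization from Lemma~\ref{lem:dualnorm} and the definition of $\Gw$ in Section~\ref{sec:group-support}, I would derive strict inequalities from supports whose groups overlap non-trivially with others not contained in $\supp{\w}$. Concretely, if $g' \not\subset \supp{\w}$ and $g' \cap \supp{\w} \neq \varnothing$, then $\Pi_{g'}\mathcal{A}(\w)$ must fail to be a singleton of norm $d_{g'}$; expanding the dual optimum via the KKT conditions of the variational formulation~\eqref{eq:variational1} expresses this as a strict inequality $d_{g'} > \nm{\alphab_{g'\cap \supp{\w}}(\w)}$, where the right-hand side is a quadratic combination of the weights of groups whose constraints are tight on $\supp{\w}$.

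The key construction step is then to pick $\G$ mixing nested and non-nested overlaps so that two such conditions conflict: one support forces some $d_g$ to be strictly less than $d_{g'}$ (via the nesting argument), while another support forces $d_g$ to exceed a particular combination of $d_{g'}$ and other weights (via the non-singleton dual condition). The hardest part is identifying a $\G$ whose geometry makes the accumulated system of inequalities genuinely infeasible, since many simple overlapping structures admit a symmetric (often equal-weight) solution. Once such a $\G$ is exhibited, the contradiction reduces to verifying that a small explicit system of inequalities has no positive solution, which is a direct algebraic check using the dual machinery of Section~\ref{sec:variationalChar}. The same argument transfers verbatim to $\Js$ via the inclusion $\Gs(\w) \subset \Gw(\w)$ and the corresponding characterization in Lemma~\ref{lem:alpha}.
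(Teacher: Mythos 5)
There is a genuine gap: your proposal is a strategy outline, not a proof. The lemma is an existence statement (``for some group sets $\G$ \dots''), so its entire substance is the explicit counterexample together with the explicit system of incompatible weight inequalities. You correctly identify the two mechanisms that the paper also uses --- the monotonicity forced by Lemma~\ref{lem:weights_incr} on nested groups, and dual-feasibility constraints $\nm{\alphab_{g'}}\leq d_{g'}$ for groups $g'$ that must stay out of the group-support --- but you then defer exactly the step that carries all the weight: ``identifying a $\G$ whose geometry makes the accumulated system of inequalities genuinely infeasible,'' which you yourself call the hardest part and leave unresolved. As written, nothing in the proposal rules out that every mixed nested/non-nested configuration admits a consistent choice of weights, so the impossibility claim is not established. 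A secondary imprecision: for $g'\notin\Gw(\w)$ you assert a strict inequality $d_{g'}>\nm{\alphab_{g'\cap\supp{\w}}(\w)}$, but the definition of $\Gw$ only excludes the case where $\Pi_{g'}\A(\w)$ is a singleton of norm exactly $d_{g'}$; the clean way to extract weight constraints (and the one the paper uses) is to keep the non-strict dual feasibility bound $\nm{\alphab_{g'}}\leq d_{g'}$ and pass to a limit in a family of supports.

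For comparison, the paper's proof takes $\G=\{A,B,C\}$ with $A=\{1,2,3\}$, $B=\{3,4\}$, $C=\{2,3,4\}$. Requiring $\Js(\w)=\supp{\w}$ for $\w=(0,0,w,w)$ forces, via Lemma~\ref{lem:weights_incr}, $d_B<d_C$ so that $B$ is not redundant. For $\w=(0,w,\epsilon,\epsilon)$ the group-support must be $\{C\}$, hence $\v^C=\w$ and $\alphab=d_C\,\w/\nm{\w}$ by Lemma~\ref{lem:alpha}; the feasibility constraint $\nm{\alphab_A}\leq d_A$ for all $\epsilon>0$ yields $d_A\geq d_C$ in the limit. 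Symmetrically, $\w=(\epsilon,\epsilon,w,0)$ forces $\Gs(\w)=\{A\}$, $\alphab=d_A\,\w/\nm{\w}$, and $\nm{\alphab_B}\leq d_B$ gives $d_A\leq d_B$. The three inequalities $d_B<d_C\leq d_A\leq d_B$ are contradictory, and the statement for $\Jw$ follows from $\Js(\w)\subset\Jw(\w)$. Your plan points in this direction, but without such an explicit $\G$, explicit test vectors $\w$, and the explicit infeasible system, the proof is incomplete.
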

\begin{proof}
Consider the groups $A=\{1,2,3\}$, $B=\{3,4\}, C=\{2,3,4\}$~:
\begin{itemize}
\item To have that $\Js(\w)=\supp{\w}$ for all $\w$
 Lemma \ref{lem:weights_incr} imposes that $d_B < d_C$ so that $B$ is
 not redundant; this is necessary to have $\Js(\w)=\supp{\w}=B$ for
 $\w=(0,0,w,w)$.
\item Then consider $\w=(0,w,\epsilon,\epsilon)$. $\Js(\w) =
  \supp{\w}$ requires that $\Gs(\w) = \{C\}$. But then $\v^C=\w$ so that
  $\alphab=d_C\, \w/\|\w\|$. In particular $\|\alphab_A\|^2=d_C^2 \,(w^2+\epsilon^2)/(w^2+2\epsilon^2)$ and $\|\alphab_B\|^2=d_C^2 \,2\epsilon^2/(w^2+2 \epsilon^2)$. For the inequality
 $\|\alphab_A\|\leq d_A$ to hold for all $\epsilon>0$, we need $d_A \geq d_C$.
 
\item Finally consider $\w=(\epsilon,\epsilon,w,0)$. Following the
  same line as for the previous case, $\Js(\w) =
  \supp{\w}$ requires that $\Gs(\w) = \{A\}$, which implies that $\v^A=\w$ so that
  $\alphab=d_A \, \w/\|\w\|$. In particular $\|\alpha_B\|^2=d_A^2 \,w^2/(w^2+2\epsilon^2)$ and $\|\alphab_C\|^2=d_A^2 \,(w^2+\epsilon^2)/(w^2+2\epsilon^2)$. For the inequalities,
  $\|\alphab_B\| \leq d_B$ and $\|\alphab_C\|\leq d_C$ to hold for all $\epsilon>0$, we need to have $d_A \leq d_B$.
\end{itemize}
These three inequalities are clearly incompatible and $\Js(\w) \subset \Jw(\w)$ which proves the result.
\end{proof}

We now characterize more technically redundancy. The intuition behind the next lemma is the following geometric interpretation of the dual norm: the definition of $\Oo^*$ implies that its unit ball is the intersection of cylinders of the form $\{\alphab \mid \|\alphab_g\| \leq d_g\}$. This means that a group $g$ is redundant if its associated cylinder contains the unit ball of the norm induced by the remaining groups. This can be formally stated as follows:

\begin{lemma}
\label{lem:char_redundant}
A group $g\in\G$ is not redundant if and only if there exists $\alphab\in\RR^p$ such that $\nm{\alpha_g}>d_g$ and $\forall h \in \G \backslash \{g\}, \, \nm{\alpha_h} \leq d_h$.
\end{lemma}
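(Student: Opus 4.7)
The plan is to prove the lemma by passing to the dual and noting that redundancy of a group is exactly redundancy of one constraint in the defining intersection of cylinders of the dual unit ball.

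First, I would observe that changing the collection $\G$ changes $\Oo$ if and only if it changes $\Oo^*$, since any norm is determined by its dual. So letting $\G' = \G \setminus \{g\}$ and $\Oo'$ denote the latent group Lasso norm built from $\G'$ with the same weights, group $g$ is redundant if and only if $\Oo^* = (\Oo')^*$ as functions on $\RR^p$.

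Next, using the closed form of Lemma \ref{lem:dualnorm}, I would write the closed (dual) unit balls explicitly as
\[
B^* = \{\alphab \in \RR^p : \|\alphab_h\| \leq d_h \text{ for all } h \in \G\}
\quad \text{and} \quad
B'^* = \{\alphab \in \RR^p : \|\alphab_h\| \leq d_h \text{ for all } h \in \G'\}.
\]
Clearly $B^* \subset B'^*$, and the two dual norms coincide if and only if the unit balls coincide, i.e.\ $B'^* \subset B^*$. Thus $g$ is redundant if and only if every $\alphab$ satisfying $\|\alphab_h\| \leq d_h$ for all $h \neq g$ automatically satisfies $\|\alphab_g\| \leq d_g$ as well.

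Finally, I would negate this statement: $g$ is not redundant if and only if there exists some $\alphab \in B'^* \setminus B^*$, i.e.\ an $\alphab$ with $\|\alphab_h\| \leq d_h$ for all $h \in \G \setminus \{g\}$ but $\|\alphab_g\| > d_g$, which is precisely the characterization stated in the lemma. The argument is essentially a one-line duality remark; the only mild subtlety is justifying that $\Oo = \Oo'$ is equivalent to $\Oo^* = (\Oo')^*$, which follows from the biduality of norms (already invoked in the proof of Lemma \ref{lem:variational1}). No obstacle is expected.
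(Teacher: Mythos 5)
Your argument is correct and is essentially the paper's own proof: both pass to the dual norm via biduality, describe the dual unit ball as the intersection of the cylinders $\{\alphab : \nm{\alphab_h}\leq d_h\}$ using Lemma~\ref{lem:dualnorm}, and characterize non-redundancy as the existence of a point in the smaller intersection (over $\G\setminus\{g\}$) violating the constraint for $g$. No gap; nothing further needed.
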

\begin{proof}
Define the unit balls:
$\mathcal{U} =\{\alphab \in \RR^p \mid \forall h \in \G, \, \nm{\alphab_h} \leq d_h\}$
and $\mathcal{U}_g =\{\alphab \in \RR^p \mid \forall h \in \G \backslash \{g\}, \, \nm{\alphab_h} \leq d_h\}$. We have that $g$ is redundant for $\Oo$ if and only if it is redundant for $\Oo^*$, and the latter is true if and only if $\mathcal{U}=\mathcal{U}_g$. Since $\mathcal{U} \subset \mathcal{U}_g$, $g$ is not redundant if and only if there exists $\alphab \in \mathcal{U}_g \backslash \mathcal{U}$.
\end{proof}

\begin{corollary}
Let $g\in\G$ and $\H \subset \G$ such that $g$ is covered by groups in $\H$, \ie, $g \subset \cup_{h \in \H} \, h$.
Then $g$ is redundant if
$\displaystyle d_g^2 > \sum_{h \in \H} d_h^2.$
\end{corollary}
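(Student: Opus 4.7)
The plan is to apply Lemma \ref{lem:char_redundant} contrapositively: it suffices to show that every $\alphab \in \RR^p$ satisfying $\nm{\alphab_h} \leq d_h$ for all $h \in \G \setminus \{g\}$ also satisfies $\nm{\alphab_g} \leq d_g$ (in fact, strictly). First I would note that since all weights are nonnegative, the hypothesis $d_g^2 > \sum_{h \in \H} d_h^2$ implicitly forces $g \notin \H$, so $\H \subset \G \setminus \{g\}$ and the constraints $\nm{\alphab_h} \leq d_h$ are available for every $h \in \H$.

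The core of the argument is a double-counting bound using the cover. Expand $\nm{\alphab_g}^2 = \sum_{i \in g} \alphab_i^2$. Since $g \subset \bigcup_{h \in \H} h$, every index $i \in g$ lies in at least one $h \in \H$, so summing $\sum_{i \in h \cap g} \alphab_i^2$ over $h \in \H$ counts each $\alphab_i^2$ (for $i \in g$) at least once:
\[
\nm{\alphab_g}^2 \;=\; \sum_{i \in g} \alphab_i^2 \;\leq\; \sum_{h \in \H} \sum_{i \in h \cap g} \alphab_i^2 \;\leq\; \sum_{h \in \H} \nm{\alphab_h}^2 \;\leq\; \sum_{h \in \H} d_h^2 \;<\; d_g^2.
\]
Hence $\nm{\alphab_g} < d_g$, and by Lemma \ref{lem:char_redundant} the group $g$ is redundant.

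There is no real obstacle in this proof: the only subtle point is that the double-counting step gives only an inequality (not equality), which is guaranteed precisely by the covering hypothesis $g \subset \bigcup_{h \in \H} h$; the strict inequality at the end comes for free from the assumption on the weights.
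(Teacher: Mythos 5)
Your proof is correct and rests on the same key step as the paper's: the covering inequality $\nm{\alphab_g}^2 \leq \sum_{h\in\H}\nm{\alphab_h}^2$ combined with the bound $\nm{\alphab_h}\leq d_h$ and the hypothesis $\sum_{h\in\H}d_h^2 < d_g^2$. The only difference is presentational: you conclude directly via \lemref{lem:char_redundant} that no dual witness of non-redundancy can exist, whereas the paper phrases the same inequality chain as showing that $g$ could never be part of a group-support; your route is, if anything, the more literal derivation of redundancy as stated.
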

\begin{proof}
The fact that $g$ is covered by groups in $\H$ implies that, for any $\alphab\in\RR^p, \nm{\alphab_g}^2 \leq \sum_{h \in \H} \nm{\alphab_h}^2$. If $g$ is part of the group-support, then necessarily $d_g^2=\nm{\alphab_g}^2 \leq \sum_{h \in \H} \nm{\alphab_h}^2 \leq \sum_{h \in \H} d_h^2.$ 
\end{proof}
In particular, if all singletons are part of $\G$ with $d_{\{i\}}=1, \: i \in [1,p]$, this imposes $d_g \leq \sqrt{|g|}$.\\

In the case where the weights depend only on the cardinality of the  $g$, i.e., $d_g=d_{k}$ for $|g|=k$, we consider the following condition: 

\begin{equation}
\label{eq:suff_cond}
\tag{C}
\forall k>1\,, \quad d_{k-1} < d_k <\sqrt{\frac{k}{k-1}} \: d_{k-1}\,.
\end{equation}

\begin{lemma}
Condition (\ref{eq:suff_cond}) is sufficient to  guarantee that no group is redundant.
\end{lemma}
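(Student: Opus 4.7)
My plan is to apply Lemma~\ref{lem:char_redundant}: for each $g \in \G$ of cardinality $k$, I will exhibit an $\alphab \in \RR^p$ with $\nm{\alphab_g} > d_g$ and $\nm{\alphab_h} \leq d_h$ for every $h \in \G\setminus\{g\}$. The natural candidate is the scaled indicator $\alphab = c \sum_{i\in g}\mathbf{e}_i$, which is supported on $g$ with constant value $c$ on $g$. For this choice, $\nm{\alphab_g} = c\sqrt{k}$ and $\nm{\alphab_h} = c\sqrt{|h\cap g|}$, so the required conditions reduce to $c > d_k/\sqrt{k}$ together with $c \leq d_{|h|}/\sqrt{|h\cap g|}$ for every $h \in \G\setminus\{g\}$ that meets $g$ (groups disjoint from $g$ impose no constraint).

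Existence of an admissible $c$ thus boils down to the single inequality
\[
\frac{d_k}{\sqrt{k}} \;<\; \frac{d_{|h|}}{\sqrt{|h\cap g|}} \qquad\text{for every } h\in\G\setminus\{g\} \text{ meeting } g.
\]
The key observation is that condition~(\ref{eq:suff_cond}) has two complementary consequences: the sequence $k \mapsto d_k$ is strictly increasing, while $k \mapsto d_k/\sqrt{k}$ is strictly decreasing (the second inequality in (\ref{eq:suff_cond}) is equivalent, after squaring, to $d_k^2/k < d_{k-1}^2/(k-1)$).

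Writing $j=|h|$ and $m=|h\cap g|$ with $1\le m\le\min(j,k)$, I split into three cases. If $j<k$, then $m\le j$ gives $d_j/\sqrt{m}\ge d_j/\sqrt{j} > d_k/\sqrt{k}$ by strict decrease of $n\mapsto d_n/\sqrt{n}$. If $j>k$, then $m\le k$ gives $d_j/\sqrt{m}\ge d_j/\sqrt{k} > d_k/\sqrt{k}$ by strict increase of $n\mapsto d_n$. If $j=k$, the hypothesis $h\neq g$ forces $m<k$, hence $d_k/\sqrt{m} > d_k/\sqrt{k}$ directly. The slightly delicate point is the third case, which relies on the combinatorial fact that two distinct subsets of $[1,p]$ of equal cardinality cannot coincide in all their elements.

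Since $\G$ is finite, the minimum of the upper bounds $d_{|h|}/\sqrt{|h\cap g|}$ over $h\in\G\setminus\{g\}$ (with $|h\cap g|\geq 1$) is attained and strictly exceeds $d_k/\sqrt{k}$, so any $c$ chosen in the resulting nonempty interval produces an $\alphab$ satisfying the hypotheses of Lemma~\ref{lem:char_redundant}. Therefore $g$ is not redundant, and since $g\in\G$ was arbitrary, no group is redundant.
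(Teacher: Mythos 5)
Your proof is correct and follows essentially the same route as the paper's: both take the test vector constant on $g$ (the paper uses $\frac{d_k}{\sqrt{k}}\mathbf{1}_g$ plus an $\varepsilon$-perturbation, you parametrize the constant $c$ directly), invoke Lemma~\ref{lem:char_redundant}, and run the same three-case comparison ($|h|<k$ via the strict decrease of $d_j/\sqrt{j}$, $|h|>k$ via the strict increase of $d_j$, and $|h|=k$, $h\neq g$ via $|h\cap g|<k$). No gaps; the explicit interval for $c$ is just a cleaner packaging of the paper's $\varepsilon$-argument.
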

\begin{proof}
Assume that $(d_i)_{1\leq i \leq m}$ satisfy condition (\ref{eq:suff_cond}), and let $g\in\G$ a group of cardinality $k$. Consider the vector $\alphab=\frac{d_k}{\sqrt{k}} \mathbf{1_g}$ with $\mathbf{1_g} \in \RR^p$ the vector with entry $i$ equal to $1$ for $i \in g$ and $0$ else. Since $|g|=k$ we have $\nm{\alphab_g}=d_k$.
Note that  (\ref{eq:suff_cond}) implies $\frac{d_k}{\sqrt{k}} < \frac{d_{k-1}}{\sqrt{k-1}}$, which more generally implies by induction $\frac{d_k}{\sqrt{k}} < \frac{d_j}{\sqrt{j}}$ for any $j<k$. Now, for any group $g' \in\G$ of cardinality $j<k$, we have $\nm{\alphab_{g'}} \leq \frac{d_k}{\sqrt{k}}\sqrt{j} < d_j$. Similarly, if $\abs{g'}=j>k$ then $\nm{\alphab_{g'}} \leq \nm{\alphab_g} = d_k < d_j$, and if $g'\neq g$ but $\abs{g'}=\abs{g}$, then $\nm{\alphab_{g'}} < \nm{\alphab_g} = d_k = d_{g'}$. Since $\nm{\alphab_g} = d_g$ and $\nm{\alphab_{g'}} < d_{g'}$ for $g'\neq g$, it is possible to choose $\epsilon>0$ sufficiently small such that the vector $\alphab' = \alphab+\varepsilon \mathbf{1_g}$ satisfies $\nm{\alphab_g'} > d_g$ and $\nm{\alphab_{g'}'} < d_{g'}$ for any $g'\neq g$. \lemref{lem:char_redundant} then shows that $g$ is not redundant.
\end{proof}

We would like insist that condition \eqref{eq:suff_cond} is sufficient to guarantee non-redundancy but might be unnecessary for many restricted families of groups, for example as soon as each group contains an element which belongs to no other group. 
However, without any condition on the set of groups, the previous condition is the weakest possible if the weights depend only on the group sizes, since it becomes necessary in the following special case:
\begin{lemma}
Assume that group $g$ with cardinality $|g|=k$ contains all $k$ groups of size $k-1$,
then (\ref{eq:suff_cond}) is necessary for $g$ to be non-redundant.
\end{lemma}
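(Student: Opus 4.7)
The approach is via Lemma~\ref{lem:char_redundant}: $g$ is non-redundant iff there exists $\alphab \in \RR^p$ with $\nm{\alphab_g} > d_k$ and $\nm{\alphab_h} \leq d_h$ for every other $h \in \G$. Enumerate the $k$ subgroups $h_1,\ldots,h_k \subset g$ of size $k-1$, all in $\G$ by hypothesis. Since coordinates of $\alphab$ outside $g$ do not affect $\nm{\alphab_g}$ or any $\nm{\alphab_{h_j}}$, and zeroing them can only relax the constraints coming from groups that are not subsets of $g$, I may assume without loss of generality that $\alphab$ is supported on $g$.

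The crux is a simple double-counting identity: each coordinate $i\in g$ belongs to exactly $k-1$ of the $(k-1)$-subsets of $g$ (all of them except $g\setminus\{i\}$), which yields
\begin{equation*}
\sum_{j=1}^{k}\nm{\alphab_{h_j}}^2 \;=\; (k-1)\,\nm{\alphab_g}^2.
\end{equation*}
Imposing $\nm{\alphab_{h_j}} \leq d_{k-1}$ for all $j$ then forces $\nm{\alphab_g}^2 \leq \tfrac{k}{k-1}\,d_{k-1}^2$. Hence the existence of a feasible dual witness with $\nm{\alphab_g}>d_k$ requires $d_k < \sqrt{k/(k-1)}\,d_{k-1}$, which is the upper inequality in \eqref{eq:suff_cond}. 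Optimality of this bound is visible from the fact that $\alphab=(d_{k-1}/\sqrt{k-1})\mathbf{1}_g$ saturates every $h_j$-constraint while achieving $\nm{\alphab_g}=\sqrt{k/(k-1)}\,d_{k-1}$, so the bound cannot be improved by restricting to $\alphab$ supported on $g$.

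For the lower inequality $d_{k-1}<d_k$, I would argue by contraposition, invoking Lemma~\ref{lem:weights_incr}. If $d_k \leq d_{k-1}$, then any $\alphab$ feasible for the constraint $\nm{\alphab_g}\leq d_g$ automatically satisfies $\nm{\alphab_{h_j}} \leq \nm{\alphab_g} \leq d_k \leq d_{k-1} = d_{h_j}$, so the $h_j$-constraints are dominated by the $g$-constraint; dually this means that any candidate witness attempting $\nm{\alphab_g}>d_k$ while respecting the other constraints can never exploit room left by the $h_j$ beyond what the sum identity already allows, contradicting the existence of a separating $\alphab$ in the regime where the upper bound is tight. The hard part of the write-up, I expect, is precisely this lower-bound half: the upper bound drops out from a one-line combinatorial identity, whereas making the lower bound fully rigorous requires carefully isolating how the nesting $h_j \subset g$ and the weight ordering conspire to turn the would-be witness into a vector already forced into the dual ball of $\G\setminus\{g\}$.
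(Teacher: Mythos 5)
Your treatment of the upper inequality is correct and is exactly the paper's argument: starting from \lemref{lem:char_redundant}, the double-counting identity $\sum_{i\in g}\nm{\alphab_{g\backslash\{i\}}}^2=(k-1)\nm{\alphab_g}^2$ together with $\nm{\alphab_{g\backslash\{i\}}}\leq d_{k-1}$ for all $i\in g$ and $\nm{\alphab_g}>d_k$ yields $(k-1)\,d_k^2<k\,d_{k-1}^2$, i.e. $d_k<\sqrt{k/(k-1)}\,d_{k-1}$. (The reduction to $\alphab$ supported on $g$ is harmless but unnecessary, since using only the constraints indexed by the $(k-1)$-subsets of $g$ already gives a valid necessary condition.)

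The second half of your proposal is a genuine gap. The contraposition argument for $d_{k-1}<d_k$ is not a proof --- the step ``can never exploit room left by the $h_j$ beyond what the sum identity already allows'' does not correspond to any checkable inequality --- and the claim it aims at is in fact false: non-redundancy of the \emph{large} group $g$ does not force $d_{k-1}<d_k$. Already when $\G$ consists of $g$ and its $k$ subsets of size $k-1$ (take $p=k$), if $d_k\leq d_{k-1}$ the vector $\alphab=\frac{c}{\sqrt{k}}\mathbf{1}_g$ with $d_k<c\leq d_{k-1}\sqrt{k/(k-1)}$ satisfies $\nm{\alphab_g}=c>d_k$ and $\nm{\alphab_{h_j}}=c\sqrt{(k-1)/k}\leq d_{k-1}$ for every $(k-1)$-subset $h_j$, so $g$ still has a witness in the sense of \lemref{lem:char_redundant} and is non-redundant. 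What $d_k\leq d_{k-1}$ actually destroys is the non-redundancy of the \emph{sub}groups $h_j$, which is the content of \lemref{lem:weights_incr} and of the discussion preceding it, not of the present lemma. Consistently with this, the paper's proof of this lemma establishes only the upper inequality $(k-1)d_k^2<k\,d_{k-1}^2$; the lower inequality in \eqref{eq:suff_cond} is accounted for separately by the nestedness argument, and trying to derive it from the non-redundancy of $g$ cannot be repaired.
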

\begin{proof}
If $g\in\G$ is not redundant, by \lemref{lem:char_redundant} we can find $\alphab\in\RR^p$ such that $\nm{\alphab_g} > d_g$ and $\nm{\alphab_h}\leq d_h$ for $h\in\G \backslash \{g\}$. In particular, for all $i \in g$, $\nm{\alphab_{g\backslash \{i\}}}^2 \leq d_{k-1}^2$ so that
$(k-1) d_k^2 < (k-1)\nm{\alphab_g}^2=\sum_{i \in g}\nm{\alphab_{g\backslash \{i\}}}^2 \leq k \, d_{k-1}^2$ which shows the result.
\end{proof}

Condition \eqref{eq:suff_cond} allows scalings of the weights which go
from quasi uniform weights, in which case the larger groups dominate
the smaller groups in the sense that they are preferably selected, to
weights that scale like $\sqrt{k}$, in which case the smaller group
dominate (and in particular if the singletons are included the norm
approaches the $\ell_1$-norm). Condition (\ref{eq:suff_cond}) suggests
to consider weights of the form $d_k=k^\gamma, \:\: \gamma
\in (0,\frac{1}{2})$. 
We illustrate on \figref{fig:gamma_balls} the trade-offs obtained with the groups $\G=\{\{1\},\{2\},\{3\},\{1,2,3\}\}$ and different $\gamma$. The first ball for $\gamma = 0$ is the
ball we would have without considering the singletons since only the
largest group is active. At the other extreme for $\gamma=\frac{1}{2}$
the ball is the one we would have without the $\{1,2,3\}$ group since
only the singletons are active. In intermediate regimes, all the
groups are active in some region. More specifically, the second ball
for $\gamma=\frac{1}{4}$ corresponds to a limit case that we present
in~\secref{sec:fdr}, while the third one for
$\gamma=\frac{\log(2)}{2\log(3)}$ illustrate another problem that we
now introduce~: the possibility that a group \emph{dominates} other
groups. Intuitively for $\gamma \geq \frac{\log(2)}{2\log(3)}$,
\ie, if the sphere gets any smaller than on the third ball, it
becomes impossible to select a support of exactly two covariates even
though (i) such a support would be a union of groups and (ii) no group
is redundant. We detail this notion in the next section.
\begin{figure}
  \begin{center}
      \includegraphics[width=.4\linewidth]{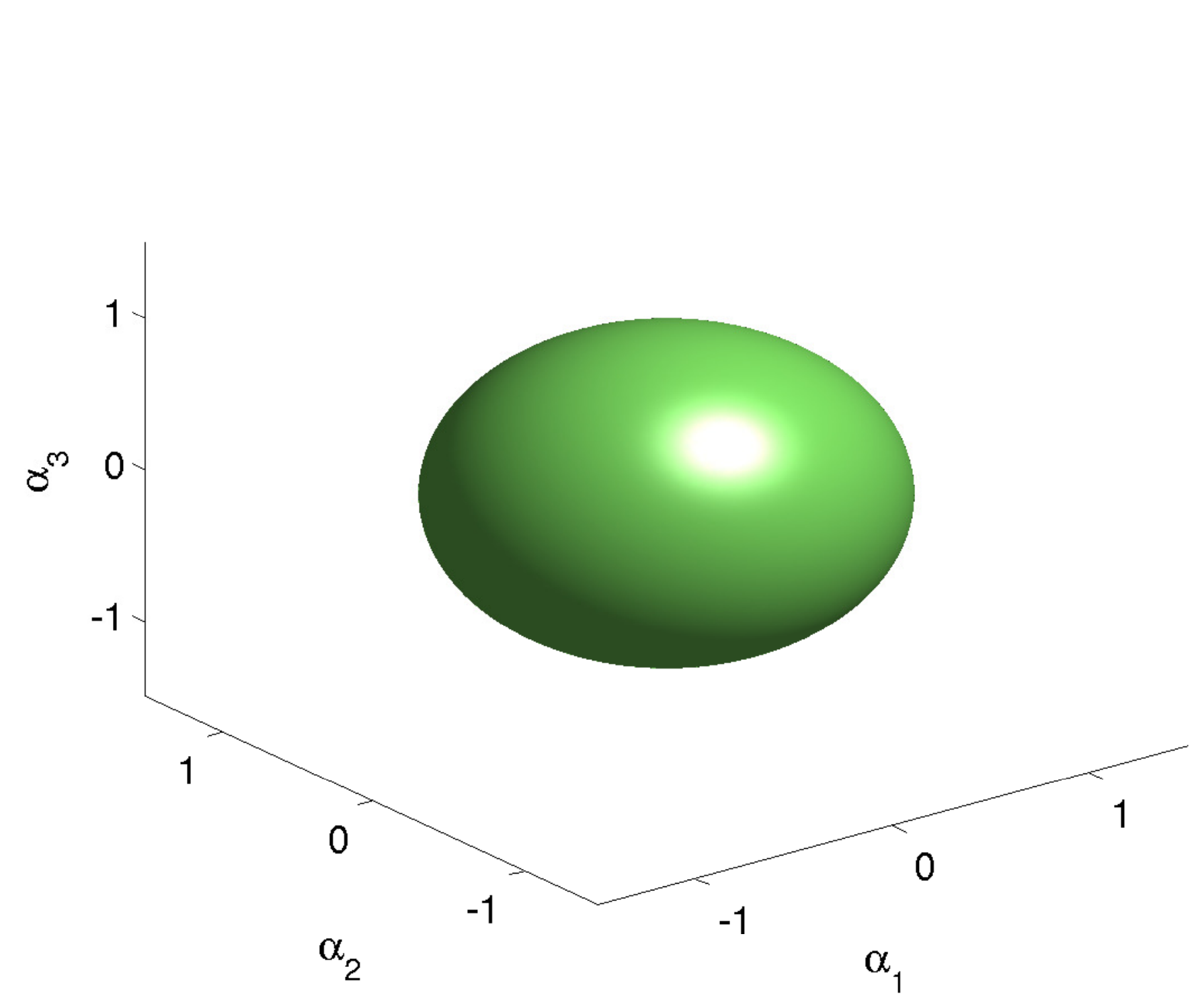}
      \includegraphics[width=.4\linewidth]{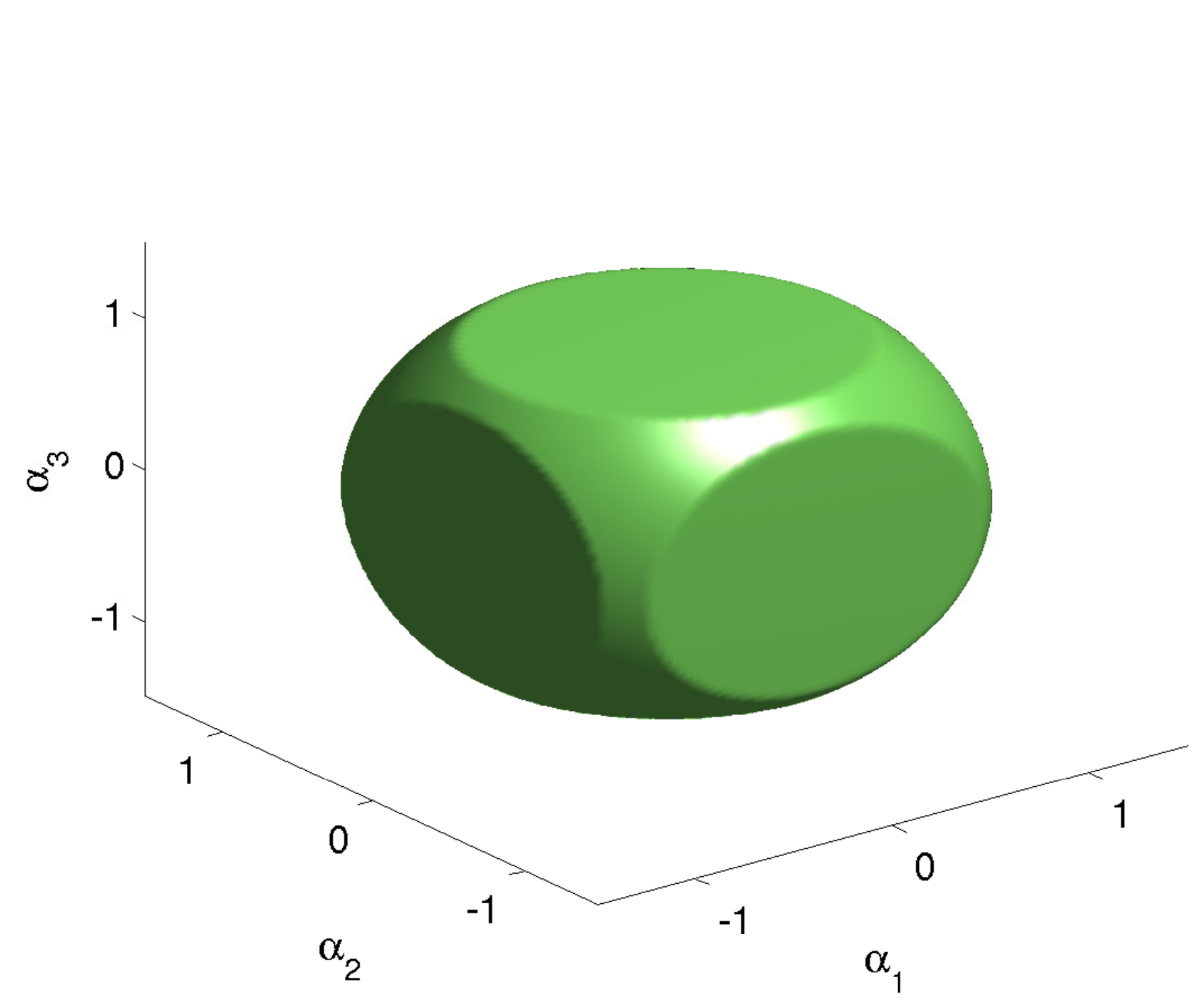}
      \includegraphics[width=.4\linewidth]{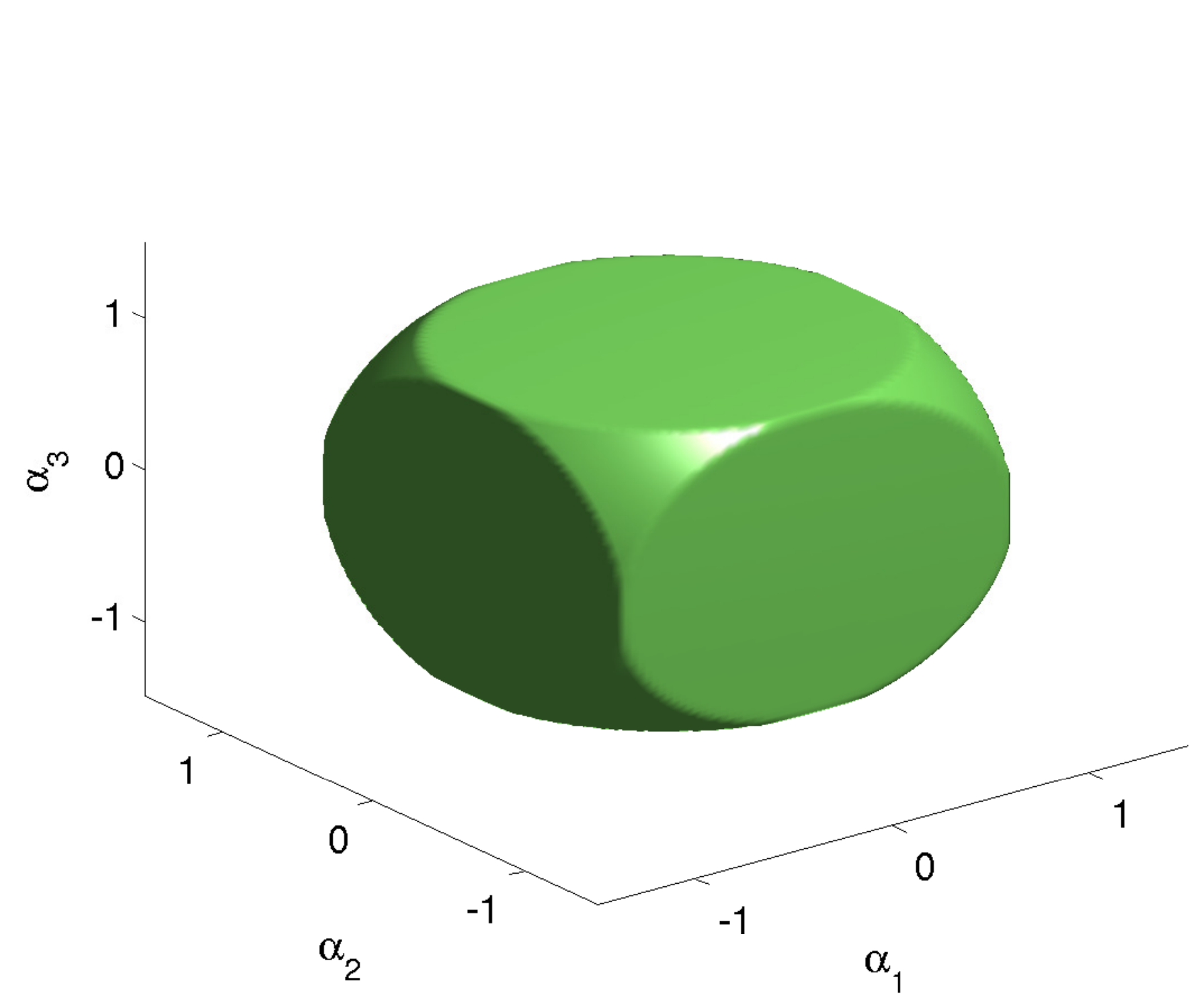}
      \includegraphics[width=.4\linewidth]{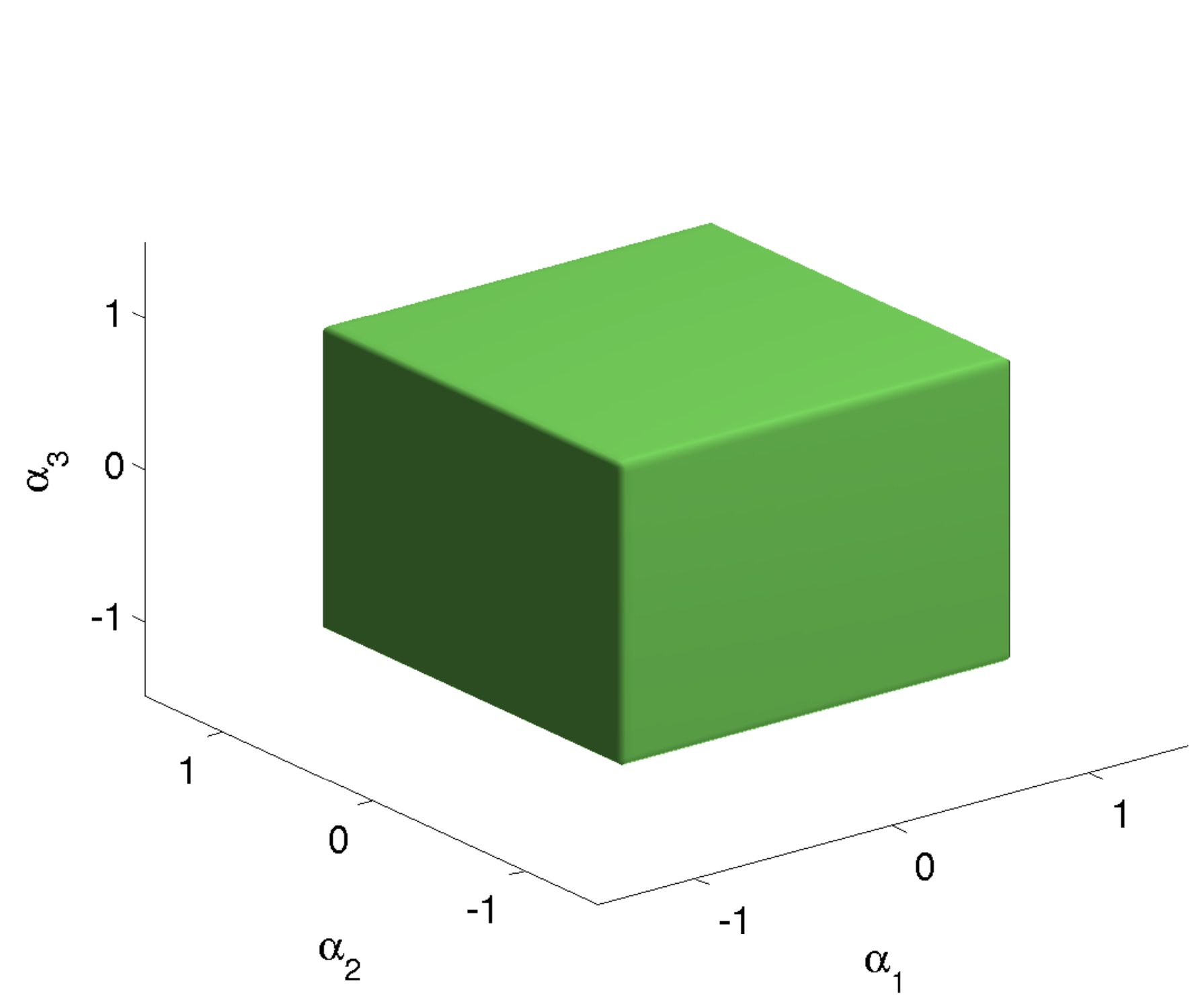}
  \end{center}
  \caption{Balls for $\Oo^*$ for the groups
    $\G=\{\{1\},\{2\},\{3\},\{1,2,3\}\}$ with $\gamma =
    0,\frac{1}{4},\frac{\log(2)}{2\log(3)},\frac{1}{2}$ from top to
    bottom, left to right.}
\label{fig:gamma_balls}
\end{figure}

\subsection{Dominating group}
\label{sec:dominating}
Let us first formalize the notion of group domination.
\begin{definition}\label{def:domination}
Let $g\in\G$  and $\H\subset\G$ a set of subgroups satisfying $\forall h \in\H, h\subset g$. We say that $g$ \textit{dominates} $\H$ if $\H$ could be the weak group-support for some $\w$ if $g$ was removed from $\G$, but is the weak group support of no $\w$ in the presence of $g$.
\end{definition}

We can characterize the presence of domination in terms of weights as follows:
\begin{lemma}
\label{lem:domin}
A group $g$ dominates a \textit{set of subgroups} $\H$ if and only if, on the one hand, $\H$ is a possible group-support when $g$ is removed from $\G$, and, on the other,  
$$
d_g < P(g,\H) \eqdef \min\cbr{ \nm{\alphab_g} \,|\, \alphab \in \RR^p \quad and \quad  \nm{\alphab_h} =d_h, \: \forall h \in \H}.$$ 
\end{lemma}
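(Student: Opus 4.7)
The plan is to prove the two implications separately.

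\emph{Sufficiency ($\Leftarrow$).} Assume condition~(1) holds and $d_g<P(g,\H)$. Condition~(1) is precisely the first clause of the definition of domination, so it remains to prove the second clause by contradiction. Suppose some $\w\in\RR^p$ satisfies $\Gw(\w)=\H$ in the presence of $g$, and pick any $\alphab\in\A(\w)$. By definition of the weak group-support, $\nm{\alphab_h}=d_h$ for every $h\in\H$, and by dual feasibility $\Oo^*(\alphab)\le 1$ we have $\nm{\alphab_g}\le d_g$. Hence $\alphab$ is a feasible point of the minimization defining $P(g,\H)$, which yields $P(g,\H)\le \nm{\alphab_g}\le d_g$, contradicting $d_g<P(g,\H)$.

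\emph{Necessity ($\Rightarrow$).} Assume $g$ dominates $\H$; condition~(1) is then part of the definition. To show $d_g<P(g,\H)$, I argue the contrapositive: assuming $d_g\ge P(g,\H)$, I construct $\w\in\RR^p$ with $\Gw(\w)=\H$ in $\G$, contradicting the second clause of domination. Let $\alphab^*$ attain the minimum defining $P(g,\H)$; attainment follows by compactness, since the constraints $\nm{\alphab_h}=d_h$ for $h\in\H$ bound the entries of $\alphab$ on $\bigcup_{h\in\H}h\subseteq g$, and zeroing entries outside $g$ preserves those norms while not increasing $\nm{\alphab_g}$. Thus one may take $\alphab^*$ supported in $g$ with $\nm{\alphab^*_h}=d_h$ for $h\in\H$ and $\nm{\alphab^*_g}=P(g,\H)\le d_g$. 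Use the $\w_0$ supplied by~(1) and its dual $\alphab_0$ (feasible for every constraint in $\G\setminus\{g\}$, with $\nm{\alphab_{0,h}}=d_h$ for $h\in\H$ and $\nm{\alphab_{0,g'}}\le d_{g'}$ for $g'\in\G\setminus\{g\}$, strict for $g'\notin\H$) to combine with $\alphab^*$ into a single $\alphab\in\RR^p$ satisfying simultaneously $\nm{\alphab_h}=d_h$ for $h\in\H$, $\nm{\alphab_g}\le d_g$, and $\nm{\alphab_{g'}}<d_{g'}$ for $g'\in\G\setminus(\H\cup\{g\})$. Once such an $\alphab$ is in hand, set $\v^h=t_h\alphab_h$ for small $t_h>0$ and $\w=\sum_{h\in\H}\v^h$; Lemma~\ref{lem:alpha} certifies that $(\v^h)_{h\in\H}$ is an optimal decomposition of $\w$ with dual variable $\alphab$, so $\H\subseteq\Gw(\w)$ while no $g'\notin\H$ can enter $\Gw(\w)$ (strict inequality of the relevant dual constraints), giving the desired $\Gw(\w)=\H$ and the contradiction.

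The crux of the argument is the construction of $\alphab$ in the necessity direction. A naive convex combination of $\alphab_0$ and $\alphab^*$ does \emph{not} preserve the equalities $\nm{\alphab_h}=d_h$, since convex combinations of distinct points on a sphere lie strictly inside it; the combination must therefore be more delicate---for instance by interpolating on each $h$-sphere separately, or by an implicit-function/continuity argument that perturbs $\w_0$ into a nearby $\w$ whose dual variable sits on the correct spheres once the $g$-constraint is added, which is made feasible precisely by the slack $d_g-P(g,\H)\ge 0$. Hypothesis~(1) is essential here to control the dual norms on groups outside $\H\cup\{g\}$, since $\alphab^*$ alone does not a priori satisfy those constraints.
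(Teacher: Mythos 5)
Your ($\Leftarrow$) direction is correct and is essentially the paper's argument read in that direction: if some $\w$ had weak group-support $\H$ with $g$ present, then every $\alphab\in\A(\w)$ satisfies $\nm{\alphab_h}=d_h$ for all $h\in\H$ (by the uniqueness-plus-norm condition defining $\Gw(\w)$) and $\nm{\alphab_g}\leq d_g$ (dual feasibility), hence $P(g,\H)\leq d_g$, contradicting $d_g<P(g,\H)$.

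The ($\Rightarrow$) direction, however, is not proved: your argument stops exactly at its crux. Starting from $d_g\geq P(g,\H)$ and condition (1), you must exhibit a $\w$ whose weak group-support in the full $\G$ is exactly $\H$, and your plan hinges on a dual certificate $\alphab$ lying on every sphere $\nm{\alphab_h}=d_h$, inside the $g$-ball, and strictly inside the balls of the remaining groups; but the construction of this $\alphab$ is precisely what you leave open — you note yourself that convex combinations of $\alphab_0$ and $\alphab^*$ leave the spheres, and the alternatives you mention (per-sphere interpolation, implicit-function/continuity) are only named, not carried out, and it is not clear they work, since moving $\alphab$ on one sphere perturbs the norms of all overlapping groups at once. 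Two further steps in your sketch are also unjustified: the dual $\alphab_0$ supplied by (1) need not satisfy $\nm{\alphab_{0,g'}}<d_{g'}$ strictly for $g'\notin\H$ (a group can be outside the weak group-support because the projection $\Pi_{g'}\A(\w_0)$ is non-unique, not because its norm is below $d_{g'}$), and you only arrange $\nm{\alphab_g}\leq d_g$, so in the boundary case $d_g=P(g,\H)$ — which your contrapositive must cover — nothing in your list of properties prevents $g$ itself from entering $\Gw(\w)$, and the desired conclusion $\Gw(\w)=\H$ would not follow even granting the certificate. For comparison, the paper's (terse) proof works entirely at the level of dual variables: it asserts that the set $\{\alphab:\nm{\alphab_h}=d_h,\ \forall h\in\H\}$ is exactly the set of possible dual variables witnessing group-support $\H$, so that domination amounts to the ball $\{\nm{\alphab_g}\leq d_g\}$ missing this set, i.e., $d_g<P(g,\H)$; the primal realization of a feasible dual vector (via $\v^h=t_h\alphab_h$, $t_h>0$, as in the proof of Lemma~\ref{wgs_is_ngs}, together with Lemma~\ref{lem:alpha}) is the device that underlies this characterization and is exactly the ingredient your attempt identifies but does not supply.
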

\begin{proof} 
  First note that the set of constraints $\nm{\alphab_h} =d_h, \:
  \forall h \in \H$ is feasible since $\H$ is assumed to be a possible
  group support without $g$. Then note that the condition is
  equivalent to saying that the ball $\{ \alphab_g \in \RR^{|g|} \mid
  \nm{\alphab_g} \leq d_g \}$ does not intersect the previous feasible
  set, which characterizes the set of possible dual variables for
  which the weak group-support is $\H$.
\end{proof}

As discussed previously, one natural property to require would be that
if $\w$ is exactly supported by a group $g$, its group-support should
be $g$. As argued in Lemma~\ref{lem:impossible}, we can not have this
property in general.
We can however show that if the support of $\w$ is a single group in
$\G$, then this group is always in the group support of $\w$.

The following result shows that, under some conditions on the weights, we can ensure that a group $g$ does not dominate any set of subgroups that do not cover it entirely.
\begin{lemma}\label{lem:nodomination}
Let a group $g\in\G$ and a set of subgroups $\H\subset\G$ such that $\forall h \in\H, h \subset g$ and $\cup_{h\in\h}h \subsetneq g$. Assuming that $\H$ could be in the group support of some $\w$ if $g$ was removed from $\G$, then $g$ does not dominate $\H$ if, for some constant $d_1>0$, weights satisfy $d_h \leq \sqrt{|h|} d_1$ for all $h \in \H$ and $d_g \geq \sqrt{|g|-1} \, d_1$.
\end{lemma}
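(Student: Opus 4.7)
The plan is to apply \lemref{lem:domin}, which reduces the statement to showing $P(g,\H)\leq d_g$, i.e., exhibiting some $\alphab\in\RR^p$ with $\nm{\alphab_h}=d_h$ for every $h\in\H$ and $\nm{\alphab_g}\leq d_g$. First I would restrict attention to $\alphab$ supported in $S:=\cup_{h\in\H}h$: zeroing out coordinates outside $S$ preserves every $\nm{\alphab_h}$ (since each $h\subset S$) and only decreases $\nm{\alphab_g}$, so the optimal witness can be taken supported in $S$. The crucial slack is $|S|\leq |g|-1$, inherited from $S\subsetneq g$.

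Substituting $u_i := \alphab_i^2$, the search becomes the linear program
\begin{equation*}
(\text{P}) \qquad \min_{u\geq 0}\sum_{i\in S}u_i \qquad \st \qquad \sum_{i\in h}u_i=d_h^2,\; h\in\H.
\end{equation*}
Feasibility of (P) follows from the hypothesis that $\H$ can be realized as a weak group support when $g$ is removed, which supplies an $\alphab^0$ satisfying the $\H$-equalities and hence a feasible $u^0=((\alphab^0_i)^2)_{i\in S}$. By LP duality the minimum equals
\begin{equation*}
(\text{D}) \qquad \max_{y\in\RR^\H}\sum_{h\in\H}d_h^2\, y_h \qquad \st \qquad \sum_{h\ni i}y_h\leq 1,\; i\in S.
\end{equation*}

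I would then argue that an optimum of (D) can be taken with $y\geq 0$: since the coefficient vector $(d_h^2)$ is nonnegative and the constraint matrix is $0/1$, any negative component $y^\star_{h_0}$ can be raised to zero (possibly with a matching decrease of a nonnegative coordinate intersecting $h_0$ at tight constraints) without decreasing the objective. Granting this, the weight bound $d_h^2\leq |h|d_1^2$ together with a swap of summations gives
\begin{equation*}
\sum_{h\in\H} d_h^2\, y_h \;\leq\; d_1^2\sum_{h\in\H}|h|\,y_h \;=\; d_1^2\sum_{i\in S}\sum_{h\ni i}y_h \;\leq\; d_1^2|S| \;\leq\; (|g|-1)d_1^2 \;\leq\; d_g^2,
\end{equation*}
the last two inequalities being exactly $|S|\leq |g|-1$ and $d_g\geq\sqrt{|g|-1}\,d_1$. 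Hence $\min(\text{P})\leq d_g^2$, and picking a minimizer $u^\star$ and setting $\alphab_i=\sqrt{u_i^\star}$ for $i\in S$ (and $0$ elsewhere) yields the desired $\alphab$, so $P(g,\H)\leq d_g$ and $g$ does not dominate $\H$.

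The delicate step is the reduction to a nonnegative dual; everything else is bookkeeping, with the precise matching between the cardinality slack $|S|\leq|g|-1$ and the weight assumption $d_g\geq\sqrt{|g|-1}\,d_1$ being exactly what makes the chain of inequalities close.
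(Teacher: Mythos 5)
Your proposal follows exactly the paper's route: square the constraints so that $P(g,\H)$ becomes a linear program, pass to the LP dual, and close the bound with the chain $\sum_h d_h^2 y_h \leq d_1^2\sum_h |h|\, y_h \leq d_1^2 |S| \leq (|g|-1)d_1^2 \leq d_g^2$. The step you single out as delicate --- that an optimum of (D) may be taken with $y\geq 0$ --- is, however, a genuine gap, and your exchange argument does not repair it: raising a negative $y_{h_0}$ to zero gains $|y_{h_0}|\,d_{h_0}^2$, but restoring feasibility forces decreases of overlapping coordinates whose coefficients $d_h^2$ may be much larger, so the net change can be strictly negative. Worse, the claim itself is false. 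Take $S=\{1,2,3\}$ and $\H=\{\{1,2\},\{2,3\},\{2\}\}$ with $d_{\{1,2\}}^2=d_{\{2,3\}}^2=2d_1^2$ and $d_{\{2\}}^2=\epsilon d_1^2$, $0<\epsilon<1$ (all within $d_h^2\leq|h|d_1^2$, and $\H$ is realizable as a weak group-support, e.g.\ for $\w$ proportional to $(\sqrt{2-\epsilon},3\sqrt{\epsilon},\sqrt{2-\epsilon},0)$). The primal (P) then has a unique feasible point $u=\big((2-\epsilon)d_1^2,\epsilon d_1^2,(2-\epsilon)d_1^2\big)$, so its value is $(4-\epsilon)d_1^2$, whereas every nonnegative dual-feasible $y$ yields at most $3d_1^2$; the dual optimum is $y_{\{1,2\}}=y_{\{2,3\}}=1$, $y_{\{2\}}=-1$ and genuinely requires a negative coordinate. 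Consequently your chain cannot be closed: completing the example with $g=\{1,2,3,4\}$ (and a singleton $\{4\}$ added to $\G$ so all covariates stay covered) and any $d_g$ with $\sqrt{3}\,d_1\leq d_g<\sqrt{4-\epsilon}\,d_1$, all hypotheses of \lemref{lem:nodomination} hold, yet $P(g,\H)=\sqrt{4-\epsilon}\,d_1>d_g$, so by \lemref{lem:domin} the group $g$ \emph{does} dominate $\H$.

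You should know this is not a defect of your write-up alone: the paper's own proof applies the very same inequality $\sum_h u_h d_h^2\leq d_1^2\sum_h u_h|h|$ to sign-unconstrained dual variables, i.e.\ it silently assumes the nonnegativity you tried to establish, and the example above shows that no local fix is available --- the statement needs a stronger hypothesis (it does hold, for instance, when the groups of $\H$ are disjoint, since the dual constraints then reduce to $y_h\leq1$ and an optimum with $y\geq0$ exists). So: same approach as the paper, but the ``delicate step'' you identified is precisely where both arguments break, and your proposed patch is not valid.
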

\begin{proof}
By \lemref{lem:domin}, $g$ does not dominate $\H$ if and only if $d_g \geq P(g,\H)$. To prove this, let us rewrite $P(g,\H)$ as the solution of the following optimization problem:
$$\min_{\x \in \RR^p_+} \x^\top \mathbf{1}_g \quad \st \quad \forall h \in \H, \:\x^\top \mathbf{1}_h=d^2_h\,.$$
By strong duality of linear programs $P(g,\H)$ is also the solution of the dual problem:
$$
\max_{\mathbf{u} \in \RR^{|\H|}} \sum_{h \in \H} u_h d_h^2 \quad \st \quad \forall i \in [1,p],\: \sum_{h \in \H} u_h 1_{\{i \in h\}} \leq 1_{\cbr{i\in g}}\,.
$$
But if $\bar{h}\eqdef\cup_{h \in \H} \, h$, under the conditions on the weights in \lemref{lem:nodomination}, we can upper bound the optimal value as follows:
$$ \sum_{h \in \H} u_h d_h^2 \leq d_1^2 \sum_{h \in \H} u_h |h|= d_1^2 \sum_{i \in g} \sum_{h \in \H} u_h 1_{\{i \in h\}} \leq d_1^2 |g \cap \bar{h}| \leq d_1^2 \br{|g|-1}\,,
$$
where the second inequality results from the constraints of the dual program and the fact that for $i \in g \backslash \bar{h}$, the corresponding terms in the sum are equal to $0$.
This shows that if $d_g^2\geq(|g|-1) \, d_1^2$, then $d_g \geq P(g,\H)$. 
\end{proof}
Note that \lemref{lem:domin} is tight in the following case:
\begin{lemma}
\label{lem:nec_dom_sing}
For any group $g \in \G$, if $\H$ is a set of $|g|-1$ singletons of $g$, each with weight $d_1$, that could be in a group support if $g$ was removed, then $g$ dominates $\H$ if and only if $d_{g} < d_1 \sqrt{|g|-1}$.
\end{lemma}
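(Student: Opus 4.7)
The plan is to apply Lemma~\ref{lem:domin} directly and compute the quantity $P(g,\H)$ in closed form for the specific structure assumed here. By Lemma~\ref{lem:domin}, domination of $\H$ by $g$ is equivalent to $d_g < P(g,\H)$, so the entire content of the lemma is the identity $P(g,\H) = d_1\sqrt{|g|-1}$.

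To establish this identity, let $\H = \{\{i_1\},\ldots,\{i_{|g|-1}\}\}$ with $i_1,\ldots,i_{|g|-1}$ distinct indices in $g$, and let $i^*$ denote the unique element of $g$ not appearing as a singleton in $\H$. The constraints $\nm{\alphab_h} = d_h$ for $h \in \H$ become $|\alpha_{i_j}| = d_1$ for $j = 1,\ldots,|g|-1$, which completely determine the magnitudes of $\alpha_i$ for $i \in g \setminus \{i^*\}$ and leave $\alpha_{i^*}$ free. Then
$$\nm{\alphab_g}^2 \;=\; \alpha_{i^*}^2 + \sum_{j=1}^{|g|-1} \alpha_{i_j}^2 \;=\; \alpha_{i^*}^2 + (|g|-1)\,d_1^2 \;\geq\; (|g|-1)\,d_1^2,$$
with equality attained by the feasible choice $\alpha_{i^*} = 0$ (setting the remaining coordinates outside $g$ however one pleases, or to zero). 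Hence $P(g,\H) = d_1\sqrt{|g|-1}$, and combining with Lemma~\ref{lem:domin} yields the equivalence.

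There is essentially no obstacle here: the singleton structure of $\H$ reduces the optimization defining $P(g,\H)$ to a componentwise one, and a single free coordinate suffices to saturate the lower bound. The only thing to verify en route is that the feasible set in Lemma~\ref{lem:domin} is nonempty, which is guaranteed by the hypothesis that $\H$ could be the group-support of some $\w$ when $g$ is removed from $\G$.
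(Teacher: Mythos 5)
Your proposal is correct and follows essentially the same route as the paper: the paper also proves this lemma as a direct consequence of Lemma~\ref{lem:domin}, noting that $P(g,\H)$ equals $d_1\sqrt{|g|-1}$ (a computation the paper calls trivial and you spell out explicitly, including the attainment via $\alpha_{i^*}=0$).
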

\begin{proof}
This is a direct consequence of \lemref{lem:domin}, where the value of $P(g,\H)$ is trivially equal to $d_1\sqrt{|g|-1}$.
\end{proof}

What the two previous lemmata indicate is that, if there are large
gaps in size between a group of size $k$ and many much smaller
subgroups contained in it, it is necessary to choose a value for the
weight which is possibly unreasonably large, to allow all combinations
of subgroups to be selected (even \textit{non-covering}
ones). \lemref{lem:nec_dom_sing} is illustrated on
Figure~\ref{fig:gamma_balls}, with the the group
$\G=\{\{1\},\{2\},\{3\},\{1,2,3\}\}$. Giving singletons the weight
$d_1=1$, the critical weight for $g=\cbr{1,2,3}$ to dominate or not
pairs of singletons is $d_g = \sqrt{|g|-1} = \sqrt{2}$. We represent
it equivalently as $d_g=|g|^\gamma$ with $\gamma=\frac{\log(2)}{2
  \log(3)}$ on \figref{fig:gamma_balls}. This corresponds to the
critical value, below which it is not possible to select two singletons
only .  The trade-off we are facing here is not surprising when the
weights are thought to correspond to \textit{code lengths}. Indeed, in
light of the interpretation of the norm $\Omega$ as a relaxation of a
\textit{block coding} penalization, it is clear that allowing groups
with quite large weights (i.e., code lengths) increases the
expressiveness of the code at the expense of compressibility and
reduces the strength of the prior on support, since large weight
allows for a greater diversity of supports.  Put more simply, there is
a trade-off between how coarsely the supports are encoded and how
informative the prior on the supports is.  The trade-off can also be
interpreted as a bias-variance trade-off, where biasing the estimate
of the support with a coarser set of patterns reduces the variance in
its estimation. 

It should be noted that, as an important consequence of domination, the
set of possible sparsity patterns (although consisting of unions of sets of $\G$) is in general \emph{not} stable by union. 

\subsection{Importance of weights for support consistency, FDR and FWER control}
\label{sec:fdr}

In this section we consider the following regression setting:
\begin{equation}
\label{eq:proxop}
\min_{\w\in\RR^p} \frac{1}{2}\nm{\w-\w^*+\epsilon}^2 + \lambda \Oo(\w)\,,
\end{equation}
where the design matrix is taken to be the identity and the noise to be Gaussian, bearing in mind that the analysis we propose here could be extended easily to the case of a design satisfying properties such as RIP with noise that could be taken more generally subgaussian. The mapping to the solution of this optimization problem is often called  the soft-thresholding operator, shrinkage operator or proximal operator associated with the norm $\Oo$. We denote this mapping $\w \mapsto \text{ST}(\w)$. In terms of support recovery and group-support consistency, a reasonable minimal requirement is that for sufficiently large values of the coefficients and for small levels of noise, assuming that the distribution of the noise is absolutely continuous with respect to the Lebesgue measure, the solution to problem (\ref{eq:proxop}) should retrieve the correct support, provided the latter can be expressed as a union of groups.

We first show that redundant groups may never be selected by \eqref{eq:proxop}.
\begin{lemma}
\label{lem:weights_incr_regression}
Take $\G=\cbr{g,g'}$ with $g \subsetneq g'$ and $d_g \geq d_{g'}$. Then for any $\w$, $g \notin \Gw(\wh)$ a.s. where $\wh=\prox(\w)$.
\end{lemma}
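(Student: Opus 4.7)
The plan is to reduce the problem to standard vector soft-thresholding by showing that $\Oo$ collapses to a scaled Euclidean norm under the hypotheses. The covering condition $\bigcup_{h\in\G}h=[1,p]$ together with $g\subsetneq g'$ forces $g'=[1,p]$. Applying \lemref{lem:dualnorm}, $\Oo^*(\alphab)=\max\br{d_g^{-1}\nm{\alphab_g},\,d_{g'}^{-1}\nm{\alphab_{g'}}}$, and the first term is dominated by the second since $g\subset g'$ gives $\nm{\alphab_g}\leq\nm{\alphab_{g'}}$ while $d_g\geq d_{g'}$ gives $d_g^{-1}\leq d_{g'}^{-1}$. Hence $\Oo^*=d_{g'}^{-1}\nm{\cdot}$, and dually $\Oo=d_{g'}\nm{\cdot}$; equivalently, $g$ is redundant in the sense of \lemref{lem:char_redundant}.

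With $\Oo$ a scaled $\ell_2$-norm, \eqref{eq:proxop} becomes standard vector soft-thresholding whose closed-form solution is $\wh=\br{1-\lambda d_{g'}/\nm{\w}}_+\w$. I then analyze $\A(\wh)$ in two cases. If $\wh\neq\zv$, $\Oo$ is differentiable at $\wh$, so $\A(\wh)=\cbr{d_{g'}\wh/\nm{\wh}}$ is a singleton and $\Pi_g\A(\wh)=\cbr{d_{g'}\wh_g/\nm{\wh}}$ has norm $d_{g'}\nm{\wh_g}/\nm{\wh}\leq d_{g'}\leq d_g$. For $g\in\Gw(\wh)$ every inequality must be tight, forcing $d_g=d_{g'}$ together with $\wh_{g'\setminus g}=\zv$. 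If instead $\wh=\zv$, then $\A(\wh)$ equals the full dual unit ball, which by the redundancy of $g$ is $\{\alphab:\nm{\alphab}\leq d_{g'}\}$; its projection $\Pi_g\A(\wh)$ is the Euclidean disk of radius $d_{g'}$ in $\RR^{|g|}$, which is never a singleton, so $g\notin\Gw(\wh)$.

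It remains to rule out the residual event $\{\wh\neq\zv,\,d_g=d_{g'},\,\wh_{g'\setminus g}=\zv\}$. Since $\wh$ is proportional to $\w$ whenever it is nonzero, this event is a subset of $\{\w_{g'\setminus g}=\zv\}$, and because $g'\setminus g$ is nonempty and the distribution of $\w$ (or of the noise through which $\w$ enters \eqref{eq:proxop}) is absolutely continuous with respect to Lebesgue measure, this set has probability zero. Combining the two deterministic cases with this null-set argument yields the almost-sure conclusion. The main subtlety is precisely this borderline subcase $d_g=d_{g'}$; had the inequality been strict, the bound $\nm{\alphab_g}\leq d_{g'}<d_g$ would hold deterministically and no measure-theoretic argument would be needed.
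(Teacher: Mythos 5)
Your proof is correct, and it takes a genuinely different route from the paper's. The paper argues by contradiction directly from the optimality condition of \eqref{eq:proxop}: writing $\wh-\w^*+\epsb=-\lambda\alphab$ with $\alphab\in\A(\wh)$, the assumption $g\in\Gw(\wh)$ gives $\nm{\alphab_g}=d_g$, and the nesting $g\subsetneq g'$ together with $d_g\geq d_{g'}$ forces $\alphab_{g'\backslash g}=\zv$ via $\nm{\alphab_{g'}}^2=\nm{\alphab_g}^2+\nm{\alphab_{g'\backslash g}}^2\leq d_{g'}^2\leq d_g^2$; the stationarity condition then makes $\wh_{g'\backslash g}$ equal to the noisy observation on $g'\backslash g$, which is a.s.\ nonzero, and Lemma~\ref{lem:alpha} yields a contradiction since $\v^{g'}_{g'\backslash g}$ must simultaneously be proportional to $\alphab_{g'\backslash g}=\zv$ and equal to $\wh_{g'\backslash g}\neq\zv$. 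You instead exploit the covering condition to identify $g'=[1,p]$, observe that $g$ is redundant so that $\Oo=d_{g'}\nm{\cdot}$, compute the proximal operator in closed form, and analyze $\A(\wh)$ case by case. What your approach buys: an explicit description of $\wh$ and of the dual set, an explicit treatment of the degenerate case $\wh=\zv$ (where $\Pi_g\A(\wh)$ fails to be a singleton), and the sharper observation that the almost-sure qualifier is only needed on the boundary $d_g=d_{g'}$, the conclusion being deterministic when $d_g>d_{g'}$. What the paper's argument buys: it is purely local, never uses the closed form of the norm or the identification $g'=[1,p]$, treats both weight regimes uniformly in a few lines, and adapts more directly to settings where the two-group structure is only part of a larger collection, provided no other group covers $g'\backslash g$. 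Both proofs rest on the same measure-zero fact, namely that the noisy observation restricted to the nonempty set $g'\backslash g$ vanishes with probability zero.
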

\begin{proof} We first note that the optimality condition for \eqref{eq:proxop} is
\begin{equation}\label{eq:kktproximal}
\wh - \w^* + \epsilon = -\lambda\alphab\,,
\end{equation}
where $\alphab\in\A(\wh)$. We then reason by contradiction and assume $g \in \Gw(\wh)$ so that $\nm{\alphab_g(\wh)}=d_g$. Then, because $g \subsetneq g'$, $\nm{\alphab_{g'}(\wh)}^2=\nm{\alphab_g(\wh)}^2+\nm{\alphab_{g'\backslash g}}^2 \leq d_{g'}$, which implies $\alphab_{g'\backslash g}= 0 =\w_{g'\backslash g}+\epsilon_{g'\backslash g}-\wh_{g'\backslash g}$. But $\w_{g'\backslash g}+\epsilon_{g'\backslash g} \neq \zv$ a.s., this implies $\wh_{g'\backslash g} \neq \zv$, and therefore that $\v^{g'}\neq \zv$. But $\v^{g'}$ restricted to $g'\backslash g$ should then both be equal to $\zv$ by optimality condition, and be equal to $\wh_{g'\backslash g}$, which is a contradiction.
\end{proof}
\lemref{lem:weights_incr_regression} should be compared to \lemref{lem:weights_incr}. While the later one shows that $g$ can not be selected without $g'$, \lemref{lem:weights_incr_regression} shows that in the regression setting it may simply not be selected a.s. This shows in particular that $d_g \geq d_{g'}$ can pose a problem of support consistency because it implies that,
if the only way to write the support as a union of elements of $\G$ is $\supp{\w}=g$, the support is a.s. never correctly estimated by solving problem (\ref{eq:proxop}).

We now discuss in more details the influence of the weights on the probability to select false positives (\secref{sec:falsepositives}) and to have false negatives (\secref{sec:falsenegatives})

\subsubsection{False positives}\label{sec:falsepositives}
Let us consider a group $g\in\G$ of size $|g|=k$ which is outside of the support (i.e. $\w^*_g=\zv$), and such that not other group intersecting it is selected. From the optimality condition \eqref{eq:kktproximal}we see that $\wh_g = \zv$ if and only if $\displaystyle \nm{\epsilon_g}^2 \leq \lambda^2 d_k^2.$

If we assume that $\lambda=\sigma$, then setting 
\begin{equation}\label{eq:dk}
d_k=\sqrt{k+c\sqrt{k}}
\end{equation}
is an interesting choice because this is, at second order, the smallest possible rate that ensures that each group has a vanishingly small probability of being selected by chance.
Indeed, on the one hand, $\nm{\epsilon_g}^2 \sim \sigma^2\chi^2_k$ so the usual Chernoff bound yields:
 $$
 \PPP(\nm{\epsilon_g}^2 \geq t k \sigma^2) \leq e^{-\frac{k}{2}(t-\log(t)-1)}\,,
 $$
 and it is easy to verify that for $t=1+\frac{c'}{\sqrt{k}}$, with $c$
 sufficiently large, the above probability can be made arbitrarily
 small uniformly in $k$. This implies that if $d_k$ is fixed according
 to \eqref{eq:dk}, then with $c$ large enough we can make the
 probability that $g$ is selected as small as possible. On the other
 hand, choosing $d_k$ smaller, \ie, $d_k^2-k=o(\sqrt{k})$, would fail
 to guarantee $\PPP(\nm{\epsilon}^2 \leq \sigma^2 d_k^2)>1-\eta$ for
 $k$ large because the central limit theorem implies that
 $\frac{X-k}{\sqrt{2k}} \overset{d}{\rightarrow} \mathcal{N}(0,1)$. In
 summary, \eqref{eq:dk} is the smallest rate which ensures that we can
 control the probability of selecting a wrong group uniformly in
 $k$. Finally, note that for $d_k=\sqrt{k+c\sqrt{k}}$, condition
 (\ref{eq:suff_cond}) is satisfied; furthermore, we have $c
 k^{\frac{1}{4}} \leq d_k \leq (1+c) k^{\frac{1}{2}}$.  In particular
 if we consider the case of $c\rightarrow \infty$, we retrieve a
 scaling of the form $d_k=k^{\frac{1}{4}}$.

 Note that if we want to control the expected number of incorrectly
 selected variables instead of the number of incorrectly selected
 groups, then, using the same reasoning, but based on a bound on the
 expected number of false positive of the form $\sum_{g \in G} |g| \,
 \PPP(X_g \geq t |g|)$ we would show similarly that an appropriate
 choice for $d_k$ is $d_k=\sqrt{k+c\sqrt{k \log k}}$. Obtaining a
 control of the type FWER instead of FDR is possible by choosing $c
 \propto \sqrt{\log \m}$.  The reader probably noticed that the
 analysis in this section is ignoring the overlaps between groups, and
 for groups that have a quite significant overlap with a group of the
 support, the probability of being incorrectly selected is much
 larger. This issue can however be addressed by choosing $c$
 sufficiently large. Besides this point, the weights derived
 nonetheless satisfy constraints from the previous sections in which
 issues arising from overlaps were considered.

\subsubsection{False negatives}\label{sec:falsenegatives}
These choices for $d_k$ allow to control for false positives, but it is interesting as well to ask which groups containing
true non-zero elements will be selected, and which ones could be false negatives. 
For simplicity we assume that $\w^*_i \in \{0,1\}$ and that the noise is Gaussian as previously. If the fraction of non-zero elements in $\w^*_i$ is $p$ and one assumes a null model $H_0$ under which group $g$ is unrelated to the nonzero pattern of $\w^*$ then it is reasonable to model the number of non-zero elements in $g$ as a binomial random variable $\mathcal{B}in(k,p)$ with $k=|g|$.
Using again the KKT conditions, if none of the groups intersecting $g$ is selected, we will have $\v_g=\zv$ if and only if $\nm{\w^*_g+\epsilon_g}^2\leq \lambda^2 d_k^2$.

Since $\nm{\w^*_g}^2\sim\mathcal{B}in(k,p)$ and $\nm{\epsilon_g}^2 \sim \sigma^2 \chi^2_k$, we have $\EE[\nm{\w^*_g+\epsilon_g}^2]=kp+k\sigma^2$ and 
$$\quad \text{Var}(\nm{\w^*_g+\epsilon_g}^2)=\text{Var}(\nm{\w^*_g}^2)+\EE[(\epsilon_g^\top\w^*_g)^2]+\text{Var}(\nm{\epsilon_g}^2)=kp(1-p)+4 k p \sigma^2+2k\sigma^4.$$ 

If $\lambda^2=p+\sigma^2$ and if $d_k$ is chosen of the previous form
 $d_k=\sqrt{k+c\sqrt{k}}$, then, for an appropriate choice of $c$, namely $c=c'\,\frac{\sqrt{p(1-p)+4p\sigma^2+2\sigma^4}}{p+\sigma^2}$, classical Chernoff bounds together with an analysis similar to that of the previous section shows that we have $\nm{\w^*_g+\epsilon_g}^2 > \lambda^2 d_k^2$ with probability decreasing exponentially in $c'$. Therefore in this model,
groups selected can be interpreted as groups that are ``enriched" in non-zero coefficients, where we call a group enriched if the number of non-zero coefficients in that group is significantly larger than for a random group of the same size.
To put things differently the false negatives correspond to groups that do not have a significant number of non-zero elements.

This property is certainly a feature that can be desirable, especially in the applications in genomics that we have in mind where it is common to test for biological processes (or other groups of genes) that are enriched in ``active genes". 

Note that if a group $g$ has elements in common with another selected group $g'$, the elements that are in $g'$ are explained in part by $g'$ and are therefore ``discounted" for group $g$, in the sense that we only need
$$\big\| \w^*_g-\sum_{g' \cap g \neq \varnothing}\v^{g'}_g+\epsilon_g \big\|^2\leq \lambda^2 d_k^2.$$ A group is therefore selected if it contains enough non zero components that it itself explains.

It should be stressed that the previous analysis depends on the assumption that the components of $\w^*$ are of the same order of magnitude and fails if the distribution of the entries of $\w^*$ has a long tail. 

Finally, the analysis presented in these last two sections is heuristic is nature. It is by no means aimed at proving that a specific weighting scheme can be chosen universally for all possible collections of groups $\G$, but rather solely motivated by the need for an initial set of criteria to guide this choice.  It is likely that finer analyses, namely under high-dimensional scaling and dedicated to specific collections of groups are required to make more definite recommendations for the choice of the weights.
It should be noted that a different view on the weights can be adopted by considering them as defined through a set function; this is the point of view adopted in \citet{Obozinski2011Convex} which relates the behavior of $\Oo$ to the set-function.

\section{Graph Lasso}
\label{sec:graphlasso}

We now consider the situation where we have a simple undirected graph
$(I,E)$, where the set of vertices $I=[1,k]$ is the set of covariates
and $E\subset I\times I$ is a set of edges that connect covariates. We
suppose that we wish to estimate a sparse model such that selected
covariates tend to be connected to each other, \emph{i.e.}, form a
limited number of connected components on the graph. An obvious
approach is to use the norm $\Omn$ where $\G$ is a set that
generates connected components by union. For example, we may consider
for $\G$ the set of edges, cliques, or small linear subgraphs. As an
example, considering all edges, \emph{i.e.}, $\G=E$ leads to~:
\[
\Omega_{\text{graph}}(\w)=\min_{v \in \VV_E} \sum_{e \in E} d_e\|v_e\|
\quad \text{s.t.}\: \sum_{e \in E} v_e=w, \: \supp{v_e}=e\,.
\]

\begin{figure}
  \centering
  \includegraphics[width=.3\linewidth]{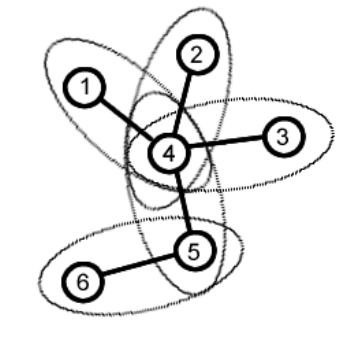}
  \caption{Graph-Lasso~: if the penalty leads to the selection of
    connected sets of covariates like the edges, the resulting pattern
    should be more connected on the graph.}
  \label{fig:graph-lasso}
\end{figure}

Alternatively, we will consider in the experiments the set of all
linear subgraphs of length $k\geq 1$. Although we have no formal
statement on how to chose $k$, it intuitively controls the size of the
groups of connected variables which are selected, and should therefore
be typically chosen to be slightly smaller than the size of the
minimal connected component expected in the support of the model.

\section{Experiments}
\label{sec:experiments}

To assess the performance of our method when either overlapping groups or a graph are
provided as a priori information, and subsequently, to assess the influence of the weights $d_g$, we considered several synthetic examples of regression model in which the structure of the model generating the data matches the prior on supports induced by the norm.

\subsection{Synthetic data: given overlapping groups}
In this experiment, we simulated data with $p=82$ variables, covered
by $10$ groups of $10$ variables with $2$ variables of overlap between
two successive groups: 
$$\G=\big \{ \{1,\ldots,10\}, \{9,\ldots,18\}, \ldots, \{73,\ldots,82\} \big \}.$$ 
We chose the support of $\w$
to be the union of groups $4$ and $5$ and sampled both the
coefficients on the support and the offset from i.i.d. Gaussian
variables. Note that in this setting, the support can be expressed as
a union of groups, but not as the complement of a union. Therefore,
our latent group Lasso penalty $\Omn$ could recover the right
support.

The model is learned from $n$ data points $(\x_i,y_i)$, with $y_i =
\w^\top \x_i + \E$, $\E \sim \mathcal{N}(0,\sigma^2)$,
$\sigma=|\mathbb{E}(\X\w + b)|$.  Using an $\ltwo$ loss $L(\w) = \|\y -
\X\w - b\|^2$, we learn models from $100$ such training sets. 
\begin{figure}[ht]

\begin{center}
\begin{tabular}{rl}
  \includegraphics[width=.5\linewidth]{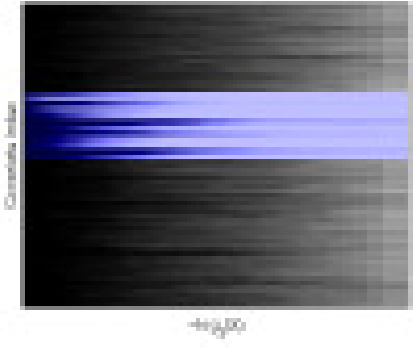} \hspace*{-3mm}&\hspace*{-3mm}
  \includegraphics[width=.5\linewidth]{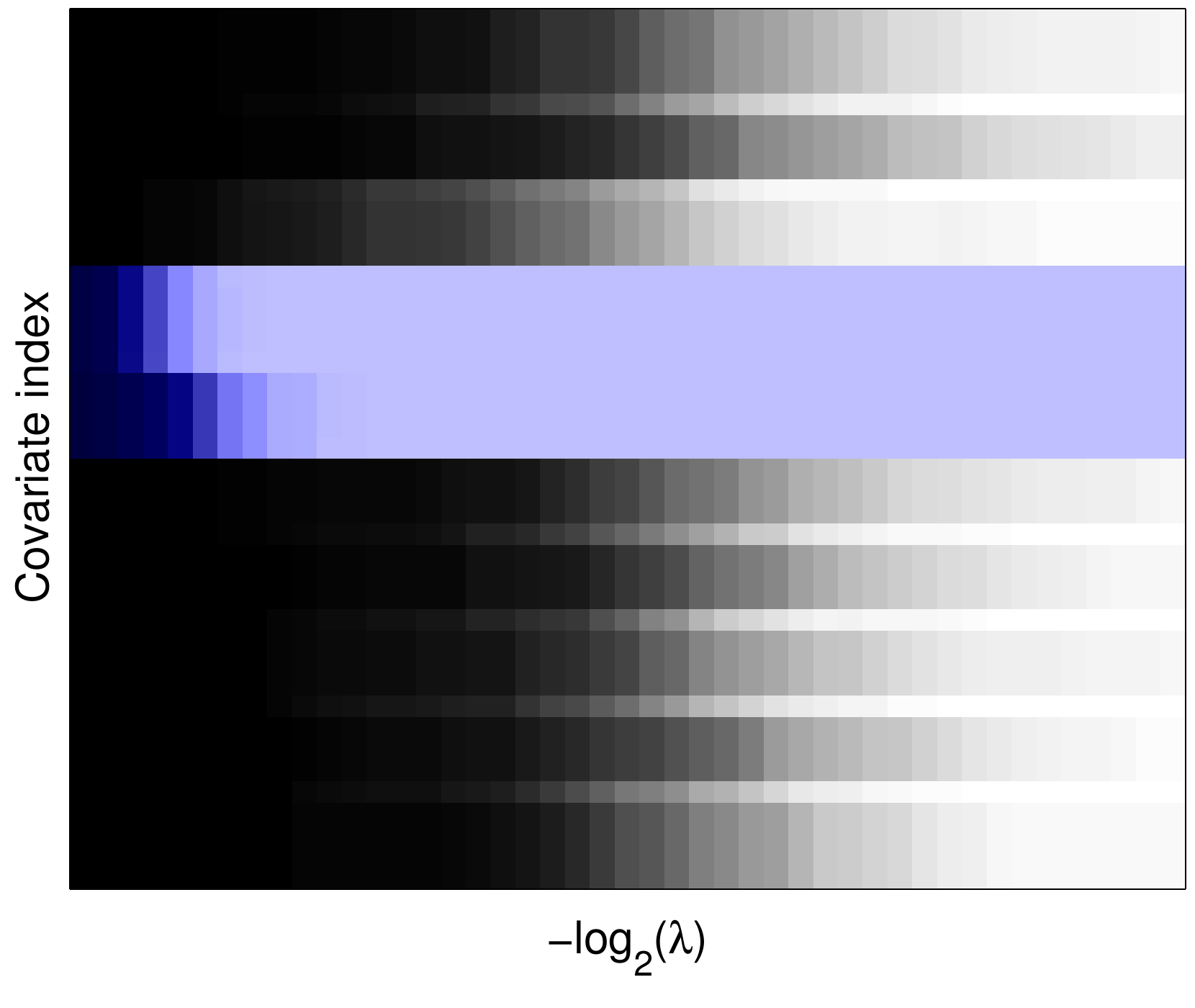} \\
  \includegraphics[width=.5\linewidth]{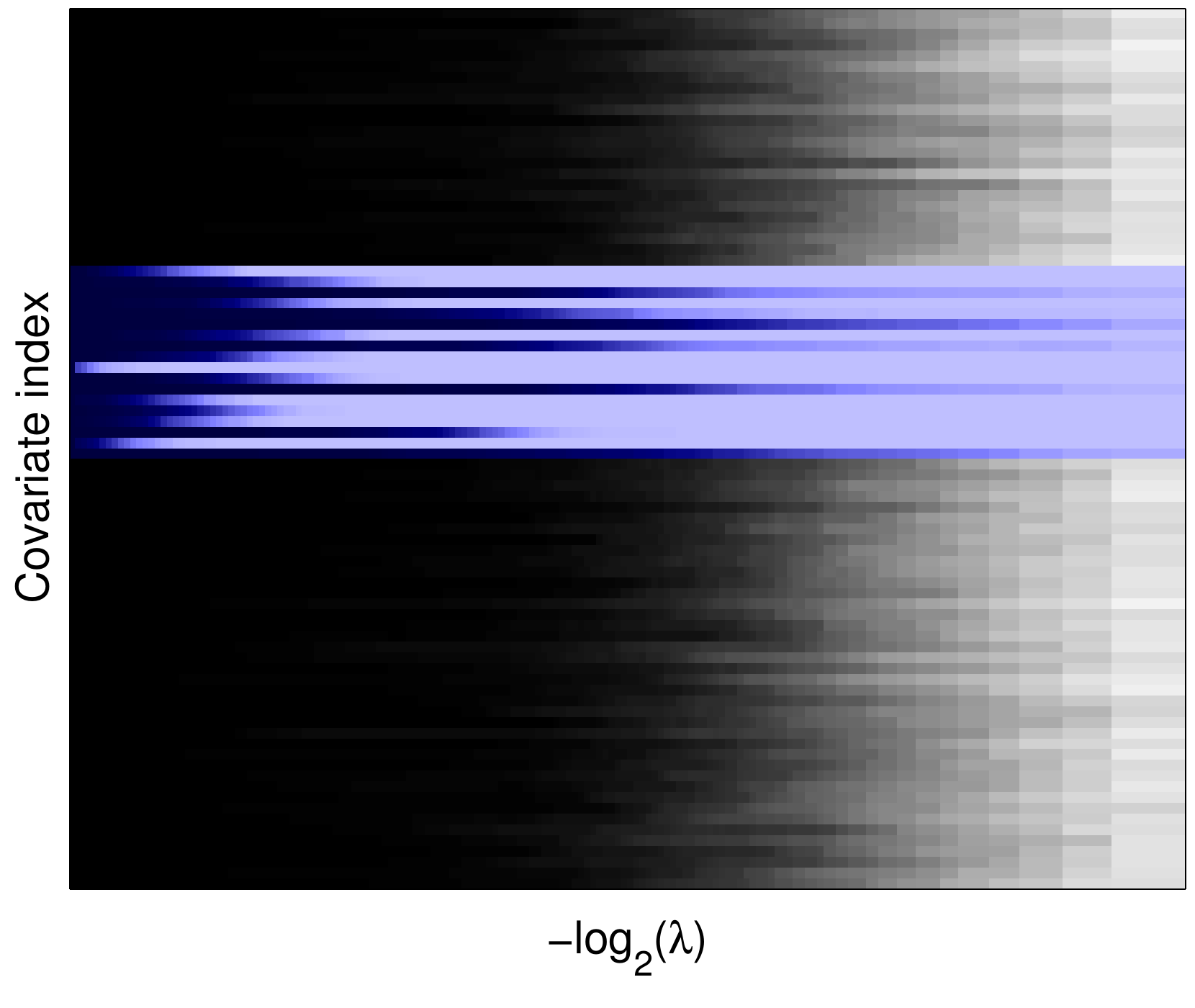} \hspace*{-3mm}&\hspace*{-3mm}
  \includegraphics[width=.5\linewidth]{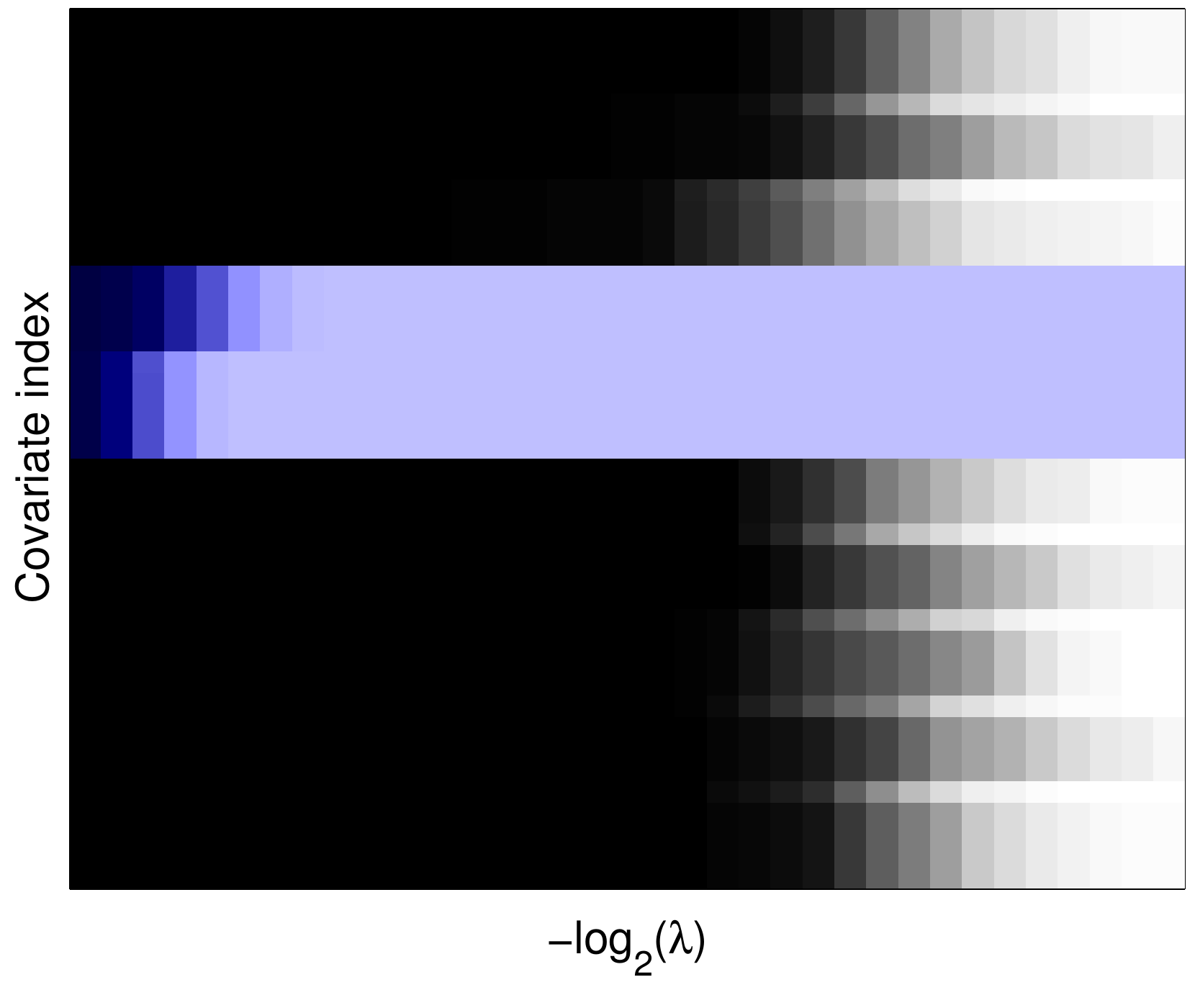} \\
\end{tabular}
\end{center}

\caption{Frequency of selection of each variable with the Lasso (left)
  and $\Omn$ (right) for $n=50$ (top) and $100$ (bottom).
For each variable index (on the y-axis),
its frequency of selection is represented in levels of gray as a function of the
regularization parameter $\lambda$ (on the x-axis), both for the Lasso penalty and
$\Omn$. The transparent blue band superimposed indicates the set of covariates that belong to the support.}
\label{fig:Ws}

\end{figure}

We report the empirical frequencies of the selection of each variable on Figure~\ref{fig:Ws}.
For any choice of $\lambda$, the Lasso frequently misses some variables
from the support, while $\Omn$ does not miss any variable from the
support on a large part of the regularization path. Besides, we
observe that over the replicates, the Lasso never selects the exact
correct pattern for $n<100$. For $n=100$, the right pattern is
selected with low frequency on a small part of the regularization
path. $\Omn$ on the other hand selects it up to $92\%$ of the times
for $n=50$ and more than $99\%$ on more than one third of the path for
$n=100$.  

\begin{figure}[ht]
\begin{center}
  \includegraphics[width=.65\columnwidth]{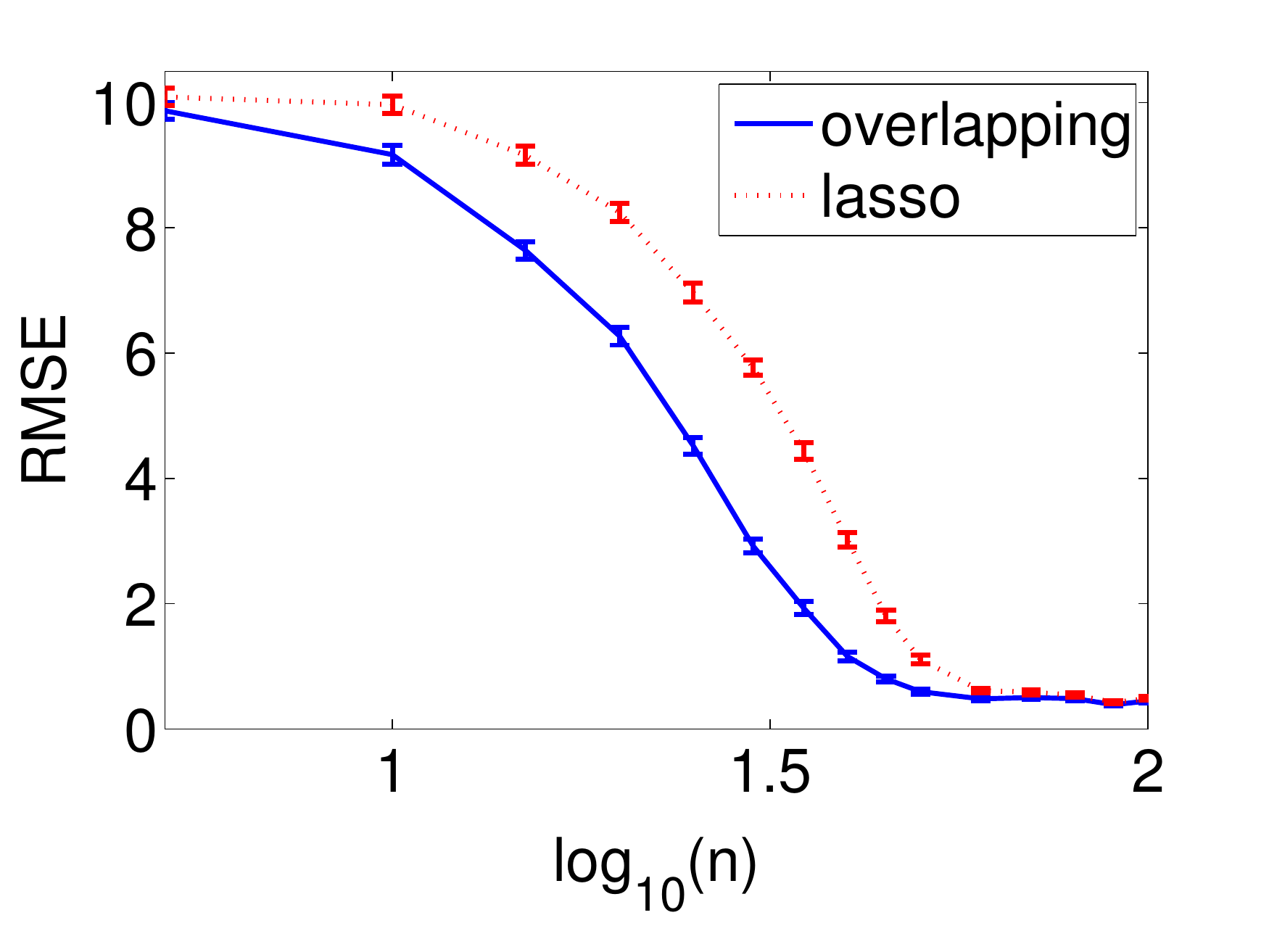}
\end{center}

\caption{Root mean squared error of overlapped group lasso and Lasso
  as a function of the number of training points.}
\label{fig:perf}

\end{figure}

Figure~\ref{fig:perf} shows the root mean squared error for both
methods and several values of $n$. For both methods, the full regularization
path is computed and tested on three replicates of $n$ training and
$100$ testing points. We selected the best parameter in average and used it
to train and test a model on a fourth replicate.  For a large range of
$n$, 
$\Omn$ not only helps to recover the right pattern, but also 
decreases the MSE compared to the classical Lasso.

\subsection{Synthetic data: given linear graph structure}
\label{sec:expLin}

We now consider the case where the prior given on the variables is a graph
structure and where we are interested by solutions which are highly connected
components on this graph. As a first simple illustration, we consider
a chain in which variables with successive indices are connected. 
We use $\w\in\RR^p$, $p=100$, $\supp{\w} = [20,40]$. The nodes of
the graph correspond to the parameters $w_i$ and the edges to the pairs
$(w_i,w_{i+1}), i=1,\ldots,n$. The parameters of the model and the
$50$ training examples $(\x_i,y_i)$ are drawn using the same protocol as in
the previous experiment.  We use for the groups all the sub-chains of
length $k$. Results are reported for various choices of $k$ and
compared to the Lasso ($k = 1$).
\begin{figure}[ht]
\begin{center}
\begin{tabular}{cc}
  \includegraphics[width=.48\linewidth]{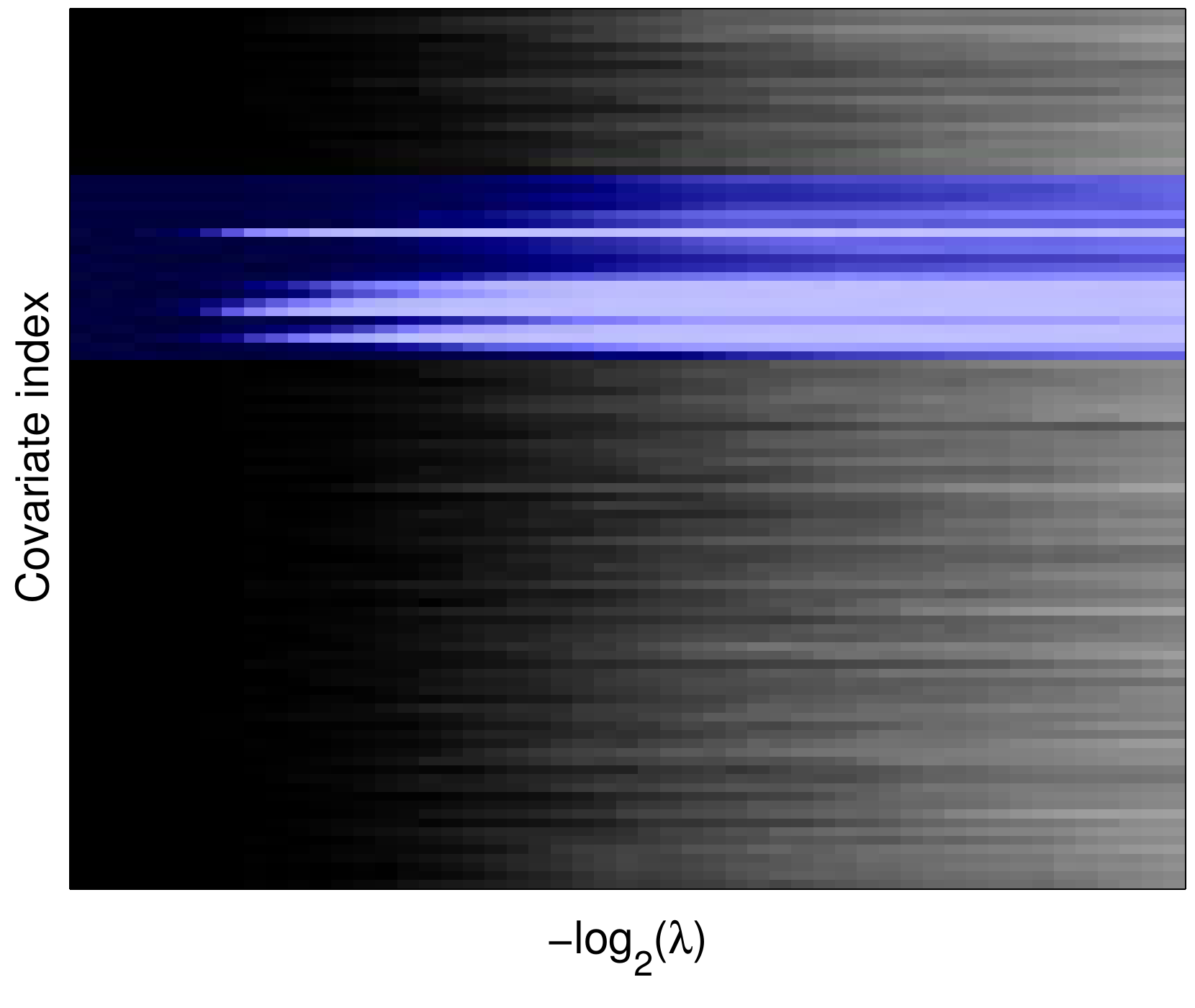} \hspace*{-3mm}&\hspace*{-3mm}
  \includegraphics[width=.48\linewidth]{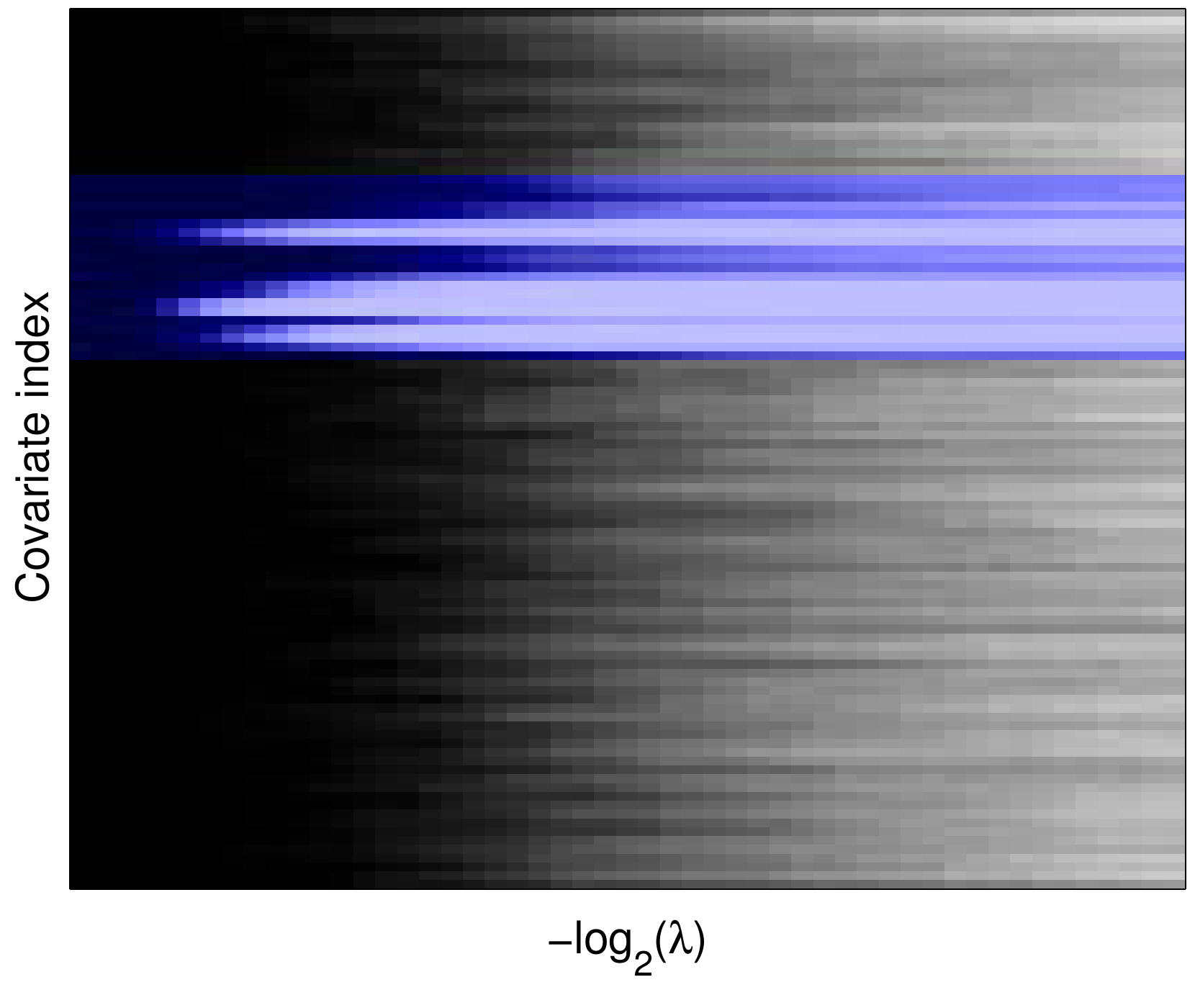}\\
  \includegraphics[width=.48\linewidth]{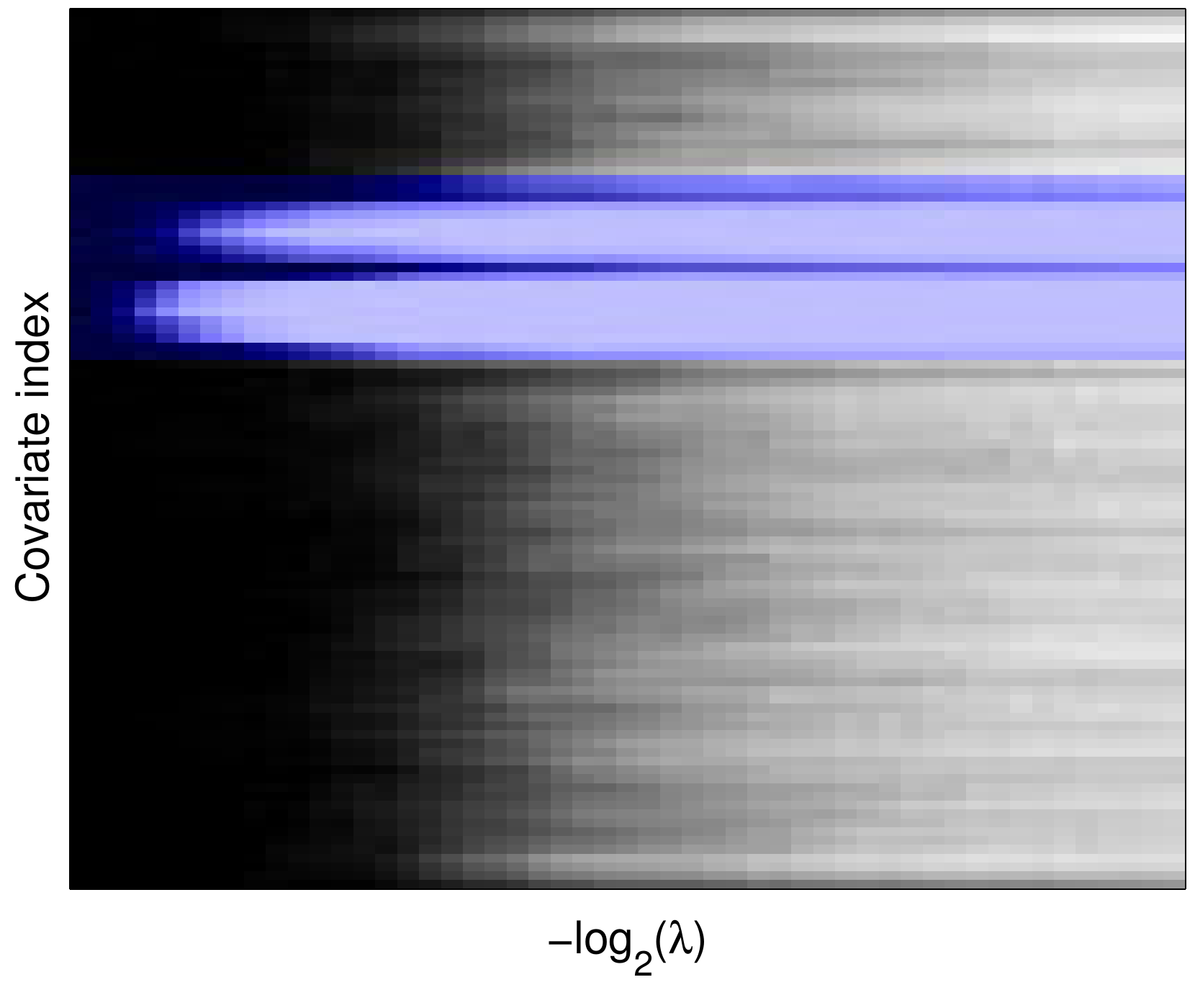} \hspace*{-3mm}&\hspace*{-3mm}
  \includegraphics[width=.48\linewidth]{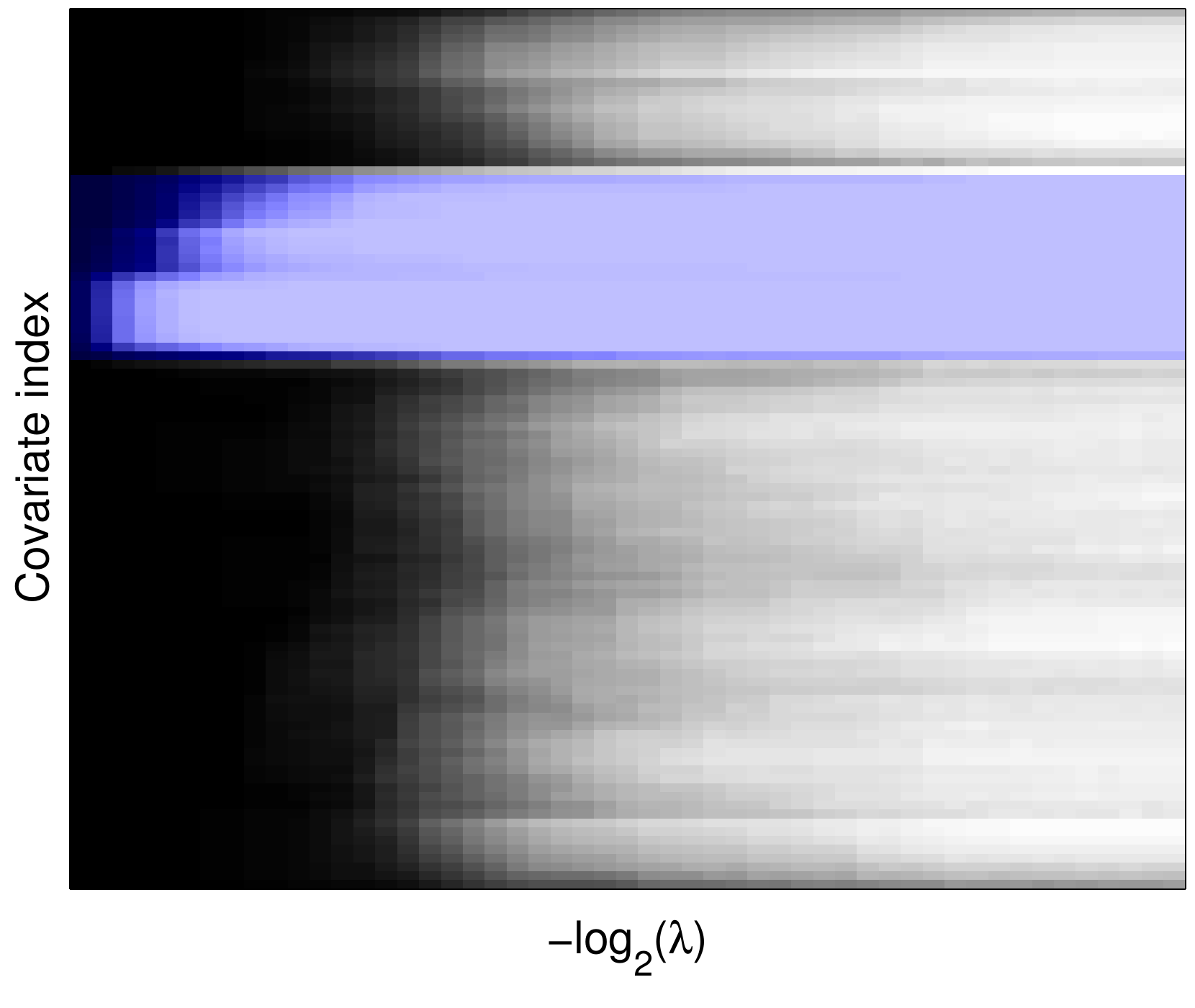}\\
\end{tabular}
\end{center}
\caption{Variable selection frequency with $\Omn$ using the chains
  of length $k$ (left) as groups, for $k = 1$ (Lasso), $2,4,8$. For each variable index (on the y-axis),
its frequency of selection is represented in levels of gray as a function of the
regularization parameter $\lambda$ (on the x-axis), both for the Lasso penalty and
$\Omn$.
The transparent blue band superimposed
  indicates the covariates that belong to the support.}
\label{fig:WChainSel50}
\end{figure}

Figure~\ref{fig:WChainSel50} shows the frequency of each variable
selection over $20$ replications. Here again, using a group prior
improves pattern recovery, with better results as $k$ increases. 
However, for larger groups, two consecutive groups are very correlated, which makes it more difficult to identify the exact
boundaries of the support.

\subsection{Synthetic data: effect of the weights}

As discussed in Section~\ref{sec:weights}, the choice of a set of
weights $\{d_g\}_{g\in\G}$ influences the variable selection
behavior of the learning algorithm penalized by $\Oo{}$. At one extreme,
if the weights are uniform, only groups that are included in no other can be selected.
At the other extreme, for weights
growing as the square root of the group size, the group-support selected will be composed (almost surely) of the smallest groups possible covering the support.

To illustrate the effect of the weighting scheme on covariate
selection, we run  three experiments with respectively $p=100,200,300$
covariates and $n=100,50,30$ training points. In each
setting, the groups are all the sets of size from $1$ to $20$ formed by sequences of 
consecutive covariates, much like in \ref{sec:expLin} but with more
groups. Note that this creates a lot of nested groups. The support is
formed by covariates with indices from $5$ to $24$ and from $90$ to $92$,
\emph{i.e.}, $23$ covariates. The noise level $\sigma^2$ is $0.1$. For
each of the three settings, we compare $6$ weighting schemes over $50$
replications. The first $4$ schemes follow \eqref{eq:dk} and assign $d_s = \sqrt{s+c\sqrt{s}}$
to each group of size $s$, with $c=0,1,4,6$. We also try $d_s =
\sqrt[4]{s}$ (the limit when $c$ grows) and $d_s = 1$. Note that
$d_s=1$ and $c = 0$ ($d_s = \sqrt{s}$) correspond to the two extreme
regimes in condition~(\ref{eq:suff_cond}).

We evaluate the performance of the regularization in two different ways.
First, we select
by cross-validation the value of $\lambda$ that yields the smallest MSE
and return the corresponding value. 
Second, we return the best possible recovery error attainable on the entire regularization path. 
We consider these two criteria since it is known that the regularization regime corresponding to optimal support recovery and best MSE are not the same~\citep{Leng2004note,Bach2008Bolasso}.

Ideally, for support recovery, we would have to either use a theoretical value for $\lambda$ or to use the OLS-hybrid two-step procedure~\citep{Efron2004Least} in which the models obtained in sequence along the regularization path are refitted with OLS and tested on a held out set to select the best model. This would obviously lead to a much heavier experimental setting, which is why we simply return the best performance along the path.

The results are shown in Table~\ref{tab:syn-100-100},
\ref{tab:syn-50-200} and \ref{tab:syn-30-300}. In each case, the best
average MSE across the $50$ runs and along the regularization path is
given along with the corresponding point on the regularization path
($\lambda^*$), average number of selected variables in the corresponding model
(Model size$^*$), pattern recovery error of the selected model (Rec
$\textrm{err}^*$) and lowest pattern recovery error along the
regularization path (Rec err min). The pattern recovery error is the
average of the proportion of covariates that were in the support and
were not selected, and the proportion of covariates that were not in
the support and were selected. The standard deviation is given for
each measured quantity as well. The regularization path was
approximated by a grid of $51$ values of $\lambda$ between $2^{-7}$
and $2^{3}$. For Table~\ref{tab:syn-50-200}, a longer grid of $76$
values starting at $2^{-12}$ was used to make sure that the end of the
regularization path was reached.

\begin{figure}[ht]
\begin{center}
  \includegraphics[width=.28\linewidth]{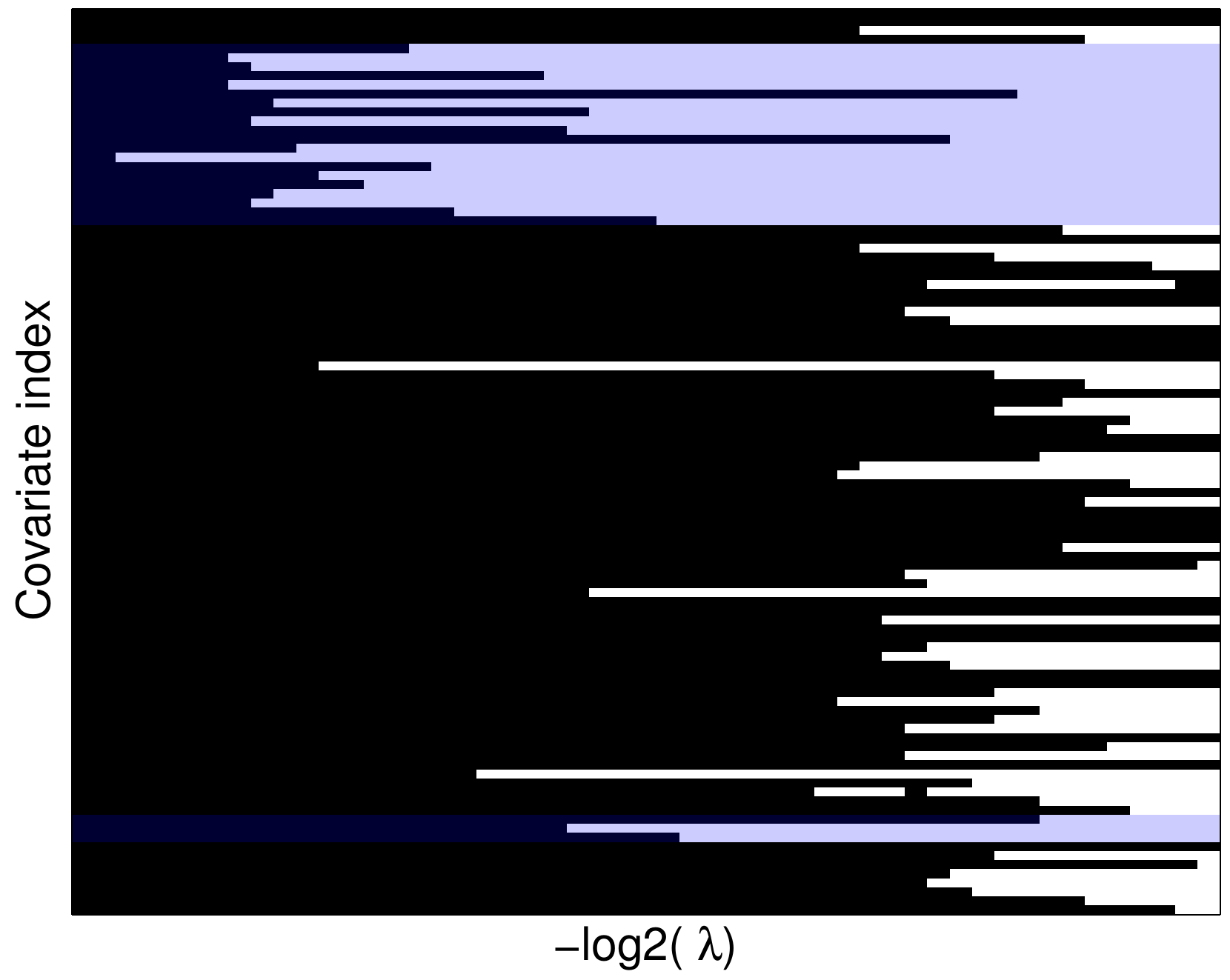}
  \includegraphics[width=.28\linewidth]{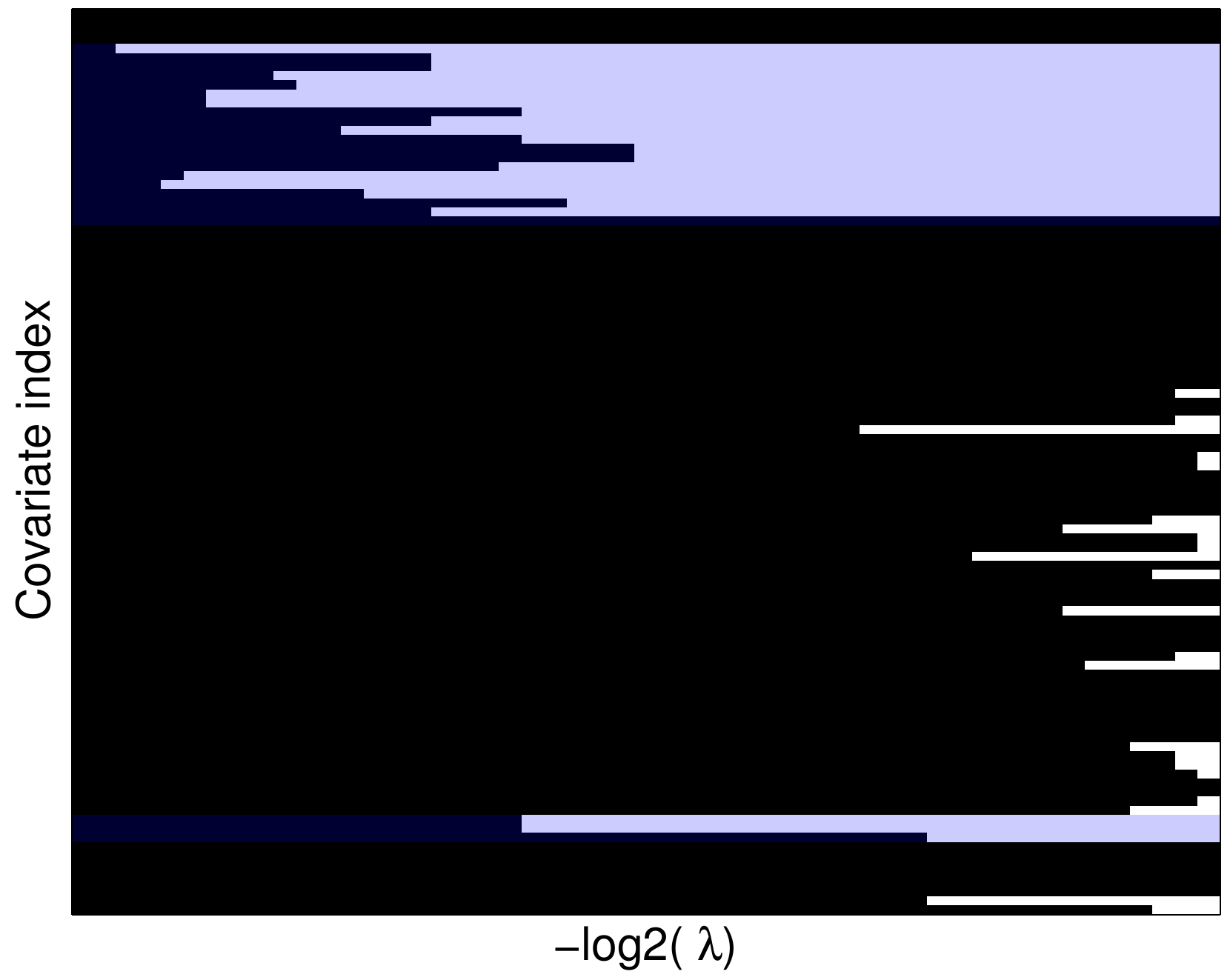}
  \includegraphics[width=.28\linewidth]{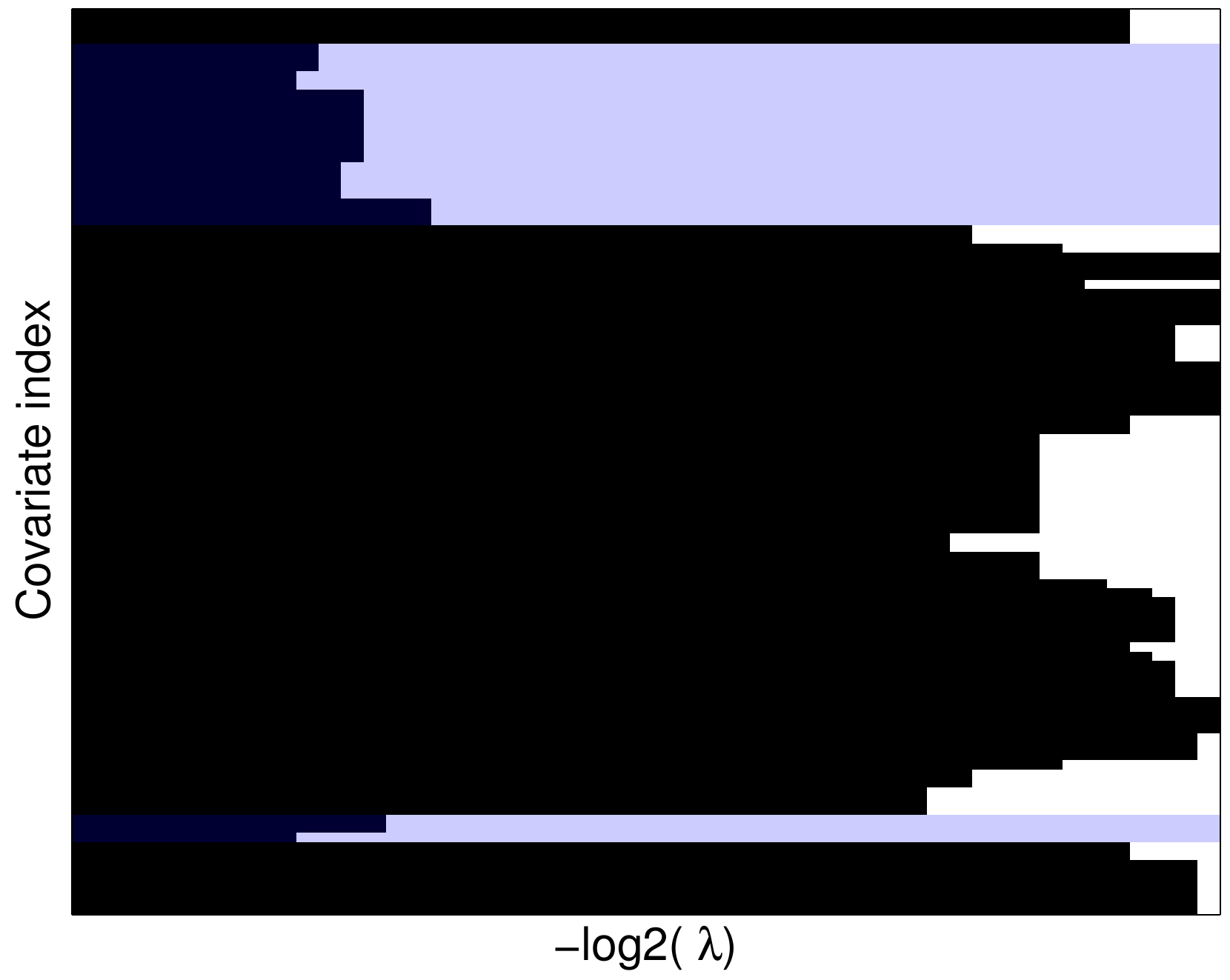}
  \includegraphics[width=.28\linewidth]{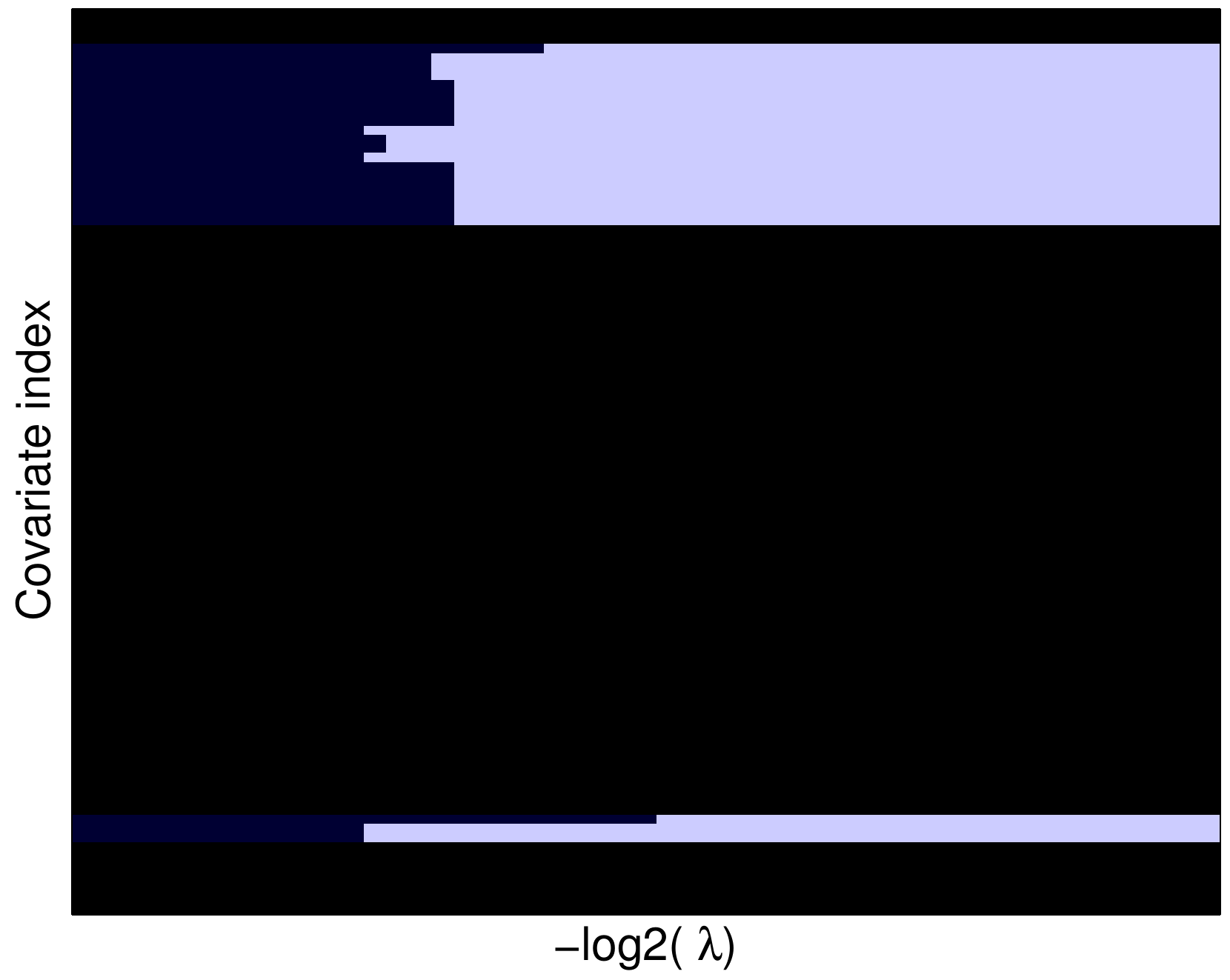}
  \includegraphics[width=.28\linewidth]{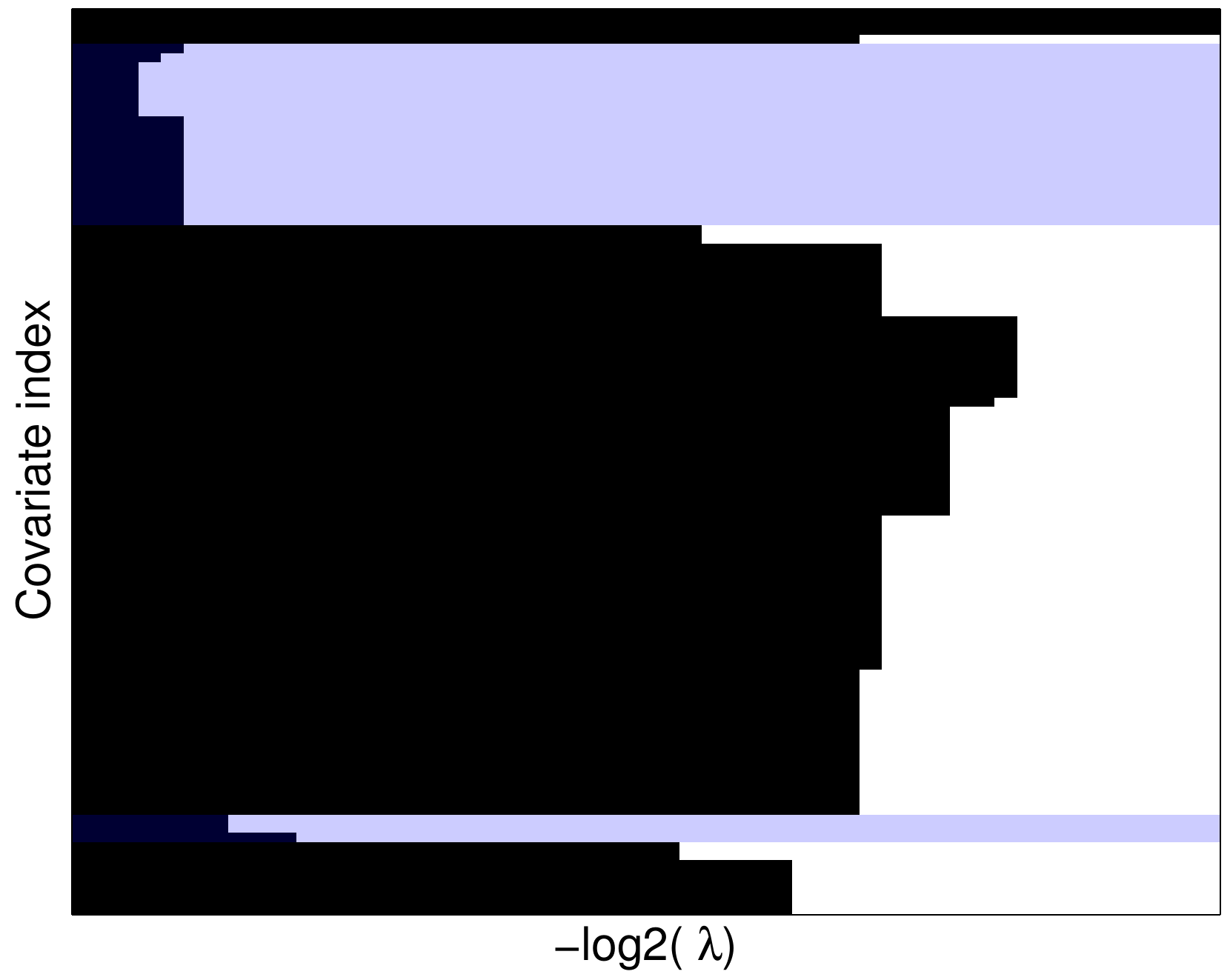}
  \includegraphics[width=.28\linewidth]{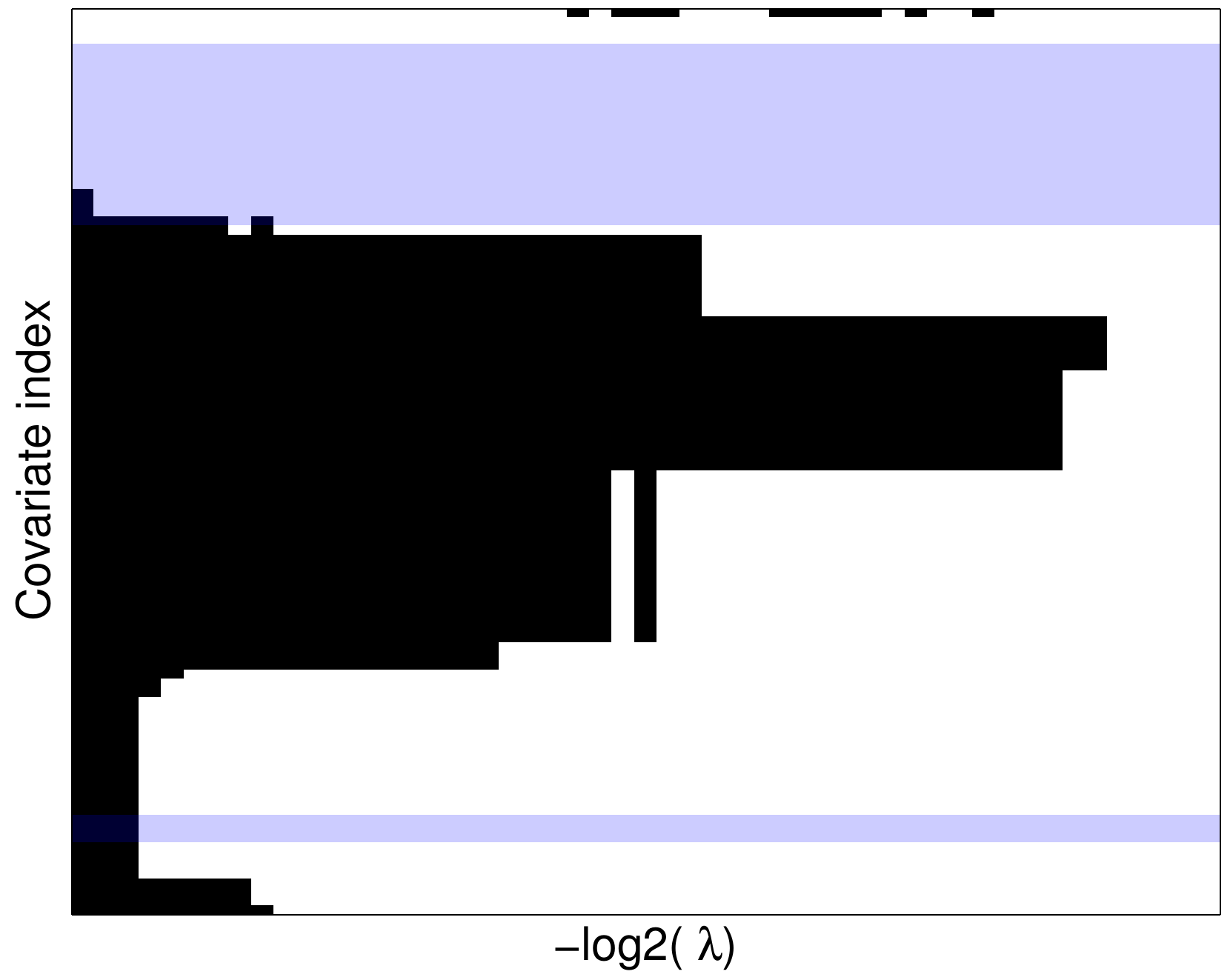}
\end{center}
\caption{Variable selection for one of the $50$ runs with $\Omn$
  using the chains up to length $20$ as groups and weights of the form $d_k=\sqrt{k+c\sqrt{k}}$, $c=0,1,4,6,\infty$ and uniform
  weights (from left to right, top to bottom). A transparent blue band is superimposed to indicate
  the covariates that belong to the support.}
\label{fig:WChainWeights}

\end{figure}

The last column of Table~\ref{tab:syn-100-100} illustrates the effect
of the weighting scheme on pattern recovery. 

The results of Table 1 correspond to $n=100$, $p=100$ so that if $s=23$ is the size of the
support, we have $n/(2s\log(p))\approx 0.47$ which means that the sample size is slightly too
small for the Lasso to recover the support exactly.
Note that as expected from the
theory, the fifth column shows that the model
selected based on the MSE is not optimal in term of variable
selection. The fourth column shows that 
 more uniform weights encourage the selection of more
variables, which is expected given that they favor the selection of larger groups. Lastly,
the values of the MSE suggest that in this regime of sparsity, dimension
and number of training points, the performances in pattern recovery
have little influence on the MSE, because there are enough training
points to deal with the noise created by the selection of spurious
covariates. Here again however, the two extreme regimes lead to higher
MSE.

Figure~\ref{fig:WChainWeights} illustrates the influence of the
weights on the selection behavior. As expected from theory, uniform
weights ($d_k = 1$) only allow selection of the largest groups
\emph{i.e.}, chains of size $20$ while at the other extreme, for $d_k
= \sqrt{k}$, only the small groups (singletons) are active. In
intermediate regimes, all groups are active and allow to recover the
correct support at some point on the regularization path, except $c=1$
which on this particular run doesn't yield perfect recovery. More
adequate choices of $c$ lead to correct recovery on a larger portion
of the regularization path.

Table~\ref{tab:syn-50-200} corresponds to a harder regime, with
fewer training points and in higher dimension. As in the first regime,
the fourth and last columns shows that the weighting scheme has a
significant influence on the variable selection behavior, with more uniform
schemes leading to more variables selected, and a better pattern
recovery being achieved for an intermediate scheme ($c=6$). The reason
for the optimal $c$ to be higher than in the previous regime may be
that in higher dimension with less training points, it is not possible
anymore to recover the fine structure of the true pattern and a better
alternative is to select a less precise but more stable selection of
larger groups. In terms of MSE, the minimum is reached for $d_s =
\sqrt[4]{s}$, and for all the other weightings the optimum $\lambda$
is the last one in the grid, for which a large fraction of the covariates
have entered the model. 

In the last regime ($30$ training points, $300$ dimensions),
Table~\ref{tab:syn-30-300} shows that the best pattern recovery is
performed with uniform weights, which suggests that at this level of
noise, using the fine structure of the groups is more harmful than
helpful, and that the best choice is to only use the largest groups. The
same reasoning applies to the MSE.

\begin{table}[t]
  \caption{Effect of $c$ on the MSE, the $\lambda$ giving the best
    average MSE, the pattern recovery error at the optimal MSE, and
    the best pattern recovery error possible. $100$ training points,
    $100$ dimensions, $50$ replications.}
\label{tab:syn-100-100}
\begin{center}
\begin{small}
\begin{sc}
\begin{tabular}{lccccc}
  \hline
  c & MSE & $\lambda^*$ & Model size$^*$ & Rec $\textrm{err}^*$ & Rec
  err min\\ 
  \hline
  $0$ & $0.06709 \pm 0.1814$ & $0.02368$ & $ 37.08 \pm   12.8$ & $0.1068 \pm 0.07444$ & $0.07148 \pm 0.03768$ \\ 
  $1$ & $0.02891 \pm 0.09583$ & $0.01031$ & $  41.8 \pm   18.4$ & $0.1245 \pm   0.12$ & $0.02951 \pm 0.02057$ \\ 
  $4$ & $0.04513 \pm 0.07202$ & $0.0136$ & $ 49.72 \pm  27.21$ & $0.1759 \pm 0.1759$ & $0.01468 \pm 0.01599$ \\ 
  $6$ & $0.03877 \pm 0.1116$ & $0.01031$ & $ 45.78 \pm  26.63$ & $0.1506 \pm 0.1741$ & $0.01804 \pm 0.01579$ \\ 
  $d_s = \sqrt[4]{s}$ & $0.04318 \pm 0.08945$ & $0.0359$ & $ 51.72 \pm  27.11$ & $0.1878 \pm 0.1757$ & $0.02461 \pm 0.02585$ \\ 
  $d_s = 1$ & $0.09263 \pm 0.2278$ & $0.04737$ & $ 81.22 \pm  17.16$ & $0.3764 \pm 0.1129$ & $0.09788 \pm 0.03598$ \\ 
  \hline
\end{tabular}
\end{sc}
\end{small}
\end{center}
\end{table}

\begin{table}[t]
  \caption{Effect of $c$ on the MSE, the $\lambda$ giving the best
    average MSE, the pattern recovery error at the optimal MSE, and
    the best pattern recovery error possible. $50$ training points,
    $200$ dimensions, $50$ replications.}
\label{tab:syn-50-200}
\begin{center}
\begin{small}
\begin{sc}
\begin{tabular}{lccccc}
\hline
c & MSE & $\lambda^*$ & Model size$^*$ & Rec $\textrm{err}^*$ & Rec
err min\\ 
\hline
$0$ & $ 8.264 \pm  5.187$ & $0.04123$ & $ 47.54 \pm  7.149$ & $0.2706
\pm 0.06144$ & $0.2661 \pm 0.06096$ \\ 
$1$ & $ 6.317 \pm  4.809$ & $0.0002441$ & $  61.3 \pm  3.824$ & $0.1957 \pm 0.07468$ & $0.1823 \pm 0.08499$ \\ 
$4$ & $ 2.428 \pm  2.401$ & $0.0002441$ & $ 101.4 \pm  13.74$ & $0.2301 \pm 0.04765$ & $0.08716 \pm 0.05194$ \\ 
$6$ & $   2.2 \pm  2.404$ & $0.0002441$ & $ 111.9 \pm  17.29$ &
$0.2572 \pm 0.05094$ & $0.06944 \pm 0.03839$ \\ 
d(s) = $\sqrt[4]{s}$ & $  1.66 \pm  1.593$ & $0.0007401$ & $ 141.2 \pm  15.52$ & $0.3366 \pm 0.04511$ & $0.0823 \pm 0.05281$ \\ 
d(s) = $1$ & $ 3.707 \pm  2.836$ & $0.0002441$ & $ 155.4 \pm  14.44$ & $0.3757 \pm 0.0409$ & $0.08228 \pm 0.02283$ \\
\hline
\end{tabular}
\end{sc}
\end{small}
\end{center}
\end{table}

\begin{table}[t]
  \caption{Effect of $c$ on the MSE, the $\lambda$ giving the best
    average MSE, the pattern recovery error at the optimal MSE, and
    the best pattern recovery error possible. $30$ training points,
    $300$ dimensions, $50$ replications.}
\label{tab:syn-30-300}
\begin{center}
\begin{small}
\begin{sc}
\begin{tabular}{lccccc}
\hline
c & MSE & $\lambda^*$ & Model size$^*$ & Rec $\textrm{err}^*$ & Rec err
min\\ 
\hline
$0$ & $ 18.78 \pm  7.021$ & $  1.32$ & $ 15.74 \pm  3.451$ & $0.4059 \pm 0.07167$ & $ 0.396 \pm 0.07169$ \\ 
$1$ & $ 17.21 \pm  6.763$ & $0.5743$ & $ 23.22 \pm  3.501$ & $0.3841 \pm 0.06413$ & $0.3693 \pm 0.07547$ \\ 
$4$ & $ 17.21 \pm  8.195$ & $ 0.125$ & $  51.5 \pm  10.74$ & $0.2281 \pm 0.1294$ & $0.2181 \pm 0.1285$ \\ 
$6$ & $ 14.74 \pm  7.398$ & $ 0.125$ & $ 66.86 \pm  17.36$ & $0.2037
\pm 0.1122$ & $0.1996 \pm 0.1198$ \\ 
d(s) = $\sqrt[4]{s}$ & $ 11.81 \pm  5.307$ & $0.007812$ & $ 119.8 \pm  23.15$ & $0.2259 \pm 0.08258$ & $0.1546 \pm 0.1197$ \\ 
d(s) = $1$ & $ 11.82 \pm   5.31$ & $0.007812$ & $ 159.2 \pm  24.22$ & $ 0.268 \pm 0.0401$ & $0.1284 \pm 0.05387$ \\  
\hline
\end{tabular}
\end{sc}
\end{small}
\end{center}
\end{table}

\subsection{Breast cancer data: pathway analysis}

An important motivation for our method is the possibility to perform
gene selection from microarray data using priors which are overlapping
groups. Genes are known to modify each other's expression through
various regulation mechanisms. More generally, some genes are known to
be involved in the same biological function, so the presence of a
particular gene in a predictive models can be indicative of the
presence of related genes. In other words, when
we select one gene in our predictive model, we can expect that genes
which are known to either regulate or to be regulated by this gene, or more
generally to be involved in the same biological function should also be
selected.  
Since an increasing amount of
information on gene interaction is being gathered from empirical
biological knowledge and organized in
databases~\citep{Subramanian2005Gene}, our hope is to use this
information to~:
\begin{description}
\item[Improve prediction accuracy~:] 
  Functions involving a small number of pre-defined
  gene sets, form a smaller hypothesis sets in which we can hope 
  to better estimate. Since genes present in the same biological function are
  likely to be either all involved in the studied phenomenon (disease
  outcome, subtype, response to a treatment) or all not involved, we
  can expect to find a function predicting the phenomenon correctly in
  this class. 
\item[Build accurate sparse prediction functions~:] Building sparse
  estimators has practical implications in this context because it is
  technically easier to measure the expression level of a small number
  of genes in a patient than a whole transcriptome. Selecting a small
  number of gene sets is a more robust procedure than selecting a
  small number of genes, because it is easy to spuriously select a
  gene from a noisy training set while the evidences add up for a set
  of genes. In addition, selecting a few genes that belong to the same
  functional groups could lead to increased interpretability of the
  signature.
\end{description}

To reach this goal we use our $\Omn$ penalty with an (overlapping)
predefined gene sets as groups. Several groupings of
genes into gene sets are available in various databases. We use the
canonical pathways from MSigDB~\citep{Subramanian2005Gene} containing
$639$ groups of genes, $637$ of which involve genes from our
study. Among these, we restricted ourselves to the $589$ groups that
contained less than $50$ genes. Indeed we observed empirically that
keeping very large pathways in the penalty lead to poor
regularization, which makes sense because the presence of very large
groups allows the penalty to select a very large number of covariates
at a low cost, partially breaking the purpose of regularization. As
discussed in Section~\ref{sec:weights}, it is possible to penalize
large groups more heavily, but weighting cannot correct extreme size
discrepancies such as combinations of groups of size two and groups of
size $100$. In addition, we are interested in identifying a small
number of well defined biological functions that predict the
outcome. Selecting a large pathway which contains one third of the
genes would not be very informative.

We use the breast cancer dataset compiled by
\citet{Vijver2002gene-expression}, which consists of gene expression
data for $8,141$ genes in $295$ breast cancer tumors ($78$ metastatic
and $217$ non-metastatic). We restrict the analysis to the $2465$
genes which are in at least one pathway. Since the dataset is very
unbalanced, we use a balanced logistic loss, weighting each positive
example by the proportion of negative examples and each negative
example by the proportion of positive examples.

We estimate by $5$-fold cross validation the balanced accuracy
(average of specificity and sensitivity) of the balanced logistic
regression with $\lone$ and $\Omn$ penalties, using the pathways as
groups. As a pre-processing, we keep the $500$ genes most correlated
with the output (on each training set). This type of prefiltering is
common practice with microarray data, and all the results are quite
robust to changes in the number of genes kept. $\lambda$ is selected
by internal cross validation on each training set.

In our experiments on this very noisy dataset, we noticed that results
changed a lot with the choice of the split, often more than between
methods. In order to make sure that observed differences were actually
caused by algorithms and not by particular choices of the $5$
foldings, we repeated each experiment on $5$ choices of the $5$
foldings, and show the result for each of these choices separately.

Table~\ref{tab:bcp} gives the balanced accuracies using $\Omn$ with
and without weights, and using $\ell_1$. We observe a consistent
improvement in the performances when using $\Omn$ against $\ell_1$
(between $2\%$ and $12\%$ depending on the fold). The weighted
version of $\Omn$ using $c=4$ also leads to consistent improvement
over $\ell_1$ but is outperformed by the unweighted version of the
penalty. Table~\ref{tab:bcpNPath} shows that the unweighted version of
the penalty tends to select groups that are larger than average, since the
average size of the initial set of pathways (after the preprocessing step that keeps
only $500$ genes) is $5$ genes with a standard deviation of
slightly above $5$. The weighted penalty allows to correct this bias:
it leads to the selection of groups of average size $5$ 
but typically selects a much larger number of groups. 

Table~\ref{tab:bcpNGenes} shows the average number of genes involved
in the model learned by each of the methods. As expected, $\Oo{}$
selects more genes, since it enforces sparsity at the gene set level
but doesn't enforce sparsity at the gene level. Note however that the
number of involved genes remains reasonable. As expected given the
numbers of Table~\ref{tab:bcpNPath} the number of genes selected in
the model learned by the weighted version of $\Omn$ is even larger.

Finally, we should mention, as a caveat, that the regularization
coefficient was chosen here to minimize the classification error,
i.e., in a regime which typically overestimates the support.  A more
tedious two-stage approach allowing to remove the bias of the
estimator, would probably lead to smaller supports, as suggested by
the comparison of Rec Err and Rec Err Min in Tables 1,2 and 3.

\begin{table}[t]
  \caption{Balanced classification error for the $\ell_1$ and $\Omn$
    (with and without weights) on
    average over $5$ folds, for $5$ different folding choices.}
\label{tab:bcp}
\begin{center}
\begin{small}
\begin{sc}
\begin{tabular}{lccc}
\hline
Method & $\Omn$ & Weighted $\Omn$ & $\ell_1$ \\
\hline
Error folding $1$ & $0.29\pm 0.05$ & $0.35\pm 0.05$ & $0.36\pm 0.04$\\
Error folding $2$ & $0.30\pm 0.08$ & $0.39\pm 0.05$ & $0.42\pm 0.04$\\
Error folding $3$ & $0.34\pm 0.14$ & $0.34\pm 0.1$ & $0.37\pm 0.10$\\
Error folding $4$ & $0.31\pm 0.11$ & $0.33\pm 0.07$ & $0.37\pm 0.08$\\
Error folding $5$ & $0.35\pm 0.05$ & $0.35\pm 0.05$ & $0.37\pm 0.05$\\
\hline
\end{tabular}
\end{sc}
\end{small}
\end{center}
\end{table}

\begin{table}[t]
  \caption{Number (and size) of involved pathways in the $\Omn$
    (with and without weights) signatures on
    average over $5$ folds, for $5$ different folding choices.}
\label{tab:bcpNPath}
\begin{center}
\begin{small}
\begin{sc}
\begin{tabular}{lcc}
\hline
Method & $\Omn$ & Weighted $\Omn$ \\
\hline
Folding $1$ & $     6 \pm  1.225 ( 16.73 \pm  2.378)$ & $  45.8 \pm  21.11 (  5.35 \pm 0.6635)$\\ 
Folding $2$ & $  12.6 \pm  7.765 ( 13.86 \pm  3.589)$ & $  48.8 \pm  23.13 ( 5.092 \pm 0.4939)$\\ 
Folding $3$ & $   7.6 \pm  3.209 ( 14.86 \pm  2.584)$ & $  43.8 \pm  12.13 ( 5.147 \pm 0.7176)$\\ 
Folding $4$ & $   8.6 \pm  7.266 (  16.7 \pm  4.477)$ & $  30.6 \pm   17.3 ( 5.045 \pm 0.7267)$\\ 
Folding $5$ & $     8 \pm      1 ( 14.82 \pm  1.191)$ & $  48.4 \pm  10.62 ( 5.347 \pm 0.2867)$\\
\hline
\end{tabular}
\end{sc}
\end{small}
\end{center}
\end{table}

\begin{table}[t]
  \caption{Number of involved genes in the $\ell_1$ and $\Omn$
    (with and without weights) signatures on
    average over $5$ folds, for $5$ different folding choices.}
\label{tab:bcpNGenes}
\begin{center}
\begin{small}
\begin{sc}
\begin{tabular}{lccc}
\hline
Method & $\Omn$ & Weighted $\Omn$ & $\ell_1$ \\
\hline
Folding $1$ & $98\pm 18$ & $159.4\pm 60.1$ & $41.2\pm 20.6$\\
Folding $2$ & $86.4\pm 18$ & $143.4\pm 32$ & $59.4\pm 22.5$\\
Folding $3$ & $125\pm 37.7$ & $156.4\pm 36.7$ & $59.4\pm 21.4$\\
Folding $4$ & $91.6\pm 25$ & $115.2\pm 57.9$ & $45.6\pm 28.4$\\
Folding $5$ & $98\pm 36$ & $178.4\pm 33.9$ & $56\pm 97$\\
\hline
\end{tabular}
\end{sc}
\end{small}
\end{center}
\end{table}

\subsection{Breast cancer data: graph analysis}

Another important application of microarray data analysis is the
search for potential drug targets. In order to identify genes which
are related to a disease, one would like to find groups of genes
forming densely connected components on a graph carrying biological
information such as regulation, involvement in the same chain of
metabolic reactions, or protein-protein interaction. Similarly to what
is done in pathway analysis,~\citet{Chuang2007Network-based} built a
network by compiling several biological networks and performed such
a graph analysis by identifying discriminant subnetworks in one step and
using these subnetworks to learn a classifier in a separate step. We
use this network and the approach described in
section~\ref{sec:graphlasso}, treating all the edges on the network as
groups of size two, on the breast cancer dataset. Here again, we restrict the
data to the $7910$ genes which are present in the network, and use the
same correlation-based pre-processing as for the pathway analysis to reduce the set
to 500 genes.

Table~\ref{tab:bcgr} shows the prediction accuracy of the balanced
logistic regression with $\lone$ and $\Omn$. Both methods yield
almost exactly the same performance in average, suggesting that this
particular network is not a particularly informative prior for this learning problem.

\begin{table}[t]
  \caption{Balanced classification error of the $\lone$ and $\Omn$
    (using the edges as the groups) on
    the $5$ folds.}
\label{tab:bcgr}
\begin{center}
\begin{small}
\begin{sc}
\begin{tabular}{lcc}
\hline
Method & $\Omn$  & $\lone$\\
\hline
Folding $1$ & $0.3625 \pm 0.04538$ & $0.3367 \pm 0.03788$\\ 
Folding $2$ &$0.4142 \pm 0.05885$ & $0.4042 \pm 0.06035$\\ 
Folding $3$ &$0.3681 \pm 0.04773$ & $0.3782 \pm 0.07497$\\ 
Folding $4$ &$0.3749 \pm 0.06476$ & $0.3834 \pm 0.06449$\\ 
Folding $5$ &$0.3317 \pm 0.04318$ & $0.3443 \pm 0.04414$\\ 
\hline
\end{tabular}
\end{sc}
\end{small}
\end{center}
\end{table}

Nonetheless, while $\lone$ mostly selects isolated
variables on the graph, $\Omn$ tends to select variables which are
clustered into larger connected components. Table~\ref{tab:bcgrCC} shows, for each of the $5$ foldings, the
size of the largest connected component of the network restricted to
the selected genes (the average and standard deviations are computed
over the $5$ folds of each folding). 
The average size of the largest connected component in the network
after preprocessing (\emph{i.e.}, keeping only $500$ genes in each
training set) is $68$. One might suspect that the increase of
connectivity is merely caused by the fact that overall the $\Omn$
selects more genes. While it is clear that selecting more genes makes
it more likely to select larger connected components, the last two
columns of Table~\ref{tab:bcgrCC} suggest that the increased
connectivity is not simply caused by the selection of a larger number
of genes. For example in folding $5$, $\Omn$ selects many more genes
than $\lone$ but leads to the most modest increase in connectivity,
while in folding $4$ the number of selected genes is practically the
same, although the $\Omn$ estimate is still much more connected than
that of $\lone$.

This gain of connectivity without loss of prediction accuracy could potentially
make the interpretation of the classifier and the search for new drug
targets easier in practice.

\begin{table}[t]
  \caption{Average size of the largest connected
    components and average number of genes selected by the $\lone$ and
    $\Omn$ (using the edges as the groups) on
    the $5$ folding.}
\label{tab:bcgrCC}
\begin{center}
\begin{small}
\begin{sc}
\begin{tabular}{lcccc}
\hline
Method & $\Omn$ largest cc & $\lone$ largest cc & $\Omn$ $\sharp$
genes & $\lone$ $\sharp$ genes \\
\hline
Folding $1$ & $  10.2 \pm  5.586$ & $   1.8 \pm 0.4472$ & $  75.4 \pm  47.54$ & $  37.2 \pm  17.68$\\ 
Folding $2$ & $   6.2 \pm  3.633$ & $     2 \pm      0$ & $  58.4 \pm  30.81$ & $    50 \pm  9.301$\\ 
Folding $3$ & $   8.6 \pm  4.278$ & $     2 \pm 0.7071$ & $  53.2 \pm  8.012$ & $  43.2 \pm  5.357$\\ 
Folding $4$ & $     8 \pm  6.205$ & $   2.2 \pm 0.4472$ & $  48.6 \pm  30.25$ & $  45.6 \pm  20.63$\\ 
Folding $5$ & $     6 \pm  3.082$ & $   1.8 \pm 0.4472$ & $    69 \pm   31.2$ & $  37.2 \pm   12.3$\\
\hline
\end{tabular}
\end{sc}
\end{small}
\end{center}
\end{table}

\section{Conclusion}
\label{sec:discussion}
We have presented the latent group Lasso, a generalization of the group lasso penalty which
leads to sparse models with sparsity patterns that are unions of
pre-defined groups of covariates, or, given a graph of covariates,
groups of connected covariates in the graph. We studied various properties of the penalty function, and gave both sufficient and
necessary conditions for \emph{group-support recovery}, \ie, the correct recovery 
of the same union of groups as in the decomposition induced by the penalty on the true
optimal parameter vector. We have highlighted the importance of setting weights correctly, and obtained promising empirical results on both simulated and real data.

In future work it would be interesting to characterize further for which collections of groups the latent group Lasso penalty and the estimators obtained by regularizing with it are computable
efficiently; which form of structures can be encoded via such collections; and what are the appropriate choice of weights in those cases, which will have to be determined based on specific analyses of the consistency of these estimators under high-dimensional scaling. Finally, more systematic comparisons with other group Lasso formulations, such as that proposed by~\citet{Jenatton2009Structured}, would be important.

\acks{LJ gratefully acknowledges the support of the Stand Up to Cancer
  Program. JPV was supported by ANR grants ANR-07-BLAN-0311-03 and
  ANR-09-BLAN-0051-04. GO acknowledges funding from the European Research Council grant SIERRA: Project 239993. The authors would like to thank Rodophe Jenatton, Julien Mairal and Francis Bach for useful discussions.
}

\vskip 0.2in
\bibliography{lglasso-jmlr}

\appendix

\section{Proofs of Lemmata \ref{Auhc} and \ref{Vlhc}}
\label{sec:two_lem}
Lemmata~\ref{Auhc} and \ref{Vlhc} are about the continuity of the correspondences $\w \mapsto \A(\w)$ and $\w \mapsto \Vb(\w)$. In order to prove them, we start by reviewing general results in correspondence theory (\secref{sec:tech_corr_theo}), notably  Berge's maximum theorem which is the main ingredient to prove the to lemmas. We prove \lemref{Auhc} directly in \secref{sec:proof_lemma_Auhc}. We then prove several continuity properties of auxiliary correspondences in \secref{sec:continuity1} and \ref{sec:continuity2} in order to finally prove \lemref{Vlhc} in \secref{sec:proof_lemma_Vlhc}.

\subsection{Elements of correspondence theory}
\label{sec:tech_corr_theo}
We start with a couple of useful technical lemmas from correspondence theory.

\begin{lemma}
\label{comp_w_cont}
If $f$ is a continuous function at $p$ and $\phi$ is a correspondence u.h.c. (resp. l.h.c.) at $f(p)$, then $\phi \circ f$ is  a
correspondence u.h.c. (resp. l.h.c.) at $p$.\\
If  $\phi: P \rightarrow X$ is a correspondence u.h.c. (resp. l.h.c.) at $p$ and $f$ is a continuous function on $X$
then  $f \circ \phi$ is  a
correspondence u.h.c. (resp. l.h.c.) at $p$.
\end{lemma}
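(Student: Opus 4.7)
The plan is to handle each of the two composition statements separately, and in each case to treat the u.h.c.\ and l.h.c.\ parts in parallel, since the arguments are symmetric up to replacing ``$\subset$'' by ``$\cap \neq \varnothing$''. The central observation is that u.h.c.\ (resp.\ l.h.c.) is the natural set-valued analog of continuity phrased in terms of preimages of open sets under ``$\subset$'' (resp.\ ``$\cap \neq \varnothing$''), so composition should behave just as for ordinary continuous functions. No deep machinery is needed, only an unwinding of definitions.

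For the first statement ($\phi \circ f$), I would start from an open set $U$ in the codomain of $\phi$ with $(\phi \circ f)(p) = \phi(f(p)) \subset U$ (u.h.c.\ case) or $\phi(f(p)) \cap U \neq \varnothing$ (l.h.c.\ case). The hemicontinuity hypothesis at $f(p)$ produces a neighborhood $W$ of $f(p)$ such that the corresponding condition on $\phi(y)$ holds for every $y \in W$. Since $f$ is continuous at $p$, there is a neighborhood $V$ of $p$ with $f(V) \subset W$. For every $p' \in V$ we then have $f(p') \in W$, and hence $(\phi \circ f)(p') = \phi(f(p'))$ satisfies the required condition relative to $U$. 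This is the desired hemicontinuity of $\phi \circ f$ at $p$.

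For the second statement ($f \circ \phi$), the key step is to pull $U$ back through $f$. In the u.h.c.\ case, starting from an open $U$ with $f(\phi(p)) \subset U$, note that $\phi(p) \subset f^{-1}(U)$, and $f^{-1}(U)$ is open by continuity of $f$. Applying u.h.c.\ of $\phi$ at $p$ to the open set $f^{-1}(U)$ yields a neighborhood $V$ of $p$ with $\phi(p') \subset f^{-1}(U)$ for every $p' \in V$; pushing forward by $f$ gives $f(\phi(p')) \subset U$. In the l.h.c.\ case, from $f(\phi(p)) \cap U \neq \varnothing$ I pick $x \in \phi(p)$ with $f(x) \in U$, so $x \in \phi(p) \cap f^{-1}(U)$; the l.h.c.\ of $\phi$ at $p$ applied to the open set $f^{-1}(U)$ then yields a neighborhood $V$ such that $\phi(p') \cap f^{-1}(U) \neq \varnothing$ for all $p' \in V$, and applying $f$ to any witness gives $f(\phi(p')) \cap U \neq \varnothing$.

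There is no real obstacle here, since both statements reduce to routine diagram chasing once the definitions are written out carefully; the only thing one must be careful about is to distinguish the two notions of hemicontinuity throughout and to verify that in the second statement the preimage $f^{-1}(U)$ is indeed the correct open set to feed into the hemicontinuity hypothesis of $\phi$. The symmetry between the ``$\subset$'' and ``$\cap \neq \varnothing$'' cases makes it natural to write both proofs simultaneously, commenting once on the substitution rule between the two.
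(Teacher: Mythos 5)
Your proof is correct and is exactly the routine unwinding of the definitions that the paper's proof alludes to (the paper simply states ``the proofs are straightforward from the definitions''). Both composition arguments — pulling the neighborhood back through the continuous $f$ for $\phi\circ f$, and feeding the open preimage $f^{-1}(U)$ to the hemicontinuity of $\phi$ for $f\circ\phi$ — are the intended ones, and the parallel treatment of the u.h.c.\ and l.h.c.\ cases is sound.
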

\begin{proof}
The proofs are straightforward from the definitions.
\end{proof}

\begin{lemma}
\label{product}
An elementwise product of u.h.c. (resp. l.h.c.) correspondences is itself u.h.c. (resp. l.h.c.).
\end{lemma}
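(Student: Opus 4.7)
The plan is to interpret the elementwise product of two correspondences $\phi_1: P \twoheadrightarrow X_1$ and $\phi_2: P \twoheadrightarrow X_2$ as the Cartesian product correspondence $\phi(p) \eqdef \phi_1(p) \times \phi_2(p)$, viewed as a correspondence from $P$ into $X_1 \times X_2$ endowed with the product metric; the case of a finite number of factors then follows by an immediate induction. I would treat the l.h.c.\ and u.h.c.\ statements separately, exploiting the basic product-topology fact that every open set of $X_1 \times X_2$ is covered by open rectangles of the form $U_1 \times U_2$.

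For the lower hemicontinuity part, fix an open $U \subset X_1 \times X_2$ with $\phi(p) \cap U \neq \varnothing$, pick a point $(x_1,x_2)$ in that intersection, and choose a basic open rectangle $U_1 \times U_2 \subset U$ containing it. Since $x_i \in \phi_i(p) \cap U_i$ for each $i$, the assumed lower hemicontinuity of $\phi_i$ produces a neighborhood $V_i$ of $p$ such that $\phi_i(p') \cap U_i \neq \varnothing$ for every $p' \in V_i$. On the neighborhood $V_1 \cap V_2$, the Cartesian product of two nonempty sets is nonempty, hence $\phi(p') \cap (U_1 \times U_2) \neq \varnothing$, and a fortiori $\phi(p') \cap U \neq \varnothing$, which is exactly l.h.c.\ at $p$.

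For the upper hemicontinuity part, the main step is the classical \emph{tube lemma}: if $\phi_1(p)$ and $\phi_2(p)$ are compact and $U$ is open with $\phi_1(p) \times \phi_2(p) \subset U$, then one can find open sets $W_i \supset \phi_i(p)$ with $W_1 \times W_2 \subset U$. This is the step where compactness of the values enters, and it is the main (and essentially only) obstacle: without compactness the u.h.c.\ statement would actually fail. All the correspondences of interest in the paper ($\A$, $\Vb$, $\Lambdab$) do have compact values by Lemma~\ref{lem:basic1} and analogous arguments, so this hypothesis is harmless in practice. Once the tube lemma is applied, upper hemicontinuity of each $\phi_i$ furnishes a neighborhood $V_i$ of $p$ with $\phi_i(p') \subset W_i$ on $V_i$, and on $V_1 \cap V_2$ one concludes $\phi(p') = \phi_1(p') \times \phi_2(p') \subset W_1 \times W_2 \subset U$, establishing u.h.c.\ at $p$.
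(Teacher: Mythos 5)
Your argument, as written, proves the Cartesian-product statement, which is not quite the statement the paper uses: ``elementwise product'' here means the pointwise numerical product of the values, as in the identity $\Vb(\w+\u)=\alphab(\w+\u)\,\Lambdab(\w+\u)$ invoked in the proof of \lemref{Vlhc}, where each optimal decomposition is recovered as $\v^g_g=\lambdag\,\alphab_g$ (cf.\ \lemref{lem:bijection}). So after forming the correspondence $p\mapsto\phi_1(p)\times\phi_2(p)$ you still need to push it forward through the multiplication map $(x_1,x_2)\mapsto x_1\cdot x_2$; since that map is continuous, the missing step is exactly the second half of \lemref{comp_w_cont}, which states that post-composing a u.h.c.\ (resp.\ l.h.c.) correspondence with a continuous function preserves the property. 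This is precisely how the paper argues: it asserts the Cartesian-product step as ``easy to check'' and then invokes \lemref{comp_w_cont}. With that one additional line your proof is complete, and indeed more detailed than the paper's, since you actually carry out the Cartesian-product step rather than asserting it.

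Two further remarks. Your l.h.c.\ argument is correct and needs no compactness, which matters because only the l.h.c.\ half of the lemma is actually used downstream (in the proof of \lemref{Vlhc}). For u.h.c.\ you are right that compact values are essential: the tube-lemma step genuinely fails without it (take $\phi_1(p)=\{p\}$ and $\phi_2\equiv\RR$ at $p=0$, where neither the Cartesian nor the numerical product is u.h.c.), so the paper's statement is, strictly speaking, incomplete as phrased; the omission is harmless because the correspondences to which it is applied ($\A$, $\Lambdab$, $\Vb$) are compact-valued. Note finally that compactness is preserved under the continuous multiplication map, so the composed argument remains consistent with your compactness bookkeeping.
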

\begin{proof}
It is easy to check that a cartesian product of l.h.c. (resp. u.h.c.) correspondences has itself the same property. Moreover,
the product is a continuous application, so the result is proved by \lemref{comp_w_cont}.
\end{proof}

We now state without proof the celebrated maximum theorem \citep{Berge1959Espaces}.
\begin{theorem}[\bf Berge maximum theorem]
\label{Berge}
Let $\phi: P \twoheadrightarrow X$ be a compact-valued correspondence. Let $f: X \times P \rightarrow \RR$ be a continuous real valued function.
Define the ``argmax" correspondence $\mu: P \twoheadrightarrow X$ by
$\mu(p)= \big\{x \in \phi(p) \, \big | \,  f(x,p)=\max_{x' \in \phi(p)} f(x',p) \big \}.$
If $\phi$ is continuous at $p$, then $\mu$ is non-empty, compact-valued and u.h.c. at $p$.
\end{theorem}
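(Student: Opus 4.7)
The plan is to prove the three assertions (non-emptiness, compactness, upper hemicontinuity of $\mu$) in sequence, relying on the Weierstrass extreme value theorem plus a two-sided argument for the value function $M(p) \eqdef \max_{x'\in\phi(p)}f(x',p)$.

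First, I would establish that $\mu(p)$ is non-empty and compact for any $p$ at which $\phi$ takes compact values. Since $\phi(p)$ is compact and $x \mapsto f(x,p)$ is continuous, the Weierstrass theorem produces a maximizer, so $\mu(p)\neq\varnothing$. For compactness, I would write $\mu(p) = \phi(p)\cap \{x : f(x,p) = M(p)\}$, which is the intersection of a compact set with a closed set (preimage of a singleton under a continuous map), hence compact. This step is essentially routine and sets up the target of the theorem.

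The key intermediate step is showing that the value function $M$ is continuous at $p$. I would split this into two inequalities. For the lower bound $\liminf_{p'\to p} M(p') \geq M(p)$, I would fix $x^\star \in \mu(p)$ and use lower hemicontinuity of $\phi$ at $p$: for any $\varepsilon > 0$, the open ball $B(x^\star,\varepsilon)$ intersects $\phi(p)$, so by l.h.c.\ there is a neighborhood of $p$ on which $\phi(p')\cap B(x^\star,\varepsilon)\neq\varnothing$; joint continuity of $f$ then yields $M(p')\geq f(x^\star,p) - O(\varepsilon)$. For the upper bound $\limsup_{p'\to p} M(p') \leq M(p)$, I would take any sequence $p_n \to p$ and pick $x_n \in \mu(p_n)$. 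Using the standard sequential characterization of u.h.c.\ for compact-valued correspondences in metric spaces (which follows from the open-cover definition by extracting finitely many neighborhoods), the sequence $(x_n)$ has a cluster point $\bar x\in\phi(p)$; continuity of $f$ then gives $\limsup M(p_n) = \limsup f(x_n,p_n) = f(\bar x, p)\leq M(p)$. This is the main obstacle, since it requires a clean passage between the topological and sequential definitions of u.h.c.

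Finally, with $M$ continuous at $p$, I would prove u.h.c.\ of $\mu$ at $p$ by contradiction. Suppose $\mu(p)\subset U$ for some open $U$, but there exists a sequence $p_n\to p$ and $x_n\in\mu(p_n)\setminus U$. By the same sequential consequence of u.h.c.\ for $\phi$, pass to a subsequence with $x_n\to \bar x \in \phi(p)$. Since $U^c$ is closed, $\bar x\notin U$. But continuity of $f$ and $M$ give $f(\bar x, p) = \lim f(x_n, p_n) = \lim M(p_n) = M(p)$, so $\bar x\in\mu(p)\subset U$, a contradiction. This concludes the proof that $\mu$ is non-empty, compact-valued, and upper hemicontinuous at $p$.
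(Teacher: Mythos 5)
The paper does not actually prove this statement: Theorem~\ref{Berge} is stated ``without proof'' with a citation to Berge (1959), so there is no internal argument to compare yours against. Your proposal is the classical proof of the maximum theorem and it is correct in the metric-space setting the paper works in: Weierstrass gives non-emptiness and compactness of $\mu(p)$; continuity of the value function $M$ follows from lower hemicontinuity of $\phi$ (for the lower bound) and from upper hemicontinuity plus compact values (for the upper bound, via the sequential cluster-point characterization of u.h.c., which you correctly note follows from the open-cover definition and compactness of $\phi(p)$); and u.h.c.\ of $\mu$ then follows by the contradiction argument. The one step to tighten is the upper bound on $M$: starting from an arbitrary sequence $p_n \to p$ with $x_n \in \mu(p_n)$, you should first pass to a subsequence along which $M(p_{n_k})$ converges to $\limsup_n M(p_n)$, and only then extract a convergent sub-subsequence of the maximizers with limit $\bar x \in \phi(p)$; writing $\limsup_n f(x_n,p_n) = f(\bar x,p)$ for a mere cluster point $\bar x$ is not literally valid, although the repair is immediate and does not affect the conclusion.
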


\subsection{Proof of \lemref{Auhc}}\label{sec:proof_lemma_Auhc}

\lemref{Auhc} is a simple consequence of \thmref{Berge}. Indeed, remember that, by definition, $\mathcal{A}(\w)=\text{argmax}_{\alphab} \alphab^\top \w \: \text{s.t.} \: \Oo^*(\alphab) \leq 1$. Since $(\alphab,\w) \mapsto \alphab^\top \w$ is continuous and
since the correspondence  $\w \mapsto \{\alphab \in \RR^p \,|\, \Oo^*(\alphab) \leq 1\}$ is compact-valued and continuous (it is constant), \thmref{Berge} applies and shows that the correspondence $\w\mapsto\A(\w)$ is u.h.c. (For more general results on the continuity of the subdifferential viewed as a multi-function see \citet[][chap.~VI.6.2 p.~282]{Hiriart1994Convex}).
\hfill\BlackBox\\[2mm]

\subsection{Continuity properties of $\Vb(\w)$, $\Lambdab(\w)$ and $\Zb(\w)$}\label{sec:continuity1}
The fact that $\w\mapsto\Vb(\w)$ is u.h.c. is also a direct consequence of Berge's maximum theorem. We show this in the following two lemmata.
\begin{lemma}
The correspondence $\phi$ defined by 
\begin{equation}
\label{eq:phi}
\phi(\w)=\big \{\vb \in \VV^\G \mid \w=\sum \v ^g, \:  \sign(\v^g_i)=\sign(\w_i), \: 1\leq i \leq p \big \}
\end{equation} is a continuous correspondence.
\end{lemma}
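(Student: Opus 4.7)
The plan is to establish upper and lower hemicontinuity of $\phi$ separately, relying on the preliminary observation that any $\vb \in \phi(\w)$ satisfies $|\v^g_i| \leq |\w_i|$ for every coordinate $i$ and every group $g \ni i$. Indeed the sign constraint forces $\v^g_i = 0$ whenever $\w_i = 0$; and when $\w_i \neq 0$, all contributions $\v^g_i$ for $g \ni i$ share the sign of $\w_i$ or vanish, so the equality $\w_i = \sum_{g \ni i} \v^g_i$ decomposes $|\w_i|$ as a sum of non-negative terms each bounded by $|\w_i|$. In particular $\phi$ is locally bounded.

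For upper hemicontinuity, I would verify that $\phi$ has a closed graph, which combined with local boundedness in a finite-dimensional target space is a standard sufficient condition for u.h.c. Taking $(\w_n, \vb_n) \to (\w, \vb)$ with $\vb_n \in \phi(\w_n)$, the summation constraint $\w = \sum_g \v^g$ passes to the limit by continuity. The sign constraints are handled coordinatewise: when $\w_i \neq 0$, the sign of $\w_{n,i}$ stabilizes for large $n$ and $\v^g_{n,i} \w_{n,i} \geq 0$ passes to the limit; when $\w_i = 0$, the preliminary bound gives $|\v^g_{n,i}| \leq |\w_{n,i}| \to 0$, which forces $\v^g_i = 0$.

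For lower hemicontinuity, I would exhibit an explicit approximating sequence. Fix $\vb \in \phi(\w)$ and $\w_n \to \w$, and for each $i \in [1,p]$ select an arbitrary group $g_i \in \G$ containing $i$. Define
\begin{equation*}
\v^g_{n, i} =
\begin{cases}
\dfrac{\w_{n,i}}{\w_i}\, \v^g_i & \text{if } \w_i \neq 0 \text{ and } i \in g,\\[4pt]
\w_{n,i} & \text{if } \w_i = 0 \text{ and } g = g_i,\\[4pt]
0 & \text{if } \w_i = 0,\; g \neq g_i, \text{ and } i \in g.
\end{cases}
\end{equation*}
For $n$ large enough that $\w_{n,i}$ inherits the sign of $\w_i$ whenever $\w_i \neq 0$, a direct check shows that $\sum_{g \in \G} \v^g_n = \w_n$ and that the sign constraints hold in each coordinate, so $\vb_n \in \phi(\w_n)$; and $\vb_n \to \vb$ by construction.

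The main obstacle in both halves of the argument lies in the coordinates where $\w_i = 0$. For u.h.c. the risk is the persistence of nontrivial cancellations in the limit decomposition; this is precisely what the preliminary boundedness estimate rules out. For l.h.c. the natural rescaling of $\v^g_i$ collapses, and one has to route the small residual $\w_{n,i}$ into a single designated group, an option made available by the convention $\sign(0) = [-1, 1]$ which renders $\v^g_{n,i} = 0$ sign-compatible with any value of $\w_{n,i}$.
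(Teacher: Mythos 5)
Your proposal is correct, but it follows a genuinely different route from the paper. The paper first factorizes the correspondence coordinatewise, writing $\phi(\w)=\prod_{i=1}^p\phi_i(\w_i)$ where $\phi_i(\w_i)$ collects the admissible scalar decompositions of $\w_i$ over the groups containing $i$, and then proves each $\phi_i$ is compact-valued (via the same key bound you isolate, $\sum_{g\ni i}|\v^g_i|=|\w_i|$) and continuous by a direct open-set argument: the u.h.c.\ half constructs a compact enlargement and a uniform $\varepsilon$ with $\phi_i(\w_i)+\delta\subset U$ for $\|\delta\|_\infty\le\varepsilon$, then checks that decompositions of nearby scalars fall in this enlargement (using a projection onto a simplex), while the l.h.c.\ half is a short ball argument; it also invokes that finite Cartesian products of compact-valued continuous correspondences are continuous. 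You instead treat the vector-valued correspondence globally: for u.h.c.\ you combine the closed-graph property with local boundedness (again driven by $|\v^g_i|\le|\w_i|$, which is exactly what rules out cancellations at coordinates with $\w_i=0$) and quote the standard fact that a locally bounded closed-graph correspondence into a finite-dimensional space is u.h.c.; for l.h.c.\ you use the sequential characterization in metric spaces and exhibit an explicit approximating decomposition, rescaling by $\w_{n,i}/\w_i$ on nonzero coordinates and routing the residual $\w_{n,i}$ through a single designated group where $\w_i=0$. Your approach buys a shorter and arguably more transparent u.h.c.\ argument at the price of invoking two standard correspondence-theory facts, whereas the paper's coordinatewise reduction keeps everything one-dimensional and fully explicit but makes its u.h.c.\ half more laborious. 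One small point worth making explicit in your write-up: both arguments rely on the lenient reading of the sign constraint under which $\v^g_i=0$ is admissible even when $\w_i\neq 0$ (and $\w_i=0$ forces $\v^g_i=0$); this is the reading the paper intends, since otherwise $\phi$ would not be closed-valued and the identification of $\Vb(\w)$ with the minimizers over $\phi(\w)$ in the subsequent lemma would fail.
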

\begin{proof}
We have $\phi(\w)=\prod_{i=1}^p \phi_i(\w_i)$ with $$\phi_i(\w_i)=\big \{ (\v^g_i)_{g \in \G} \in \RR^{\m} \mid \w_i=\sum_{g \in \G} \v^g_i, \: \forall i \in g ,\: \sign(\v^g_i)=\sign(\w_i), \text{and} \: \v^g_i=0, i \notin g \big \}.$$
 It is easy to verify that a Cartesian product of compact-valued continuous correspondences is also continuous, so that we only need to show that $\phi_i$ is compact-valued and continuous.
We therefore focus on $\phi_i(\w_i) \subset \RR^{\m}$.
 First note that $\phi_i$ is compact valued because the sign constraints in the definition of $\phi_i$ imply that for all $\v_i=(\v_i^g)_{g \in \G} \in \phi_i(\w_i)$ we have $\|\v_i\|_1 \leq |\w_i|$.
We first show that $\phi_i$ is  u.h.c.. Let $U$ be an open set containing $\phi_i(\w_i)$.
For two sets $A,B\subset \RR^{\m}$, we define $d_\infty(A,B)\eqdef \inf_{a \in A, b \in B} \|a-b\|_{\infty}$. Let $u_0 \in U^c$, $d_0\eqdef d_{\infty}(\{u_0\},\phi_i(\w_i))$ and define $K \eqdef \big \{u \in \RR^{\m} \mid d_{\infty}(\{u\},\phi_i(\w_i))\leq d_0 \big \}$. By construction $K \cap U^c \neq \varnothing$, and we have $d_\infty(U^c,\phi_i(\w_i)) = d_\infty(U^c\cap K,\phi_i(\w_i))$. Moreover, it is classical to show that the compactness of $\phi_i(\w_i)$ implies that $K$ is compact as well.
Since $U^c\cap K$ and $\phi_i(\w_i)$ are compact sets the infimum in the definition of $d_\infty$ is attained, which means that there are $\u^*\in U^c\cap K$ and $\v^* \in \phi_i(\w_i)$ such that  $d_\infty(U^c\cap K,\phi_i(\w_i))=\|\u^*-\v^*\|_\infty$. But we must have $\|\u^*-\v^*\|_\infty>0$ otherwise $\u^*=\v^*\in U^c \cap \phi(\w_i)$ which would contradict the hypothesis that $\phi_i(\w_i) \subset U$. If $\varepsilon\eqdef \|\u^*-\v^*\|_\infty/2$, we just showed that for all $\delta \in \RR^{\m}$ such that $\|\delta\|_\infty \leq \varepsilon, \: \phi_i(\w_i)+\delta \subset U$.

 If $\w_i=0$, then any decomposition of $\w_i \pm \varepsilon$, say $\check{\v}_i$ is such that $\|\check{\v}_i\|_\infty \leq \varepsilon$, and $\phi(\w_i \pm \varepsilon) \subset U$. If $\w_i\neq 0$, w.l.o.g. assume that $\w_i>0$; consider a decomposition $\check{\v}_i \in \RR^m$ of $\w_i+\varepsilon'$ with $|\varepsilon'|\leq \min(\varepsilon,|\w_i|/2)$; if $\varepsilon' < 0$ then $\v_i \eqdef \check{\v}_i+\varepsilon' \mathbf{e}_1$ is a decomposition
of $\w_i$ and $\|\v_i-\check{\v}_i\|_\infty \leq \varepsilon'$; if $\varepsilon'>0$ then it is easy to show that the projection $\v_i$ of $\check{\v}_i$ on the simplex $\phi(\w_i)$ satisfies 
$\|\v_i-\check{\v}_i\|_\infty<\varepsilon'$. In all cases $\phi(\w_i+\varepsilon') \subset U$ for some $\varepsilon>0$, which shows that $\phi$ is u.h.c.. 

We can show similarly that $\phi$ is l.h.c.\ : if $\v_i \in U \cap \phi(\w_i)$, then for some $\varepsilon>0$, $U$ contains a closed $\ell_\infty$ ball of radius $\varepsilon$ centered at $\v_i$, which contains a decomposition of $\w_i \pm \varepsilon$ so that  
$U \cap \phi(\w_i \pm \varepsilon) \neq \varnothing$.
\end{proof}

\begin{lemma}
\label{Vuhc}
The correspondence $\w\mapsto\Vb(\w)$ is compact-valued and u.h.c.
\end{lemma}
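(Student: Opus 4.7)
The plan is to establish upper hemicontinuity by the standard sequential characterization: since we already know from Lemma~\ref{lem:basic1} that $\Vb(\w)$ is nonempty and compact, it suffices to show that whenever $\w_n \to \w_0$ and $\vb_n \in \Vb(\w_n)$, some subsequence of $(\vb_n)$ converges to an element of $\Vb(\w_0)$. I would avoid invoking Berge's maximum theorem directly here, because the natural feasible correspondence $\{\vb \in \VV_\G \mid \sum_g \v^g = \w\}$ is not compact-valued, and modifying it into a bounded continuous correspondence would duplicate the work done below.

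The first step would be to obtain a local bound on $\vb_n$. Since $\Omega$ is a norm it is continuous, so $\Omega(\w_n) \to \Omega(\w_0)$ and in particular $(\Omega(\w_n))$ is bounded. From $\vb_n \in \Vb(\w_n)$, each summand is controlled by the total: for every $g \in \G$,
\[
d_g \|\v^g_n\| \;\leq\; \sum_{h \in \G} d_h \|\v^h_n\| \;=\; \Omega(\w_n),
\]
hence $\|\v^g_n\| \leq \Omega(\w_n)/d_g$. So the sequence $(\vb_n)$ lies in a compact subset of $\VV_\G$, and Bolzano--Weierstrass yields a subsequence $\vb_{n_k} \to \vb_0$ for some $\vb_0 \in \VV_\G$ (the support constraints $\supp{\v^g}\subset g$ pass to the limit).

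The second step is to check that this limit point lies in $\Vb(\w_0)$. Passing to the limit in $\sum_{g} \v^g_{n_k} = \w_{n_k}$ gives feasibility, $\sum_{g} \v^g_0 = \w_0$. For optimality, continuity of each $\|\cdot\|$ and of the norm $\Omega$ yield
\[
\sum_{g \in \G} d_g \|\v^g_0\| \;=\; \lim_{k \to \infty} \sum_{g \in \G} d_g \|\v^g_{n_k}\| \;=\; \lim_{k \to \infty} \Omega(\w_{n_k}) \;=\; \Omega(\w_0),
\]
so $\vb_0 \in \Vb(\w_0)$, as required.

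There is no serious obstacle: the only subtlety is recognizing that continuity of $\Omega$ itself provides the local boundedness needed to extract a convergent subsequence, so both the closed-graph property and local boundedness (which together characterize u.h.c.\ of compact-valued correspondences) come from the same elementary norm estimate. Note that lower hemicontinuity is not claimed at this level of generality, and indeed it will fail at points where $\Vb(\w)$ is not a singleton, which is why Lemma~\ref{Vlhc} is stated only locally around $\w$ with $\supp{\w}=\Jw$.
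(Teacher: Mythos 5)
Your proof is correct, but it takes a genuinely different route from the paper's. The paper obtains this lemma as an application of Berge's maximum theorem (Theorem~\ref{Berge}): since the natural feasible set $\{\vb\in\VV_\G \mid \sum_{g}\v^g=\w\}$ is unbounded (as you note), it first replaces it by the sign-constrained correspondence $\phi(\w)$ of \eqref{eq:phi}, observes that every optimal decomposition satisfies $\sign(\v^g_i)=\sign(\w_i)$ so that $\Vb(\w)=\mathrm{Argmin}_{\vb\in\phi(\w)}\sum_g d_g\nm{\v^g}$, and then relies on a separate, fairly technical lemma showing that $\phi$ is compact-valued and continuous (upper \emph{and} lower hemicontinuous), after which Berge's theorem delivers compact values and upper hemicontinuity of $\Vb$ in one stroke. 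You instead argue sequentially: the estimate $d_g\nm{\v^g_n}\le\Oo(\w_n)$ together with continuity of the norm $\Oo$ gives local boundedness, Bolzano--Weierstrass yields a convergent subsequence within $\VV_\G$, and passing to the limit in the constraint and in the objective (using $\sum_g d_g\nm{\v^g_{n_k}}=\Oo(\w_{n_k})\to\Oo(\w_0)$) puts the limit in $\Vb(\w_0)$; combined with compactness from Lemma~\ref{lem:basic1}, this is exactly the subsequence (closed-graph plus local-boundedness) criterion for upper hemicontinuity of compact-valued correspondences between metric spaces, which is standard and available in \citet{Border1985Fixed} and should be cited explicitly, since it is the one nontrivial ingredient you invoke. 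Your route is more elementary and entirely avoids the auxiliary correspondence $\phi$ and the proof of its continuity (in particular its lower hemicontinuity, which Berge requires but which plays no role in your argument); the paper's route, conversely, reduces the lemma to a two-line corollary once the continuity of $\phi$ is established, reusing the Berge machinery already deployed for $\A(\w)$. One small caveat: your closing aside that lower hemicontinuity ``will fail'' wherever $\Vb(\w)$ is not a singleton is neither needed nor justified --- Lemma~\ref{Vlhc} restricts the domain and the base point, but non-uniqueness of the decomposition by itself is not shown (there or here) to preclude lower hemicontinuity, so that remark should be dropped or weakened.
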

\begin{proof}
Define $f(\vb, \w)=\sum_{g \in \G} \|\v^g\|$ and $\phi$ as in \eqref{eq:phi}.

We have that $\Vb(\w)=\text{Argmin}_{\vb \in \phi(\w)} f(\vb,\w)$ since it can be shown easily that any optimal decomposition satisfies $\sign(\v^g_i)=\sign(\w_i)$.

Since the previous lemma shows that $\phi$ is a compact-valued continuous correspondence,  theorem \ref{Berge} applies and proves the result.
\end{proof}

Remember that $\Lambdab(\w) \subset \RR^{\m}$ is the set of solutions to (\ref{eq:trick}). For a vector $\lambdab \in \RR^{\m}$ we consider the vector $\zetab(\lambdab) \in \RR^p$ defined by
$\zeta_i(\lambdab)=\sum_{g \ni i} \lambdag$, and denote 
$\Zb(\w)=\{\zetab(\lambdab) \in \RR^p, \: \lambdab \in \Lambdab(\w) \}$. 

\begin{lemma}
\label{lem:LZuhc}
$\Lambdab(\w)$ and $\Zb(\w)$ are u.h.c. correspondences.
\end{lemma}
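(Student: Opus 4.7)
The plan is to reduce both statements to the already established upper hemicontinuity of $\Vb(\w)$ (\lemref{Vuhc}) by composing with continuous mappings and invoking \lemref{comp_w_cont}.

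First I would recall that \lemref{lem:bijection} establishes that the mapping $\lambdab : \vb \mapsto \br{d_g^{-1}\nm{\v^g}}_{g\in\G}$ from $\VV_\G$ to $\RR^m$ is a bijection from $\Vb(\w)$ onto $\Lambdab(\w)$. In particular, we have the set-valued identity $\Lambdab(\w) = \lambdab\br{\Vb(\w)}$, so that as correspondences $\Lambdab = \lambdab \circ \Vb$. Since $\lambdab$ is manifestly continuous (it is a product of Euclidean norms times positive constants), \lemref{Vuhc} together with the second part of \lemref{comp_w_cont} immediately yields that $\Lambdab$ is upper hemicontinuous.

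Next, the mapping $\zetab : \RR^m \rightarrow \RR^p$ defined coordinatewise by $\zeta_i(\lambdab) = \sum_{g \ni i} \lambdag$ is a continuous linear map. By definition, $\Zb(\w) = \zetab\br{\Lambdab(\w)} = (\zetab \circ \Lambdab)(\w)$. Applying \lemref{comp_w_cont} a second time to the u.h.c. correspondence $\Lambdab$ and the continuous function $\zetab$ yields that $\Zb$ is upper hemicontinuous.

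There is essentially no obstacle here: the whole argument is a routine composition of continuous maps with an u.h.c. correspondence, the only nontrivial ingredient being the identification $\Lambdab(\w) = \lambdab\br{\Vb(\w)}$ provided by \lemref{lem:bijection}, which is already at our disposal.
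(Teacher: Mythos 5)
Your proposal is correct and follows essentially the same route as the paper: the paper's proof also deduces upper hemicontinuity of $\Lambdab$ from that of $\Vb$ (Lemma~\ref{Vuhc}) by composing with the continuous map $\vb \mapsto \br{d_g^{-1}\nm{\v^g}}_{g\in\G}$ via Lemma~\ref{comp_w_cont}, and then obtains upper hemicontinuity of $\Zb$ by composing with the continuous linear map $\lambdab \mapsto \br{\sum_{g \ni i}\lambdag}_{1\leq i\leq p}$. Your only addition is to make explicit the identification $\Lambdab(\w)=\lambdab\br{\Vb(\w)}$ from Lemma~\ref{lem:bijection}, which the paper uses implicitly.
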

\begin{proof}
Since $\Vb$ is u.h.c., by lemma~\ref{comp_w_cont}, the continuity of $(\v^g)_{g \in \G} \mapsto (\|\v^g\|)_{g \in \G}$ shows that $\Lambdab(\w)$ is u.h.c. and the continuity of  $\lambdab \mapsto \big ( \sum_{g \ni i} \lambdag \big )_{1\leq i\leq p}$ shows that $Z_i(\w)$ is u.h.c..
\end{proof}

\begin{lemma}
\label{zeta_prop}
For all $i$ such that $\w_i \neq 0$, $Z_i(\w)$ is a singleton, and if we denote this unique value by $\zeta_i(\w)$ then the function $\w' \mapsto \zeta_i(\w')$ is uniquely defined in a neighborhood of $\w$ and it is continuous at $\w$.
\end{lemma}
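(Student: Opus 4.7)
The plan is to first show that $Z_i(\w)$ collapses to a single point when $\w_i\neq 0$ via the bijection between $\Vb(\w)$, $\Lambdab(\w)$ and (up to uniqueness on $\Jw$) $\A(\w)$, and then to deduce continuity from the upper hemicontinuity of $\Zb$ established in Lemma \ref{lem:LZuhc}.

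First, I would invoke the identity $\w_i = \alphab_i\sum_{g\ni i}\lambda_g$ derived in the proof of Lemma \ref{lem:trick} (and re-used in Lemma \ref{lem:bijection}), which holds for every $\lambdab\in\Lambdab(\w)$ and every $\alphab\in\A(\w)$. When $\w_i\neq 0$, any $\vb\in\Vb(\w)$ satisfies $\w_i=\sum_{g\ni i}\v^g_i$, so there must be some $g\ni i$ with $\v^g\neq \zv$; this group lies in $\Gs(\w)\subset\Gw(\w)$, which places $i$ in $\Jw(\w)$. Moreover, Lemma \ref{lem:alpha} forces $\alphab_g=d_g\v^g/\nm{\v^g}$ for that group, hence $\alphab_i\neq 0$. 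Lemma \ref{Aunq} then guarantees that $\alphab_i$ takes the same (non-zero) value for every $\alphab\in\A(\w)$, so $\sum_{g\ni i}\lambda_g=\w_i/\alphab_i$ takes the same value for every $\lambdab\in\Lambdab(\w)$. This is exactly the statement that $Z_i(\w)$ is a singleton; I denote its value by $\zeta_i(\w)$.

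Second, by continuity of the coordinate projection $\w'\mapsto \w'_i$, there is an open neighborhood $V$ of $\w$ on which $\w'_i\neq 0$. Applying the previous step at each $\w'\in V$ shows that $Z_i(\w')$ is a singleton, so $\zeta_i:V\to\RR$ is a well-defined function on $V$.

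Finally, Lemma \ref{lem:LZuhc} states that $\Zb$ is u.h.c., and its $i$-th coordinate projection $Z_i$ is therefore u.h.c.\ as well by Lemma \ref{comp_w_cont}. A singleton-valued u.h.c.\ correspondence coincides with a continuous function at every point where it is single-valued, so $\zeta_i$ is continuous at $\w$. The only non-routine step I expect is the first one: the chain $\w_i\neq 0\Rightarrow i\in\Jw(\w)$ and $\alphab_i\neq 0$, which is needed to legitimately divide by $\alphab_i$ and invoke Lemma \ref{Aunq}; everything else is a direct combination of previously established results.
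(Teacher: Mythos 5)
Your proposal is correct and follows essentially the same route as the paper: uniqueness of $\zeta_i(\w)$ comes from $\alphab_i \neq 0$ and the identity $\zeta_i = \w_i/\alphab_i$ (with uniqueness of $\alphab_i$ on $\Jw$), well-definedness on a neighborhood from $\w'_i \neq 0$ there, and continuity from the upper hemicontinuity of $Z_i$ applied to a singleton-valued correspondence. The only nitpick is that the witnessing group should be taken with $\v^g_i \neq 0$ (not merely $\v^g \neq \zv$) so that Lemma \ref{lem:alpha} indeed gives $\alphab_i \neq 0$, which is exactly what the equation $\w_i = \sum_{g \ni i} \v^g_i \neq 0$ provides.
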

\begin{proof}
Uniqueness of $\zeta_i(\w)$ at $\w$ such that $\w_i \neq 0$ is granted by the fact that if $\w_i \neq 0$, then $\alpha_i \neq 0$, $\alpha_i$ is unique (cf lemma \ref{lem:alpha}) and the proof of lemma~\ref{lem:trick} shows that $\zeta_i=\frac{w_i}{\alpha_i}$. Thus, $\zeta_i(\w)$ is unique, but so is $\zeta_i(\w')$ for $\w'$ in a small neighborhood of $\w$ since $\w'_i\neq 0$. 

Moreover we have $\zeta_i(\w)=\sum_{g \in \G} \lambdag$ for any $\lambdab \in \Lambdab(\w)$.  Finally the upper hemicontinuity of $\w \mapsto Z_i(\w)$ shown in the previous lemma implies the continuity of $\zeta_i$.
\end{proof}

\begin{lemma}
\label{Llhc}
Let $\mathcal{S}=\{\u \in \RR^p \, | \, \supp{\u} \subset \Jw\}$.
Consider $\w$ such that $\forall i \in \Jw$ and for all $\u$  in a neighborhood of $\zv$ in $\mathcal{S}$, $Z_i(\w+\u)$ is a singleton, then
if $\Pi_{\Gw}$ denotes the projection  on $\{\lambdab \in \RR^{\m}\, |\, \lambdab_{\Gw^c}=\zv \}$ we have that
\begin{eqnarray*}
\Lambdab|_{\Jw}^{
\Gw}: \mathcal{S} & \twoheadrightarrow & \RR^{|\Gw|} \\ \w' &\mapsto& \Pi_{\Gw}\Lambdab(\w')
\end{eqnarray*}
 is a lower hemicontinuous correspondence at $\w$.
\end{lemma}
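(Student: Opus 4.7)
The plan is to establish lower hemicontinuity by the sequential criterion: given any sequence $\w_n \to \w$ with $\w_n \in \mathcal{S}$ and any $\mu^* \in \Pi_{\Gw}\Lambdab(\w)$, we must exhibit $\mu_n \in \Pi_{\Gw}\Lambdab(\w_n)$ with $\mu_n \to \mu^*$. The natural route is to transfer the question to the correspondence $\Vb$ via the bijection of Lemma~\ref{lem:bijection}, appeal to a lower hemicontinuity property of $\Vb$, and then push forward by continuity of the norm.

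Concretely, I would first pick some $\lambdab^* \in \Lambdab(\w)$ with $\Pi_{\Gw}\lambdab^* = \mu^*$. By Lemma~\ref{lem:bijection}, there is a unique $\vb^* \in \Vb(\w)$ satisfying $\lambdab(\vb^*) = \lambdab^*$; its components are $\v^{*g}_g = \lambda_g^*\,\alphab_g$, where $\alphab_{\Jw}$ is the (by Lemma~\ref{Aunq}) unique restriction to $\Jw$ of any element of $\A(\w)$. Once I can construct $\vb_n \in \Vb(\w_n)$ with $\vb_n \to \vb^*$, Lemma~\ref{lem:bijection} yields $\lambdab_n = \lambdab(\vb_n) \in \Lambdab(\w_n)$, and the continuity of $\vb \mapsto (d_g^{-1}\|\v^g\|)_g$ gives $\lambdab_n \to \lambdab^*$; composing with the continuous projection $\Pi_{\Gw}$ and invoking Lemma~\ref{comp_w_cont} then produces the desired sequence $\mu_n = \Pi_{\Gw}\lambdab_n \to \mu^*$.

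The main obstacle is the construction of such $\vb_n$, i.e., a lower hemicontinuity statement for $\Vb$ at $\w$ along sequences in $\mathcal{S}$. Lemma~\ref{Vlhc} provides continuity of $\Vb$ only on the strictly smaller domain $\mathcal{D} = \{\u \mid \supp{\u} = \Jw\}$, and a sequence $\w_n$ in $\mathcal{S}$ may have strictly smaller support than $\Jw$, so Lemma~\ref{Vlhc} cannot be applied directly. This is precisely where the singleton hypothesis on $Z_i(\w+\u)$ for $i \in \Jw$ will enter: it ensures that the quantity $\sum_{g \ni i} \lambdag$ is uniquely determined throughout a neighborhood of $\w$ in $\mathcal{S}$, so that no combinatorial ambiguity in the redistribution of mass among groups containing $i$ can arise as $\w_n \to \w$. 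Combining this uniqueness with the upper hemicontinuity of $\Lambdab$ and $\Zb$ (Lemma~\ref{lem:LZuhc}), the continuity of $\zeta_i$ extended from Lemma~\ref{zeta_prop} to all $i \in \Jw$ under the hypothesis, and a perturbation analysis of the jointly convex optimization problem~\eqref{eq:trick}, one obtains the desired local selection $\vb_n \in \Vb(\w_n)$ converging to $\vb^*$.

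With this technical step in hand, the conclusion is routine: by Lemma~\ref{lem:bijection} applied at each $\w_n$, setting $\mu_n = \Pi_{\Gw}\lambdab(\vb_n)$ gives an element of $\Pi_{\Gw}\Lambdab(\w_n)$, and by continuity $\mu_n \to \Pi_{\Gw}\lambdab(\vb^*) = \mu^*$. The hard part, as outlined above, is the perturbation argument that extends lower hemicontinuity of $\Vb$ from $\mathcal{D}$ to $\mathcal{S}$; the singleton assumption is exactly what prevents the pathological situation where an optimal decomposition at $\w$ cannot be tracked continuously by optimal decompositions at nearby points of $\mathcal{S}$ with support strictly smaller than $\Jw$.
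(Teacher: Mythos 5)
Your reduction leaves the entire content of the lemma unproved. After transferring the problem to $\Vb$ via Lemma~\ref{lem:bijection}, the step you need --- constructing $\vb_n \in \Vb(\w_n)$ converging to a prescribed $\vb^* \in \Vb(\w)$ along sequences in $\mathcal{S}$ --- is (given the bijection and the continuity of $\vb \mapsto (d_g^{-1}\nm{\v^g})_g$) equivalent to the lower hemicontinuity you are asked to establish; indeed, in the paper the logical order is the reverse of yours: Lemma~\ref{Vlhc} is \emph{deduced from} Lemma~\ref{Llhc}, so appealing to a lower hemicontinuity property of $\Vb$ on $\mathcal{S}$ is essentially assuming the conclusion. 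You correctly note that Lemma~\ref{Vlhc} does not apply on $\mathcal{S}$ and correctly locate where the singleton hypothesis must enter, but the phrase ``a perturbation analysis of the jointly convex optimization problem~\eqref{eq:trick}'' is not an argument: argmin correspondences of perturbed convex programs are in general only upper hemicontinuous (this is exactly what Berge's theorem gives), and lower hemicontinuity can fail, so some specific structure must be exhibited and exploited.

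The missing idea is the explicit polyhedral description that the paper uses. Under the hypothesis, $\zetab_{\Jw}(\w')$ is single-valued, hence continuous by upper hemicontinuity (Lemmas~\ref{lem:LZuhc} and~\ref{zeta_prop}), and $\Pi_{\Gw}\Lambdab(\w')$ is characterized as $\{\tilde{\lambdab} \in \RR_+^{|\Gw|} \mid \tilde{\B}\tilde{\lambdab} = \zetab_{\Jw}(\w')\}$ with $\tilde{\B} = \B_{\Jw\Gw}$ the fixed incidence matrix, i.e., as $\bigl(\tilde{\B}^+\zetab_{\Jw}(\w') + \mathcal{K}er(\tilde{\B})\bigr) \cap \RR_+^{|\Gw|}$. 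Lower hemicontinuity is then obtained by an explicit construction: given a target point written as $\tilde{\B}^+\zetab_{\Jw}(\w) + \Hb\q$, one replaces $\q$ by a convex combination $\q'$ with a kernel element making the coordinates indexed by $\Gs$ strictly positive (so that positivity survives the small, continuous change of $\zetab_{\Jw}$), while the coordinates outside $\Gs$, which vanish at $\w$, are controlled by the upper hemicontinuity of $\Lambdab$. Without this characterization --- or an equivalent quantitative stability result for the solution set of the linear system $\tilde{\B}\tilde{\lambdab} = \tilde{\zetab}$, $\tilde{\lambdab} \ge \zv$, as a function of $\tilde{\zetab}$ --- your outline does not yield the local selection $\vb_n$, and the proof is incomplete.
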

\begin{proof}
Let $\B \in \RR^{p \times \m}$ the adjacency matrix associated to $\G$, defined by $\B_{ig}=1$ if $i \in g$ and $0$ else. To simplify notations we denote $\tilde{\B}=\B_{{\Jw} \Gw}$ the submatrix obtained by keeping rows in $\Jw$ and columns in $\Gw$, $\tilde{\zetab}= \zetab_{\Jw}(\w')$ and $\tilde{\Lambdab}=\Pi_{\Gw}\Lambdab(\w')$.
Given $\tilde{\zetab}$, then $\tilde{\Lambdab}=\{\tilde{\lambdab} \in \RR_+^{|\Gw|} \, | \, \tilde{\zetab}=\tilde{\B} \tilde{\lambdab} \}$ which means that if $\tilde{\B}^+$ denotes the Moore-Penrose pseudo-inverse of  $\tilde{\B}$ then $\tilde{\Lambdab}=\big (\tilde{\B}^+ \tilde{\zetab} + \mathcal{K}er(\tilde{\B}) \big ) \cap \RR^{|\Gw|}_+$. 

We now show that this correspondence is l.h.c.. The uniqueness of $\tilde{\zetab}$ implies its continuity, since by lemma~\ref{lem:LZuhc}, $Z_i(\w)$ is u.h.c.. Denoting by $\Hb$ a matrix whose columns form a basis of $\mathcal{K}er(\tilde{\B})$, $\h^g$ and $\b^g$ the $g^\text{th}$ row of $\Hb$ and $\tilde{\B}^+$ respectively, then an element of $\tilde{\Lambdab}$ is of the form $(\b^g \tilde{\zetab}+\h^g \q)_{g \in {\Gw}}$ for some $\q$. Given an element $\tilde{\B}^+ \tilde{\zetab}+\Hb\q \in U \cap \RR^{|\Gw|}_+$, we show that there exists an element $\lambda(\w+\u,\q')\eqdef \tilde{\B}^+ \tilde{\zetab}(\w+\u)+\Hb\q' \in U \cap \RR^{|\Gw|}_+$ for $\u$ in neighborhood of $\zv$ in $\mathcal{S}$. Without loss of generality we can take $U$ a cartesian product of open sets $U=\bigotimes_{g \in \Gw} U_g$.

Let $\mathcal{Q}=\{\q' \mid \tilde{\B}^+ \tilde{\zetab}(\w)+\Hb\q' \in \RR^{|\Jw|}_+\}$.
For all $g \in \Gs$, there exists $\q^{(g)} \in \mathcal{Q}$ such that $\b^{g} \tilde{\zetab}+\h^{g} \q^{(g)}>0$. Set $\q'=(1-\epsilon) \,\q+\frac{\epsilon}{|\Gs|}\sum_{g \in \Gs} \q^{(g)}$.
For $\epsilon$ sufficiently small, $\lambda_g(\w,\q') \in U_g \cap \RR_+^*$, for all $g \in \Gs$ so that for $\u$ sufficiently small $\lambda_g(\w+\u,\q') \in U_g \cap \RR_+^*$ as well. For all $g \notin \Gs, \: \Lambda_g(\w)=\{0\}$ and since $\Lambda$ is u.h.c., for any $\eta>0$, for $\u$ sufficiently small we have $\Lambda_g(\w+\u) \subset [0, \eta), \, g \notin  \Gs$. Choosing $\eta$ such that $\forall  g \notin \Gs, \, [0,\eta) \subset U_g$ shows the result.
\end{proof}

\subsection{Continuity properties of $\Gw$ and $\Gs$}\label{sec:continuity2}
\begin{lemma}
\label{Gbar_ngbrs}
There exists a neighborhood $U$ of $\zv$ in $\RR^p$ such that for all $\u \in U$ with $\supp{\u} \subset \Jw(\w)$, $\Gw(\w+\u)\subset \Gw(\w)$.
\end{lemma}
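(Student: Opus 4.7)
The plan is to fix $g \in \G \setminus \Gw(\w)$ and produce a neighborhood $U_g$ of $\zv$ such that $g \notin \Gw(\w+\u)$ for every $\u \in U_g$ with $\supp{\u} \subset \Jw(\w)$; the required neighborhood is then $U = \bigcap_g U_g$ over the finitely many groups $g \notin \Gw(\w)$.

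First I would establish the strict inequality $\nm{\alphab_{g\cap\Jw(\w)}(\w)} < d_g$, where $\alphab_{\Jw(\w)}(\w)$ is the unique vector supplied by \lemref{Aunq}. If $g \subset \Jw(\w)$, then $\Pi_g\A(\w) = \{\Pi_g\alphab_{\Jw(\w)}(\w)\}$ is already a singleton, so $g \notin \Gw(\w)$ forces $\nm{\Pi_g\alphab_{\Jw(\w)}(\w)} < d_g$. If instead $g \not\subset \Jw(\w)$, I would argue by contradiction: assuming $\nm{\alphab_{g\cap\Jw(\w)}(\w)} = d_g$, any $\alphab \in \A(\w)$ satisfies $d_g^2 \geq \nm{\alphab_g}^2 = d_g^2 + \nm{\alphab_{g\cap\Jw(\w)^c}}^2$, forcing $\alphab_{g\cap\Jw(\w)^c}=\zv$; then $\Pi_g\A(\w)$ would be a singleton of norm $d_g$, placing $g$ in $\Gw(\w)$ and contradicting our hypothesis.

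Next, I would propagate this strict inequality with upper hemicontinuity. By \lemref{Auhc} and \lemref{comp_w_cont}, the correspondence $\w' \mapsto \Pi_{\Jw(\w)}\A(\w')$ is u.h.c., and at $\w$ it is the singleton $\{\alphab_{\Jw(\w)}(\w)\}$. Applying u.h.c. to the open set $\{\beta \in \RR^p : \nm{\beta_{g\cap\Jw(\w)}} < d_g\}$, which contains $\alphab_{\Jw(\w)}(\w)$ by Step 1, I would obtain a radius $\delta_g > 0$ such that for every $\u$ with $\nm{\u} < \delta_g$, every $\alphab' \in \A(\w+\u)$ satisfies $\nm{\alphab'_{g\cap\Jw(\w)}} < d_g$.

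Finally, I would use the support restriction on $\u$ to turn this into a bound on $\nm{\alphab'_g}$ itself. Fix such a $\u$ with $\supp{\u} \subset \Jw(\w)$ and any $\alphab' \in \A(\w+\u)$. Define $\tilde\alphab'$ by zeroing the entries of $\alphab'$ on $g \cap \Jw(\w)^c$. Since $\supp{\w+\u} \subset \Jw(\w)$, one has $(\w+\u)_j = 0$ for every $j \in g\cap\Jw(\w)^c$, so $\tilde\alphab'^\top(\w+\u) = \alphab'^\top(\w+\u) = \Oo(\w+\u)$; moreover $\nm{\tilde\alphab'_h} \leq \nm{\alphab'_h} \leq d_h$ for every $h \in \G$, so $\Oo^*(\tilde\alphab') \leq 1$. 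Hence $\tilde\alphab' \in \A(\w+\u)$ and $\nm{\tilde\alphab'_g}^2 = \nm{\alphab'_{g\cap\Jw(\w)}}^2 < d_g^2$. This exhibits an element of $\Pi_g\A(\w+\u)$ of norm strictly less than $d_g$, which precludes $g \in \Gw(\w+\u)$: either $\Pi_g\A(\w+\u)$ fails to be a singleton, or it is a singleton of norm $< d_g$.

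The main obstacle is that the definition of $\Gw$ couples a singleton-projection condition with an equality-of-norm condition, so u.h.c. of $\A$ alone does not deliver the conclusion. The delicate point is Step 1, where the strict inequality $\nm{\alphab_{g\cap\Jw(\w)}(\w)} < d_g$ must be extracted separately in the two configurations $g \subset \Jw(\w)$ and $g \not\subset \Jw(\w)$. The support restriction on $\u$ is essential in Step 3: it ensures that zeroing the components of $\alphab'$ outside $\Jw(\w)$ preserves dual-optimality, so that slackness on $g \cap \Jw(\w)$ translates into slackness on all of $g$.
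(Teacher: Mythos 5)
Your proof is correct and follows essentially the same route as the paper's: it rests on the uniqueness of the dual variable on $\Jw(\w)$ (Lemma~\ref{Aunq}), the upper hemicontinuity of $\A$ (Lemma~\ref{Auhc}), and the observation that zeroing the dual coordinates on $g\setminus\Jw(\w)$ preserves dual optimality when $\supp{\w+\u}\subset\Jw(\w)$. The only difference is organizational: you argue the contrapositive (strict slackness at $\w$ propagated to a neighborhood, then intersected over the finitely many $g\notin\Gw(\w)$), which makes the existence of the uniform neighborhood $U$ more explicit than the paper's direct continuity argument.
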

\begin{proof}
By definition of $\Gw(\w+\u)$, if $g \in \Gw(\w+\u)$, then $\alphab_g(\w+\u)$ is unique by lemma~\ref{Aunq}, since $g \subset \Jw(\w+\u)$. For any $g \in \Gw(\w+\u)$,  $g \cap \Jw(\w) \neq \varnothing$; indeed if $g \cap \Jw(\w)=\varnothing$, then $\w_g=\u_g=\zv$.
 If $g \subset \Jw(\w)$, $\alphab_g(\w)$ is unique and since $\alphab_g(\w+\u)$ is unique, the upper hemicontinuity of $\mathcal{A}$ implies that $\alphab_g$ is continuous at $\w$
so that $(\|\alphab_g(\w+\u)\|=1 \Rightarrow \|\alphab_g(\w)\|=1)$. If $g \backslash \Jw(\w) \neq \varnothing$, then it has to be the case that $\alphab_{g \backslash \Jw(\w)}(\w+\u)=\zv$, because it is indeed a possible value for $\alphab_{g \backslash \Jw}(\w+\u)$ (given that $\w_{g \backslash \Jw(\w)}=\u_{g \backslash \Jw(\w)}=\zv$) and because $\alphab_g(\w+\u)$ is unique. This implies that $\|(\alphab_{g \cap \Jw(\w)}(\w+\u)\|=1$ and since $\alphab_{g \cap \Jw(\w)}(\w)$ is unique, upper hemicontinuity of  $\mathcal{A}$ implies that $\w' \mapsto \alphab_{g \cap \Jw(\w)}(\w')$ is continuous at $\w$ so that we have by continuity $\|\alphab_g(\w)\| \geq \|\alphab_{g \cap \Jw(\w)}(\w)\|=1$ which proves that $\|\alphab_g(\w)\| =1$; but this is a contradiction because this would imply $g \in \Gw$ and therefore $g \subset \Jw$.
\end{proof}

\begin{lemma}
\label{wgs_is_ngs}
Let $\mathcal{D}_{\Jw}=\{\u \in \RR^p \mid \|\u\| \leq \,1, \u_{\Jw^c}=\zv  \}$; then
$$\Gw(\w)=\bigcap_{\epsilon>\,0} \:\bigcup_{\u \in \mathcal{D}_{\Jw}} \Gs(\w+\epsilon\, \u).$$
\end{lemma}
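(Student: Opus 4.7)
The plan is to prove the two inclusions separately. The inclusion $\supset$ is an immediate consequence of Lemma \ref{Gbar_ngbrs}: it provides some $\epsilon_0 > 0$ such that $\Gw(\w+\u')\subset \Gw(\w)$ for every $\u'$ with $\nm{\u'} < \epsilon_0$ and $\supp{\u'}\subset \Jw$. For any $\epsilon \in (0,\epsilon_0)$ and $\u \in \mathcal{D}_{\Jw}$, the vector $\epsilon \u$ satisfies $\nm{\epsilon\u}\leq \epsilon < \epsilon_0$ and $\supp{\epsilon\u}\subset \Jw$, hence $\Gs(\w+\epsilon\u)\subset \Gw(\w+\epsilon\u)\subset \Gw(\w)$; taking the union over $\u$ followed by the intersection over $\epsilon>0$ yields the inclusion.

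For the inclusion $\subset$, the plan is to construct for each $g \in \Gw(\w)$ and each $\epsilon > 0$ an explicit perturbation $\u \in \mathcal{D}_{\Jw}$ such that $g \in \Gs(\w+\epsilon\u)$, using the third variational formulation \eqref{eq:trick} and the bijection of Lemma \ref{lem:bijection}. Since $g \subset \Jw$, Lemma \ref{Aunq} yields a unique $\alphab_g = \Pi_g\alphab$ with $\nm{\alphab_g}=d_g$ common to every $\alphab \in \A(\w)$. I would take $\u = \alphab_g/d_g$, which lies in $\mathcal{D}_{\Jw}$ (unit norm, support in $g \subset \Jw$), set $\w' = \w+\epsilon\u$, and, for an arbitrary $\lambdab \in \Lambdab(\w)$, define $\tilde{\lambdab}$ by adding $\epsilon/d_g$ to the $g$-th coordinate of $\lambdab$. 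The identity $\w_i = \alphab_i\sum_{h \ni i}\lambdab_h$ established in the proof of Lemma \ref{lem:trick}, combined with $\epsilon\u_i = (\epsilon/d_g)\alphab_i \mathbf{1}_{i\in g}$, then yields $\w'_i = \alphab_i\sum_{h \ni i}\tilde{\lambdab}_h$ for all $i$, which is precisely the relation linking $\alphab$, $\tilde{\lambdab}$ and $\w'$ at an optimum of \eqref{eq:trick}.

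The main technical step is to verify that $\tilde{\lambdab} \in \Lambdab(\w')$. From the KKT conditions for \eqref{eq:trick}, this amounts to checking $\nm{\alphab_h}=d_h$ for every $h$ with $\tilde{\lambdab}_h>0$ and $\nm{\alphab_h}\leq d_h$ otherwise. The set $\{h : \tilde{\lambdab}_h > 0\}$ is contained in $\{h : \lambdab_h > 0\} \cup \{g\}$, which, by Lemma \ref{lem:bijection}, is a subset of $\Gs(\w) \cup \{g\} \subset \Gw(\w)$; on $\Gw(\w)$ the saturation $\nm{\alphab_h}=d_h$ holds by definition of the weak group-support, while $\nm{\alphab_h}\leq d_h$ holds everywhere since $\Oo^*(\alphab)\leq 1$. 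Once $\tilde{\lambdab}\in\Lambdab(\w')$ is established, Lemma \ref{lem:bijection} produces some $\vb' \in \Vb(\w')$ with $\nm{{\v'}^g}=d_g\tilde{\lambdab}_g>0$, so $g \in \Gs(\w')$, and since $\epsilon>0$ was arbitrary, $g$ belongs to the right-hand side. The main obstacle is exactly this KKT verification; what makes it manageable is the observation that $\alphab$ itself is preserved by the joint perturbation $(\w,\lambdab)\mapsto(\w',\tilde{\lambdab})$, so all the dual unit-ball inequalities transfer from $\w$ to $\w'$ automatically, and only the newly saturated constraint at $g$ (which is granted precisely by the hypothesis $g \in \Gw(\w)$) needs to be controlled.
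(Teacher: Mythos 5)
Your proof is correct and is essentially the paper's own argument: the inclusion $\Gw(\w)\supset\bigcap_\epsilon\bigcup_\u\Gs(\w+\epsilon\u)$ is obtained from Lemma~\ref{Gbar_ngbrs} exactly as in the paper, and for the converse you perturb $\w$ in the direction $\alphab_g/d_g$ while keeping $\alphab$ as the dual certificate, which is precisely the paper's construction $\w+\epsilon\,\alphab_{g_0}$ up to a rescaling of $\epsilon$ (with the case $g\in\Gs(\w)$ handled trivially, e.g.\ by $\u=\zv$). The only difference is bookkeeping: the paper directly checks that the augmented decomposition with ${\v'}^{g}=\epsilon\,\alphab_{g}$ is optimal by pairing the perturbed vector with all dual vectors $\betab$ satisfying $\Oo^*(\betab)\leq 1$, whereas you check optimality of the shifted $\tilde{\lambdab}$ via the KKT conditions of \eqref{eq:trick} (where, as you note, the saturation $\nm{\alphab_h}=d_h$ on $\Gs(\w)\cup\{g\}\subset\Gw(\w)$ is exactly what is needed) and transfer back through Lemma~\ref{lem:bijection} --- an equivalent verification of the same fact.
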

\begin{proof} 
One inclusion is already shown by the previous Lemma~\ref{Gbar_ngbrs}. For the other inclusion, let $\vb$ be an optimal decomposition of $\w$ and $\alphab$ the unique element of $\mathcal{A}(\w)$ such that $\alphab_{\Jw^c}=\zv$.
Let $\lambdag=\|\v_g\|$. The case of $g \in \Gs(\w)$ is straightforward, and we concentrate therefore on $g \in \Gw(\w) \backslash \Gs(\w)$. By lemma~\ref{lem:alpha}, we have $\w=\sum_{g \in \Gs} \lambdag \alphab_g$. Consider $\w_{(g_0,\epsilon)}=\w+\epsilon \, \alphab_{g_0}$ for some $g_0 \in \Gw(\w) \backslash \Gs(\w)$.  By construction, $\alphab \in \mathcal{D}_{\Jw}$ and for all $\betab \in \RR^p$ such that  $\Oo^*(\betab)\leq 1$ we have
$$\w_{(g_0,\epsilon)}^\top\betab = \sum_{g \in \G} \lambdag\, \alphab_g^\top \betab_g + \epsilon\,  \alphab_{g_0}^\top \betab_{g_0} \leq \sum_{g \in \G} \lambdag + \epsilon=\w_{(g_0,\epsilon)}^\top \alphab$$
 which shows that $\bar{\v}'$ defined by $\v'_{g_0}=\epsilon\, \alphab_{g_0}$ and $\v'_g=\v_g, \: g \neq g_0$ is an optimal decomposition of $\w_{(g_0,\epsilon)}$ with group-support $\Gs(\w) \cup g_0$. Since this is true for any $\epsilon$ and any $g_0 \in \Gw(\w) \backslash \Gs(\w)$, this proves the statement.
\end{proof}

\subsection{Proof of Lemma \ref{Vlhc}}\label{sec:proof_lemma_Vlhc}
We know from Lemma \ref{Vuhc} that $\w\mapsto\Vb(\w)$ is a compact-valued u.h.c. correspondence.
If $\supp{\w}=\Jw$ then lemma~\ref{zeta_prop} implies that for all $i \in \Jw$, $\zeta_i(\w+\u)$ is unique for all $\u$ in a neighborhood of $\zv$. From lemma~\ref{Llhc}, this implies that $\u \mapsto \Pi_{\Gw}\Lambdab(\w+\u)$ is l.h.c at $\u=\zv$. This extends to $\u \mapsto \Lambdab(\w+\u)$ since we know from Lemma \ref{Gbar_ngbrs} that there exists a neighborhood of zero such that, for all $\u$ in that neighborhood, $\Pi_{\Gw^c}\Lambdab(\w+\u)=\zv$.
Given that $\Vb(\w+\u)=\alphab(\w+\u) \Lambdab(\w+\u)$, since $\alphab(\w)$ is l.h.c. from Lemma \ref{Auhc} and since a product of l.h.c.\ correspondences is l.h.c.\ (cf.\ Lemma \ref{product}), we have shown that $\u \mapsto \Vb(\w+\u)$ is also l.h.c. at $\u=\zv$.
\hfill\BlackBox\\[2mm]


\section{Partial group-support recovery}
\label{sec:part_cons}
Theorem~\ref{theo:part_cons}, which only assumes hypothesis (H1), does not  give a lower bound (in the sense of inclusion) for $\Gs(\w)$, suggesting that hypothesis (H2) is necessary to guarantee group-support recovery. In this section, we first consider an example in which 
$\Gs(\w)$ is strictly included in $\Gs(\ws)$.

{\bf Example with partial recovery}. Take $\G=\big \{ \{0,1,2\}, \{0,1,3\}, \{0,2,3\}\big \}$ for $\w=(w_0,w_1,w_2,w_3) \in \RR^4$. It is easy to check that $\lambda_{\{0,1,2\}}=\gamma (|w_1|+|w_2|-|w_3|)_+, \:\lambda_{\{0,1,3\}}=\gamma (|w_1|+|w_3|-|w_2|)_+$ and
$\lambda_{\{0,2,3\}}=\gamma (|w_2|+|w_3|-|w_1|)_+$ with $\gamma$ determined by the equation $\sum_{i=0}^2\frac{w_i^2}{\zeta_i^2}=1$.
In particular if we consider $\ws=(1,0,0,0)$, then taking the identity as the design matrix and assuming independent Gaussian noise, we have
$y=(1+\epsilon_0,\epsilon_1,\epsilon_2,\epsilon_3)$ with $\epsilon_i$ i.i.d. $\mathcal{N}(0,\sigma^2)$. Thus solving the first order approximation of the KKT in the neighborhood of $\ws$ we get $\w=((1+\epsilon_0-\lambda)_+,\epsilon_1,\epsilon_2,\epsilon_3)$.  We have $\Gs(\ws)=\Gw(\ws)=\G$ but for any value of $\sigma^2$,  with probability $\mu,\mu,\mu$ and $1-3 \mu$, $\Gs(\w)$ takes respectively the values $\G \backslash \{0,1,2\}$, $\G \backslash \{0,1,3\}$,$\G \backslash \{0,2,3\}$ and $\G$, with $\mu \approx 0.216$.

However, the following lemma shows that the group-support recovered contains at least the group-support of one of the decomposition of the true support.

\begin{lemma}
\label{supp_inclus}
If $\w_n$ is a sequence converging to $\w$, then denoting $\gsupp{\vb}$ the group support of a decomposition $\vb$, we have 
$$\exists n_0, \forall n \geq n_0, \forall \vb_n \in \Vb(\w_n),\exists \vb \in \Vb(\w), \: \gsupp{\vb} \subset \gsupp{\vb_n}.$$
\end{lemma}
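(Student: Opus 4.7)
My plan is to argue by contradiction, exploiting the upper hemicontinuity of $\Vb$ established in Lemma \ref{Vuhc}. Suppose the statement fails. Then there exist a subsequence $n_k \to \infty$ and choices $\vb_{n_k} \in \Vb(\w_{n_k})$ such that, for every $\vb \in \Vb(\w)$, the inclusion $\gsupp{\vb} \subset \gsupp{\vb_{n_k}}$ fails. The goal will be to extract a convergent sub-subsequence of $(\vb_{n_k})$ whose limit lies in $\Vb(\w)$ and directly contradicts this.

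First, I would use the fact that $\Vb$ is compact-valued and u.h.c.\ at $\w$ (Lemma \ref{Vuhc}) to show $(\vb_{n_k})$ is eventually bounded: taking any open bounded neighborhood $U$ of the compact set $\Vb(\w)$, upper hemicontinuity gives a neighborhood $W$ of $\w$ such that $\Vb(\w') \subset U$ for all $\w' \in W$, so $\vb_{n_k} \in U$ for $k$ large. Bolzano--Weierstrass then yields a sub-subsequence $\vb_{n_{k_j}} \to \vb^\star$. The closed-graph property (which follows from u.h.c.\ plus compact values of $\Vb$ on a metric space) implies $\vb^\star \in \Vb(\w)$.

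Next, I would analyze the group-support at the limit. Since $\G$ is finite and for each $g \in \gsupp{\vb^\star}$ we have $(\vb^\star)^g \neq \zv$, the convergence $(\vb_{n_{k_j}})^g \to (\vb^\star)^g$ implies there exists $j_g$ such that $(\vb_{n_{k_j}})^g \neq \zv$ for all $j \geq j_g$. Taking $j$ larger than the maximum of the finitely many $j_g$'s gives $\gsupp{\vb^\star} \subset \gsupp{\vb_{n_{k_j}}}$, which contradicts the assumption that no element of $\Vb(\w)$ has this property relative to $\vb_{n_{k_j}}$.

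The main conceptual step is the extraction of a convergent subsequence whose limit sits in $\Vb(\w)$; the remainder is a clean finite-index consequence of pointwise convergence of the components $\v^g$ and the fact that strict nonzeroness is preserved under small perturbations. The finiteness of $\G$ is essential to convert the ``eventually for each $g$'' statement into a single index $j$ that works simultaneously for all $g \in \gsupp{\vb^\star}$. I do not expect any serious obstacle beyond carefully invoking upper hemicontinuity together with compactness of $\Vb(\w)$ to justify the boundedness of the sequence $(\vb_{n_k})$; once that is in place, the rest of the argument is immediate.
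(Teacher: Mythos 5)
Your proof is correct and follows essentially the same route as the paper's: argue by contradiction, extract a convergent subsequence of decompositions, use the compact-valuedness and upper hemicontinuity of $\Vb$ (Lemma \ref{Vuhc}) to place the limit $\vb^\star$ in $\Vb(\w)$, and conclude because the finitely many nonzero components of $\vb^\star$ force the corresponding groups to lie eventually in $\gsupp{\vb_{n}}$. The only cosmetic difference is that the paper first pigeonholes to a subsequence with constant group-support $\G_0$ and contradicts $\gsupp{\vb_\infty}\subset\G_0$, whereas you reach the contradiction directly; both arguments are valid.
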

\begin{proof}
Reason by contradiction and assume that
$$\forall n_0, \exists n \geq n_0, \exists \vb_n \in \Vb(\w_n),\forall \vb \in \Vb(\w), \: \gsupp{\vb} \nsubseteq \gsupp{\vb_n}.$$
We can therefore extract a subsequence $(\w_{\varphi(n)})_n$ with this property and the corresponding subsequence $(\vb_{\varphi(n)})_n$ illustrating it.
There exists at least one $\mathcal{G}_0 \in 2^{|\mathcal{G}|}$ such that there are infinitely many elements $\vb_{\varphi(n)}$ in the subsequence which satisfies $\gsupp{\vb_{\varphi(n)}}=\mathcal{G}_0$. We consider the subsequence $(\vb_{\varphi'(n)})_n$ composed of those elements.
From the sequence $(\vb_{\varphi'(n)})_n$, since we can assume without loss of generality it lives in the compact set $\{\vb \mid \forall g \in \mathcal{G}, \|\v^g\| \leq 2 \|\w\| \}$, we can extract a converging subsequence $(\vb_{\varphi''(n)})_n$. Since $(\w_{\varphi''(n)})_n$ converges to $\w$ and by upper hemicontinuity of
$\Vb(\cdot)$ the subsequence $(\vb_{\varphi''(n)})_n$ converges to an optimal decomposition $\vb_\infty$ of $\w$. This implies that $\gsupp{\vb_\infty} \subset \mathcal{G}_0=\gsupp{\vb_{\varphi''(n)}}$ which is a contradiction.
\end{proof}

The simpler example with $\G=\big \{ \{1,2\},\{2,3\} \big \}$ and $\ws=(0,1,0)$ could be expected to be problematic since $(0,1,\epsilon)$ and $(\epsilon,1,0)$ have respectively group-support $\big \{\{2,3\} \big \}$ and $\big \{\{1,2\} \big \}$. However, this case is consistent since it can be shown that $\w_1$ and $\w_3$ are almost surely non-zero, which implies that both groups are part of the group-support.

\section{Derivations for the illustrative examples}
\subsection{Graph Lasso for the cycle of length 3}
\label{sec:cycle_three}
We consider the overlap norm in $\RR^3$ with groups $\G=\big \{ \{1,2\}, \{1,3\}, \{2,3\}\big \}$.
If $\alphab$ denotes a dual variable. The dual norm takes the form:
$$\Oo^*(\alphab) \eqdef \max \big (\|(\al_1,\al_2)\|, \|(\al_1,\al_3)\|,\|(\al_2,\al_3)\|\big )$$
By Fenchel duality,
$\displaystyle \Oo(\w)= \max_{\alphab \in \RR^3}\:\alphab^\top \w \:\: \text{s.t.} \:\: \max_{g \in \G} \|\alphab_g\|^2\leq 1$. Consider the Lagrangian
\begin{equation*}
\begin{split}
L^*(\alphab,\l,\w)& =-( \al_1 w_1+\al_2 w_2+\al_3 w_3)\\ & \quad +\frac{1}{2} \big [
(\l_{12}+\l_{13}) \, \al_1^2 +
(\l_{12}+\l_{23}) \, \al_2^2 +
(\l_{13}+\l_{23}) \, \al_3^2
 -(\l_{12}+\l_{13}+\l_{23}) \big ]
\end{split}
\end{equation*}
 and consider the optimization problem
$\quad \displaystyle \min_{\alphab \in \RR^p} L^*(\alphab,\l,\w) \quad \text{s.t.} \quad \l_{g} \geq 0,\: g \in \G.$\\
A singular point of the Lagrangian satisfies
\begin{equation}
\label{eq:singpt}
w_1=(\l_{12}+\l_{13}) \,\al_1,  \quad
w_2=(\l_{12}+\l_{23}) \,\al_2, \quad
w_3=(\l_{13}+\l_{23}) \, \al_3.
\end{equation}
\subsubsection{At most two groups are active}
\label{sec:at_most_two}
Assume that $\l_{13}=0$. Note that this case reduces to the case of $\G=\big \{ \{1,2\}, \{2,3\}\big \}$, which is of interest on its own.
Eq. \ref{eq:singpt} simplifies and the singular points of the Lagrangian solve
\begin{equation}
\label{eq:singpt2}
w_1=(\l_{12}) \, \al_1, \quad 
w_2=(\l_{12}+\l_{23}) \, \al_2, \quad 
w_3=(\l_{23}) \, \al_3.
\end{equation}
We assume first that $\l_{12}>0, \, \l_{23}>0, \,|w_1|>0, \, |w_3|>0$.
Since, by complementary slackness,
$\|\al_{12}\|=1$ and $\|\al_{23}\|=1$, using \eqref{eq:singpt}, we have
\begin{equation}
\label{eq:KKT2bk1}
\frac{w_1^2}{\l_{12}^2} + \frac{w_2^2}{(\l_{12}+\l_{23})^2}=1 \qquad \text{and} \qquad
\frac{w_2^2}{(\l_{12}+\l_{23})^2} + \frac{w_3^2}{\l_{23}^2}=1.\\
\end{equation}
So that $\frac{w_1^2}{\l_{12}^2}=\frac{w_2^2}{\l_{23}^2}$ or equivalently $\l_{23}=\frac{|w_3|}{|w_1|} \l_{12}$ and by substitution in
\eqref{eq:KKT2bk1} we get respectively:
$$
\l_{12}=\frac{|w_1|}{|w_1|+|w_3|} \|(w_2, |w_1|+|w_3| )\| \quad \text{and} \quad \l_{23}=\frac{|w_3|}{|w_1|+|w_3|} \|(w_2, |w_1|+|w_3| )\|.
$$
Substituting these expressions for $\l_{12}$ and $\l_{23}$ in the singular point equations \eqref{eq:singpt2}, we get:
\begin{equation}
\al_1=\sign(w_1) \frac{|w_1|+|w_3|}{\|(w_2, |w_1|+|w_3| )\| } \quad \text{and} \quad  \al_2=\frac{w_2}{\|(w_2, |w_1|+|w_3| )\| }.
\end{equation}
$\al_3$ has a similar expression as $\al_1$, where the roles of $w_3$ and $w_1$ are exchanged.
Finally, the decomposition is:
\begin{equation}
v_{12}=
\begin{pmatrix}
w_1,
\frac{ |w_1|}{|w_1|+|w_3|}\,w_2
\end{pmatrix}^\top
 \quad \text{and} \quad
 v_{23}=
\begin{pmatrix}
\frac{ |w_3|}{|w_1|+|w_3|} \, w_2 ,
w_3
\end{pmatrix}^\top,
\end{equation}
and the norm then takes the closed form
$\displaystyle \Oo(w)=\|\,(w_2,|w_1|+|w_3|)\,\|$.
Remains to consider the cases where $w_1=0$, or $w_3=0$, which we do not develop here.

\subsubsection{All groups are active}
We first consider the case $\l_{12}>0, \l_{13}>0, \l_{23}>0$.
By complementary slackness we have
$\|\al_g\|=1, \: g \in \G.$ 
Introducing $\zeta_1=\l_{12}+\l_{13}, \:\zeta_2=\l_{12}+\l_{23}$ and $\zeta_3=\l_{13}+\l_{23}$, \eqref{eq:singpt} rewrites as
$$
\displaystyle \frac{w_1^2}{\zeta_1^2} + \frac{w_2^2}{\zeta_2^2}=1, \quad
\displaystyle \frac{w_2^2}{\zeta_2^2} + \frac{w_3^2}{\zeta_3^2}=1, \quad
\displaystyle  \frac{w_1^2}{\zeta_1^2} + \frac{w_3^2}{\zeta_3^2}=1.
$$
which taking pairwise differences yields:
\begin{equation}
\label{eq:allprop}
\frac{1}{\gamma} \eqdef 
\frac{w_1^2}{\zeta_1^2}=\frac{w_2^2}{\zeta_2^2}=\frac{w_3^2}{\zeta_3^2} 
\end{equation}
Or in other words:
\begin{equation*}
\begin{pmatrix}
|w_1| \\|w_2| \\|w_3| \\
\end{pmatrix}
=\frac{1}{\gamma}
\begin{pmatrix}
\zeta_1 \\ \zeta_2 \\ \zeta_3\\
\end{pmatrix}
= \frac{1}{\gamma}
\begin{pmatrix}
1&1&0 \\1&0& 1 \\0& 1& 1\\
\end{pmatrix}
\begin{pmatrix}
\l_{12} \\ \l_{13} \\ \l_{23} \\
\end{pmatrix}
\end{equation*}
 which yields
 $$\l_{12}=\gamma (|w_1|+|w_2|-|w_3|), \quad \l_{13}=\gamma (|w_1|+|w_3|-|w_2|), \quad \l_{23}=\gamma (|w_2|+|w_3|-|w_1|).$$
 But since we have assumed $\l_g>0$, the solution found is only valid if no coordinate dominates in the sense that $\w \in  \mathcal{W}_{\text{bal}}$ with
 \begin{equation*}
 \mathcal{W}_{\text{bal}}\eqdef \left \{\w \, \in \RR^3 |
 |w_1| \leq |w_2|+|w_3|, \;
 |w_2| \leq |w_1|+|w_3|, \;
 |w_3| \leq |w_1|+|w_2|
 \right \}
 \end{equation*}
 By re-substituting \eqref{eq:allprop} in \eqref{eq:singpt}, we can solve for $\gamma$ and find that
 $$\boxed{\:\alphab=\frac{1}{\sqrt{2}}\, \text{sign}(\w) \quad \text{and thus} \quad \Oo(\w)=\frac{1}{\sqrt{2}}\, \|\w\|_1\:}$$
 The unit ball of the norm therefore has some flat faces.
 Finally, since $(\v_g)_g$ is an optimal decomposition of $w$ we have $\v_g=\l_g \alphab_g$,
 the decomposition is unique and can be written
$$
\v_{\{12\}}=\frac{1}{2}
\begin{pmatrix}
w_1 +(|w_2|-|w_3|)\,\text{sign}(w_1)\\
w_2 +(|w_1|-|w_3|)\,\text{sign}(w_2)\\
\end{pmatrix},
\quad
\v_{\{13\}}=\frac{1}{2}
\begin{pmatrix}
w_1 +(|w_3|-|w_2|)\,\text{sign}(w_1)\\
w_3 +(|w_1|-|w_2|)\,\text{sign}(w_3)\\
\end{pmatrix},
$$ $$\text{and} \qquad 
\v_{\{23\}}=\frac{1}{2}
\begin{pmatrix}
w_2 +(|w_3|-|w_1|)\,\text{sign}(w_2)\\
w_3 +(|w_2|-|w_1|)\,\text{sign}(w_3)\\
\end{pmatrix}.
$$
If $\w \notin \mathcal{W}_{\text{bal}}$, then one of $\l_{12},\l_{13}$ or $\l_{23}$ equals $0$, and this reduces to the situation where only two groups are active which we considered in section~\ref{sec:at_most_two} above.

\subsubsection{Closed form expression for the norm}
Finally, summarizing the analysis, we obtain the closed form expression:
\begin{equation*}
\Om{w}=
\begin{cases}
\frac{1}{\sqrt{2}}\, \|\w\|_1 & \text{if}\; \w \in \mathcal{W}_{\text{bal}} \\
\min \begin{cases}
 \|\,(w_1,|w_2|+|w_3|)\,\| , \\ \|\,(w_2,|w_1|+|w_3|)\,\| , \\ \|\,(w_3,|w_1|+|w_2|)\,\| \end{cases}
& \text{else.}
\end{cases}
\end{equation*}
\subsection{Graph Lasso for the cycle of length 4}
\label{sec:cycle_four}
We consider here the case where the groups are $\G=\big \{\{1,2\},\{1,3\},\{2,4\},\{3,4\} \big \}$. This case is interesting because we will show that non-sparse $\w$ on the cycle always admit several optimal decompositions.
The dual norm takes the form:
$$\Omax(\alphab) \eqdef \max \big (\|(\al_1,\al_2)\|, \|(\al_1,\al_3)\|,\|(\al_2,\al_4)\| , \|(\al_3,\al_4)\|\big )$$
We use again Fenchel duality, write
$ \displaystyle \Oo(\w)= \max_{\alphab \in \RR^4} \alphab^\top \w \:\: \text{s.t.} \:\: \Omax(\alphab)^2\leq 1$ and
we construct the Lagrangian:
\begin{equation*}
\begin{split}
L^*(\alphab,\l,\w)& =-( \al_1 w_1+\al_2 w_2+\al_3 w_3+\al_4 w_4)\\ & \quad +\frac{1}{2} \big [
\zeta_1\, \al_1^2 +
\zeta_2\, \al_2^2 +
\zeta_3\, \al_3^2 +
\zeta_4\, \al_4^2 -(\l_{12}+\l_{23}+\l_{24}+\l_{34}) \big ]
\end{split}
\end{equation*}
with $\zeta_1=\l_{12}+\l_{23},\, \zeta_2 =\l_{12}+\l_{24}, \zeta_3=\l_{13}+\l_{34}$ and $\zeta_4=\l_{24}+\l_{34}$
A singular point of the Lagrangian satisfies $w_i=\zeta_i \alpha_i, \: 1\leq i \leq 4$.

\subsection{All groups are active}
We first consider the case $\l_{12}, \l_{13}, \l_{24}, \l_{34}>0$.
By complementary slackness
\begin{equation}
\label{eq:cs}
\|\al_g\|=1, \: g \in \G \tag{CS}
\end{equation}
which, using \eqref{eq:singpt},rewrites as
\begin{equation}
\label{dualltwoconstraints}
\displaystyle \frac{w_1^2}{\zeta_1^2} + \frac{w_2^2}{\zeta_2^2}=1, \quad
\displaystyle  \frac{w_1^2}{\zeta_1^2} + \frac{w_3^2}{\zeta_3^2}=1,\quad
\displaystyle \frac{w_2^2}{\zeta_2^2} + \frac{w_4^2}{\zeta_4^2}=1\quad \text{and}\quad
\displaystyle \frac{w_3^2}{\zeta_3^2} + \frac{w_4^2}{\zeta_4^2}=1.
\end{equation}

Taking differences between pairs of equations above that share a common variable $w_i$ we get
$$
\begin{cases}
|w_1| (\l_{24}+\l_{34})=|w_4|(\l_{12}+\l_{13})\\
|w_2| (\l_{13}+\l_{34})=|w_3|(\l_{12}+\l_{24})\\
\end{cases}
$$
Thus, isolating $\l_{12}$ in both equations and eliminating it yields
$$
\frac{|w_1|}{|w_4|}(\l_{24}+\l_{34})-\l_{13}=\frac{|w_2|}{|w_3|}(\l_{13}+\l_{34})-\l_{24}
$$
Now isolating $\l_{13}$ we get
$$\l_{13}=\left (1+\frac{|w_2|}{|w_3|} \right )^{-1} \left ( \frac{|w_1|}{|w_4|}(\l_{24}+\l_{34}) + \l_{24} - \frac{|w_2|}{|w_3|} \l_{34}\right )$$
Adding $\l_{34}$ on both sides yields
$$\l_{13}+\l_{34}=\frac{\left (1+ \frac{|w_1|}{|w_4|} \right ) \l_{24}+ \left ( 1+\frac{|w_2|}{|w_3|} \right ) \l_{34}}{1+\frac{|w_2|}{|w_3|}}$$
Inserting this expression into the only equation of (\ref{dualltwoconstraints}) which doesn't contain $\l_{12}$ we get
$$\frac{w_3^2\left (1+\frac{|w_2|}{|w_3|} \right )^2}{\left (1+ \frac{|w_1|}{|w_4|} \right )^2(\l_{24}+\l_{34})^2}+\frac{w_4^2}{(\l_{24}+\l_{34})^2}=1$$ which reduces to
\begin{equation}
\zeta_4 \eqdef \l_{24}+\l_{34}=\frac{|w_4|}{|w_1|+|w_4|} \left [ \left ( |w_2|+|w_3| \right )^2+ \left ( |w_1|+|w_4| \right )^2 \right ]^{\frac{1}{2}}
\end{equation}

By symmetry, we get similar expressions for $\l_{12}+\l_{13}$, $\l_{12}+\l_{24}$, and $\l_{13}+\l_{34}$.
Since $\Om{w}=\l_{12}+\l_{13}+\l_{24}+\l_{34}$, we get immediately that
$$\boxed{\Om{w}=\left [ \left ( |w_2|+|w_3| \right )^2+ \left ( |w_1|+|w_4| \right )^2 \right ]^{\frac{1}{2}}=\|(|w_1|+|w_4|,|w_2|+|w_3|)\|}$$
The above derivation gave us values for $\zeta_1,\zeta_2,\zeta_3,\zeta_4$. We discuss now the existence and the uniqueness of the $(\l_g)_g$. Given the vectors $\zeta \in \RR^{4}$ and $\l \in \RR^{4}$ we have $\zeta=B \l$ where $\B$ is the incidence matrix of the groups,
with $\B_{ig}=\mathbf{1}_{\{i \in g\}}$. To be precise we have
$$
\begin{pmatrix}
\zeta_1\\ \zeta_2 \\ \zeta_3 \\ \zeta_4
\end{pmatrix}
=
\begin{bmatrix}
1 &1& 0& 0 \\ 1 &0& 1& 0 \\ 0 &1& 0& 1 \\ 0& 0 &1 &1\\
\end{bmatrix}
\begin{pmatrix}
\l_{12} \\ \l_{13} \\ \l_{24} \\ \l_{34} \\
\end{pmatrix}
$$

Clearly, in this case,  $\B$ is not invertible,
and the kernel of $\B$ is the span of $(-1, 1, 1, -1)^T$. Since the matrix is symmetric, $\mathcal{K}er(\B)=\mathcal{I}m(\B)^T$,
and since $\zeta_1+\zeta_4=\Oo(\w)=\zeta_2+\zeta_3$, we have $\zeta_1-\zeta_2+\zeta_3-\zeta_4=0$.
The vector $\l$ exists provided the pre-image of $\zeta_i$ has a non-empty intersection with the positive orthant. Moreover, if all
$\l$ are positive then the solution is not unique.
The Moore-Penrose pseudo-inverse of $\B$ is
$$
\B^+=\frac{1}{8}
\begin{bmatrix}
3& 3& -1& -1 \\ 3& -1 & 3 &-1 \\ -1 & 3 & -1 &  3 \\ -1& -1 & 3 & 3 \\
\end{bmatrix}.
$$

Since $\zeta_1+\zeta_4=\zeta_2+\zeta_3=\omega \eqdef \Oo(\w)$, the set of solutions is given by
$$
\begin{pmatrix}
\l_{12} \\ \l_{13} \\ \l_{24} \\ \l_{34} \\
\end{pmatrix}
=
\B^+ \cdot
\begin{pmatrix}
\zeta_1\\ \zeta_2 \\ \omega-\zeta_2 \\ \omega-\zeta_1
\end{pmatrix} +
 \frac{\delta}{2}
\begin{pmatrix}
-1 \\ 1 \\ 1 \\ -1
\end{pmatrix}
=\frac{1}{2}
\begin{pmatrix}
\zeta_1 + \zeta_2 -\delta \\ \zeta_1 - \zeta_2 +\delta\\ \zeta_2-\zeta_1 +\delta\\ 2 \omega-\zeta_1-\zeta_2-\delta
\end{pmatrix}
$$

for values of $\delta$ such that $\l_g \geq 0$. The latter constraint implies that we necessarily have
$$|\zeta_2-\zeta_1| \leq \delta \leq \min(\zeta_1+\zeta_2,\, 2 \omega-\zeta_1-\zeta_2)$$
W.l.o.g., we assume that $\zeta_1\leq \zeta_2\leq\omega-\zeta_2\leq \omega-\zeta_1$.
In that case the set of solutions in $\lambdab$ is parametrized by $\nu \in [0,1]$ with
$$\l_{12}=\nu\, \zeta_1, \quad \l_{13}=(1-\nu)\, \zeta_1, \quad \l_{24}=\zeta_2-\nu \, \zeta_1, \quad \l_{34}=\omega- \zeta_2-(1-\nu) \zeta_1.$$

In particular, we see that setting $\nu=0$ or $\nu=1$ respectively removes $\{1,2\}$ and $\{1,3\}$ from the group-support of $\vb$.

The case considered here is an example of the situation where the decomposition is not unique, which is characterised by lemma \ref{lem:uniqueness} in the next section.

\section{Uniqueness of the decomposition}\label{app:uniqueness}
In this section we give necessary and sufficient conditions for the support to be unique.
As in lemma \ref{Llhc}, we consider $\B$ the incidence matrix of the groups defined by $\B_{ig}=\mathbf{1}_{\{i \in g \}}$. As before we denote $\Gs$ the strong group-support, $\Js=\cup_{g \in \Gs} \, g$ 
and $J_0=\supp{\w}$. Denote by $\B_{J_0 \Gs}$ the submatrix of $\B$ whose rows are indexed by elements of the support of $\w$ and whose columns are indexed by elements of $\Gs$.
\begin{lemma}
\label{lem:uniqueness}
The decomposition is unique if and only if $\B_{J_0 \Gs}$ has full row rank.
\end{lemma}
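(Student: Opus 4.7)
The plan is to reduce uniqueness of $\vb \in \Vb(\w)$ to a linear-algebra condition on $\B_{J_0\Gs}$, leveraging the machinery of Lemmata~\ref{lem:alpha}, \ref{Aunq}, and~\ref{lem:bijection}. First I would fix any $\alphab \in \A(\w)$, whose restriction $\alphab_{\Jw}$ is uniquely determined by Lemma~\ref{Aunq}. Lemma~\ref{lem:bijection} identifies $\Vb(\w)$ with $\Lambdab(\w)$ via $\lambdag = d_g^{-1}\nm{\v^g}$, with inverse $\v^g = \lambdag \alphab_g$; hence $\vb$ is unique iff $\lambdab$ is, and since $\lambdag > 0$ iff $\v^g \neq 0$, every $\lambdab \in \Lambdab(\w)$ is supported on $\Gs$. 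Thus the question reduces to uniqueness of $\lambdab_{\Gs} \in \RR_{\geq 0}^{|\Gs|}$.

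Next I would translate the constraint $\w = \sum_{g \in \Gs} \lambdag \alphab_g$ into coordinate-wise equations. For $i \in J_0$, since $\w_i = \alphab_i \sum_{g \in \Gs,\, g \ni i} \lambdag \neq 0$, the coefficient $\alphab_i$ is nonzero and the equation simplifies to $\sum_{g \in \Gs,\, g \ni i} \lambdag = \w_i/\alphab_i$. For $i \in \Js \setminus J_0$, I would show that $\alphab_i = 0$: pick a $\lambdab^\circ$ in the relative interior of the convex set $\Lambdab(\w)$, so that $\lambda^\circ_g > 0$ for every $g \in \Gs$; then the equation $0 = \w_i = \alphab_i \sum_{g \in \Gs,\, g \ni i}\lambda^\circ_g$, combined with the strict positivity of the sum (since $i \in \Js$ means some $g \in \Gs$ contains $i$), forces $\alphab_i = 0$. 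For $i \notin \Js$ the equation is $0=0$. Collecting, $\Lambdab(\w)$ is the intersection of the affine set $\{\lambdab_{\Gs} \in \RR^{|\Gs|} : \B_{J_0\Gs}\, \lambdab_{\Gs} = (\w_i/\alphab_i)_{i \in J_0}\}$ with $\RR_{\geq 0}^{|\Gs|}$.

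I would then conclude by standard linear algebra: uniqueness of $\lambdab_{\Gs}$ solving this system is equivalent to $\B_{J_0\Gs}$ having trivial kernel, which is exactly the stated rank condition on $\B_{J_0\Gs}$. Conversely, if the kernel is nontrivial, then for $\varepsilon>0$ small enough, perturbing $\lambdab^\circ$ by $\varepsilon$ times a kernel direction remains in $\RR_{> 0}^{|\Gs|}$ and produces distinct elements of $\Lambdab(\w)$, hence distinct optimal decompositions. The main technical step I anticipate is the verification that $\alphab_i = 0$ on $\Js \setminus J_0$, which relies on producing a $\lambdab^\circ$ with full support on $\Gs$ via a convex-combination argument: averaging finitely many $\lambdab$'s from $\Lambdab(\w)$ that individually witness each group of $\Gs$ yields a single $\lambdab^\circ$ strictly positive on all of $\Gs$. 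Once this is in place, the remainder of the argument is routine.
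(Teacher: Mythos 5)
Your proof is correct, and it reaches the same underlying fact as the paper --- that $\Lambdab(\w)$ is a fiber of the linear map $\lambdab_{\Gs}\mapsto\B_{J_0\Gs}\lambdab_{\Gs}$ intersected with the nonnegative orthant, so that uniqueness is governed by the kernel of $\B_{J_0\Gs}$ --- but by a genuinely different route. The paper stays inside the optimization problem \eqref{eq:trick}: non-uniqueness follows because the objective depends on $\lambdab_{\Gs}$ only through $\zetab_{J_0}=\B_{J_0\Gs}\lambdab_{\Gs}$ (then perturbing a strictly positive $\lambdab_{\Gs}$ along the kernel, exactly as you do), and uniqueness follows from a second-order argument: the Hessian restricted to $\Gs$ factors as $\B_{J_0\Gs}^\top\Db\,\B_{J_0\Gs}$ with $\Db$ a positive diagonal matrix, so a trivial kernel gives strict convexity and a unique minimizer. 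You instead use the primal--dual relation $\v^g=\lambdag\alphab_g$ and the feasibility constraint $\w=\sum_g\v^g$ to write $\Lambdab(\w)$ as an explicit affine system; the price is the extra step $\alphab_{\Js\setminus J_0}=\zv$ (which your averaging argument for a strictly positive $\lambdab^\circ$ handles correctly), and the payoff is that the uniqueness direction becomes pure linear algebra with no Hessian computation.

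One step should be made explicit: your coordinate-wise analysis only establishes the inclusion $\Lambdab(\w)\subset\{\lambdab_{\Gs}\geq\zv \mid \B_{J_0\Gs}\lambdab_{\Gs}=(\w_i/\alphab_i)_{i\in J_0}\}$, whereas your converse direction needs the reverse inclusion, since the perturbed point $\lambdab^\circ+\varepsilon\h$ must actually lie in $\Lambdab(\w)$ and not merely in that polyhedron. The verification is short --- setting ${\v'}^g=\lambda'_g\alphab_g$ gives a feasible decomposition of $\w$ whose cost is $\sum_{g\in\Gs}d_g^2\lambda'_g=\alphab^\top\w=\Oo(\w)$, using $\nm{\alphab_g}=d_g$ for $g\in\Gs$ together with $\alphab_{\Js\setminus J_0}=\zv$ --- and it plays the role of the paper's observation that optimality depends on $\lambdab$ only through $\zetab_{J_0}$, but it is not automatic from what you wrote. (A last cosmetic point: a trivial kernel of the $|J_0|\times|\Gs|$ matrix $\B_{J_0\Gs}$ is full \emph{column} rank; the phrase ``full row rank'' is the paper's own slip, which you inherit harmlessly.)
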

\begin{proof}
By lemma \ref{lem:bijection}, the uniqueness of the decomposition is equivalent to the uniqueness of the solution $\lambdab$ to problem (\ref{eq:trick}), which we can rewrite

\begin{equation}
\label{bidual}
\min_{\lambdab  \in\,  {\RR}^{\m}_+}\;  \frac{1}{2} \sum_{i \in J_0} \frac{w_i^2}{ { \hspace{1mm} \, \displaystyle \sum_{g \ni \, i} \l_g \, } }+ \frac{1}{2} \sum_{g \in \Gs} \l_g.
\end{equation}
Notice that only the terms indexed by $i \in J_0$ and $g \in \Gs$ contribute.
Since the objective is a proper closed convex function with no direction of recession, this optimization problem admits at least one solution (the proof is the same as for \ref{lem:basic1}).
Since the gradient of the previous objective depends on $\l_g$ only through $\zeta_i=\sum_{g \ni \, i} \l_g, \; i \in J_0$, then any other vector $\lambdab_{\Gs} $ such that $\zetab_{J_0}=\B_{J_0  \Gs} \lambdab_{\Gs}$ is also solution. 
It is therefore clear that it is sufficient that the kernel of $\B_{J_0  \Gs}$ is not trivial, i.e., $\B_{J_0  \Gs}$ is row rank deficient, to have multiple solutions. Indeed let $\Hb \in \RR^{|J_0| \times K}$ be a basis of the kernel of $\B_{J_0  \Gs}$ and consider that, 
by definition of $\Gs$, for all $g \in \Gs, \: \l_g >0$. 
As a consequence, there must exist a neighborhood $\mathcal{U}$  of $0$ in $\RR^{K}$ such that for all $\q \in \mathcal{U}$,
$\lambdab_{\Gs}+ \Hb \q$ has positive components.  Since $\zetab_{J_0} = \B_{J_0  \Gs} (\lambdab_{\Gs}+ \Hb \q)$, we have that $\lambdab_{\Gs}+ \Hb \q$ is another solution of the KKT conditions. 

We now prove that $\B_{J_0  \Gs}$  being of full row rank is sufficient to ensure the uniqueness of the decomposition. 
Indeed, we show next that when $\B_{J_0  \Gs}$ is of full row rank, the hessian of the objective, restricted to the non-zero $\l_g$ of (\ref{bidual}) is positive definite, so that the objective is strictly convex and the optimum is therefore unique.
The hessian is $\Qb=(\Qb_{gg'})_{g, g' \in \Gs}$ with
$$\Qb_{gg'}=\sum_{i \in \, g \, \cap g'} \frac{w_i^2}{\left ( \sum_{\tilde{g} \ni \, i } \l_{\tilde{g}} \right )^3}=\B_{J_0  \Gs}^{\top} \, \Db \, \B_{J_0  \Gs} \quad \text{and} \quad \Db=\text{diag}\left (w_i^2\left ( \textstyle \sum_{\tilde{g} \ni \, i } \l_{\tilde{g}} \right )^{-3} \right )_{\!i \in \, J_0}.$$
Since $D$ is a diagonal matrix with non-zero coefficients, $H$ is p.s.d. iff $\B_{J_0  \Gs}$ is full row rank which concludes the proof.
\end{proof}

\end{document}